\begin{document}

\title{Approximate full-conformal prediction in an RKHS}

\author{\name Davidson Lova Razafindrakoto
      \email davidson-lova.razafindrakoto@proton.me \\
      \addr Laboratoire SAMM\\
      Universit{\'e} Paris 1 Panth{\'e}on-Sorbonne\\
      90, rue de Tolbiac\\
      75634 PARIS CEDEX 13, FRANCE
      \AND
      \name Alain Celisse
      \email alain.celisse@univ-paris1.fr\\
      \addr Laboratoire SAMM\\
      Université Paris 1 Panth{\'e}on-Sorbonne\\
      90, rue de Tolbiac\\
      75634 PARIS CEDEX 13, FRANCE
      \AND
      \name J{\'e}r{\^o}me Lacaille
      \email jerome.lacaille@safrangroup.com \\
      \addr Safran Aircraft Engines\\
      Rond-Point Ren\'e Ravaud, R{\'e}au,\\
      77550 Moissy-Cramayel CEDEX, France}

\editor{}

\maketitle

\begin{abstract}
    Full-conformal prediction is a framework that 
    implicitly formulates distribution-free confidence prediction-regions
    for a wide range of estimators.
    However, a classical limitation of the full-conformal framework is the computation of the confidence prediction-regions, which is usually impossible since it requires training infinitely many estimators (for real-valued prediction for instance).
    The main purpose of the present work is to describe a generic strategy for designing a tight approximation to the full-conformal prediction-region that can be efficiently computed.
    Along with this approximate confidence-region, a theoretical quantification of the tightness of this approximation is developed, depending on the smoothness assumptions on the loss- and score-functions.
    The new notion of thickness is introduced for quantifying the discrepancy between the approximate confidence-region and the full-conformal one.
    %
    
\end{abstract}

\begin{keywords}
RKHS, Kernel ridge regression, confidence prediction-region, conformal prediction, influence function
\end{keywords}

\section{State of the art}

From a given predictor $\hat f$ evaluated at $X\in\mathcal{X}$, conformal prediction \citep{vovk2005algorithmic} is a framework that implicitly formulates a \emph{confidence prediction-region} (depending on $\hat f(X)$), which contains the unobserved $Y\in \mathcal{Y}$ with a prescribed confidence control-level.
The first formulated strategy for conformal prediction is called the full-conformal prediction \citep[Section~2.2.4]{vovk2022}. However full-conformal prediction requires training the underlying predictor as many times as the cardinality of the set $\mathcal{Y}$ of outputs of the predictor \citep[Section~2.2.4]{vovk2022}. This strong limitation makes the full-conformal prediction-region usually impossible to compute in practice.

    


Alternatives to full-conformal prediction have been designed to overcome this difficulty: \emph{split-conformal} \citep{papadopoulos2008inductive} or
\emph{cross-conformal} \citep{vovk2015cross} to enumerate but a few.
By contrast with full-conformal, both split- and cross-conformal prediction only require to compute a fixed (small) number of predictors: as many predictors as the number of splits of the data into training and test sets.
The main drawback of these approximations is the loss of information they incur by only considering a subset of the available observations for training. This loss of information usually results in larger prediction-regions in practice (see Section~3.7 in \citet{papadopoulos2008inductive} and Section~4.1 in \citet{vovk2022}). 
%


Another line of research exploits specific settings where closed-form formulas are available which express the trained predictor as a function of the possible output value. For instance \citet{nouretdinov2001ridge} bypass the computational bottleneck using this trick in the context of ridge regression, and \citet{burnaev2016conformalized} address kernel ridge regression similarly.
This approach has been further extended to regularized variants of linear regression such as Elastic Net and LASSO by \citet{ndiaye2019computing} and \citet{lei2019fast}.
More recently, \citet{ndiaye2023root} exploit the assumed hat-shape of the p-value function and resort to root-finding methods for computing full-conformal prediction-region.
Unfortunately such an explicit computation of the full-conformal prediction-region is limited to specific settings and remains prohibited in full generality.

    

The present work rather describes a \emph{generic (distribution-free) strategy for approximating the full-conformal prediction-region} without any preliminary splitting of the data.
The idea of designing such an approximation is not new. Uniform stability-bounds are harnessed by \citet{ndiaye2022stable} for building an approximate full-conformal prediction-region, which requires training only one predictor. Although their approximation contains the full-conformal one, no theoretical guarantees are provided on the size of the approximating region. Moreover, the computed approximation suffers from the worst-case behavior encoded by the uniform stability notion used at the core of the approximation.
In the context of classification, \citet{martinez2023approximating}
approximate a trained predictor by means of \emph{Influence Functions} \citep{chernick1982influence}.
This approximation allows them to derive \emph{asymptotic-confidence prediction-regions}, which converge to the full-conformal prediction-region as the number of observations grows.

\medskip

By contrast, the main contribution of the present work is to provide a generic approach for efficiently computing a \emph{non-asymptotic} \emph{approximate} full-conformal prediction-region. 
This approximation does contain the full-conformal prediction-region with a prescribed precision level. 
It also comes with non-asymptotic bounds quantifying the quality of the approximation.

To be more specific, Section~\ref{sec.conformal.prediction} briefly introduces the main concepts about conformal prediction (full-conformal prediction-region, conformal $p$-values, scores, \dots) and notations that are useful along this paper.
Section~\ref{sec.non.smooth} describes the generic scheme leading to a new effective approximation to the full-conformal (prediction) region that is tractable in practice (Section~\ref{sec.approx.prediction.region}). This approximation is derived for a family of Tikhonov-like predictors (Section~\ref{sec.predictor}) defined in a Reproducing Kernel Hilbert Space (RKHS) \citep{aronszajn1950theory} for various loss-functions including robust ones.
By contrast with the previous section, where almost no
specific assumptions are made on the loss or the score-functions, the purpose of Sections~\ref{sec.smooth} and~\ref{sec.very.smooth} is to illustrate how the present strategy can be instantiated when stronger smoothness assumptions are fulfilled by the loss- and score-functions. In particular influence functions are leveraged along Section~\ref{sec.very.smooth} to provide a tighter approximation to the full-conformal prediction-region. Let us emphasize that the tightness of these approximations is quantified in terms of volume (and not only in terms of coverage-probability).
%



\section{Conformal prediction}
\label{sec.conformal.prediction}

The main purpose of the present section is to briefly introduce the main concepts of conformal prediction and discuss the strengths and weaknesses of the existing approaches. 
Section~\ref{sec.fcp} focuses on full-conformal prediction and explains reasons for the computational bottleneck.
More efficient approximations to the full-conformal approach are then exposed along Section~\ref{sec.scp}, namely the split- and cross-conformal approximations. Their practical limitations
are also briefly exposed.

\subsection{Full-conformal prediction}
\label{sec.fcp}

Assume $D = \{(X_1, Y_1), \ldots, (X_{n}, Y_{n})\}$ denotes a set of $n$ \emph{independent and identically distributed} random variables with unknown probability distribution $P$ on $\mathcal{X}\times \mathcal{Y} \subset \mathbb{R}^d \times \mathbb{R}$.

Given a new \emph{independent} $(X_{n+1},Y_{n+1}) \sim P$, assume that one only observes $X_{n+1}$.
The purpose of conformal prediction is to design a \emph{confidence prediction-region} $\widehat{C}_{\alpha}\paren{X_{n+1}}$ based on a predictor $\hat f$ evaluated at $X_{n+1}$ such that, for any control-level $\alpha \in (0,1)$,
\begin{align}\label{eq.confidence.prediction.region}
    \mathbb{P}\croch{ Y_{n+1} \in \widehat{C}_{\alpha}\paren{X_{n+1}} } \geq 1-\alpha.
\end{align}

The \emph{full-conformal (prediction-) region} $\fcpr$ \citep{vovk2005algorithmic} writes as
\begin{align}
\label{eq.conformalRegion}
    \fcpr
    = 
    \left\{ 
        y \in \mathcal{Y} : 
        \fcpv{y}> \alpha
    \right\},
\end{align}
where $\fcpv{\bullet}$ denotes the so-called \emph{conformal p-value function} defined in the following Definition~\ref{def.conformal.pvalues} from \emph{non-conformity scores}.
\begin{definition}(Non-conformity score and Conformal p-value)
\label{def.conformal.pvalues}
Let $s(\bullet,\bullet):$ $\mathcal{Y}\times \mathcal{Y} \to \mathbb{R}_+$ denote
a \emph{non-conformity function} \citep[Definition 1.3]{balasubramanian2014conformal}, and
$\hat{f}_{D^y}$ stand for a predictor trained from $D^y := \{(X_{1}, Y_{1}), \ldots, (X_{n+1}, y)\}$, for every $y\in\mathcal{Y}$.
Then the non-conformity score of the couple $(X_i,Y_i)$ given $y$ and the predictor $\hat{f}_{D^y}$, is defined as
\begin{align}  
\label{def.scores}  
\begin{aligned}
    S_{D^{y}} \paren{X_i, Y_i} & := s\paren{Y_i, \hat{f}_{D^y}(X_i)},
&&\mbox{if }1\leq i\leq n \\
S_{D^{y}} \paren{X_{i}, y} & := s\paren{y, \hat{f}_{D^y}(X_{i})},
&&\mbox{if } i=n+1.
\end{aligned}
\tag{NCScores}  
\end{align}
Furthermore, the conformal p-value function $\fcpv{\bullet}$ is given by
\begin{equation}
\label{def.conformal.pvalue.function}
    \tag{Cp-value}
        y \in \mathcal{Y}
        \mapsto
        \fcpv{y}
        := \frac{1
            +
            \sum_{i = 1}^{n}
            \mathbbm{1}\left\{
                S_{D^{y}} \paren{X_i, Y_i}
                \geq
                S_{D^{y}} \paren{X_{n+1}, y}
            \right\}
        }{n+1}.
    \end{equation}
\end{definition}
In the above definition, an important ingredient is the non-conformity function $s(\bullet, \bullet)$. Intuitively $S_{D^{y}}\paren{X_i, Y_i}$ measures the discrepancy
between the output $Y_i$ and the prediction of $\hat{f}$ at $X_i$ when $(X_{n+1}, y)$ is added to the training data set $D$.
This can be seen as a measure of the "non-conformity" of $(X_i, Y_i)$ with regards to the observed points $D$ and $(X_{n+1}, y)$ through the predictor $\hat{f}$
(for more examples of non-conformity functions
see \citet{balasubramanian2014conformal},
\citet[Section 2.2.9]{vovk2022}, and \citet{kato2023review}).
The p-value function measures how much ``conform" $(X_{n+1}, y)$ is relative to all other points through their \emph{non-conformity} score values.
In particular if $y$ seems to take values in accordance with the prediction output by $\hat{f}_{D^y}$ at $X_{n+1} $,
then $S_{D^{y}}\paren{X_{n+1}, y}$ is expected to be small compared to the other scores $S_{D^{y}}\paren{X_i, Y_i}$ and the corresponding conformal p-value must be large.

The main motivation for considering conformal prediction stems from the next property which guarantees a minimum confidence (called coverage) in a \emph{distribution-free} setting.
\begin{theorem}[in Section 2.2.5 of Vovk, Gammerman, and Shafer, 2022]
\label{thm.coverage}
\sloppy 
Assume $(X_{1}, Y_{1}), \ldots, (X_{n+1}, Y_{n+1})$ are exchangeable and $\hat{f}_D$ is invariant to permutations of the data in $D$. 
Then, for any control-level $\alpha \in (0,1)$, the full-conformal prediction-region $\fcpr$ enjoys the following coverage guarantee
\begin{equation*}
    \mathbb{P}\croch{Y_{n+1} \in \fcpr} \geq 1 - \alpha.
\end{equation*}
As such $\fcpr$ is called a \emph{confidence prediction-region}.
\sloppy Moreover, if the non-conformity scores $S_{D^{Y_{n+1}}} \paren{X_1, Y_1}$, $\ldots$, $S_{D^{Y_{n+1}}} \paren{X_{n+1}, Y_{n+1}}$ are almost-surely distinct, then 
\begin{equation*}
    \mathbb{P}\croch{Y_{n+1} \in \fcpr} \leq 1 - \alpha + \frac{1}{n+1}.
\end{equation*}
\end{theorem}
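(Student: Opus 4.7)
The plan is to reduce both coverage bounds to a classical ranking argument applied to an exchangeable family of non-conformity scores. Specializing to the \emph{oracle} augmented dataset $D^{Y_{n+1}}$, the key preliminary observation is that since $\hat f_D$ is invariant to permutations of its training set, the predictor $\hat f_{D^{Y_{n+1}}}$ depends only on the multiset $\{(X_1,Y_1),\ldots,(X_{n+1},Y_{n+1})\}$; consequently the scores $T_i := S_{D^{Y_{n+1}}}(X_i,Y_i)$ for $1\le i\le n+1$ (from (NCScores) with $y=Y_{n+1}$) are all produced by the \emph{same} measurable function applied to the data. Combined with the assumed exchangeability of $(X_1,Y_1),\ldots,(X_{n+1},Y_{n+1})$, this transfers exchangeability to the score vector $(T_1,\ldots,T_{n+1})$.

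Next, setting
\[
    R := 1 + \sum_{i=1}^{n} \mathbbm{1}\croch{T_i \geq T_{n+1}},
\]
one has $\fcpv{Y_{n+1}} = R/(n+1)$ directly from (Cp-value). Introducing the analogous quantities $R_i := 1 + \sum_{j\neq i}\mathbbm{1}\croch{T_j\geq T_i}$, a short combinatorial argument shows that $\sum_{i=1}^{n+1}\mathbbm{1}\croch{R_i\leq k}\leq k$ for every integer $k\in\{1,\ldots,n+1\}$, with equality when the $T_i$'s are almost surely pairwise distinct. Taking expectations and using exchangeability of $(R_1,\ldots,R_{n+1})$ yields the super-uniform bound $\mathbb{P}\croch{R\leq k} \leq k/(n+1)$ in general, and equality in the distinct case. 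Choosing $k=\lfloor\alpha(n+1)\rfloor$ and using $\alpha - 1/(n+1) < \lfloor\alpha(n+1)\rfloor/(n+1) \leq \alpha$ then gives $\mathbb{P}\croch{\fcpv{Y_{n+1}}\leq\alpha}\leq\alpha$ in general, and $\mathbb{P}\croch{\fcpv{Y_{n+1}}\leq\alpha}\geq\alpha - 1/(n+1)$ under distinctness; plugging into the definition of $\fcpr$ yields the two advertised coverage bounds.

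The hard part is really conceptual rather than technical, and lies entirely in the first paragraph: the transfer of exchangeability from the data to the scores. It relies crucially on the fact that \emph{the same} predictor $\hat f_{D^{Y_{n+1}}}$ appears in every $T_i$, which is exactly what the permutation-invariance of $\hat f_D$ buys. Without it, each $T_i$ could involve its own predictor and the joint law of the scores would no longer be symmetric. Once this point is settled, the combinatorial inequality on the $R_i$'s and the final discretization argument are routine.
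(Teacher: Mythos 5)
Your argument is correct: permutation-invariance of the predictor makes every score $T_i$ the value of one and the same symmetric function of the augmented data, exchangeability of the data then passes to $(T_1,\ldots,T_{n+1})$, and the rank inequality $\sum_{i}\mathbbm{1}\croch{R_i\leq k}\leq k$ (with equality under a.s. distinct scores) together with the discretization $k=\lfloor\alpha(n+1)\rfloor$ yields both coverage bounds. Note that the paper does not prove this theorem at all — it is imported verbatim from Section 2.2.5 of Vovk, Gammerman and Shafer — and your proof is exactly the classical exchangeability-of-scores/rank argument given in that reference, so it matches the intended (cited) proof rather than offering a different route.
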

The second property ensures that the full-conformal prediction-region is not arbitrarily conservative as $n$ grows in terms of coverage.
Regarding the computational aspects, a closed-form expression for the predictor $\hat{f}_{D^{y}}$ as a function of $y$ does not exist in general. Therefore, the computation of the full-conformal prediction-region (see Eq.~\ref{eq.conformalRegion}) is not possible. 
Indeed, a brute-force approach would consist in going through all possible values of $y$, which will require training $\mathrm{Card}(\mathcal{Y})$ predictors. With $\mathcal{Y} = \mathbb{R}$ for instance, the computational cost would be infinite. That is why the \emph{full-conformal} approach is usually left aside for less costly alternative approximations such as \emph{split-conformal} \citep{papadopoulos2008inductive}, and
\emph{cross-conformal} \citep{vovk2015cross}
to name but a few.


\subsection{Approximations}
\label{sec.scp}

The following approximations attempt to overcome the computational bottleneck by redefining the \textit{conformal p-value function}. It then requires training only a finite number of predictors (at least one and at most $n$). The ensuing prediction-region is formulated
as in Eq.~\eqref{eq.conformalRegion} by only replacing the \textit{full-conformal p-value function} with its corresponding approximation.

\bigskip
\noindent{\emph{split-conformal \citep{papadopoulos2008inductive}.}}
The \emph{split-conformal p-value function} is computed in two steps. 
First, a subset $D_{\mathrm{train}}$ of $D$ is used to train a predictor $\hat{f}_{D_{\mathrm{train}}}$. 
Then, the \emph{non-conformity scores} are evaluated over $D_{\mathrm{cal}} = D \setminus D_{\mathrm{train}}$
of size $n_{\mathrm{cal}}$,
with the corresponding set of indices $J_{\mathrm{cal}}$ as, for every $y \in \mathcal{Y}$
\begin{align*}
\begin{aligned}
    S_{D_{\mathrm{train}}} \paren{X_i, Y_i} 
    &:= 
    s\paren{Y_i, \hat{f}_{D_{\mathrm{train}}}(X_i)},
    &&\mbox{if } i \in J_{\mathrm{cal}}
    \\
    S_{D_{\mathrm{train}}} \paren{X_i, y}
    &:= s\paren{y, \hat{f}_{D_{\mathrm{train}}}(X_{i})},
    &&\mbox{if } i=n+1.
\end{aligned}
\end{align*}
The resulting \emph{conformal p-value function} is given by, for every $y \in \mathcal{Y}$,
\begin{align}
    \label{def.split.pvalue}
    \scpv{y} =
    \frac{1
        + \sum_{i \in J_{\mathrm{cal}}}
        \mathbbm{1} \left\{
            S_{D_{\mathrm{train}}} \paren{X_i, Y_i}
            \geq S_{D_{\mathrm{train}}} \paren{X_{n+1}, y}
        \right\}
    }{n_{\mathrm{cal}}+1}.
\end{align}
%

If the observations in $D_{\mathrm{cal}}$ and $\paren{X_{n+1}, Y_{n+1}}$
are exchangeable,
then the split-conformal region $\scpr$ ensures the following guarantee,
\begin{align}
\label{eq.scp.coverage.control}
    1 - \alpha
    \leq \mathbb{P}\croch{Y_{n+1} \in \scpr}
    \leq 1 - \alpha + \frac{1}{n_{\mathrm{cal}}+1},
\end{align}
where the upper-bound holds if the non-conformity scores
$S_{D_{\mathrm{train}}} \paren{X_i, Y_i}$, where $i \in J_{\mathrm{cal}}$, and
$S_{D_{\mathrm{train}}} \paren{X_{n+1}, Y_{n+1}}$ are almost-surely distinct.

The computational bottleneck induced by the need to train $\mathrm{Card}(\mathcal{Y})$ predictors with full-conformal prediction is avoided by training only one predictor on $D_{\mathrm{train}}$.
However, the number of points used to train the predictor is reduced to
$\mathrm{Card}(D_{\mathrm{train}})$, the number of \emph{non-conformity scores} to
$\mathrm{Card}(D_{\mathrm{cal}})$, and finally the number of predictors to one.
These computational improvements come at the price of a less informative prediction-region since it incorporates a less precise predictor, deteriorates the quality of the \emph{non-conformity} scores, and uses fewer scores, which makes the p-value function less pertinent (see Section 3.7 in \citet{papadopoulos2008inductive} and Section 4.1 in \citet{vovk2022}).

Although the upper-bound indicates that coverage-probability gets closer to
$1 - \alpha$ as $n_{\mathrm{cal}}$ grows,
this does not prevent the split-conformal prediction-region to be arbitrarily
more conservative than the full-conformal prediction-region.
For instance, the split-conformal prediction-region could end up
covering areas with lower density (due to less informative predictor and/or
non-conformity functions), which is balanced by an increase
of the size of the prediction-region (to maintain the coverage).

Our approximation (detailed in Section~\ref{sec.approx.prediction.region}) does not require splitting the data but similarly overcomes the computational bottleneck by requiring the training of only one predictor. Moreover, the number of data used to train the predictor and the number of scores for computing the p-value function are kept the same as in full-conformal prediction.
Let us also mention that unlike the split-conformal prediction-region for which  there is no quantification of how close it is to the full-conformal prediction-region (except in terms of coverage-probability), such a quantification is available for our approximation (see Section~\ref{sec.approx.prediction.region}).


\bigskip
\noindent{\emph{Cross-conformal \citep{vovk2015cross}.}}
In the cross-conformal framework, the \textit{conformal p-value function}
is computed from $V$ predictors trained over
$V$ folds from $D$. This is similar to the cross-validation idea \citep[see][for a survey]{arlot2010survey}.
Let $\left(D^{(1)}_{\mathrm{cal}}, \ldots, D^{(V)}_{\mathrm{cal}}\right)$ be a random partitioning of $D$ into $V$ folds,
with the corresponding set of indices $\left(J_{\mathrm{cal}}^{(1)}, \ldots, J_{\mathrm{cal}}^{(V)}\right)$,
and define for each $k \in \brac{1, \ldots, V}$, $D^{(k)}_{\mathrm{train}} := D \setminus D^{(k)}_{\mathrm{cal}}$.
Then for each fold $k \in \brac{1, \ldots, V}$,
the ``non-conformity" scores are given by, for every $y \in \mathcal{Y}$
\begin{align*}
\begin{aligned}
    S_{D^{(k)}_{\mathrm{train}}} \paren{X_i, Y_i}
    & := 
    s\left(Y_i, \hat{f}_{D^{(k)}_{\mathrm{train}}}(X_i)\right),
    &&\mbox{if } i \in J_{\mathrm{cal}}^{(k)}
    \\
    S_{D^{(k)}_{\mathrm{train}}} \paren{X_i, y}
    & := 
    s\left(y, \hat{f}_{D^{(k)}_{\mathrm{train}}}(X_{i})\right),
    &&\mbox{if }i=n+1.
\end{aligned}
\end{align*}

The resulting cross-\emph{conformal p-value}
$\ccpv{V}{\bullet}$ is given by, for every $y \in \mathcal{Y}$,
\begin{align}
\label{def.cross.pvalue}
    \ccpv{V}{y}
    :=
    \frac{1
    +\sum_{k=1}^{V}
    \sum_{i\in J_{\mathrm{cal}}^{(k)}}
    \mathbbm{1}
    \brac{
        S_{D^{(k)}_{\mathrm{train}}} \paren{X_i, Y_i}
        \geq S_{D^{(k)}_{\mathrm{train}}} \paren{X_{n+1}, y}
    }}{n+1}.
\end{align}
This strategy is more expensive than the split-conformal one.
Unlike the split-conformal p-values,
the above ones from Eq.~\eqref{def.cross.pvalue} require 
computing $V$ different predictors
(one for each of the $V$ folds), which turns out to be computationally demanding as $V$ increases.
However the larger number of predictors improves the quality of the conformal p-values in the same way as the full-conformal strategy does.
As for the coverage-probability, with $\alpha\in(0,1)$, \citet{barber2021predictive} proved that the prediction-region $\ccpr{V}$ satisfies
\begin{itemize}
	\item  with $V<n$ (Theorem~4):
\begin{align*}
	\mathbb{P}\croch{Y_{n+1} \in \ccpr{V}} 
	\geq 1 - 2 \alpha - \sqrt{2/n},
\end{align*}
	
	\item $V=n$ (Theorem 1):
\begin{align}
	\label{eq.cov.jack}
	\mathbb{P}\croch{Y_{n+1} \in \ccpr{n}} \geq 1 - 2 \alpha .
\end{align}
	
\end{itemize}
This last case (also) corresponds to the so-called Jackknife+ prediction \citep{barber2021predictive}.
The factor $2$ in front of $\alpha$ reveals the loss of information incurred by
the cross-conformal strategy when compared to the full-conformal one.
By contrast our approximations detailed along Sections~\ref{sec.smooth} and~\ref{sec.very.smooth} do not suffer such a high computational cost
while ensuring coverage at the control-level $\alpha$.

\section{Designing a new computable approximation}
\label{sec.non.smooth}

The main purpose of the present section is to expose a generic scheme leading to a new approximation to the full-conformal prediction-region based on approximate scores. 

More precisely, Section~\ref{sec.predictor} introduces the Ridge-like predictor considered here as well as the main notations.
Section~\ref{sec.approx.prediction.region} explains how approximating the scores at a prescribed level leads to an accurate approximation to the full-conformal prediction-region. The quality of this approximation is quantified by means of both the (usual) coverage-probability and a new notion of thickness.     
As a proof of concept, Section~\ref{sec.uniform.stability.bound} illustrates how the generic scheme introduced along Section~\ref{sec.approx.prediction.region} can be instantiated in the set-up where uniform stability-bounds are available.

\subsection{Predictor and loss-functions}
\label{sec.predictor}

The following definition formulates
a generalization of the kernel ridge regression \citep{vovk2013kernel} which incorporates loss-functions that are no longer limited to the quadratic loss.
The subsequent predictor has the following characteristics:
(1) modeling non-linear relationships between the input and the output via the kernel function \citep{hofmann2008kernel},
(2) controlling the over-fitting via ridge regularization \citep{mcdonald2009ridge},
(3) incorporating a loss-function tailored to the task.
The specified kernel function $\kappa_{\mathcal{H}}(\bullet,\bullet)$ \citep{aronszajn1950theory} induces a natural class of function $\mathcal{H} \subset \mathcal{F}(\mathcal{X}, \mathcal{Y})$ which turns out to be a reproducing kernel Hilbert space (RKHS) endowed with a scalar product $\scal{ \bullet , \bullet }_{\mathcal{H}}$ and corresponding $\mathcal{H}-$norm $\|\bullet\|_{\mathcal{H}}$ \citep{paulsen2016introduction}.




\begin{definition}{(Kernel regression with ridge regularization \citep[Theorem 22]{bousquet2002stability})}
\label{def.predictor}
Let $\mathcal{H} \subset \mathcal{F}(\mathcal{X}, \mathcal{Y})$ be an $RKHS$
with its corresponding norm $\|\bullet\|_{\mathcal{H}}$.
Let $D$ be a training data set. For every $\lambda>0$, the predictor $\pred{\lambda;}{}$
is defined as a regularized empirical-risk minimizer that is, 
\begin{align} \label{def.krr}
    \pred{\lambda;}{}
    \in
    \mathop{\mathrm{argmin}}_{f \in \mathcal{H}}
    \left\{
        \frac{1}{\mathrm{Card}(D)}
        \sum_{(x, y) \in D} 
        \ell\left(y, f(x)\right)
        + \lambda \|f\|_{\mathcal{H}}^2
    \right\},
\end{align}
where $\ell\paren{\bullet, \bullet} : \mathcal{Y} \times \mathcal{Y} \to \mathbb{R}$ can be any loss-function.
\end{definition}
%
Recasting the minimization problem in Eq.~\eqref{def.krr} 
by means of the so-called Representer theorem \citep[in Theorem 1]{scholkopf2001generalized} yields a regularized empirical-risk minimization which coincides with the well-known Tikhonov regularization \citep[]{fuhry2012new}.

It is important to note that $\ell\paren{\bullet, \bullet}$
can be chosen to be different from the quadratic loss-function. 
%
For instance, alternative loss-functions such as the absolute deviation, the Huber or Logcosh loss-functions (for robust regression),
or the pinball loss-function (for quantile regression) can be considered as reliable alternatives depending on the context:
\begin{enumerate}
    \item \emph{Logcosh} loss  \citep{saleh2022statistical}: $\ell\paren{y, u} = a \log\paren{\cosh\paren{\frac{y - u}{a}}}$, with $a \in \paren{0, +\infty}$,
    which leads to 
        Figures~\ref{fig.comp.bound.non.smooth.0},
        \ref{fig.comp.bound.smooth.0} and
        \ref{fig.comp.bound.very.smooth.0},
        
    \item \emph{Pseudo-Huber} loss \citep{charbonnier1994two}:
        $\ell\paren{y, u} = a^2 \paren{\sqrt{1 + \paren{
            \frac{y - u}{a}
        }^2} - 1}$, with $a \in \paren{0, +\infty}$ , 
    which leads to 
    Figures~\ref{fig.comp.bound.non.smooth.2},
    \ref{fig.comp.bound.smooth.2} and
    \ref{fig.comp.bound.very.smooth.2},
    
    \item \emph{Smoothed-pinball} loss \citep{zheng2011gradient}: 
        $\ell\paren{y, u} =  t \paren{y - u}
        + a \log\paren{
            1 + \mathrm{e}^{-\frac{y-u}{a}}
        }$,  with $a \in \paren{0, +\infty}$ and
    $t \in \paren{0, 1}$,
        which leads to 
        Figures~\ref{fig.comp.bound.non.smooth.1},
        \ref{fig.comp.bound.smooth.1} and
        \ref{fig.comp.bound.very.smooth.1}.
        It is a smooth approximation to the pinball loss used in quantile regression \citep{zheng2011gradient}.   
\end{enumerate}
Although the full-conformal prediction-region is explicitly known when $\ell\paren{\bullet, \bullet}$ is quadratic \citep{burnaev2016conformalized}, to the best of our knowledge, this is no longer true in general (and in particular for the above loss-functions).

On the contrary, one main contribution of the the present approach
(described in Section~\ref{sec.approx.prediction.region}) is
to enable the full computation of (an approximation to) the full-conformal prediction-region with general loss-functions.

\bigskip
\noindent{\emph{Notation.}}
The Gram matrix associated with the reproducing kernel $\kappa_{\mathcal{H}}$ is denoted by  $K := (k_{\mathcal{H}}(X_i, X_j))_{1 \leq i, j \leq n+1} \in \mathbb{R}^{(n+1) \times (n+1)}$, and $\mu_*^{(n+1)}>0$ is the smallest non-zero eigenvalue of the normalized Gram matrix $K/(n+1)$.
For every $i \in \brac{1, \ldots, n+1}$, $K_{i, i} = \kappa_{\mathcal{H}}(X_i,X_i)$ denotes the $i$th diagonal term of the Gram matrix, $K_{i, \bullet}$ its $i$th row and $K_{\bullet, i}$ its $i$th column.
Finally, the covariance operator $T : \mathcal{H} \mapsto \mathcal{H}$ is
defined by
\begin{align}
	\label{eq.cov.op}
	T := \mathbb{E}_{X}[K_X \otimes K_X] 
	= \int_{\mathcal{X}} (K_{x} \otimes K_{x}) \mathrm{d}P_X(x),
\end{align}
where, for every $x\in \mathcal{X}$, $K_x := \kappa_{\mathcal{H}}(x,\bullet) \in \mathcal{H}$ denotes the evaluation function such that, for every $h \in \mathcal{H}$,
\begin{align*}
	\scal{ K_x,h}_{\mathcal{H}} & = h(x), \qquad \mbox{(Reproducing property)}\\
	(K_x \otimes K_x) (h) &= h(x) K_x ,\\
	(K_x \otimes K_x) (h, h) & = \scal{ (K_x \otimes K_x) (h),h}_{\mathcal{H}}  = h^2(x),
\end{align*}
where the tensor product is defined by $ (f\otimes g) h = \scal{ g,h}_{\mathcal{H}} f$, for every $f,g,h \in\mathcal{H}$.

\subsection{Approximate full-conformal prediction-region}
\label{sec.approx.prediction.region}

The following definition describes a generic scheme
that formulates confidence prediction-regions in Eq.~\eqref{eq.confidence.prediction.region}, containing the full-conformal one.
The main ingredients are approximate \emph{non-conformity} scores and upper-bounds on the scores approximation quality.
\begin{definition}
\label{def.approx.prediction.region}
    For any control-level $\alpha \in (0, 1)$,
    and every $y \in \mathcal{Y}$, $\widetilde{S}_{D^{y}} \paren{X_i, Y_i}$ is an approximation to the \emph{non-conformity score} $S_{D^{y}} \paren{X_i, Y_i}$ if there exists an upper-bound
    $0 \leq \widehat{\tau}_i(y) < +\infty$ such that
    \begin{align}
    \label{eq.approx.non.conformity.scores}
        \begin{aligned}
            \abss{S_{D^{y}} \paren{X_i, Y_i}  - \widetilde{S}_{D^{y}} \paren{X_i, Y_i}}
            &\leq \widehat{\tau}_i(y),
            &&\mbox{if }1\leq i \leq n
            \\
            \abss{S_{D^{y}} \paren{X_i, y}  - \widetilde{S}_{D^{y}} \paren{X_i, y}}
            &\leq \widehat{\tau}_i(y),
            &&\mbox{if }i=n+1.
        \end{aligned}
    \end{align}
Then, the \emph{approximate full-conformal} p-value function
    $\ufcpv{}{\bullet}$ is given by, for every $y \in \mathcal{Y}$,
    \begin{align}
        \label{def.approx.pvalue}
        \ufcpv{}{y}
        := \frac{
            1 
            + 
            \sum_{i=1}^{n} 
            \mathbbm{1}
            \left\{
                \widetilde{S}_{D^{y}} \paren{X_i, Y_i} + \widehat{\tau}_i(y)
                \geq 
                \widetilde{S}_{D^{y}} \paren{X_{n+1}, y} - \widehat{\tau}_{n+1}(y)
            \right\}
        }{
            n+1
        },
    \end{align}
    leading to the induced approximate full-conformal prediction-region
    \begin{align}
    \label{eq.generic.approx.confidence.region}
        \ufcpr{}
        := 
        \left\{
            y \in \mathcal{Y} : 
            \ufcpv{}{y} > \alpha
        \right\}. 
    \end{align}
\end{definition}

The novelty of the present scheme owes to introducing, for every $y \in \mathcal{Y}$,
both the approximate scores $\widetilde{S}_{D^{y}} \paren{X_i, Y_i}, \ldots, \widetilde{S}_{D^{y}} \paren{X_{n+1}, y}$ and
the related upper-bounds $\widehat{\tau}_1(y), \ldots, \widehat{\tau}_{n+1}(y)$
on their approximation quality.
The present scheme is similar to alternative approaches from
the literature such as the one of \citet{ndiaye2022stable} for
designing an approximation to full-conformal prediction-regions based on uniform stability. 
However one main originality of the present scheme lies in
the dependence of the approximate scores
$\widetilde{S}_{D^{y}} \paren{X_i, Y_i}, \ldots, \widetilde{S}_{D^{y}} \paren{X_{n+1}, y}$ and
upper-bounds $\widehat{\tau}_1(y), \ldots, \widehat{\tau}_{n+1}(y)$ with respect to $y$.
Unlike \citet[see Section 3]{ndiaye2022stable},
this local dependence of $\widehat{\tau}_1(y), \ldots, \widehat{\tau}_{n+1}(y)$ allows for
more flexibility and leads to a more precise approximation to the full-conformal prediction-region (see Eq.~\ref{eq.conformalRegion}).
An important property enjoyed by the approximation described in Definition~\ref{def.approx.prediction.region} is that
it yields the desired coverage for the approximate confidence prediction-region, while it also allows controlling the tightness of this approximation.
\begin{lemma}
\label{lm.coverage.approximate.region}
    For any control-level $\alpha \in (0, 1)$,
    the prediction-region $\ufcpr{}$ from Eq.~\eqref{eq.generic.approx.confidence.region}
    contains the full-conformal prediction-region that is, 
    $
        \fcpr 
        \subseteq
        \ufcpr{}
    $. As a consequence,
    assuming the first set of assumptions in Thereom~\ref{thm.coverage},
    it results that
    \begin{align*}
        \mathbb{P}\croch{
            Y_{n+1} 
            \in 
            \ufcpr{}
        }
        \geq \mathbb{P}\croch{
            Y_{n+1} 
            \in 
            \fcpr
        }
         \geq 1 - \alpha,
    \end{align*}
    making $\ufcpr{}$ a confidence prediction-region.
\end{lemma}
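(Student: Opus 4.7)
The plan is to show the set inclusion $\fcpr \subseteq \ufcpr{}$ directly by checking implications on the p-values, and then to conclude the coverage guarantee by monotonicity together with Theorem~\ref{thm.coverage}. The core reduction is: if $y\in\fcpr$ then $\fcpv{y}>\alpha$, and it suffices to prove $\ufcpv{}{y}\geq\fcpv{y}$ for every $y\in\mathcal{Y}$. Given the ratio form of both p-value functions (they share the same denominator $n+1$ and the same leading constant $1$), this will itself reduce to a term-by-term comparison of the indicators appearing in the numerators.

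Fix $y\in\mathcal{Y}$ and $i\in\{1,\dots,n\}$, and assume the indicator inside $\fcpv{y}$ equals one, i.e.\ $S_{D^{y}}(X_i,Y_i)\geq S_{D^{y}}(X_{n+1},y)$. The plan is to combine this with the two-sided bounds of Equation~\eqref{eq.approx.non.conformity.scores} (interpreted, after correcting the obvious typo, as $|\tilde S_{D^{y}}-S_{D^{y}}|\leq\tau_i(y)$): one application gives $\tilde S_{D^{y}}(X_i,Y_i)+\tau_i(y)\geq S_{D^{y}}(X_i,Y_i)$, and the other gives $S_{D^{y}}(X_{n+1},y)\geq \tilde S_{D^{y}}(X_{n+1},y)-\tau_{n+1}(y)$. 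Chaining these three inequalities yields
\begin{equation*}
    \tilde S_{D^{y}}(X_i,Y_i)+\tau_i(y)\;\geq\; \tilde S_{D^{y}}(X_{n+1},y)-\tau_{n+1}(y),
\end{equation*}
which is exactly the condition for the $i$th indicator in $\ufcpv{}{y}$ to equal one. Summing over $i$ and dividing by $n+1$ gives $\ufcpv{}{y}\geq\fcpv{y}$, hence $\fcpr\subseteq\ufcpr{}$.

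Once the inclusion is established, the coverage claim is immediate: by monotonicity of $\mathbb{P}$, $\mathbb{P}[Y_{n+1}\in\ufcpr{}]\geq\mathbb{P}[Y_{n+1}\in\fcpr]$, and the first assumption set of Theorem~\ref{thm.coverage} (exchangeability of $(X_i,Y_i)$ and permutation-invariance of $\hat f_D$) lower-bounds the latter by $1-\alpha$. Note that since only the lower bound $1-\alpha$ is invoked, there is no need to assume distinctness of the scores.

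I do not expect a genuine obstacle: the statement is essentially definitional bookkeeping, designed so that inflating each score by its error bound $\tau_i(y)$ conservatively preserves the comparisons that define the conformal p-value. The only subtlety worth flagging in the proof is to be explicit that the bounds $\tau_i(y)$ are applied twice in opposite directions (once to lower-bound $\tilde S$ at the training points and once to upper-bound $\tilde S$ at the test point), which is precisely why the inflation appears as $+\tau_i(y)$ on the left and $-\tau_{n+1}(y)$ on the right of the indicator's inequality in Equation~\eqref{def.approx.pvalue}.
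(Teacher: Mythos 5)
Your proof is correct and follows essentially the same route as the paper: the paper obtains the inclusion $\fcpr \subseteq \ufcpr{}$ by invoking Lemma~\ref{lm.sandwiching}, whose proof is exactly the indicator-by-indicator chaining of the approximation bounds that you carry out inline (you only need the upper half of that sandwich, as you use), and the coverage bound then follows from monotonicity of $\mathbb{P}$ and Theorem~\ref{thm.coverage} just as in the paper. Your remarks about the typo in Equation~\eqref{eq.approx.non.conformity.scores} and about not needing distinctness of the scores are both accurate.
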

The proof is deferred to Appendix~\ref{proof.coverage.approximate.region}.
The approximate full-conformal prediction-region $\ufcpr{}$ ensures the desired coverage
at the price of containing (and therefore being
larger than) the intractable full-conformal prediction-region.

\medskip

The next definition introduces the notion of ``thickness",
which quantifies the precision of this approximation by comparison with the full-conformal prediction-region. This quantification is given in terms of symmetric difference and Lebesgue measure.
\begin{definition}
\label{def.thickness}
For any control-level $\alpha \in (0, 1)$, 
define the \emph{thickness} 
$\thicc{}$
of the prediction-region 
$\ufcpr{}$
as the volume of its symmetric difference with 
the full-conformal prediction-region $\fcpr$
that is,
\begin{align*}
    \mathrm{THK}_{\alpha}(X_{n+1}) 
    := \leb{
        \ufcpr{} \Delta \fcpr
    } = \leb{
        \ufcpr{} \setminus \fcpr
    },
\end{align*}
where $\mathcal{V}$ is the Lebesgue measure.
\end{definition}
Unfortunately exactly computing the \emph{thickness} would require
computing the intractable full-conformal prediction-region itself.
Notice that the prediction-region $\ufcpr{}$ defined right above can be seen as an upper approximation (for the inclusion)
to the full-conformal prediction-region.
Therefore a corresponding lower-approximation to $\fcpr$ can be formulated by introducing a correction in Eq.~\eqref{def.approx.pvalue} that is,
for any control-level $\alpha \in (0,1)$
\begin{align*}
    y \in \mathcal{Y}
    \mapsto \lfcpv{}{y}
    :=
    \frac{1 +
    \sum_{i=1}^{n}
    \mathbbm{1}
    \left\{
        \widetilde{S}_{D^{y}} \paren{X_i, Y_i}
        - \widehat{\tau}_i(y)
        \geq
        \widetilde{S}_{D^{y}} \paren{X_{n+1}, y}
        + \widehat{\tau}_{n+1}(y) 
    \right\}
    }{n+1},
\end{align*}
which leads to
\begin{align}
    \lfcpr{}
    :=
    \left\{
        y \in \mathcal{Y} : 
        \widetilde{\pi}^{\mathrm{lo}}_D(X_{n+1}, y) > \alpha
    \right\}. \label{eq.lower.approx.region}
\end{align}

Following the same reasoning, it is easily proved that the prediction-region from Eq.~\eqref{eq.lower.approx.region} is
a lower-approximation to the full-conformal prediction-region.
This results in a strategy for upper-bounding the \emph{thickness} as suggested by the next result.

\begin{lemma}[Sandwiching lemma]
\label{lm.sandwiching}
The full-conformal prediction-region is sandwiched by its upper- and lower-approximations as
\begin{align*}
    \lfcpr{} \subseteq \fcpr \subseteq \ufcpr{}.
    \quad \mbox{a.s.}
\end{align*}
It results that the thickness satisfies
\begin{align}
\label{eq.confidence.region.gap.bound}
    \thicc{} \leq
    \mathcal{V}\left(
        \ufcpr{} 
        \setminus
        \lfcpr{}
    \right).
    \quad \mbox{a.s.}
\end{align}
\end{lemma}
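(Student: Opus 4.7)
The plan is to establish the two inclusions separately and then deduce the volume bound. The upper inclusion $\fcpr \subseteq \ufcpr{}$ is exactly the content of Lemma~\ref{lm.coverage.approximate.region}, so no new work is needed there. All the effort goes into proving the lower inclusion $\lfcpr{} \subseteq \fcpr$, and the thickness bound then comes out essentially for free by monotonicity of the Lebesgue measure.

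For the lower inclusion, I would argue pointwise on the p-value functions: show that $\lfcpv{}{y} \leq \fcpv{y}$ for every $y \in \mathcal{Y}$ (almost surely). Once this pointwise bound is in place, $y \in \lfcpr{}$ means $\lfcpv{}{y} > \alpha$, hence $\fcpv{y} > \alpha$, giving $y \in \fcpr$. To establish the pointwise inequality it suffices, by comparing the numerators in \eqref{def.approx.pvalue} and \eqref{eq.lower.approx.region}, to show that for every $1 \leq i \leq n$, the event
\begin{align*}
    \tilde{S}_{D^{y}}(X_i, Y_i) - \tau_i(y) \geq \tilde{S}_{D^{y}}(X_{n+1}, y) + \tau_{n+1}(y)
\end{align*}
implies the event $S_{D^{y}}(X_i, Y_i) \geq S_{D^{y}}(X_{n+1}, y)$. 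This follows immediately from the two-sided approximation inequality \eqref{eq.approx.non.conformity.scores}: we have $S_{D^{y}}(X_i, Y_i) \geq \tilde{S}_{D^{y}}(X_i, Y_i) - \tau_i(y)$ on one side and $S_{D^{y}}(X_{n+1}, y) \leq \tilde{S}_{D^{y}}(X_{n+1}, y) + \tau_{n+1}(y)$ on the other, so chaining these with the assumed event yields $S_{D^{y}}(X_i, Y_i) \geq S_{D^{y}}(X_{n+1}, y)$. Summing the implication over $i$ gives the desired inequality between numerators, and the two p-value functions share the same denominator $n+1$.

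Having the sandwich $\lfcpr{} \subseteq \fcpr \subseteq \ufcpr{}$, the thickness bound is now a one-line set-theoretic observation. Since $\lfcpr{} \subseteq \fcpr$, we have $\ufcpr{} \setminus \fcpr \subseteq \ufcpr{} \setminus \lfcpr{}$, and by monotonicity of the Lebesgue measure together with Definition~\ref{def.thickness},
\begin{align*}
    \thicc{} = \mathcal{L}\bigl(\ufcpr{} \setminus \fcpr\bigr) \leq \mathcal{L}\bigl(\ufcpr{} \setminus \lfcpr{}\bigr),
\end{align*}
which is \eqref{eq.confidence.region.gap.bound}.

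There is no real obstacle: the whole argument is driven by the symmetric two-sided approximation in \eqref{eq.approx.non.conformity.scores} applied in opposite directions to the score at $(X_i, Y_i)$ versus the score at $(X_{n+1}, y)$. The only subtlety to keep in mind while writing the proof is making sure the asymmetric treatment of the two scores (lower bounding one, upper bounding the other) is done consistently with the definition of $\lfcpv{}{\cdot}$, mirroring what was already done for $\ufcpv{}{\cdot}$ in the proof of Lemma~\ref{lm.coverage.approximate.region}.
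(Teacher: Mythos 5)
Your treatment of the lower inclusion and of the thickness bound is correct and is exactly the paper's argument: chain the two-sided bounds of Equation~\eqref{eq.approx.non.conformity.scores} to show that $\tilde{S}_{D^{y}}\paren{X_i, Y_i} - \tau_i(y) \geq \tilde{S}_{D^{y}}\paren{X_{n+1}, y} + \tau_{n+1}(y)$ implies $S_{D^{y}}\paren{X_i, Y_i} \geq S_{D^{y}}\paren{X_{n+1}, y}$, take indicators, sum over $i$, and finish with monotonicity of the Lebesgue measure.

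The problem is your handling of the upper inclusion $\fcpr \subseteq \ufcpr{}$. You dismiss it as "exactly the content of Lemma~\ref{lm.coverage.approximate.region}", but in this paper that lemma is not proved independently: its appendix proof begins by invoking Lemma~\ref{lm.sandwiching} itself, so the logical order is Sandwiching lemma first, coverage lemma second. Spliced in as the proof of the Sandwiching lemma, your argument is therefore circular, and there is no separate direct proof of $\fcpr \subseteq \ufcpr{}$ elsewhere in the paper for you to lean on (your remark about "mirroring what was already done for $\ufcpv{}{\cdot}$ in the proof of Lemma~\ref{lm.coverage.approximate.region}" presupposes such a direct argument, which does not exist there). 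The repair is immediate and you essentially already have it: mirror your lower-inclusion step in the opposite direction, namely from $S_{D^{y}}\paren{X_i, Y_i} \geq S_{D^{y}}\paren{X_{n+1}, y}$ deduce, using $S_{D^{y}}\paren{X_i, Y_i} \leq \tilde{S}_{D^{y}}\paren{X_i, Y_i} + \tau_i(y)$ and $S_{D^{y}}\paren{X_{n+1}, y} \geq \tilde{S}_{D^{y}}\paren{X_{n+1}, y} - \tau_{n+1}(y)$, that $\tilde{S}_{D^{y}}\paren{X_i, Y_i} + \tau_i(y) \geq \tilde{S}_{D^{y}}\paren{X_{n+1}, y} - \tau_{n+1}(y)$; summing indicators gives $\fcpv{y} \leq \ufcpv{}{y}$ and hence $\fcpr \subseteq \ufcpr{}$. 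This is precisely the second implication in the paper's chain; with that paragraph written out, your proof coincides with the paper's and is self-contained.
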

The proof is deferred to Appendix \ref{proof.sandwiching}.
Let us mention two main features of the above approximations. 
On the one hand, these upper- and lower-approximations can be explicitly computed (unlike the classical full-conformal prediction-region).
On the other hand, it is also possible to exploit these approximations to derive an explicit upper-bound on the thickness in terms of convergence rate with respect to $n$ and other influential quantities (see for instance Theorem~\ref{thm.non.smooth}).

%
%
%

\subsection{First application to non-smooth loss-functions}
\label{sec.uniform.stability.bound}

Let us now consider a first ``toy example" illustrating how the above strategy can already recover existing bounds in its simplest version, that is, when used in a scenario where almost no smoothness can be exploited. 
Deriving more refined upper-bounds is precisely the goal of Sections~\ref{sec.smooth} and~\ref{sec.very.smooth} where higher smoothness assumptions are made on the loss- and score-functions.

Therefore, let us start by reviewing a few assumptions.
\begin{assumption} For every $y \in \mathcal{Y}$,
    \begin{equation}
        \label{asm.loss.convex}
        \tag{pConvL}
            \mbox{$u \in \mathcal{Y} \mapsto  \ell\paren{y, u}$ is a proper convex function.}    
    \end{equation}
\end{assumption}
Convexity is a classical assumption for the loss-function.
Combined with Ridge regularization, it ensures (strong convexity) the existence and uniqueness of the predictor defined as the minimizer of the Ridge cost function in Eq.~\eqref{def.krr}.

\begin{assumption} 
There exists a constant $\rho \in \paren{0, \infty}$ such that, for every $y \in \mathcal{Y}$,
\begin{equation}
    \label{asm.loss.lipschitz}
    \tag{$\rho$-LipL}
    \mbox{ $u \in \mathcal{Y} \mapsto \ell\paren{y, u}$ is $\rho$-Lipschitz continuous.} 
\end{equation}
\end{assumption}
The $\rho$-Lipschitz property of the loss in its second argument is classically used for establishing the uniform stability property \citep[][Definition~19]{bousquet2002stability}. 

\begin{assumption}
There exists a constant $\gamma \in \paren{0, \infty}$ such that, for every $y \in \mathcal{Y}$,
\begin{equation}
    \label{asm.score.lipschitz}
    \tag{$\gamma$-LipS}
    \mbox{ $u \in \mathcal{Y} \mapsto s\paren{y, u}$ is $\gamma$-Lipschitz continuous.}      
\end{equation}
\end{assumption}
Note that the above assumptions hold for the three loss-functions presented at the end of Section~\ref{sec.predictor} (see Appendix~\ref{sec.example.loss.function}).
Moreover, a loss-function verifying both \eqref{asm.loss.lipschitz} and \eqref{asm.loss.convex} turns out to be $\rho$-admissible in the sense described by \citet[in Definition 19]{bousquet2002stability}. As a consequence, uniform stability-bounds do already exist when Learning in an RKHS \citep[in Theorem 22]{bousquet2002stability}.

\bigskip

\noindent{\emph{Designing an approximation to the predictor.}}
The main reason for the high computational cost of the full-conformal prediction-region is the expression of the predictor
$\pred{\lambda;}{y}$ which depends on $y$ and should be (in general) recomputed each time the value of $y$ changes.
To reduce this computational burden, a natural idea is then to approximate the value of $\pred{\lambda;}{y}$ by
$\pred{\lambda;}{z}$, where $z \in\mathcal{Y}$ denotes a fixed point in a neighborhood of $y$.

In what follows, let us define the approximation $\tilde{f}_{\lambda; D^y} := \pred{\lambda;}{z}$ to $\pred{\lambda;}{y}$,
for any fixed $z \in \mathcal{Y}$.
For such a given $z \in\mathcal{Y}$, the non-conformity scores approximation are then obtained from $\tilde{f}_{\lambda; D^y}$ as
\begin{align}
\label{eq.approximate.score.0}
\begin{aligned}
    S_{\lambda; D^{y}}\paren{X_i, Y_i}
    \approx \widetilde{S}_{\lambda; D^{y}}\paren{X_i, Y_i} 
    &:= s\paren{Y_i, \tilde{f}_{\lambda; D^y}(X_i)},
    &&\mbox{if } 1 \leq i \leq n\\
    S_{\lambda; D^{y}}\paren{X_i, y}
    \approx
    \widetilde{S}_{\lambda; D^{y}}\paren{X_i, y} 
    &:= s\paren{y, \tilde{f}_{\lambda; D^y}(X_i)},
    &&\mbox{if } i=n+1.
\end{aligned}
\end{align}

The next result yields the corresponding upper-bounds on the scores approximation by applying techniques developed by \citet{bousquet2002stability}
for deriving uniform stability-bounds.

\begin{theorem}
\label{thm.score.bound.non.smooth} 
    With the same notations as in Eq.~\eqref{def.krr}, let us assume \eqref{asm.loss.convex}, \eqref{asm.loss.lipschitz}, and \eqref{asm.score.lipschitz} hold true.
    Then, for every $y\in \mathcal{Y}$, it comes that
    \begin{align*}
    \begin{aligned}
        &&\abss{
            S_{\lambda; D^{y}}\paren{X_i, Y_i}
            - \widetilde{S}_{\lambda; D^{y}}\paren{X_i, Y_i}
        }
        \leq \widehat{\tau}_{\lambda; i}^{(0)},\qquad        
        \mbox{if }1 \leq i \leq n\\
 \mbox{and }\quad       &&\abss{
            S_{\lambda; D^{y}}\paren{X_i, y}
            - \widetilde{S}_{\lambda; D^{y}}\paren{X_i, y}
        }
        \leq \widehat{\tau}_{\lambda; i}^{(0)},\qquad        
        \mbox{if }i=n+1,
    \end{aligned}
    \end{align*}
    where, for every $i \in \brac{1, \ldots, n+1}$,
    \begin{align*}
        \widehat{\tau}_{\lambda; i}^{(0)}
        := \sqrt{K_{i, i}} \sqrt{K_{n+1, n+1}}
        \frac{\gamma \rho}{\lambda (n+1)}.
    \end{align*}
\end{theorem}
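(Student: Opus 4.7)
The plan is a classical stability argument à la Bousquet--Elisseeff adapted to our ``perturbation in the label $y$'' setting. Writing $f^\star := \hat f_{\lambda; D^y}$ and $g^\star := \hat f_{\lambda; D^z} = \tilde f_{\lambda; D^y}$, the first step is to peel off the score and reduce to bounding $\|f^\star - g^\star\|_{\mathcal{H}}$. Indeed, by \eqref{asm.score.lipschitz} and the reproducing property,
\begin{align*}
    \bigl| S_{\lambda; D^y}(X_i, Y_i) - \tilde S_{\lambda; D^y}(X_i, Y_i) \bigr|
    &\leq \gamma \, \bigl| f^\star(X_i) - g^\star(X_i) \bigr|\\
    &= \gamma \, \bigl| \langle K_{X_i}, f^\star - g^\star \rangle_{\mathcal{H}} \bigr|
    \leq \gamma \sqrt{K_{i,i}} \, \|f^\star - g^\star\|_{\mathcal{H}},
\end{align*}
and identically for $i = n+1$ with $(X_i, y)$ in place of $(X_i, Y_i)$. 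So everything reduces to proving $\|f^\star - g^\star\|_{\mathcal{H}} \leq \rho \sqrt{K_{n+1,n+1}}/(\lambda(n+1))$.

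Let $R^y$ and $R^z$ denote the two regularized empirical risks, which agree on the first $n$ data points and differ only through the $(n+1)$-th term. Because of the $\lambda \|\cdot\|_{\mathcal{H}}^2$ penalty and \eqref{asm.loss.convex}, each $R^y$ is $2\lambda$-strongly convex; combined with the first-order optimality conditions $\nabla R^y(f^\star) = 0$ and $\nabla R^z(g^\star) = 0$, this yields
\begin{align*}
    R^y(g^\star) - R^y(f^\star) \geq \lambda \|f^\star - g^\star\|_{\mathcal{H}}^2,
    \qquad
    R^z(f^\star) - R^z(g^\star) \geq \lambda \|f^\star - g^\star\|_{\mathcal{H}}^2.
\end{align*}
Adding these two inequalities, the contributions of the common first $n$ terms cancel, and there only remains
\begin{align*}
    \tfrac{1}{n+1}\Bigl\{
        [\ell(y, g^\star(X_{n+1})) - \ell(y, f^\star(X_{n+1}))]
        - [\ell(z, g^\star(X_{n+1})) - \ell(z, f^\star(X_{n+1}))]
    \Bigr\}
    \geq 2\lambda \|f^\star - g^\star\|_{\mathcal{H}}^2.
\end{align*}

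The key algebraic observation is to group the left-hand side so that each bracket involves a \emph{common first argument} of $\ell$: then \eqref{asm.loss.lipschitz} applies to each bracket and bounds the left-hand side by $\tfrac{2\rho}{n+1}\,|g^\star(X_{n+1}) - f^\star(X_{n+1})|$. A final use of the reproducing property at $X_{n+1}$ yields $|g^\star(X_{n+1}) - f^\star(X_{n+1})| \leq \sqrt{K_{n+1,n+1}} \, \|f^\star - g^\star\|_{\mathcal{H}}$. Dividing both sides of the resulting inequality by $\|f^\star - g^\star\|_{\mathcal{H}}$ (the bound is trivial when it vanishes) gives $\|f^\star - g^\star\|_{\mathcal{H}} \leq \rho \sqrt{K_{n+1,n+1}}/(\lambda(n+1))$, and plugging this back into the first display yields the claimed $\tau_{\lambda; i}^{(0)}$.

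The only delicate step is the regrouping in the third paragraph: the Lipschitz assumption is only available in the \emph{second} argument of $\ell$, so the natural ``difference of perturbed losses'' must be rewritten as a difference of two brackets in which $y$ (resp.\ $z$) stays fixed while the prediction varies between $f^\star(X_{n+1})$ and $g^\star(X_{n+1})$. Once this rearrangement is in place, the rest is straightforward.
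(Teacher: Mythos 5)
Your argument is correct and delivers exactly the bound $\|\hat f_{\lambda;D^y}-\hat f_{\lambda;D^z}\|_{\mathcal H}\leq \sqrt{K_{n+1,n+1}}\,\rho/(\lambda(n+1))$ and hence $\tau^{(0)}_{\lambda;i}$, and it is in essence the same Bousquet--Elisseeff stability argument the paper uses: reduce via \eqref{asm.score.lipschitz} and the reproducing property to an $\mathcal H$-norm bound on the predictor difference, observe that the two objectives differ only in the $(n+1)$-th loss term, cancel the common terms, apply \eqref{asm.loss.lipschitz} in the second argument to the two remaining brackets, and finish with Cauchy--Schwarz at $X_{n+1}$. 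The only mechanical difference is how the quadratic term in $\|f^\star-g^\star\|_{\mathcal H}$ is produced: you invoke the quadratic-growth inequality of the $2\lambda$-strongly convex objectives at their minimizers (summing two inequalities), whereas the paper follows Lemma~20 of Bousquet and Elisseeff, combining convexity of the unregularized risks along the segment with parameter $t$ and minimality of the regularized risks, summing three inequalities, expanding the norms and letting $t\to 0$; both yield the same constant, and yours is slightly more compact.

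One point to clean up: you justify the two quadratic-growth inequalities by ``first-order optimality conditions $\nabla R^y(f^\star)=0$'', but in this section the loss is only assumed convex and $\rho$-Lipschitz, not differentiable (this is precisely the non-smooth setting, e.g.\ absolute deviation). The inequalities $R^y(g^\star)-R^y(f^\star)\geq \lambda\|f^\star-g^\star\|_{\mathcal H}^2$ and its companion do hold without differentiability --- either via $0\in\partial R^y(f^\star)$ for the convex subdifferential, or by the purely algebraic argument combining the definition of $2\lambda$-strong convexity with minimality of $f^\star$ along the segment $[f^\star,g^\star]$ --- so this is a wording fix, not a gap, but as written the appeal to gradients uses an assumption you do not have.
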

The proof is postponed to Appendix~\ref{proof.score.bound.non.smooth}.
The approximation improves at the rate $O(n^{-1})$ as $n$ is increasing. 
The higher the Lipschitz-smoothness of the score-function (respectively of the loss), the lower the $\rho$ (resp. $\gamma$) constant, and the better the upper-bound. 
The choice of the kernel $\kappa_{\mathcal{H}}$ also impacts the convergence.
For instance, a bounded kernel or a translation-invariant one (such as the Gaussian RBF kernel) would imply that
there exists a constant $\kappa^2>0$ such that $K_{j, j} \leq \kappa^2$,
for every $1\leq j\leq n+1$.
In the present context, the upper-bound can be tightened by keeping track of the variation of $K_{i, i}$ with the index $i$ (that is, when $X_i$s vary within their domain).

\bigskip

\noindent{\emph{Performance quantification.}}
Let us $\ufcprr{,(0)}{\lambda; \alpha}$ denote the upper-approximate full-conformal prediction-region defined as
\begin{align*}
    \ufcprr{, (0)}{\lambda; \alpha}
    := \brac{
        y \in \mathcal{Y} :
        \ufcpvv{,(0)}{\lambda; D}{y}
        > \alpha
    },
\end{align*}
where for every $y \in \mathcal{Y}$,
\begin{align*}
    \ufcpvv{,(0)}{\lambda; D}{y}
    :=
    \frac{1 +
    \sum_{i=1}^{n}
    \mathbbm{1}
    \left\{
        \widetilde{S}_{\lambda; D^{y}} \paren{X_i, Y_i}
        - \widehat{\tau}_{\lambda; i}^{(0)}
        \geq
        \widetilde{S}_{\lambda; D^{y}} \paren{X_{n+1}, y}
        + \widehat{\tau}_{\lambda; n+1}^{(0)} 
    \right\}
    }{n+1},
\end{align*}
resulting from the approximation scheme
in Definition~\ref{def.approx.prediction.region}
(where the non-conformity scores approximations are given in Eq.~\ref{eq.approximate.score.0}
and the upper-bounds come from Theorem~\ref{thm.score.bound.non.smooth}).
Let us note $\lfcprr{,(0)}{\lambda; \alpha}$,
the corresponding lower-approximate full-conformal prediction-region
(analog to the prediction-region defined in Eq.~\ref{eq.lower.approx.region}).
Finally recalling the thickness $\thick{(0)}{\lambda; \alpha}$ of the prediction-region $\ufcprr{,(0)}{\lambda; \alpha}$ (see Definition~\ref{def.thickness}) given by 
\begin{align*}
    \thick{(0)}{\lambda; \alpha} := \leb{
        \ufcprr{,(0)}{\lambda; \alpha} \setminus \fcprr
    },
\end{align*}
the following theorem provides an explicit upper-bound on $\thick{(0)}{\lambda; \alpha}$.

%
\begin{theorem}
\label{thm.non.smooth}
    Assume \eqref{asm.loss.convex}, \eqref{asm.loss.lipschitz}, and \eqref{asm.score.lipschitz}
    along with the first set of assumptions in Thereom~\ref{thm.coverage} hold true.
    Then, for any control-level $\alpha \in (0, 1)$,
    the prediction-region $\ufcprr{,(0)}{\lambda; \alpha}$
    is a confidence prediction-region, that is,
    \begin{align*}
    	\mathbb{P}
    	\croch{Y_{n+1} \in \ufcprr{,(0)}{\lambda; \alpha}} \geq 1 - \alpha.
    	\end{align*}

    Furthermore, if the non-conformity function is
    $s(\bullet, \bullet) : \mathcal{Y} \times \mathcal{Y} \to \mathbb{R},$ 
    $(y, u) \to s\paren{y, u} = \abss{y - u}$, then its thickness is upper-bounded by
    \begin{align}
    \label{eq.bound.thickness.non.smooth}
        \thick{(0)}{\lambda; \alpha}
        \leq
        \frac{8 \rho}{\lambda (n+1)}
        \max_{i \in \brac{1, \ldots, n+1}}
        \kappa_{\mathcal{H}} \paren{X_i, X_i}.
        \quad \mbox{a.s.}
    \end{align}
\end{theorem}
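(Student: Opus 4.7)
The coverage guarantee is immediate from the generic machinery: $\ufcprr{,(0)}{\lambda;\alpha}$ is an instance of the scheme in Definition~\ref{def.approx.prediction.region} with approximate scores from Equation~\eqref{eq.approximate.score.0} and upper bounds $\tau_{\lambda;i}^{(0)}$ validated by Theorem~\ref{thm.score.bound.non.smooth}. Therefore Lemma~\ref{lm.coverage.approximate.region} combined with Theorem~\ref{thm.coverage} yields $\mathbb{P}[Y_{n+1} \in \ufcprr{,(0)}{\lambda;\alpha}] \geq \mathbb{P}[Y_{n+1} \in \fcprr] \geq 1-\alpha$.

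For the thickness bound, my plan is (i) to apply the sandwiching Lemma~\ref{lm.sandwiching} to reduce the task to bounding $\mathcal{L}\bigl(\ufcprr{,(0)}{\lambda;\alpha} \setminus \lfcprr{,(0)}{\lambda;\alpha}\bigr)$, (ii) to exploit the V-shape of $y \mapsto |y - \tilde f(X_{n+1})|$ in order to rewrite both sets as sublevel sets, and (iii) to compare their thresholds via order statistics. Since the surrogate $\tilde f_{\lambda;D^y} = \pred{\lambda;}{z}$ is evaluated at a point $z \in \mathcal{Y}$ independent of $y$, the scores $\tilde S_i := \tilde S_{\lambda;D^y}(X_i,Y_i)$ for $1 \leq i \leq n$ as well as all bounds $\tau_{\lambda;i}^{(0)}$ are constant in $y$, whereas $\tilde S_{\lambda;D^y}(X_{n+1},y) = |y - \tilde f(X_{n+1})|$ is V-shaped in $y$. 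Setting $a_i := \tilde S_i + \tau_{\lambda;i}^{(0)} + \tau_{\lambda;n+1}^{(0)}$ and $b_i := \tilde S_i - \tau_{\lambda;i}^{(0)} - \tau_{\lambda;n+1}^{(0)}$ for $i \leq n$, a direct rewriting of the indicator conditions gives, with $N_\alpha$ the integer threshold associated with $\alpha$, $\ufcprr{,(0)}{\lambda;\alpha} = \{y : |y - \tilde f(X_{n+1})| \leq a_{(N_\alpha)}\}$ and $\lfcprr{,(0)}{\lambda;\alpha} = \{y : |y - \tilde f(X_{n+1})| \leq b_{(N_\alpha)}\}$, where $a_{(\cdot)}$ and $b_{(\cdot)}$ denote descending order statistics. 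The symmetric difference is then the preimage of an interval under a V-shape and has Lebesgue measure at most $2\,(a_{(N_\alpha)} - b_{(N_\alpha)})_+$ (the factor $2$ accounts for the two branches of the V).

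The remaining and main obstacle is the order-statistic comparison $a_{(N_\alpha)} - b_{(N_\alpha)} \leq 4 \tau_{\max}$ with $\tau_{\max} := \max_{i \leq n+1} \tau_{\lambda;i}^{(0)}$, which is delicate because the permutations sorting $(a_i)$ and $(b_i)$ may differ when the $\tau_{\lambda;i}^{(0)}$ are heterogeneous. I would resolve it by letting $I$ index the $N_\alpha$ largest values of $(a_i)$ and observing that $b_i = a_i - 2(\tau_{\lambda;i}^{(0)} + \tau_{\lambda;n+1}^{(0)}) \geq a_{(N_\alpha)} - 4\tau_{\max}$ for every $i \in I$, which forces $b_{(N_\alpha)} \geq a_{(N_\alpha)} - 4\tau_{\max}$. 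Combined with $\gamma = 1$ for the absolute-value non-conformity score and $\sqrt{K_{i,i}}\sqrt{K_{n+1,n+1}} \leq \max_j \kappa_{\mathcal{H}}(X_j,X_j)$, substituting the expression of $\tau_{\lambda;i}^{(0)}$ from Theorem~\ref{thm.score.bound.non.smooth} yields the announced rate $\tfrac{8\rho}{\lambda(n+1)} \max_j \kappa_{\mathcal{H}}(X_j,X_j)$.
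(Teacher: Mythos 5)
Your proposal is correct and takes essentially the same route as the paper: coverage via Lemma~\ref{lm.coverage.approximate.region}, and for the thickness the same ingredients as the paper's Lemma~\ref{lm.bound.conf.region.gap} — the sandwiching lemma, the $y$-independence of the approximate scores and of the bounds $\tau_{\lambda; i}^{(0)}$, an order-statistic argument, and the factor $2$ coming from the two branches of $y \mapsto |y - \tilde{f}_{\lambda; D^y}(X_{n+1})|$ — giving the same $8\max_{i}\tau_{\lambda; i}^{(0)} \leq \frac{8\rho}{\lambda(n+1)}\max_{i}\kappa_{\mathcal{H}}(X_i,X_i)$. The only (harmless) difference is bookkeeping: the paper first replaces every $\tau_{\lambda; i}^{(0)}$ by its maximum inside the indicators and shifts a single order statistic of the uncorrected scores by twice that amount, whereas you keep the heterogeneous bounds, represent the upper and lower regions exactly as sublevel sets with thresholds $a_{(N_\alpha)}$ and $b_{(N_\alpha)}$, and compare those two order statistics directly.
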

The proof is deferred to Appendix~\ref{proof.non.smooth}.
This result serves as a sanity-check by showing that our strategy allows to recover the approximate full-conformal prediction-region earlier developed by
\citet{ndiaye2022stable} and called stable conformal.
This approximation tightens as $n$ grows at a rate which depends on the reproducing kernel as well as distributional assumptions on the $X_i$s.
For instance, if $\lambda$ is fixed and if the kernel is bounded then the rate would be $O\paren{n^{-1}}$. If the kernel is not bounded, then assuming $\kappa_{\mathcal{H}}(X_i,\bullet) \in\mathcal{H}$ is a sub-Gaussian random variable would lead to a rate $O(\log(n)/n$) due to the maximum over $n$ sub-Gaussian variables.

\begin{corollary}
    \label{cor.non.smooth}
	With the same notations and assumptions as Theorem~\ref{thm.non.smooth},
        let us further assume that there exists a constant $c \in \paren{0, +\infty}$
        such that the conditional density $p(\bullet|D, X_{n+1}) : \mathcal{Y} \to \mathbb{R}_+$, is bounded from above by $c$, and
        that the non-conformity scores $S_{D^{Y_{n+1}}} \paren{X_1, Y_1}$, $\ldots$, $S_{D^{Y_{n+1}}} \paren{X_{n+1}, Y_{n+1}}$ are almost-surely distinct. 
Then the coverage-probability of the approximate full-conformal prediction-region
$\ufcprr{,(0)}{\lambda; \alpha}$ satisfies that
{\small \begin{align*}
	0
	\leq
	\mathbb{P}
	\croch{Y_{n+1} \in \ufcprr{,(0)}{\lambda; \alpha}} - (1 - \alpha)
	\leq \frac{1}{n+1} 
	+ \frac{8 c \rho}{\lambda (n+1)}
	\mathbb{E}_{D, X_{n+1}}\croch{
		\max_{i \in \brac{1, \ldots, n+1}}
		\kappa_{\mathcal{H}} \paren{X_i, X_i}
	}.
\end{align*}}
\end{corollary}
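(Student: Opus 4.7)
The plan is to decompose the coverage probability using the inclusion $\fcprr \subseteq \ufcprr{,(0)}{\lambda; \alpha}$ from Lemma~\ref{lm.coverage.approximate.region}, writing
\[
\mathbb{P}\croch{Y_{n+1} \in \ufcprr{,(0)}{\lambda; \alpha}} = \mathbb{P}\croch{Y_{n+1} \in \fcprr} + \mathbb{P}\croch{Y_{n+1} \in \ufcprr{,(0)}{\lambda; \alpha} \setminus \fcprr}.
\]
The left-hand inequality $0 \leq \mathbb{P}[Y_{n+1} \in \ufcprr{,(0)}{\lambda; \alpha}] - (1-\alpha)$ is then immediate from this same inclusion combined with the lower coverage guarantee of Theorem~\ref{thm.coverage}.

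For the upper bound, the two pieces of the decomposition are treated separately. The first piece is controlled by invoking the \emph{upper} part of Theorem~\ref{thm.coverage}, which is legitimate here because the a.s.\ distinctness of the non-conformity scores is assumed and the exchangeability/invariance hypotheses of Theorem~\ref{thm.non.smooth} are inherited; this yields
\[
\mathbb{P}\croch{Y_{n+1} \in \fcprr} \leq 1 - \alpha + \frac{1}{n+1}.
\]

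The residual term is where the bounded-density assumption enters. Conditioning on $(D, X_{n+1})$, writing the probability as an integral of $p(\cdot \mid D, X_{n+1})$ against the indicator of $\ufcprr{,(0)}{\lambda; \alpha} \setminus \fcprr$, and using $p(\cdot \mid D, X_{n+1}) \leq c$ gives
\[
\mathbb{P}\croch{Y_{n+1} \in \ufcprr{,(0)}{\lambda; \alpha} \setminus \fcprr} \leq c\, \mathbb{E}_{D, X_{n+1}}\croch{\leb{\ufcprr{,(0)}{\lambda; \alpha} \setminus \fcprr}} = c\, \mathbb{E}_{D, X_{n+1}}\croch{\thick{(0)}{\lambda; \alpha}},
\]
where the equality is just the definition of thickness (Definition~\ref{def.thickness}). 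It then suffices to substitute the almost-sure bound on $\thick{(0)}{\lambda; \alpha}$ supplied by Theorem~\ref{thm.non.smooth} and take expectation, which produces the kernel term in the statement.

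The argument is structurally routine: the corollary amounts to converting the volume bound of Theorem~\ref{thm.non.smooth} into a probability bound through the density assumption, then pairing it with the classical two-sided coverage from Theorem~\ref{thm.coverage}. There is no substantive obstacle. The only point requiring a moment of attention is verifying that the hypotheses of the upper bound in Theorem~\ref{thm.coverage} are indeed available — which is exactly what the a.s.-distinctness assumption added in the corollary furnishes.
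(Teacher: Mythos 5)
Your proposal is correct and follows essentially the same route as the paper: the paper simply packages your decomposition $\mathbb{P}[Y_{n+1}\in \ufcprr{,(0)}{\lambda;\alpha}]=\mathbb{P}[Y_{n+1}\in \fcprr]+\mathbb{P}[Y_{n+1}\in \ufcprr{,(0)}{\lambda;\alpha}\setminus\fcprr]$ together with the bounded-conditional-density step into Lemma~\ref{lm.confidence.gap}, then plugs in the almost-sure thickness bound of Theorem~\ref{thm.non.smooth}, exactly as you do inline. Your explicit treatment of the lower bound via the inclusion and Theorem~\ref{thm.coverage} is also the intended (immediate) argument.
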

The proof is postponed to Appendix~\ref{proof.cor.non.smooth}.
When the conditional density $p(\bullet|D, X_{n+1})$ is bounded by $c$, \citet{barber2021predictive} used a related stability-bound~$\epsilon$
\citep{bousquet2002stability} to bound the thickness of
the Jackknife+ prediction-region by $4\epsilon$, which results in a more conservative lower bound on the coverage that is, $1 - \alpha - 4c\epsilon$ (compared to the previous $1 - 2\alpha$ in Eq.~\ref{eq.cov.jack}).
By contrast, our full-conformal prediction-region approximation directly
ensures the desired control of $1 - \alpha$.

\bigskip

\noindent{\emph{Choice of non-conformity function.}}
It is actually possible to relax the requirement on the function $s$. To be more specific, let $s$ be a non-conformity function
$s: \paren{y, u} \in \mathcal{Y} \times \mathcal{Y} \mapsto s(y, u) = \widetilde{S} \paren{y - u}$, where $\widetilde{S}$ is a non-negative, even, and increasing function over $\left[0, \infty\right)$.
Then the full-conformal prediction-region resulting from
this non-conformity function is the same as the one output by
$s(y, u) = \abss{y - u}$.
Besides, the abolute value has been already considered by
\citet{burnaev2016conformalized},
\citet{barber2021predictive}, \citet{ndiaye2022stable}
to enumerate but a few.

\bigskip

\noindent{\emph{Illustration.}}
%
The aim of next Figure~\ref{fig.comp.bound.non.smooth.0} is to
illustrate the tightness of the upper-bound
on the thickness (see Eq.~\eqref{eq.bound.thickness.non.smooth}).
Since computing $\thick{(0)}{\lambda; \alpha}$ is intractable,
Figure~\ref{fig.comp.bound.non.smooth.0} only displays
the gap between $\Delta^{(0)} := \leb{\ufcprr{,(0)}{\lambda; \alpha}
	\setminus \lfcprr{,(0)}{\lambda; \alpha}}$ (from Eq.~\eqref{eq.confidence.region.gap.bound}), and its corresponding upper-bound from Eq.~\eqref{eq.bound.thickness.non.smooth}.
The code is available at \url{https://github.com/Davidson-Lova/approximate_full_conformal_prediction_RKHS.git}.



\begin{figure}[H]
    \centering
    \includegraphics[width=0.60\textwidth]{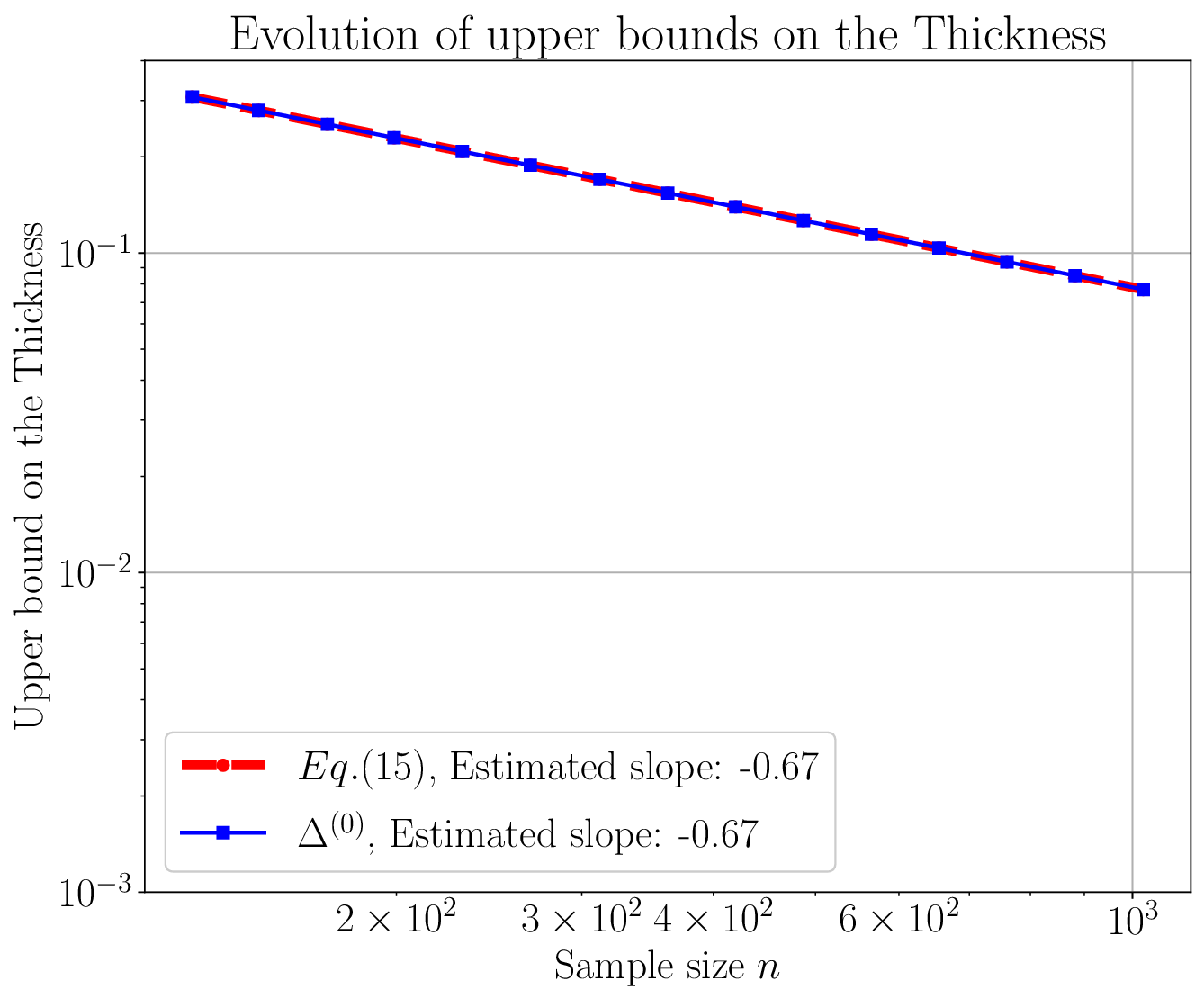}
    \caption{
        Evolution of
        the upper-bound in Eq.~\eqref{eq.bound.thickness.non.smooth} (dashed red-line)
        and the quantity $\Delta^{(0)}$ (solid blue-line)
        as a function of the sample size $n$
        in $\log\log$ scale (to appreciate the rate).
        The data is sampled from $\mathrm{sklearn}$ synthetic data set
        make\_friedman1(sample\_size=n).
        The kernel $k_\mathcal{H} \paren{\bullet, \bullet}$
        is set to be the Laplacian kernel (gamma=None).
        The loss-function $\ell\paren{\bullet, \bullet}$
        is set to be the Logcosh Loss ($a=1.0$).
        The regularization parameter is set to decay as $\lambda \propto (n+1)^{-0.33}$.
        The fixed output value is set at $z=0$.        
        The non-conformity function is set to be $s(\bullet, \bullet) : (y, u) \mapsto \abss{y - u}$.
    }
    \label{fig.comp.bound.non.smooth.0}
\end{figure}

The two  curves seem to coincide (at least on the grid of 15 values of $n$ between 128 and 1024). This suggests the upper-bound in Eq.~\eqref{eq.bound.thickness.non.smooth}
is a  tight approximation to the quantity $\Delta^{(0)}$ for upper-bounding the intractable thickness.
Moreover, the estimated slope of both lines is close to the expected value $-1 + r = -0.67$, when the regularization parameter $\lambda \propto (n+1)^{-r}$.
The upper-bound in Eq.~\eqref{eq.bound.thickness.non.smooth}
reflects the same dependence on $r$ as the one of $\Delta^{(0)}$.


\section{Designing an approximation from a smooth loss-function}
\label{sec.smooth}
The present section aims at describing how additional smoothness properties of the loss-function can be harnessed to exhibit an improved confidence-region.

More precisely, additional assumptions exploiting smoothness are considered and new upper-bounds are derived. The quality of the ensuing approximate prediction-region is also compared with the one from Section~\ref{sec.uniform.stability.bound}.
%
%
%


\subsection{\textit{Local} stability-bounds}
\label{sec.local.bound.score}

%
%
%

As in the previous section, the non-conformity scores are built from a predictor $\apred{\lambda;}{y}{} = \pred{\lambda;}{z}$, for some fixed $z \in \mathcal{Y}$, defined as a solution of an optimization problem (see Eq.~\ref{def.krr}).
The purpose of the next definition is to introduce a quantity allowing to capture the dependence of the predictor with respect to each training data.

\begin{definition}{(Predictor as a function of weights)}
\label{def.weighted.predictor}
Let $z \in \mathcal{Y}$ be a fixed prediction value,
$y \in \mathcal{Y}$ be a candidate prediction value.

Let $v=(v_1, \ldots, v_n,v_{n+1},v_{n+2}) \in \mathbb{R}^{n+2}$ be
vector of $n+2$-weights applying to 
the $n$ points in $D$, $\paren{X_{n+1}, z}$, and $\paren{X_{n+1},y}$.
For a regularization parameter $\lambda >0$, let $\wpred{\lambda}{v}{\bullet} : \mathcal{X} \to \mathcal{Y}$ denote a predictor given by
\begin{align*}
    \wpred{\lambda}{v}{\bullet}
    \in
    & 
    \argmin_{f \in \mathcal{H}} 
    \wer{\lambda}{v}{f},
\end{align*}
where $\wer{\lambda}{v}{f}$ is the $v$-weighted ($\lambda$-regularized) empirical-risk for $\paren{v, f} \in \mathbb{R}^{n+2} \times \mathcal{H}$, given by
\begin{align}\label{eq.Emp.Risk.weighted.rgularized}
    \wer{\lambda}{v}{f}
    &
    := 
    \sum_{i=1}^{n}
    \brac{\frac{v_i}{n+1} \ell\paren{Y_i, f(X_i)}}
    +  \frac{v_{n+1}}{n+1}
    \ell\paren{z, f(X_{n+1})}
    + \lambda \normh{f}^2 \notag
    \\
    &
    \qquad
    + \frac{v_{n+2}}{n+1}
    \ell\paren{y, f(X_{n+1})}.
\end{align}

\end{definition}
Under \eqref{asm.loss.convex},
Lemma~\ref{lm.predictor.func.well.defined.1} establishes that,
for $\uu := (1, \ldots, 1, 1, 0)$ (respectively for $\uw := (1, \ldots, 1, 0, 1)$),
the minimizer $\wpred{\lambda}{\uu}{\bullet}$ (resp.  $\wpred{\lambda}{\uw}{\bullet}$) does exist and is unique.
Furthermore since it also holds that $\wpred{\lambda}{\uu}{\bullet} = \pred{\lambda;}{z}$ and $\wpred{\lambda}{\uw}{\bullet} = \pred{\lambda;}{y}$, it gives rise to a strategy for providing an upper-bound for
$\normh{\pred{\lambda;}{z} - \pred{\lambda;}{y}}$
by simply considering $\normh{\wpred{\lambda}{\uu}{\bullet} - \wpred{\lambda}{\uw}{\bullet}}$. 
This idea is for instance at the core of the derivation of Proposition~\ref{prop.predictor.stability.smooth} and Theorem~\ref{thm.score.bound.smooth}.
%
%
%




Another important ingredient in our derivation is the stronger smoothness assumption on the loss-function (see Section~\ref{sec.predictor}), which is required to be once differentiable with respect to its second argument. This is formally specified by the next assumption.
\begin{assumption}
	For every $y \in \mathcal{Y}$,
	\begin{equation}
		\label{asm.loss.c1}
		\tag{C1L}
		\mbox{
			$u \mapsto \ell(y, u)$ is once continuously differentiable,
		}    
	\end{equation}
	with $u \mapsto \partial_2 \ell(y, u)$
	its first derivative.
\end{assumption}
This requirement is fulfilled by all the loss-functions listed along Section~\ref{sec.predictor}.
For every $v \in \mathbb{R}^{n+1}$, the $v$-weighted $\lambda$-regularized empirical-risk $\wer{\lambda}{v}{\bullet}$ (see Eq.~\ref{eq.Emp.Risk.weighted.rgularized}) is
once differentiable with respect to its second argument, and $\Dtwo \wer{\lambda}{v}{\bullet}$ is well-defined.
This quantity is key to derive an upper-bound on the predictor approximation quality (which vary with $y$)
by using the characterization of the minimizer of $\wer{\lambda}{v}{\bullet}$ in terms
of its first differential.


\medskip

The next proposition establishes an important stability-bound in terms of the $\mathcal{H}$-norm of the predictor.
\begin{proposition}[Stability-bound on the approximate predictor]
\label{prop.predictor.stability.smooth}
	Assume \eqref{asm.loss.convex} and \eqref{asm.loss.c1} hold true. Then, for every $y,z \in\mathcal{Y}$, and every $\lambda>0$, the predictor given by Eq.~\eqref{def.krr} satisfies
	\begin{align*}
		\left\|
		\pred{\lambda;}{y} - \pred{\lambda;}{z}
		\right\|_{\mathcal{H}}	& \leq \sqrt{K_{n+1, n+1}} \frac{\rho^{(1)}_{\lambda}(y)}{
			\lambda (n+1) }, 
	\end{align*}
	where
	\begin{align}
		\label{eq.predictor.approx.bound.smooth}
		\rho^{(1)}_{\lambda}(y) := \frac{1}{2}
		\left|
		- \partial_2 \ell\paren{z, \pred{\lambda;}{z}(X_{n+1})}
		+ \partial_2 \ell\paren{y, \pred{\lambda;}{z}(X_{n+1})}
		\right|.
	\end{align}
\end{proposition}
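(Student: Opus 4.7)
The plan is to exploit the first-order (Euler) optimality conditions satisfied by $f_y := \pred{\lambda;}{y}$ and $f_z := \pred{\lambda;}{z}$ in $\mathcal{H}$ and then convert the resulting identity into an $\mathcal{H}$-norm bound by means of the reproducing property at $X_{n+1}$. Under \eqref{asm.loss.convex} together with \eqref{asm.loss.c2}, the two objectives in Equation~\eqref{def.krr} (one for each choice of last label) are strictly convex and Fréchet-differentiable on $\mathcal{H}$, so each minimizer $f_\bullet$ with $\bullet \in \{y,z\}$ satisfies the Euler equation
\begin{align*}
\tfrac{1}{n+1}\sum_{i=1}^n \partial_2\ell(Y_i,f_\bullet(X_i))\,K_{X_i} + \tfrac{1}{n+1}\,\partial_2\ell(\bullet, f_\bullet(X_{n+1}))\,K_{X_{n+1}} + 2\lambda f_\bullet = 0.
\end{align*}

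Subtracting the two Euler equations and taking the $\mathcal{H}$-inner product with $h := f_y - f_z$ (using the reproducing identity $\langle K_{X_i}, h\rangle_{\mathcal{H}} = h(X_i)$) would produce
\begin{align*}
2\lambda\,\|h\|_{\mathcal{H}}^2 = \tfrac{1}{n+1}\sum_{i=1}^n A_i\,h(X_i) + \tfrac{1}{n+1}\,B\,h(X_{n+1}),
\end{align*}
with $A_i := \partial_2\ell(Y_i,f_z(X_i)) - \partial_2\ell(Y_i,f_y(X_i))$ and $B := \partial_2\ell(z,f_z(X_{n+1})) - \partial_2\ell(y,f_y(X_{n+1}))$. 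By \eqref{asm.loss.convex} the map $u \mapsto \partial_2\ell(Y_i,u)$ is non-decreasing, so each product $A_i\,h(X_i) = A_i\cdot(f_y(X_i) - f_z(X_i))$ is $\leq 0$, and the entire training-sample sum can be discarded in favor of an upper bound.

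The decisive step is then to split $B = B_1 + B_2$ with $B_1 := \partial_2\ell(z,f_z(X_{n+1})) - \partial_2\ell(y,f_z(X_{n+1}))$, whose absolute value is exactly $2\,\rho^{(1)}_{\lambda}(y)$, and $B_2 := \partial_2\ell(y,f_z(X_{n+1})) - \partial_2\ell(y,f_y(X_{n+1}))$. The same monotonicity argument applied with $y$ fixed yields $B_2\,h(X_{n+1}) \leq 0$, which is also dropped. A final use of Cauchy--Schwarz and the reproducing identity $|h(X_{n+1})| = |\langle h, K_{X_{n+1}}\rangle_{\mathcal{H}}| \leq \sqrt{K_{n+1,n+1}}\,\|h\|_{\mathcal{H}}$ gives
\begin{align*}
2\lambda\,\|h\|_{\mathcal{H}}^2 \leq \tfrac{1}{n+1}\,|B_1|\,|h(X_{n+1})| \leq \tfrac{2\,\rho^{(1)}_{\lambda}(y)\sqrt{K_{n+1,n+1}}}{n+1}\,\|h\|_{\mathcal{H}},
\end{align*}
and dividing by $\|h\|_{\mathcal{H}}$ (the degenerate case $h = 0$ being trivial) yields exactly the stated bound.

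The main obstacle I anticipate is the sign/monotonicity bookkeeping in the third paragraph: one must carefully verify that the strategic add-and-subtract of $\partial_2\ell(y,f_z(X_{n+1}))$ truly lets $B_2$ be absorbed thanks to convexity, rather than contributing an uncontrolled positive term that would have to be bounded uniformly in $u$ (which would reproduce only a Lipschitz-type estimate, not the tighter local one at $f_z(X_{n+1})$). Everything else — existence and uniqueness of the minimizers, their characterization via Fréchet gradients, and the reproducing identities — is either granted by \eqref{asm.loss.convex}/\eqref{asm.loss.c2} or immediate from the RKHS structure recalled in Section~\ref{sec.predictor}.
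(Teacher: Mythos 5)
Your argument is correct, and it reaches the stated bound by a genuinely different route than the paper. The paper's proof works at second order: it applies the mean-value theorem to the Fréchet gradient $\Dtwo \wer{\lambda}{\uw}{\cdot}$ along the segment joining the two minimizers, invokes the invertibility of the second-order differential on $\mathcal{A}$ (Lemma~\ref{lm.hess.invertible.1}) together with the $2\lambda$-strong convexity bound $\normop{[\DDtwo\wer{\lambda}{\uw}{g}]^{+}}\leq (2\lambda)^{-1}$ (Lemma~\ref{lm.wer.strongly.convex.1}), and then bounds the gradient perturbation $\Dtwo\wer{\lambda}{\uw-\uu}{\pred{\lambda;}{z}}$, whose $\mathcal{H}$-norm is exactly $\tfrac{2\rho^{(1)}_{\lambda}(y)}{n+1}\sqrt{K_{n+1,n+1}}$. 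You instead stay at first order: subtracting the two Euler equations, pairing with $h=\pred{\lambda;}{y}-\pred{\lambda;}{z}$, and using that \eqref{asm.loss.convex} makes $u\mapsto\partial_2\ell(\cdot,u)$ non-decreasing, so that every training term $A_i\,h(X_i)$ and the term $B_2\,h(X_{n+1})$ is non-positive and can be discarded; what survives is precisely $B_1$, the same local perturbation the paper isolates, and Cauchy--Schwarz with the reproducing property closes the argument with the identical constant. Your sign bookkeeping in the decisive step is sound (the monotonicity argument for $B_2$ is the same as for the $A_i$, with $y$ playing the role of $Y_i$), and the Euler equations you invoke are justified by Lemmas~\ref{lm.predictor.func.well.defined.1} and~\ref{lm.first.diff.rer}. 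What each route buys: yours is more elementary and in fact only needs \eqref{asm.loss.c1} (continuous differentiability) rather than \eqref{asm.loss.c2}, and it avoids any operator-inversion lemma — it is the differentiable analogue of the convexity trick used for Theorem~\ref{thm.score.bound.non.smooth}; the paper's second-order machinery is heavier here but is the same apparatus (second differentials, their invertibility on $\mathcal{A}$, mean-value expansions) that is reused for the influence-function analysis of Section~\ref{sec.very.smooth}, which your first-order argument would not by itself provide.
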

The proof is provided in Appendix \ref{proof.score.bound.smooth}.
    This upper-bound holds almost-surely.
    Let us note that the dependence with respect to $\lambda n$ is not new (see for instance \citet[in Theorem 22]{bousquet2002stability}).
    However, the present upper-bound is more general
    in that: (1) it highlights the importance of
    the regularity of the first derivative of the loss-function,
    and (2) it reflects the influence of the value
    of kernel function at the input point $X_{n+1}$.
    In fact in the bound derived by \citet[Theorem 22]{bousquet2002stability},
    the term $\rho_{\lambda}^{\paren{1}}\paren{y}$ is
    uniformly bounded from above by a Lipschitz constant $\rho$, for every output value $y \in \mathcal{Y}$, while
    the term $\sqrt{K_{n+1, n+1}}$ is similarly bounded by an upper-bound $\kappa^2 >0$ on the (bounded) kernel function $\kappa_{\mathcal{H}}\paren{\bullet, \bullet}$.

\medskip

By inserting the previous upper-bound on the stability of the predictor inside the score-function, the following result describes the new upper-bounds on the non-conformity scores approximation quality.
\begin{theorem}\label{thm.score.bound.smooth}
    Assume \eqref{asm.loss.convex}, \eqref{asm.loss.c1}, and
    \eqref{asm.score.lipschitz} hold true.
    Then, for every $y \in \mathcal{Y}$,
    \begin{align*}
        \begin{aligned}
            \left|
                S_{\lambda; D^{y}} \paren{X_i, Y_i}
                - \widetilde{S}_{\lambda; D^{y}} \paren{X_{i}, Y_{i}}
            \right| 
            \leq \widehat{\tau}_{\lambda; i}^{(1)} \paren{y},
            &&
            \mbox{if }1\leq i \leq n
            \\
            \left|
                S_{\lambda; D^{y}} \paren{X_i, y}
                - \widetilde{S}_{\lambda; D^{y}} \paren{X_{i}, y}
            \right| 
            \leq \widehat{\tau}_{\lambda; i}^{(1)} \paren{y},
            &&
            \mbox{if }i=n+1,
        \end{aligned}
    \end{align*}
    where
    \begin{align*}
        \forall i \in \brac{1, \ldots, n+1},\qquad & \widehat{\tau}_{\lambda; i}^{(1)} \paren{y}
        :=  \sqrt{K_{i, i}}  \sqrt{K_{n+1, n+1 }} \frac{ \gamma \rho^{(1)}_{\lambda}(y)}{\lambda(n+1)} ,         \\
        \mbox{with  }\qquad
    	&    \rho^{(1)}_{\lambda}(y) := \frac{1}{2}
    	\left|
    	- \partial_2 \ell\paren{z, \pred{\lambda;}{z}(X_{n+1})}
    	+ \partial_2 \ell\paren{y, \pred{\lambda;}{z}(X_{n+1})}
    	\right|.
    \end{align*}
\end{theorem}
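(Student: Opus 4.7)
The plan is to chain the Lipschitz continuity of the score function with the stability bound on the predictor already furnished by Proposition~\ref{prop.predictor.stability.smooth}. By construction, the approximate score $\tilde{S}_{\lambda; D^y}(X_i, \cdot)$ only differs from the exact score $S_{\lambda; D^y}(X_i, \cdot)$ by having $\pred{\lambda;}{z}$ in place of $\pred{\lambda;}{y}$, so the whole discrepancy is driven by how far apart these two predictors are when evaluated at $X_i$.

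First, for $1 \leq i \leq n$, I would rewrite
\[
|S_{\lambda; D^{y}}(X_i, Y_i) - \tilde{S}_{\lambda; D^{y}}(X_i, Y_i)| = |s(Y_i, \pred{\lambda;}{y}(X_i)) - s(Y_i, \pred{\lambda;}{z}(X_i))|,
\]
and apply \eqref{asm.score.lipschitz} in the second slot to obtain the upper bound $\gamma \, |\pred{\lambda;}{y}(X_i) - \pred{\lambda;}{z}(X_i)|$. The case $i=n+1$ is strictly analogous, with $Y_i$ replaced by $y$, so both cases can be treated in a single argument.

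Second, to convert the pointwise discrepancy into an $\mathcal{H}$-norm, I would invoke the reproducing property to write $\pred{\lambda;}{y}(X_i) - \pred{\lambda;}{z}(X_i) = \scal{K_{X_i},\, \pred{\lambda;}{y} - \pred{\lambda;}{z}}_{\mathcal{H}}$, and apply Cauchy-Schwarz, which yields $\sqrt{K_{i,i}} \, \|\pred{\lambda;}{y} - \pred{\lambda;}{z}\|_{\mathcal{H}}$ after noting that $\|K_{X_i}\|_{\mathcal{H}}^2 = \kappa_{\mathcal{H}}(X_i, X_i) = K_{i,i}$.

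Finally, I would substitute the stability bound of Proposition~\ref{prop.predictor.stability.smooth}, namely $\|\pred{\lambda;}{y} - \pred{\lambda;}{z}\|_{\mathcal{H}} \leq \sqrt{K_{n+1,n+1}} \, \rho^{(1)}_{\lambda}(y) / (\lambda (n+1))$, and collect the three factors to recover exactly $\tau_{\lambda; i}^{(1)}(y)$. No real obstacle is anticipated here: the substantive work — where the assumptions \eqref{asm.loss.convex} and \eqref{asm.loss.c2} actually come into play — is already packaged inside Proposition~\ref{prop.predictor.stability.smooth}, leaving only a combination of the score's Lipschitz property, the reproducing property, and Cauchy-Schwarz to execute. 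The only care needed is to ensure that $\rho^{(1)}_{\lambda}(y)$ is allowed to depend on $y$ (unlike the uniform Lipschitz constant in Theorem~\ref{thm.score.bound.non.smooth}), so that the resulting $\tau_{\lambda; i}^{(1)}(y)$ is genuinely local in the candidate value $y$.
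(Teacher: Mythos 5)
Your proposal is correct and follows essentially the same route as the paper's own proof: Lipschitz continuity of the score via \eqref{asm.score.lipschitz}, the reproducing property together with Cauchy--Schwarz to turn the pointwise gap into $\sqrt{K_{i,i}}\,\|\pred{\lambda;}{y}-\pred{\lambda;}{z}\|_{\mathcal{H}}$, and then the stability bound of Proposition~\ref{prop.predictor.stability.smooth}, with the case $i=n+1$ handled identically. Your remark that \eqref{asm.loss.convex} and \eqref{asm.loss.c2} enter only through that proposition, and that $\rho^{(1)}_{\lambda}(y)$ must remain $y$-dependent, matches the paper exactly.
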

The proof is deferred to Appendix~\ref{proof.score.bound.smooth}.
By contrast with Theorem~\ref{thm.score.bound.non.smooth}, a localy varying function $y \mapsto \rho_{\lambda}^{(1)}(y)$
replaces the uniform Lipschitz constant $\rho$ in the former bound. This avoids paying the worst possible bound that is, the uniform $\rho$.

\subsection{Improved thickness}
Consistently with the strategy exposed along Section~\ref{sec.approx.prediction.region}, let $\ufcprr{,(1)}{\lambda; \alpha}$ denote the upper-approximate full-conformal prediction-region defined by
\begin{align*}
    \ufcprr{,(1)}{\lambda; \alpha}
    := \brac{
        y \in \mathcal{Y} :
        \ufcpvv{,(1)}{\lambda; \alpha}{y}
        > \alpha
    },
\end{align*}
where, for every $y \in \mathcal{Y}$, the upper approximation to the conformal $p$-value is
\begin{align*}
    \ufcpvv{,(1)}{\lambda; \alpha}{y}
    := \frac{
        1 
        + 
        \sum_{i=1}^{n} 
        \mathbbm{1}
        \left\{
            \widetilde{S}_{\lambda; D^{y}} \paren{X_i, Y_i}
            +
            \widehat{\tau}_{\lambda; i}^{(1)} \paren{y}
            \geq 
            \widetilde{S}_{\lambda; D^{y}} \paren{X_{n+1}, y}
            -
            \widehat{\tau}_{\lambda; n+1}^{(1)} \paren{y}
        \right\}
    }{
        n+1
    }.
\end{align*}
The corresponding lower-approximate full-conformal prediction-region (analogous to the one of Eq.~\ref{eq.lower.approx.region}) is denoted by $\lfcprr{,(1)}{\lambda; \alpha}$.
The thickness of the prediction-region $\ufcprr{,(1)}{\lambda; \alpha}$ is then given by
\begin{align}\label{eq.thikness.First.order}
	\thick{(1)}{\lambda; \alpha} := \leb{
		\ufcprr{,(1)}{\lambda; \alpha} \setminus \fcprr
	},
\end{align}
as prescribed in Definition~\ref{def.thickness}.
All of this is an instance of the generic approximation scheme
exposed along Section~\ref{sec.approx.prediction.region} (see Definition~\ref{def.approx.prediction.region} for the approximate predictive confidence-region, Eq.~\ref{eq.approximate.score.0} for the non-conformity scores approximation, and the upper-bounds proved in Theorem~\ref{thm.score.bound.smooth}).
The next result establishes that allowing for a local control of the score approximation quality leads to an improved (tighter) approximation.
\begin{corollary}
\label{cor.smooth}
Under the same assumptions as Theorem~\ref{thm.score.bound.smooth}, for any control-level $\alpha \in (0, 1)$, the prediction-region $\ufcprr{,(1)}{\lambda; \alpha}$ is a confidence prediction-region.
Furthermore, $\ufcprr{,(0)}{\lambda; \alpha}$ is outperformed by  $\ufcprr{,(1)}{\lambda; \alpha}$ that is,
    \begin{align*}
        \thick{(1)}{\lambda; \alpha} &\leq \thick{(0)}{\lambda; \alpha}.
    \end{align*}
\end{corollary}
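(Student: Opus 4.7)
The plan is to handle the two claims separately: first the coverage, then the thickness comparison, both leveraging results already proved earlier in the paper.

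\textbf{Coverage.} Theorem~\ref{thm.score.bound.smooth} establishes that the approximate scores $\tilde{S}_{\lambda; D^{y}}$ together with the upper bounds $\tau_{\lambda; i}^{(1)}(y)$ satisfy exactly the condition~\eqref{eq.approx.non.conformity.scores} of Definition~\ref{def.approx.prediction.region}. Hence $\ufcprr{,(1)}{\lambda; \alpha}$ is an instance of the generic upper approximation $\ufcpr{}$, and Lemma~\ref{lm.coverage.approximate.region} applies: it guarantees both $\fcprr \subseteq \ufcprr{,(1)}{\lambda; \alpha}$ and $\mathbb{P}[Y_{n+1} \in \ufcprr{,(1)}{\lambda; \alpha}] \geq 1-\alpha$.

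\textbf{Pointwise comparison of the bounds.} The key step, which is the one requiring real work, is to show that for every $y \in \mathcal{Y}$ and every $i \in \{1, \ldots, n+1\}$,
\[
\tau_{\lambda; i}^{(1)}(y) \;\leq\; \tau_{\lambda; i}^{(0)}.
\]
Since both quantities share the same prefactor $\sqrt{K_{i,i}}\sqrt{K_{n+1,n+1}}\,\gamma/(\lambda(n+1))$, this reduces to showing $\rho^{(1)}_{\lambda}(y) \leq \rho$. I would argue this by the triangle inequality applied to the definition~\eqref{eq.predictor.approx.bound.smooth}, followed by the fact that \eqref{asm.loss.lipschitz} together with \eqref{asm.loss.c2} implies $|\partial_2 \ell(y',u)| \leq \rho$ for all $(y',u) \in \mathcal{Y}\times\mathcal{Y}$. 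The factor $1/2$ in the definition of $\rho^{(1)}_{\lambda}(y)$ is precisely what absorbs the factor $2$ generated by the triangle inequality.

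\textbf{Inclusion of the regions.} Next I show $\ufcprr{,(1)}{\lambda; \alpha} \subseteq \ufcprr{,(0)}{\lambda; \alpha}$. The approximate scores $\tilde{S}_{\lambda; D^{y}}$ are identical in both constructions (same fixed $z$, hence same approximate predictor). Thus each indicator in $\ufcpvv{,(1)}{\lambda;D}{y}$ has the form $\mathbbm{1}\{\tilde S(X_i,Y_i)+\tau_{\lambda;i}^{(1)}(y)\geq \tilde S(X_{n+1},y)-\tau_{\lambda;n+1}^{(1)}(y)\}$, which by the pointwise inequality above is dominated by the corresponding indicator in $\ufcpvv{,(0)}{\lambda;D}{y}$. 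Summing gives $\ufcpvv{,(1)}{\lambda;D}{y}\leq \ufcpvv{,(0)}{\lambda;D}{y}$ for every $y$, so $\{y:\ufcpvv{,(1)}{\lambda;D}{y}>\alpha\}\subseteq \{y:\ufcpvv{,(0)}{\lambda;D}{y}>\alpha\}$.

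\textbf{Conclusion on thickness.} Since $\fcprr \subseteq \ufcprr{,(1)}{\lambda;\alpha} \subseteq \ufcprr{,(0)}{\lambda;\alpha}$, the symmetric differences with $\fcprr$ reduce to set differences that are nested, giving
\[
\ufcprr{,(1)}{\lambda;\alpha}\setminus \fcprr \;\subseteq\; \ufcprr{,(0)}{\lambda;\alpha}\setminus \fcprr.
\]
Applying the Lebesgue measure and Definitions~\ref{def.thickness} and~\eqref{eq.thikness.First.order} yields $\thick{(1)}{\lambda;\alpha}\leq \thick{(0)}{\lambda;\alpha}$, as required. The only non-routine step is the pointwise bound on $\rho^{(1)}_{\lambda}(y)$; everything else is bookkeeping with the definitions.
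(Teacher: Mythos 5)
Your proposal is correct and follows essentially the same route as the paper's proof: coverage via the generic approximation scheme (Lemma~\ref{lm.coverage.approximate.region}), the pointwise bound $\rho^{(1)}_{\lambda}(y)\leq\rho$ obtained from the triangle inequality and the $\rho$-Lipschitz bound on $\partial_2\ell$ (hence $\tau^{(1)}_{\lambda;i}(y)\leq\tau^{(0)}_{\lambda;i}$), followed by indicator-wise domination of the p-values, nesting of the regions, and monotonicity of the Lebesgue measure. Like the paper, you invoke \eqref{asm.loss.lipschitz} for the key bound even though it is not listed among the assumptions of Theorem~\ref{thm.score.bound.smooth}, but this is implicit since $\tau^{(0)}_{\lambda;i}$ itself is only defined under that assumption.
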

The proof is postponed to Appendix \ref{proof.smooth}.
The new approximate region $\ufcprr{,(1)}{\lambda; \alpha}$ is a closer approximation to the full-conformal prediction-region $\fcprr$ compared to $\ufcprr{,(0)}{\lambda; \alpha}$. This results from keeping track of the dependence of the approximation with respect to $y$.
Unfortunately this improvement only translates in terms of better constants in the worst case, which does not change the convergence rate compared to the non-smooth setup as illustrated by Figure~\ref{fig.comp.bound.smooth.0} (compared to Figure~\ref{fig.comp.bound.non.smooth.0}).

\bigskip

\noindent{\emph{Illustration.}}
Figure~\ref{fig.comp.bound.smooth.0} displays the gap between
 $\Delta^{(1)} := \leb{\ufcprr{,(1)}{\lambda; \alpha}
	\setminus \lfcprr{,(1)}{\lambda; \alpha}}$,
and the corresponding upper-bound on the thiskness $\thick{(0)}{\lambda; \alpha}$ given by Eq.~\eqref{eq.bound.thickness.non.smooth} for the sake of comparison.
Therefore the dashed red-line in Figure~\ref{fig.comp.bound.smooth.0} is the same as the one form Figure~\ref{fig.comp.bound.non.smooth.0}.

%
%
%

\begin{figure}[H]
    \centering
    \includegraphics[width=0.60\textwidth]{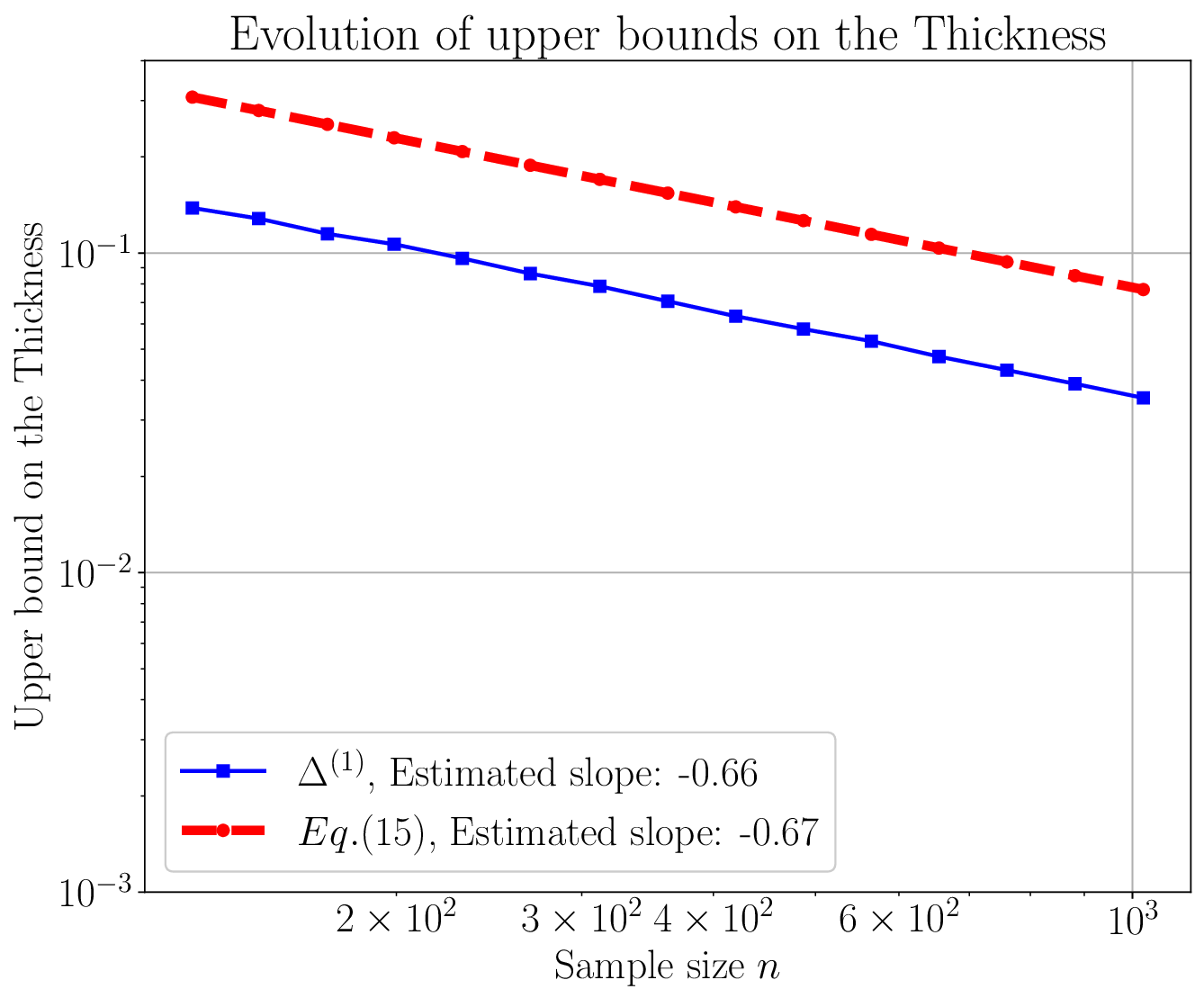}
    \caption{
        Evolution of
        the upper-bound in Eq.~\eqref{eq.bound.thickness.non.smooth} (dashed red-line)
        and the quantity $\Delta^{(1)}$ (solid blue-line)
        as a function of the sample size $n$
        in $\log\log$ scale (to appreciate the rate).
        The data is sampled from $\mathrm{sklearn}$'s synthetic data set
        make\_friedman1(sample\_size=n).
        The kernel $k_\mathcal{H} \paren{\bullet, \bullet}$
        is set to be the Laplacian kernel (gamma=None).
        The loss-function $\ell\paren{\bullet, \bullet}$
        is set to be the Logcosh Loss ($a=1.0$).
        The regularization parameter is set to decay as $\lambda \propto (n+1)^{-0.33}$.
        The fixed output value is set at $z=0$.        
        The non-conformity function is set to be $s(\bullet, \bullet) : (y, u) \mapsto \abss{y - u}$.
    }
    \label{fig.comp.bound.smooth.0}
\end{figure}
The plain blue-line is noticeably below the dashed red-line,
which strongly contrasts with Figure~\ref{fig.comp.bound.non.smooth.0}.
This indicates the \emph{local} approximation improves upon the uniform one in terms of the size of the upper-approximate predictive confidence-region.
As already mentioned above, the slopes of both lines are almost the same, which suggests that the upper-bound from Eq.~\eqref{eq.bound.thickness.non.smooth} and  $\Delta^{(1)}$ share the same rate of convergence with respect to $n$.
All of this supports the conclusion that, in the worst case, the local approximation only improves the size of the predictive confidence-region in terms of constants, but not in terms of convergence rate with respect to $n$.

\section{Designing an approximation from a very smooth loss-function}
\label{sec.very.smooth}
A refined approximation is described here, which involves influence functions as a means to exploit stronger available smoothness assumptions. 
By contrast with Section~\ref{sec.smooth}, this new approximation leads to faster convergence rates.

The influence function (IF) is introduced along Section~\ref{sec.ifunc} as a means to tighten the predictor approximation. Additional assumptions are also reviewed. They are needed to derive a closed-form expression of this IF-based predictor approximation.
Section~\ref{sec.quality.ifunc} focuses on quantifying the quality of the IF-based approximations to the predictor and non-conformity score-functions, while Section~\ref{sec.quality.ifunc.region} provides new upper-bounds on the thickness, and on the probability of coverage of the new approximate full-conformal prediction-region.

\subsection{Approximating predictors via influence function}
\label{sec.ifunc}

Let us start by describing the new approximation to the predictor relying on the \emph{influence function}. 
Approximating predictors via influence function is not
new in the context of conformal prediction.
\citet{martinez2023approximating} harnessed such an approximation
to derive approximate full-conformal prediction-regions
in the context of multiclass classification, and more recently,
\citet{tailorApproximatingFullConformal2025} exploit the same idea in regression.
However, unlike our approximate prediction-region from Definition~\ref{def.IF-based.approximate.region},
theirs are not guaranteed to be \emph{confidence prediction-regions} that is, they do not come with coverage guarantees.


%
%
%
%

\begin{definition}[Bouligand influence function \citep{christmann2008bouligand}]
	Let $\mathcal{M}$ denote the space of probability distribution
	on $\mathcal{X}\times \mathcal{Y}$.
	For every $\lambda \in \paren{0, +\infty}$,
	let $T_{\lambda} : \mathcal{M} \to \mathcal{H}$ denote
	any measurable function such that, for every distribution $P \in \mathcal{M}$,
	\begin{align}
		\label{eq.estimator}
		T_{\lambda}\paren{P}
		\in \argmin_{
			f \in \mathcal{H}
		}
		\mathbb{E}_{\paren{X, Y} \sim P}
		\croch{
			\ell\paren{Y, f\paren{X}}
		}
		+ \lambda \normh{f}^2.
	\end{align}
	Then the Bouligand influence function of $T_{\lambda} : \mathcal{M} \to \mathcal{H}$ for the distribution $P$ in the direction of the distribution $Q \neq P$ is denoted by $\mathrm{BIF}\paren{Q; T_{\lambda}, P} \in \mathcal{H}$ (if it exists) and defined as
	\begin{align*}
		\mathrm{BIF}\paren{Q; T_{\lambda}, P}
		:=
		\lim_{t \rightarrow 0^{+}}
		\frac{
			\croch{
				T_{\lambda}\paren{
					\paren{1 - t}P
					+ tQ
				} - T_{\lambda}\paren{P}
			}
		}{t}.
	\end{align*}
\end{definition}
It appears that BIF (Bouligand influence function) quantifies how much $T_\lambda(\bullet)$ departs from $T_\lambda(P)$ when it is perturbed by the mixture $\paren{1 - t}P
+ tQ$, as $t$ goes to 0.  
For instance, a constant function $T_\lambda(\bullet)$ from Eq.~\eqref{eq.estimator} would output 0, whereas a large $\norm{\mathrm{BIF}\paren{Q; T_{\lambda}, P}}_{\mathcal{H}}$ is expected for a locally quickly varying (but still Fr{\'e}chet differentiable) $T_\lambda(\bullet)$.

\medskip

The next assumptions on $\ell\paren{\bullet, \bullet}$
are fulfilled by the loss-functions listed at the end of Section~\ref{sec.predictor}. (See also Appendix~\ref{sec.example.loss.function}.)
\begin{assumption}
	For every $i \in \brac{1, \ldots, n+1}$,
	\begin{equation}\label{asm.finite.kern}
		\kappa_{\mathcal{H}}\paren{X_{i}, X_{i}} < \infty.
		\quad \mbox{ a.s.}
		\tag{FinK}
	\end{equation}
\end{assumption}
Assumption \eqref{asm.finite.kern} only requires the kernel to be finite at any point $x$ in the support of the probability distribution of the $X_i$s. In particular this does not require that there exists a constant uniformly  upper-bounding the kernel on the support.
{
\begin{assumption}
	For every $y \in \mathcal{Y}$,
	\begin{equation}
		\label{asm.loss.c2}
		\tag{C2L}
		\mbox{
			$u \mapsto \ell(y, u)$ is twice continuously differentiable,
		}    
	\end{equation}
	with $u \mapsto \partial_2^2 \ell(y, u)$
	its second derivative.
\end{assumption}
This requirement is fulfilled by all the loss-functions listed along Section~\ref{sec.predictor}.
Moreover, it is key in deriving an expression for the influence function since
its expression involves the second-order differential of the regularized empirical-risk (see Lemma~\ref{lm.influence.function.expression}).
}
\begin{assumption}
	There exists a constant $\beta_{\ell; 2} \in \paren{0, \infty}$ such that,
	for every $y \in \mathcal{Y}$,
	\begin{equation}
		\label{asm.loss.smooth}
		\tag{$\beta_{\ell; 2}$-LipDL.2}
		\mbox{
			$u\mapsto \partial_2 \ell(y,u)$ is $\beta_{\ell; 2}$-Lipschitz continuous.
		}
	\end{equation}
\end{assumption}
In other words, the loss $\ell(\bullet, \bullet)$ has to be $\beta_{\ell; 2}$-smooth with respect to its second argument.
This assumption is widespread in the machine learning literature.
For instance \citet{bubeckConvexOptimizationAlgorithms2015a}
and later \citet{garrigosHandbookConvergenceTheorems2024} used
this assumption to provide convergence guarantees for optimization algorithms
such as stochastic gradient descent.

\medskip

From the above assumptions, the following lemma establishes the existence of BIF in the our framework. This requires considering empirical distributions $\widehat{P}_{n+1}^{z^{\prime}}$ (for every $z^{\prime} \in \mathcal{Y}$) defined as
	\begin{align}
		\label{eq.empirical.distribution}
		\widehat{P}_{n+1}^{z^{\prime}} :=
		\frac{1}{n+1}
		\delta_{\paren{X_{n+1}, z^{\prime}}}
		+
		\sum_{i=1}^{n}
		\frac{1}{n+1}\delta_{\paren{X_i, Y_i}},
	\end{align}
	where, for $\xi\in\mathcal{X}\times \mathcal{Y}$, $\delta_{\xi}$ denotes the Dirac measure supported on the singleton $\{\xi\}$.
\begin{lemma}
	\label{lm.influence.function.expression}
        Assume \eqref{asm.finite.kern}, \eqref{asm.loss.convex}, \eqref{asm.loss.smooth},
        and \eqref{asm.loss.c2} hold true. Then,
	for every $y,z \in \mathcal{Y}$, the Bouligand influence function
	$\mathrm{BIF}\paren{
		\widehat{P}_{n+1}^{y};
		T_{\lambda},
		\widehat{P}_{n+1}^{z}
	} \in\mathcal{H}$ of $T_{\lambda} : \mathcal{M} \to \mathcal{P}$
	for the distribution $\widehat{P}_{n+1}^{z}$ in the direction of the distribution $\widehat{P}_{n+1}^{y}$ does exist and satisfies
		\begin{align}\label{eq.BIF.Differential}
		\mathrm{BIF}\paren{\widehat{P}_{n+1}^{y}; T_{\lambda}, \widehat{P}_{n+1}^{z}}
		= \mathcal{D}_1\hat{f}_{\lambda}\paren{
			\mathbf{u}; \bullet
		}\paren{\mathbf{w} - \mathbf{u}} \in\mathcal{H} ,
	\end{align}
	where $ \uu = \paren{1, \ldots, 1, 1, 0} \in \mathbb{R}^{n+2}$ and $\uw = \paren{1, \ldots, 0, 1, 0} \in \mathbb{R}^{n+2}$.
	
	Furthermore, its explicit expression is given by
	\begin{align}\label{eq.BIF.function.expression}
		\mathrm{BIF}\paren{
			\widehat{P}_{n+1}^{y};
			T_{\lambda},
			\widehat{P}_{n+1}^{z}
		}
		= -\mathbf{I}_{\hat{f}}\paren{X_{n+1}, z}
		+ \mathbf{I}_{\hat{f}}\paren{X_{n+1}, y},
	\end{align}
	where, for every $z^{\prime} \in \mathcal{Y}$,
	\begin{align*}
		\mathbf{I}_{\hat{f}}\paren{X_{n+1}, z^{\prime}}
		:= - \frac{1}{n+1}
		\dtwo \ell\paren{z^{\prime},
			\hat{f}_{\lambda; D^{z}}\paren{X_{n+1}}}
		\left[
		\DDtwo \wer{\lambda}{\uu}{
			\hat{f}_{\lambda; D^{z}}
		}
		\right]^{+} 
		K_{X_{n+1}} \in \mathcal{A}.
	\end{align*}
\end{lemma}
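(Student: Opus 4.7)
The plan is to recognize the Bouligand influence function as a directional derivative of the weighted predictor $\wpred{\lambda}{\cdot}{\cdot}$ with respect to its weight argument, and then to compute that derivative through the implicit function theorem (IFT) applied to the first-order optimality condition of the $\mathbf{v}$-weighted regularized empirical risk.

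First I would translate the BIF limit into the weighted-predictor framework. The mixture $(1-t)\hat{P}_{n+1}^{z} + t \hat{P}_{n+1}^{y}$ assigns mass $1/(n+1)$ to each $\paren{X_i, Y_i}$, mass $(1-t)/(n+1)$ to $\paren{X_{n+1}, z}$, and mass $t/(n+1)$ to $\paren{X_{n+1}, y}$. By uniqueness of the regularized empirical risk minimizer (guaranteed by \eqref{asm.loss.convex} together with the strictly convex ridge penalty), $T_{\lambda}\paren{(1-t) \hat{P}_{n+1}^{z} + t \hat{P}_{n+1}^{y}}$ coincides with $\wpred{\lambda}{\uu + t(\uw - \uu)}{\cdot}$, so that \eqref{eq.BIF.Differential} reduces to showing that $\mathbf{v} \mapsto \wpred{\lambda}{\mathbf{v}}{\cdot}$ admits a one-sided directional derivative at $\uu$ along $\uw - \uu$.

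To establish this differentiability I would apply the IFT on $\mathcal{H}$ to the stationarity equation $\Dtwo \wer{\lambda}{\mathbf{v}}{\wpred{\lambda}{\mathbf{v}}{\cdot}} = 0$. Under \eqref{asm.loss.c2}, the Hessian $\DDtwo \wer{\lambda}{\mathbf{v}}{\cdot}$ decomposes as $2\lambda\,\mathrm{Id}_{\mathcal{H}}$ plus a sum of PSD rank-one operators $\tfrac{v_i}{n+1}\,\partial_2^2 \ell\paren{\cdot, f(X_i)}\, K_{X_i} \otimes K_{X_i}$ (PSD by \eqref{asm.loss.convex}); together with \eqref{asm.finite.kern} this yields a bounded, self-adjoint, strictly coercive operator on $\mathcal{H}$, hence invertible, while \eqref{asm.loss.smooth} supplies the joint regularity in $(\mathbf{v}, f)$ needed to apply the IFT. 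The IFT then gives $\mathcal{D}_1 \wpred{\lambda}{\uu}{\cdot}(\uw - \uu) = -[\DDtwo \wer{\lambda}{\uu}{\pred{\lambda;}{z}}]^{-1}\,\partial_{\mathbf{v}}\Dtwo \wer{\lambda}{\cdot}{\pred{\lambda;}{z}}(\uw - \uu)$. Since $\wer{\lambda}{\mathbf{v}}{f}$ is linear in $\mathbf{v}$, the mixed partial is computed coordinatewise, and only the two components where $\uw - \uu$ is nonzero contribute: they amount to $-\tfrac{1}{n+1}\partial_2 \ell\paren{z, \pred{\lambda;}{z}(X_{n+1})}K_{X_{n+1}} + \tfrac{1}{n+1}\partial_2 \ell\paren{y, \pred{\lambda;}{z}(X_{n+1})}K_{X_{n+1}}$. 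Substituting back produces exactly the summands $-\mathbf{I}_{\hat{f}}\paren{X_{n+1}, z} + \mathbf{I}_{\hat{f}}\paren{X_{n+1}, y}$, proving \eqref{eq.BIF.function.expression}.

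The main obstacle will be justifying the Moore--Penrose pseudo-inverse $[\cdots]^{+}$ appearing in the statement in place of the genuine inverse furnished by the IFT. By a representer-theorem argument, the mixed-derivative vector above lies in the finite-dimensional subspace $\mathcal{A} \subset \mathcal{H}$ spanned by $\brac{K_{X_1}, \ldots, K_{X_{n+1}}}$, and this subspace is left invariant by the Hessian; restricting the analysis to $\mathcal{A}$ then identifies the action of the full inverse of $\DDtwo\wer{\lambda}{\uu}{\pred{\lambda;}{z}}$ on elements of $\mathcal{A}$ with that of its pseudo-inverse, which yields the stated form. A secondary technicality is passing from the Fréchet-type derivative supplied by the IFT to the genuine one-sided Bouligand limit as $t \to 0^{+}$: this follows from standard control of the IFT remainder combined with continuity of $\mathbf{v} \mapsto \wpred{\lambda}{\mathbf{v}}{\cdot}$ in the $\mathcal{H}$-norm (itself a consequence of \eqref{asm.loss.smooth} applied to the first-order condition).
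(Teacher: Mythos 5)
Your proposal is correct and follows essentially the same route as the paper: identify $T_{\lambda}$ of the mixture with the weighted predictor $\hat f_{\lambda}\paren{\uu + t(\uw-\uu);\cdot}$ via uniqueness of the minimizer, obtain differentiability of $v \mapsto \wpred{\lambda}{v}{\cdot}$ by an implicit/inverse function theorem on the stationarity equation (the paper's Lemma~\ref{lm.pred.func.c2}), and then differentiate $\Dtwo \wer{\lambda}{v}{\wpred{\lambda}{v}{\cdot}} = 0_{\mathcal{H}}$ along $\uw-\uu$, inverting the Hessian on the span $\mathcal{A}$ to get the pseudo-inverse expression (the paper's Lemma~\ref{lm.hess.invertible.2}). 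The only presentational difference is that the paper secures strong convexity and invertibility on a full neighborhood $\Omega_{\epsilon}$ of the segment (where $v_{n+2}$ may be slightly negative, handled via \eqref{asm.loss.smooth} and the SmallEps conditions), whereas your coercivity argument as stated relies on non-negative weights and so needs the standard remark that the IFT only requires invertibility at $\uu$ plus joint $C^1$ regularity nearby, together with the identification of the IFT branch with the true minimizer along $[\uu,\uw]$.
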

The proof is detailed in Appendix~\ref{sec.IF.proofs}.
The first statement given by Eq.~\eqref{eq.BIF.Differential} establishes the connection between BIF and the differential of $v \mapsto \hat f_\lambda(v;\bullet ) \in \mathcal{H}$ evaluated at $\uu$ in the direction of $\uw - \uu$.
Recalling that $\wpred{\lambda}{\uu}{\bullet} = \pred{\lambda;}{z}$ and $\wpred{\lambda}{\uw}{\bullet} = \pred{\lambda;}{y}$, it appears that
$\pred{\lambda;}{y}$ can be approximated by a first-order Taylor expansion as
\begin{align*}
	\pred{\lambda;}{y} - \pred{\lambda;}{z} = \wpred{\lambda}{\uw}{\bullet}  - \wpred{\lambda}{\uu}{\bullet}  = \mathcal{D}_1\hat{f}_{\lambda}\paren{
		\mathbf{u}; \bullet
	}\paren{\mathbf{w} - \mathbf{u}} + o( \mathbf{w} - \mathbf{u} ) .
\end{align*}
Fortunately, Eq.~\eqref{eq.BIF.function.expression} asserts that this approximation can be made fully explicit since $\mathbf{I}_{\hat{f}}\paren{X_{n+1}, z^{\prime}}$ can be computed for either $z^\prime = z$ or $z^\prime = y$.	
A possible interpretation of Eq.~\eqref{eq.BIF.function.expression} is that $\mathbf{I}_{\hat{f}}\paren{X_{n+1}, z}$ quantifies the impact (``influence'') of observing $z$ at $X_{n+1}$ on the training of $\pred{\lambda;}{z}$. Therefore, $\mathrm{BIF}\paren{
	\widehat{P}_{n+1}^{y};
	T_{\lambda}, 	\widehat{P}_{n+1}^{z} }$ intuitively quantifies the change of the predictor value as $z$ is replaced by $y$ along the training.

\medskip

From Eq.~\eqref{eq.BIF.function.expression}, one could worry that the underlying quantities are actually functions in $\mathcal{H}$, which could be difficult to compute.
The purpose of next Lemma~\ref{lemma.vector.expression.influence.function} is precisely to provide a vector representation of these quantities which makes them tractable in practice.  
\begin{lemma}[Vector representation of the influence function]
	\label{lemma.vector.expression.influence.function}
	With the same notations and assumptions as Lemma~\ref{lm.influence.function.expression}, for every $z^{\prime} \in \mathcal{Y}$, it comes that $\mathbf{I}_{\hat{f}}\paren{X_{n+1}, z^{\prime}} \in \mathcal{A} \subset \mathcal{H}$ can be represented by the vector $I_{\hat{f}}\paren{X_{n+1}, z^{\prime}} \in \mathbb{R}^{n+1}$
	defined by
	\begin{align*}
		I_{\hat{f}}\paren{X_{n+1}, z^{\prime}}
		:= - \frac{1}{n+1}
		\partial_2 \ell\paren{z^{\prime}, \hat{a}_{\lambda}(\uu)^T K_{\bullet, n+1}} 
		\croch{\nabla_2^2 \vwer{\lambda}{\uu}{\hat{a}_{\lambda}(\uu)}}^+
		K_{\bullet, n+1} \in \mathbb{R}^{n+1},
	\end{align*}
	with
	$\ifunc{z^{\prime}} = \sum_{i=1}^{n+1} \croch{I_{\hat{f}}\paren{X_{n+1}, z^{\prime}}}_i K_{X_i}$, and the vector $\hat{a}_{\lambda}(\uu)$ is the unique minimizer of 
	$\vwer{\lambda}{\uu}{\bullet}$ (see Eq.~\ref{eq.vec.wer})
	in the range of the Gram matrix $K$.
\end{lemma}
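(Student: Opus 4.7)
The plan is to use the Representer theorem to reduce the infinite-dimensional operator identity from Lemma~\ref{lm.influence.function.expression} to a finite-dimensional matrix identity on $\mathbb{R}^{n+1}$, and then carefully match the Moore–Penrose pseudo-inverses on both sides. First, by the Representer theorem applied to Definition~\ref{def.weighted.predictor}, the minimizer $\hat f_{\lambda; D^z}$ lies in $\mathcal{A} = \mathrm{span}\{K_{X_1}, \ldots, K_{X_{n+1}}\}$. Introducing the linear parametrization $\Phi : \mathbb{R}^{n+1} \to \mathcal{A}$ defined by $\Phi(a) = \sum_i a_i K_{X_i}$, the reproducing property gives $\wer{\lambda}{\uu}{\Phi(a)} = \vwer{\lambda}{\uu}{a}$, so $\hat a_{\lambda}(\uu)$ is a minimizer of $\vwer{\lambda}{\uu}{\cdot}$. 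The same reproducing property also yields $\hat f_{\lambda; D^z}(X_{n+1}) = \hat a_\lambda(\uu)^T K_{\bullet, n+1}$, which handles the scalar factor $\partial_2 \ell(z^\prime, \cdot)$.

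Next, I would expand the Hessian operator $A := \DDtwo \wer{\lambda}{\uu}{\hat f_{\lambda; D^z}}$ on $\mathcal{H}$. Using the identity $g(X_i) = \scal{g, K_{X_i}}_\mathcal{H}$, a direct calculation gives
\begin{align*}
A[g] = \frac{1}{n+1} \sum_{i=1}^{n+1} \partial_2^2 \ell\bigl(\tilde Y_i, \hat f_{\lambda;D^z}(X_i)\bigr) \, g(X_i) \, K_{X_i} + 2\lambda g,
\end{align*}
which is a positive semi-definite rank-$(n+1)$ perturbation (by \eqref{asm.loss.convex}) of $2\lambda I_\mathcal{H}$. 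Consequently $A$ is positive definite, hence invertible, so $A^+ = A^{-1}$, and $A$ preserves $\mathcal{A}$. Let $M$ denote the matrix of $A|_\mathcal{A}$ in the (non-orthonormal) basis $\{K_{X_i}\}_i$, so that $A \circ \Phi = \Phi \circ M$. By direct computation one recovers $M = \frac{1}{n+1} D K + 2\lambda I$ with $D$ the diagonal matrix of second derivatives, and the chain rule applied to $\vwer{\lambda}{\uu}{\cdot} = \wer{\lambda}{\uu}{\Phi(\cdot)}$ yields $KM = \DDtwo \vwer{\lambda}{\uu}{\hat a_\lambda(\uu)}$.

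The heart of the proof is to match the two pseudo-inverses, i.e., to show
\begin{align*}
A^{-1} K_{X_{n+1}} = \Phi\bigl([KM]^+ K_{\bullet, n+1}\bigr).
\end{align*}
Since $\Phi(b_1) = \Phi(b_2)$ iff $b_1 - b_2 \in \mathrm{Null}(K)$, and since $A \circ \Phi = \Phi \circ M$ with $M$ invertible (its eigenvalues lie in $[2\lambda, \infty)$ because $DK$ has non-negative real spectrum), this equality reduces to verifying $KM\,[KM]^+ K_{\bullet, n+1} = K_{\bullet, n+1}$. Because $KM[KM]^+$ is the orthogonal projector onto $\mathrm{Range}(KM) = \mathrm{Range}(K)$ and $K_{\bullet, n+1} = K e_{n+1} \in \mathrm{Range}(K)$, the identity holds. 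Combining this with the previous steps and the expression of $\ifunc{z^\prime}$ from Lemma~\ref{lm.influence.function.expression} yields the claimed vector representation.

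The main obstacle is this last step: handling the pseudo-inverse on the matrix side when $K$ is singular. In that case $\Phi$ is not injective and $[KM]^+ K_{\bullet, n+1}$ need not lie in $\mathrm{Range}(K)$, so the coefficient vector representation of $\ifunc{z^\prime}$ is not unique. The key observation making everything consistent is that $K_{\bullet, n+1}$ always belongs to $\mathrm{Range}(K)$, so the projector $KM[KM]^+$ acts as the identity on it, and the ambiguity lives purely in $\mathrm{Null}(K)$ — harmless once $\Phi$ is applied.
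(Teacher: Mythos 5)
Your proposal is correct and follows essentially the same route as the paper's proof: expand everything in the span of $K_{X_1},\ldots,K_{X_{n+1}}$, identify the Hessian operator restricted to that span with the matrix Hessian $\nabla_2^2 \hat{R}_{\lambda}(\mathbf{u};\hat{a}_{\lambda}(\mathbf{u})) = KM$ (which is exactly the content of Lemma~\ref{lm.link.between.Hessians}), and solve the resulting linear system with the pseudo-inverse, using that $K_{\bullet,n+1}=Ke_{n+1}$ lies in the range of $K$. Your intertwining relation $A\circ\Phi=\Phi\circ M$ together with the projector/range argument is just a more explicit rendering of the paper's step invoking invertibility of the matrix Hessian over $\mathcal{R}(K)$ (Lemma~\ref{lm.vec.wer.strongly.convex.1}), so there is no substantive difference.
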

The proof is deferred to Appendix~\ref{sec.IF.proofs}.

\medskip


The following definition yields an expression
for the new predictor approximation using influence functions.
\begin{definition}[IF-based predictor approximation]
	\label{def.BIF.based.approximation}
	With the same notations and assumptions as Lemmas~\ref{lm.influence.function.expression} and~\ref{lemma.vector.expression.influence.function}, for any candidate prediction value $y \in \mathcal{Y}$, the influence function-based predictor approximation 
$\tilde{f}_{\lambda; D^y}^{\mathrm{IF}}$ is defined, for any $z\in\mathcal{Y}$ as
\begin{align*}
	\tilde{f}_{\lambda; D^y}^{\mathrm{IF}} 
	&
	:=	\pred{\lambda;}{z}
	- \ifunc{z}
	+ \ifunc{y} \in\mathcal{H},
\end{align*}
where
$\ifunc{z^{\prime}} = \sum_{i=1}^{n+1} \croch{I_{\hat{f}}\paren{X_{n+1}, z^{\prime}}}_i K_{X_i}$, for every $z^{\prime} \in \mathcal{Y}$,
and 
\begin{align*}
	I_{\hat{f}}\paren{X_{n+1}, z^{\prime}}
	:= - \frac{1}{n+1}
	\partial_2 \ell\paren{z^{\prime}, \hat{a}_{\lambda}(\uu)^T K_{\bullet, n+1}} 
	\croch{\nabla_2^2 \vwer{\lambda}{\uu}{\hat{a}_{\lambda}(\uu)}}^+
	K_{\bullet, n+1}.
\end{align*}
\end{definition}
Compared with the previous predictor approximation $\hat{f}_{\lambda; D^{z}}$,
the new one makes use of a first-order corrective term
using influence functions (IF).
This allows the new IF-based predictor to depend on the candidate prediction value $y$, unlike the previous approximations from Sections~\ref{sec.non.smooth} and~\ref{sec.smooth} where only a fixed $z\in\mathcal{Y}$ was considered.
Moreover, compared to the full-conformal setting
where a complete retraining of a predictor is required for
each value of $y$, the only term varying with $y$
is the first-order derivative of the loss-function evaluated at $\paren{y, \hat{a}_{\lambda}(\uu)^T K_{\bullet, n+1}}$.
Every other terms only need to be computed once. 
On the one hand, all of this makes the new approximation $\tilde{f}_{\lambda; D^y}^{\mathrm{IF}}$ more versatile than the previous ones from Sections~\ref{sec.non.smooth} and~\ref{sec.smooth}. On the other hand, it also leads to a computationally tractable approximation to the predictive full-conformal prediction-region.

\begin{remark}\label{remark.RLA}
Although Definition~\ref{def.BIF.based.approximation} displays a computationally tractable
expression for the predictor approximation, computing the pseudo-inverse $\croch{\nabla_2^2 \vwer{\lambda}{\uu}{\hat{a}_{\lambda}(\uu)}}^+$ remains challenging. It is well known that inverting such a $(n+1) \times (n+1)$  psd matrix generally requires
$O(n^{3})$ flops, which can make computations quite heavy as $n$ grows.

Fortunately, several techniques from the fields of RLA (Random Linear Algebra) can be harnessed for approximating the inverse of such a large matrix at a low computational price. or instance, we refer interested readers to 
\citep{gittens2016revisiting,musco2017sublinear, wang2019scalable,tropp2023randomized} for a few references.
\end{remark}

\subsection{Assessing the quality of the IF-based approximate predictor/scores}
\label{sec.quality.ifunc}

Let us start by reviewing two additional assumptions that allow us to exploit the higher smoothness of the underlying loss to derive faster rates.
\begin{assumption}
	For every $y \in \mathcal{Y}$
	\begin{equation}
		\label{asm.loss.c3}
		\tag{C3L}
		\mbox{
			$u  \in \mathcal{Y} \mapsto \ell(y, u)$ is three times differentiable, 
		}    
	\end{equation}
	with $\dddtwo \ell(y, \bullet)$ its third derivative.
\end{assumption}
Requiring third-order derivatives is not a new requirement in the machine learning literature. For instance, \citet{nesterov1994interior} introduce and study the class of self-concordant functions, which has been further generalized to the Generalized Self-Concordant property by \citet{marteau2019globally,dvurechensky2023generalized} for instance.
Let us also mention that the three examples of loss-functions from Section~\ref{sec.predictor} all fulfill this requirement.

\begin{assumption}
	There exists a constant $\xi_{\ell} \in \paren{0, \infty}$ such that,
	for every $y \in \mathcal{Y}$,
	\begin{equation}
		\label{asm.loss_bounded_d3}
		\tag{$\xi_\ell$-BdD3}
		\mbox{
			$\sup_{u \in \mathcal{Y}}\abss{\dddtwo \ell(y, u)} \leq \xi_{\ell}$.
		}    
	\end{equation}
\end{assumption}
This assumption is equivalent to requiring that the second-order derivative $ u \mapsto \partial_2^2 \ell(y,u)$ is $\xi_\ell$-Lipschitz continuous, for every $y\in\mathcal{Y}$.
For instance, it holds true for the three loss-functions listed
    at the end of Section~\ref{sec.predictor} (see Appendix~\ref{sec.example.loss.function} for the detailed calculations):
    \begin{enumerate}
        \item Lemma~\ref{lm.logcosh.d3}
        establishes that this holds with $\xi_{\ell} = \frac{1}{a^2}$
        for the \textit{logcosh loss-function} function, and
        \item Lemma~\ref{lm.pseudo.huber.d3}
        proves that this holds with $\xi_{\ell} = \frac{1}{a} \frac{3}{2} \paren{\frac{4}{5}}^{5/2}$
        for the \textit{pseudo-Huber loss-function} function, and
        \item Lemma~\ref{lm.smooth.pinball.d3}
        settles that this holds with $\xi_{\ell} = \frac{1}{a^2}
        \frac{5 + 3\sqrt{3}}{\paren{\sqrt{3}+3}^3}$
        for the \textit{smoothed pinball loss-function} function.
    \end{enumerate}
As the values of $a \in \paren{0, +\infty}$ becomes smaller, the bounds worsen. This is consistent with the fact that the smaller the parameter $a$, the less smooth the corresponding loss-functions introduced at the end of Section~\ref{sec.predictor}.
Coming back to the present context, let us mention that our analysis of the predictor approximation quality involves applying the second-order mean-value theorem to the function $v \mapsto \hat{f}_{\lambda}\paren{v; \bullet}$.
Then \eqref{asm.loss_bounded_d3} is key to uniformly
upper-bound the third-order differential
of the weighted regularized empirical-risk function
that appears in the expression
of the second-order differential of the function
$v \mapsto \hat{f}_{\lambda}\paren{v; \bullet}$.
Let us emphasize that this assumption is not new when analyzing theoretical properties of optimization algorithms. For instance, \citet[in Theorem 3.5]{nocedalNumericalOptimization2006} used
this assumption to prove convergence results for Newton's method.
This is also related to self-concordance property \citep{nesterov1994interior},
where the third derivative is bounded from above
using the second derivative. 
In fact, if the loss-function is self-concordant and $\beta_{2; \ell}$-smooth,
then it fulfills the above assumption.

\medskip

We are now in position to establish the main approximation results of the present section.
The next upper-bound quantifies the quality of the IF-based approximation $\tilde{f}_{\lambda; D^y}^{\mathrm{IF}} $ to $\hat{f}_{\lambda; D^{y}}$ in terms of $\norm{\bullet}_{\mathcal{H}}$. 
\begin{proposition}
	\label{prop.predictor.stability.Verysmooth}
	Assume \eqref{asm.finite.kern}, \eqref{asm.loss.convex}, \eqref{asm.loss.smooth}, \eqref{asm.loss.c3}, and \eqref{asm.loss_bounded_d3} hold true.
	Then we have
	\begin{align*}
		\left\|
		\hat{f}_{\lambda; D^{y}}
		- \tilde{f}_{\lambda; D^{y}}^{\mathrm{IF}}
		\right\|_{\mathcal{H}}
		\leq\sqrt{K_{n+1, n+1}} \min \paren{
			\frac{\rho^{(2)}_{\lambda}(y)}{
				\lambda^3 (n+1)^2},
			\frac{2 \rho^{(1)}_{\lambda}(y)}{\lambda (n+1)}
		},
	\end{align*}	
	where
	\begin{align}
		\label{eq.predictor.approx.bound.very.smooth}
		\rho^{(2)}_{\lambda}(y) 
		:= & \frac{\xi_{\ell}}{2} 
		\sqrt{K_{n+1, n+1}}
		\left(
		\frac{1}{n+1} 
		\sum_{i=1}^{n+1} 
		\paren{K_{i, i}}^{3/2}
		\right)
		\paren{
			\tilde{\rho}^{(1)}_{\lambda}(y)
		}^2
		+ 2 \lambda (K_{n+1, n+1}) \beta_{\ell; 2} \tilde{\rho}^{(1)}_{\lambda}(y), \\
		\mbox{and}\quad         
		\tilde{\rho}^{(1)}_{\lambda}(y) &  :=
		\left(
		1 + K_{n+1, n+1} \frac{\beta_{\ell; 2}}{\lambda (n+1)}
		\right) \rho^{(1)}_{\lambda}(y), \label{eq.rho.tilde} 
	\end{align}
	with $\rho^{(1)}_{\lambda}(y) $ given by Eq.~\eqref{eq.predictor.approx.bound.smooth}.
\end{proposition}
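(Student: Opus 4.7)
The plan is to recognize $\tilde{f}_{\lambda; D^y}^{\mathrm{IF}}$ as the first-order Taylor expansion of the map $v \mapsto \hat{f}_\lambda(v; \cdot)$ about $\uu$ in the direction $\uw - \uu$, and then to control the residual
\begin{align*}
\hat{f}_{\lambda; D^{y}} - \tilde{f}_{\lambda; D^{y}}^{\mathrm{IF}}
= \hat{f}_\lambda(\uw; \cdot) - \hat{f}_\lambda(\uu; \cdot) - \mathcal{D}_1 \hat{f}_\lambda(\uu; \cdot)(\uw - \uu)
\end{align*}
by a standard second-order Taylor remainder argument. The identification of the Taylor term is immediate from Equation~\eqref{eq.BIF.Differential} of Lemma~\ref{lm.influence.function.expression}, which shows that $\mathcal{D}_1 \hat{f}_\lambda(\uu; \cdot)(\uw - \uu) = -\mathbf{I}_{\hat{f}}(X_{n+1}, z) + \mathbf{I}_{\hat{f}}(X_{n+1}, y)$, matching the corrective term in Definition~\ref{def.BIF.based.approximation}.

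The first step is to derive a usable expression for $\mathcal{D}_1^2 \hat{f}_\lambda(v; \cdot)$ via the implicit function theorem applied to the first-order optimality condition $\partial_2 \wer{\lambda}{v}{\hat{f}_\lambda(v;\cdot)} = 0$. A first differentiation gives $\mathcal{D}_1 \hat{f}_\lambda(v; \cdot)$ as the composition of $[\DDtwo \wer{\lambda}{v}{\hat{f}_\lambda(v;\cdot)}]^{+}$ with the cross-derivative $\partial_v \partial_2 \wer{\lambda}{v}{\cdot}$. A second differentiation produces two contributions: one involving the third functional derivative of $\wer{\lambda}{v}{\cdot}$ (hence the pointwise third derivative of $\ell$, controlled by \eqref{asm.loss_bounded_d3}), and one involving the interaction between the cross-derivative and $\mathcal{D}_1 \hat{f}_\lambda$ itself, controlled via \eqref{asm.loss.smooth}.

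The second step is to plug these expressions into the second-order mean-value inequality
\begin{align*}
\left\|\hat{f}_\lambda(\uw; \cdot) - \hat{f}_\lambda(\uu; \cdot) - \mathcal{D}_1 \hat{f}_\lambda(\uu; \cdot)(\uw - \uu)\right\|_{\mathcal{H}}
\leq \frac{1}{2} \sup_{t \in [0,1]} \left\|\mathcal{D}_1^2 \hat{f}_\lambda\bigl(\uu + t(\uw - \uu); \cdot\bigr)(\uw - \uu, \uw - \uu)\right\|_{\mathcal{H}}.
\end{align*}
The ridge regularization together with \eqref{asm.loss.convex} ensures $\DDtwo \wer{\lambda}{v}{\cdot} \succeq 2\lambda I$ on the relevant subspace, so the pseudo-inverse appearing in $\mathcal{D}_1^2 \hat{f}_\lambda$ has operator norm at most $1/(2\lambda)$. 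Combining this spectral control with $\|K_{X_i}\|_{\mathcal{H}} = \sqrt{K_{i,i}}$, the $\beta_{\ell;2}$-smoothness of $\partial_2 \ell$, the uniform bound $\xi_\ell$ on the third derivative, and the first-order stability factor $\tilde{\rho}^{(1)}_\lambda(y)$ from Proposition~\ref{prop.predictor.stability.smooth} (used to bound $\hat{f}_\lambda(v;X_{n+1})$-dependent terms along the segment) recovers the first candidate bound $\sqrt{K_{n+1, n+1}}\,\rho^{(2)}_\lambda(y)/(\lambda^{3}(n+1)^{2})$. The main obstacle here is bookkeeping: the two additive pieces of $\mathcal{D}_1^2 \hat{f}_\lambda$ must line up precisely with the two summands defining $\rho^{(2)}_\lambda(y)$ in Equation~\eqref{eq.predictor.approx.bound.very.smooth}, and one must propagate the mean-value perturbation through $\tilde\rho^{(1)}_\lambda$ in Equation~\eqref{eq.rho.tilde}.

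The third step simply accounts for the $\min$. Splitting by the triangle inequality,
\begin{align*}
\left\|\hat{f}_{\lambda; D^{y}} - \tilde{f}_{\lambda; D^{y}}^{\mathrm{IF}}\right\|_{\mathcal{H}}
\leq \left\|\hat{f}_{\lambda; D^{y}} - \hat{f}_{\lambda; D^{z}}\right\|_{\mathcal{H}}
+ \left\|\mathbf{I}_{\hat{f}}(X_{n+1}, y) - \mathbf{I}_{\hat{f}}(X_{n+1}, z)\right\|_{\mathcal{H}},
\end{align*}
one bounds the first term by Proposition~\ref{prop.predictor.stability.smooth} at rate $\sqrt{K_{n+1, n+1}}\,\rho^{(1)}_\lambda(y)/(\lambda(n+1))$, and the second via the closed-form expression of $\mathbf{I}_{\hat f}$ from Lemma~\ref{lm.influence.function.expression}, using the same operator bound $\|[\DDtwo \wer{\lambda}{\uu}{\cdot}]^{+}\| \leq 1/(2\lambda)$ together with $\|K_{X_{n+1}}\|_{\mathcal{H}} = \sqrt{K_{n+1, n+1}}$ and the definition of $\rho^{(1)}_\lambda(y)$. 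Each piece contributes $\sqrt{K_{n+1, n+1}}\,\rho^{(1)}_\lambda(y)/(\lambda(n+1))$, yielding the second candidate $2\sqrt{K_{n+1, n+1}}\,\rho^{(1)}_\lambda(y)/(\lambda(n+1))$. Retaining the smaller of the two bounds yields the announced inequality.
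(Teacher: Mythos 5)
Your proposal is correct and follows essentially the same route as the paper: identify $\tilde{f}_{\lambda; D^y}^{\mathrm{IF}}$ with the first-order Taylor term via Lemma~\ref{lm.influence.function.expression}, bound the remainder by the second-order mean-value inequality using the implicit-differentiation expression for $\mathcal{D}_1^2 \hat{f}_\lambda$ (the paper's Lemma~\ref{lm.DD.predictor.func}) together with the $1/(2\lambda)$ operator bound, the $\xi_\ell$ and $\beta_{\ell;2}$ controls, and the segment-wise bound on $\mathcal{D}_1\hat{f}_\lambda(v;\cdot)(\uw-\uu)$ that produces $\tilde{\rho}^{(1)}_\lambda(y)$ (Lemma~\ref{lm.bound.h.v}). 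Your triangle-inequality splitting for the second candidate bound is the same decomposition the paper uses (since $\mathcal{D}_1\hat{f}_\lambda(\uu;\cdot)(\uw-\uu) = -\mathbf{I}_{\hat f}(X_{n+1},z)+\mathbf{I}_{\hat f}(X_{n+1},y)$), so no substantive difference remains.
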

The proof is postponed to Appendix~\ref{proof.score.bound.very.smooth}.
The main dependence of the upper-bound with respect to $n$ lies at the denominator in the right-hand side. To be more specific, if one assumes for instance that the kernel is bounded, then $\rho^{(1)}_{\lambda}(y)$, $\tilde{\rho}^{(1)}_{\lambda}(y)$, and $\rho^{(2)}_{\lambda}(y) $ become bounded. Therefore the decay rate is of order $O( \min\paren{ (\lambda n)^{-1}, (\lambda^3 n^2)^{-1}} )$, which is not worse than $O( (n\lambda)^{-1})$ from Proposition~\ref{prop.predictor.stability.smooth}.
This is consistent with the fact that Proposition~\ref{prop.predictor.stability.Verysmooth} exploits higher-order smoothness properties of the loss-function than Proposition~\ref{prop.predictor.stability.smooth}. 

It is also important to notice that learning algorithms relying on regularization terms (such as the one introduced in Definition~\ref{def.predictor}) usually perform well with a data-driven choice of the regularization parameter $\hat \lambda>0$ that is chosen to converge to 0 as $n$ grows. This is why the decay rate of the right-hand side should be understood as resulting from the interaction between $\lambda$ and $n$.

\medskip

Consistently with Definition~\ref{def.BIF.based.approximation}, for any given $z \in\mathcal{Y}$, the non-conformity scores approximation are then defined by
for every $y \in \mathcal{Y}$ as
\begin{align}
\label{eq.approximate.score.1}
\begin{aligned}
    S_{\lambda; D^{y}}\paren{X_i, Y_i}
    \approx
    \widetilde{S}_{\lambda; D^{y}}^{\mathrm{IF}}\paren{X_i, Y_i} 
    &:= s\paren{Y_i, \tilde{f}_{\lambda; D^y}^{\mathrm{IF}}(X_i)},
    &&\mbox{if } 1 \leq i \leq n\\
    S_{\lambda; D^{y}}\paren{X_i, Y_i}
    \approx
    \widetilde{S}_{\lambda; D^{y}}^{\mathrm{IF}}\paren{X_i, y} 
    &:= s\paren{y, \tilde{f}_{\lambda; D^y}^\mathrm{IF}(X_i)},
    &&\mbox{if } i=n+1.
\end{aligned}
\end{align}
As a consequence of the Lipschitz property \eqref{asm.score.lipschitz} of the scores, the next theorem is actually a straightforward  corollary of Proposition~\ref{prop.predictor.stability.Verysmooth}, 
\begin{theorem}
    \label{thm.score.bound.very.smooth}
        Assume \eqref{asm.finite.kern}, \eqref{asm.loss.convex}, 
        \eqref{asm.loss.smooth}, \eqref{asm.loss.c3}, \eqref{asm.loss_bounded_d3}, and
        \eqref{asm.score.lipschitz} hold true. Then,
    for every $y \in \mathcal{Y}$,
    \begin{align*}
        \begin{aligned}
            \abss{
                S_{\lambda; D^{y}} \paren{X_i, Y_i}
                - \widetilde{S}_{\lambda; D^{y}}^{\mathrm{IF}} \paren{X_i, Y_i}
            }
            &\leq \widehat{\tau}_{\lambda; i}^{(2)}(y),
            &&
            \mbox{ if }1\leq i \leq n\\
            \abss{
                S_{\lambda; D^{y}} \paren{X_i, y}
                - \widetilde{S}_{\lambda; D^{y}}^{\mathrm{IF}} \paren{X_i, y}
            }
            &\leq \widehat{\tau}_{\lambda; i}^{(2)}(y),
            &&
            \mbox{ if }i = n+1,
        \end{aligned}
    \end{align*}
    where
    \begin{align*}
        \widehat{\tau}_{\lambda; i}^{(2)}(y)
        := \sqrt{K_{i, i}}
        \sqrt{K_{n+1, n+1}}
        \min\left(
            \frac{
                \gamma    
                \rho^{(2)}_{\lambda}(y)
            }{
                \lambda^3 (n+1)^2
            },
            \frac{
                2 \gamma
                \rho^{(1)}_{\lambda}(y)
            }{
                \lambda (n+1)
            }
        \right),
    \end{align*}
    with $\rho^{(2)}_{\lambda}(y)$ is defined in Eq.~\eqref{eq.predictor.approx.bound.very.smooth} and $\rho^{(1)}_{\lambda}(y) $ given by Eq.~\eqref{eq.predictor.approx.bound.smooth}.
\end{theorem}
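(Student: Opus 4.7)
The plan is to reduce the score-approximation bound to a predictor-approximation bound by exploiting the Lipschitz property \eqref{asm.score.lipschitz} of the non-conformity function $s$, and then to invoke Proposition~\ref{prop.predictor.stability.Verysmooth} to control the remaining $\mathcal{H}$-norm error.

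Concretely, for $1\leq i\leq n$, start from the definitions in \eqref{eq.approximate.score.1} and write
\begin{align*}
\abss{S_{\lambda;D^y}(X_i,Y_i) - \tilde S_{\lambda;D^y}^{\mathrm{IF}}(X_i,Y_i)}
= \abss{s\paren{Y_i,\hat f_{\lambda;D^y}(X_i)} - s\paren{Y_i,\tilde f_{\lambda;D^y}^{\mathrm{IF}}(X_i)}}.
\end{align*}
Applying \eqref{asm.score.lipschitz} in the second argument yields
\begin{align*}
\abss{S_{\lambda;D^y}(X_i,Y_i) - \tilde S_{\lambda;D^y}^{\mathrm{IF}}(X_i,Y_i)}
\leq \gamma \abss{\hat f_{\lambda;D^y}(X_i) - \tilde f_{\lambda;D^y}^{\mathrm{IF}}(X_i)}.
\end{align*}
The case $i=n+1$ is handled identically, with both scores evaluated at the candidate value $y$ rather than at $Y_i$, since the Lipschitz bound only uses the second argument.

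Next I would convert the pointwise discrepancy into an $\mathcal{H}$-norm discrepancy by the reproducing property. For every $h\in\mathcal{H}$ and every index $i\in\{1,\dots,n+1\}$ one has $h(X_i)=\scal{K_{X_i},h}_{\mathcal{H}}$, so Cauchy--Schwarz combined with $\|K_{X_i}\|_{\mathcal{H}}=\sqrt{\kappa_{\mathcal{H}}(X_i,X_i)}=\sqrt{K_{i,i}}$ gives
\begin{align*}
\abss{\hat f_{\lambda;D^y}(X_i) - \tilde f_{\lambda;D^y}^{\mathrm{IF}}(X_i)}
\leq \sqrt{K_{i,i}}\; \normh{\hat f_{\lambda;D^y} - \tilde f_{\lambda;D^y}^{\mathrm{IF}}}.
\end{align*}

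Finally, the $\mathcal{H}$-norm stability bound from Proposition~\ref{prop.predictor.stability.Verysmooth} (whose hypotheses are exactly the assumptions gathered in the present theorem) yields
\begin{align*}
\normh{\hat f_{\lambda;D^y} - \tilde f_{\lambda;D^y}^{\mathrm{IF}}}
\leq \sqrt{K_{n+1,n+1}}\;\min\!\paren{\frac{\rho^{(2)}_{\lambda}(y)}{\lambda^3(n+1)^2},\;\frac{2\rho^{(1)}_{\lambda}(y)}{\lambda(n+1)}}.
\end{align*}
Chaining the three inequalities and collecting the factor $\gamma$ inside the minimum produces exactly $\tau_{\lambda;i}^{(2)}(y)$ as stated.

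There is no genuine obstacle here: the only substantive work (the $\mathcal{H}$-norm control of the predictor, involving the second-order mean-value argument and the influence-function identity of Lemma~\ref{lm.influence.function.expression}) has already been absorbed into Proposition~\ref{prop.predictor.stability.Verysmooth}. The present theorem is therefore an immediate corollary, as announced in the paragraph preceding its statement; the only point requiring minimal care is that the two Lipschitz reductions (for $i\leq n$ and $i=n+1$) both involve the \emph{same} predictor $\hat f_{\lambda;D^y}$ and its IF-based approximation $\tilde f_{\lambda;D^y}^{\mathrm{IF}}$, so a single application of Proposition~\ref{prop.predictor.stability.Verysmooth} suffices for both cases.
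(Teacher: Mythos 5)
Your proposal is correct and follows exactly the paper's own argument: Lipschitz reduction of the score via \eqref{asm.score.lipschitz}, reproducing property plus Cauchy--Schwarz to pass from the pointwise gap at $X_i$ to the $\mathcal{H}$-norm gap with the factor $\sqrt{K_{i,i}}$, and then a single invocation of Proposition~\ref{prop.predictor.stability.Verysmooth}, handling $i=n+1$ identically. Nothing is missing.
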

The proof is given in Appendix \ref{proof.score.bound.very.smooth}.
Assuming that the kernel is bounded implies that $y \mapsto \rho^{(2)}_{\lambda}(y)$ and $y \mapsto \rho^{(1)}_{\lambda}(y)$ are bounded.
Let us also assume that the regularization parameter $\lambda \propto 1/(n+1)^{r}$, for some $r \in [0, 1)$.
Then the upper-bound decreases at the rate $O\paren{n^{-1 + r + \min \paren{-1 + 2r, 0}}}$, compared to $O\paren{n^{-1 + r}}$ from Theorem~\ref{thm.score.bound.smooth}.
On the one hand if $r < 1/2$, the new upper-bound vanishes faster than the previous one by an exponent $-1 + 2r<0$. The most favorable case being $r=0$
that is, when $\lambda$ remains constant.
On the other hand if $r \geq 1/2$, both upper-bounds exhibit the same convergence rate.

Regarding the decay rate of the regularization parameter $\lambda$, \citet{caponnetto2007optimal} proved that the optimal (minimax) rate depends on the eigenvalues decay of the covariance operator
(see Eq.~\ref{eq.cov.op}) and the smoothness of the regression function.
To be more specific the smoother the regression function and the slower the decay rate of the eigenvalues, the closer $r\in[0,1) $ should be to 0.
(see also \citet{lin2017distributed,yu2021robust,blanchard2018optimal}).
Coming back to the present section where smoother functions are considered, the optimal value of $r$ should be intuitively taken close to 0 (and eventually $r<1/2$) by contrast to the optimal value corresponding to less smooth functions from Sections~\ref{sec.non.smooth} or~\ref{sec.smooth}.
In other words, smooth enough functions should yield instances for which the approximation error of the IF-based approximate scores should decay faster.

\subsection{Quality of the IF-based approximate full-conformal prediction-region}
\label{sec.quality.ifunc.region}

Following the approximation scheme described in Definition~\ref{def.approx.prediction.region}, it results that the upper-approximate full-conformal prediction-region, denoted by  $\ufcprr{,(2)}{\lambda; \alpha}$ is given by
\begin{definition}[IF-based approximate full-conformal prediction-region]
	\label{def.IF-based.approximate.region}
	From the non-conformity scores approximation are given in Eq.~\eqref{eq.approximate.score.0}
	and the upper-bounds in Theorem~\ref{thm.score.bound.very.smooth}, the IF-based approximate full-conformal prediction-region is defined, for any $\alpha\in(0,1)$, by 
	\begin{align}\label{eq.approximate.full.region.very.smooth}
		\ufcprr{,(2)}{\lambda; \alpha}
		:= \brac{
			y \in \mathcal{Y} :
			\ufcpvv{,(2)}{\lambda; D}{y}
			> \alpha
		},
	\end{align}
	where, for every $y \in \mathcal{Y}$,
	\begin{align*}
		\ufcpvv{,(2)}{\lambda; D}{y}
		:= \frac{
			1 
			+ 
			\sum_{i=1}^{n} 
			\mathbbm{1}
			\left\{
			\widetilde{S}_{\lambda; D^{y}}^{\mathrm{IF}} \paren{X_i, Y_i}
			+
			\widehat{\tau}_{\lambda; i}^{(2)} \paren{y}
			\geq 
			\widetilde{S}_{\lambda; D^{y}}^{\mathrm{IF}} \paren{X_{n+1}, y}
			-
			\widehat{\tau}_{\lambda; n+1}^{(2)} \paren{y}
			\right\}
		}{
			n+1
		}.
	\end{align*}
	Moreover let us note $\lfcprr{,(2)}{\lambda; \alpha}$,
	the corresponding lower-approximate full-conformal prediction-region,
	analog to the prediction-region defined in Eq.~\eqref{eq.lower.approx.region}.
\end{definition}

Applying Lemma~\ref{lm.coverage.approximate.region} to the above approximate full-conformal prediction-region immediately leads to the next coverage guarantee.
\begin{corollary}[Coverage]
	\label{cor.coverage.very.smooth}
	Assume %
        \eqref{asm.finite.kern}, \eqref{asm.loss.convex}, 
        \eqref{asm.loss.smooth}, \eqref{asm.loss.c3}, \eqref{asm.loss_bounded_d3}, and
        \eqref{asm.score.lipschitz}, along with the first set of assumptions in Thereom~\ref{thm.coverage} hold true.
	With the above notations, for any control-level $\alpha\in(0,1)$, the approximate full-conformal prediction-region $\ufcprr{,(2)}{\lambda; \alpha}$ given by Eq.~\eqref{eq.approximate.full.region.very.smooth} is a true confidence-region at level $1-\alpha$, that is,
	\begin{align*}
		\mathbb{P}
		\croch{Y_{n+1} \in \ufcprr{, (2)}{\lambda; \alpha} } \geq 1 - \alpha.
	\end{align*}
\end{corollary}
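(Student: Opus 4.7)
The statement is essentially a direct corollary, and my plan is simply to verify that all the hypotheses needed to invoke Lemma~\ref{lm.coverage.approximate.region} are satisfied by the IF-based construction in Definition~\ref{def.IF-based.approximate.region}. So the proof reduces to a bookkeeping argument rather than any new analytic work.

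First, I would note that the IF-based approximate scores $\tilde{S}_{\lambda; D^{y}}^{\mathrm{IF}}$ defined in Equation~\eqref{eq.approximate.score.1} fit into the generic scheme of Definition~\ref{def.approx.prediction.region}, provided that the quantities $\tau_{\lambda; i}^{(2)}(y)$ are valid upper bounds on the approximation error $|S_{\lambda; D^{y}}(X_i,Y_i) - \tilde{S}_{\lambda; D^{y}}^{\mathrm{IF}}(X_i,Y_i)|$ (and analogously at $i=n+1$). Under the combined assumptions \eqref{asm.finite.kern}, \eqref{asm.loss.convex}, \eqref{asm.loss.smooth}, \eqref{asm.loss.c3}, \eqref{asm.loss_bounded_d3}, and \eqref{asm.score.lipschitz}, this is exactly the content of Theorem~\ref{thm.score.bound.very.smooth}. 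Thus the IF-based construction is a bona fide instance of the generic framework.

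Second, I would invoke Lemma~\ref{lm.coverage.approximate.region}, which guarantees inclusion of the exact full conformal region into any approximate region produced by that scheme, namely
\begin{align*}
\fcprr \subseteq \ufcprr{,(2)}{\lambda; \alpha} \quad \mbox{a.s.}
\end{align*}
By monotonicity of probability, this inclusion implies
\begin{align*}
\mathbb{P}\croch{Y_{n+1} \in \ufcprr{,(2)}{\lambda; \alpha}} \geq \mathbb{P}\croch{Y_{n+1} \in \fcprr}.
\end{align*}

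Finally, the first set of assumptions in Theorem~\ref{thm.coverage} (exchangeability of the data together with permutation invariance of $\hat f_D$) yields the coverage guarantee $\mathbb{P}[Y_{n+1} \in \fcprr] \geq 1 - \alpha$, which propagates to $\ufcprr{,(2)}{\lambda; \alpha}$ via the previous inequality and proves the claim. There is no real obstacle here; the entire content of the corollary is already packaged into Theorem~\ref{thm.score.bound.very.smooth} and Lemma~\ref{lm.coverage.approximate.region}, and the only thing to check carefully is that the assumptions listed in the corollary statement are indeed sufficient to trigger both of those results simultaneously (which they are, by construction of the assumption list).
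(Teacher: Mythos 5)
Your proposal is correct and follows exactly the route the paper takes: the paper omits the detailed proof precisely because the result is a straightforward consequence of Lemma~\ref{lm.coverage.approximate.region} applied to the IF-based region, using that Theorem~\ref{thm.score.bound.very.smooth} makes the construction an instance of Definition~\ref{def.approx.prediction.region} and that the inclusion $\fcprr \subseteq \ufcprr{,(2)}{\lambda; \alpha}$ together with Theorem~\ref{thm.coverage} yields the coverage bound. Nothing is missing from your bookkeeping.
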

The detailed proof has been omitted since this is a straightforward consequence of 
Lemma~\ref{lm.coverage.approximate.region} exploiting that $  \fcpr 
\subseteq  \ufcprr{, (2)}{\lambda; \alpha} $. 
Let us also emphasize that Corollary~\ref{cor.coverage.very.smooth} holds true under the minimum requirements for the approximation $\tilde{f}_{\lambda; D^y}^\mathrm{IF}$ to exist.

\medskip

An important question for assessing the validity of such a confidence-region owes to quantifying its size: Any overly large confidence-region would be non informative in practice. 
The purpose in what follows is to assess the amount by which the approximate region $\ufcprr{, (2)}{\lambda; \alpha}$ from Eq.~\eqref{eq.approximate.full.region.very.smooth} is upper-bounding the full-conformal one.

\medskip

For quantifying the size of the IF-based approximate confidence-region, let us introduce two additional assumptions.

The first one requires the boundedness of the reproducing kernel.
\begin{assumption}
	There exists a constant $\kappa \in \paren{0, \infty}$ such that,
	\begin{equation}
		\label{asm.bounded.kernel}
		\tag{$\kappa$-BdK}
		\mbox{$\forall x \in \mathcal{X}$, $k_{\mathcal{H}}(x, x) \leq \kappa^2$.}
	\end{equation}
\end{assumption}
This is a very classical assumption in the machine learning literature.
Examples of such kernels are the Gaussian (RBF) kernel \citep{keerthiAsymptoticBehaviorsSupport2003} or
the Laplacian kernel $k(\bullet, \bullet): (x, \tilde{x}) \to \exp(-\norm{x - \tilde{x}}_1 / g)$ \citep{ahirFeatureMapsLaplacian2025}.

\medskip

The second assumption exploits the smoothness of the loss with respect to its first argument.
\begin{assumption}
    There exists a constant $\beta_{\ell; 1} \in \paren{0, \infty}$
    such that, for every $u \in \mathcal{Y}$,
    \begin{align*}
        \label{asm.loss.smooth.2}
        y \mapsto \partial_2 \ell\paren{y, u}
        \mbox{ is }\beta_{\ell; 1}\mbox{-Lipschitz continuous.}
        \tag{$\beta_{\ell; 1}$-LipDL.1}
    \end{align*}
\end{assumption}
    This assumption serves for proving that
    $y \in \mathcal{Y} \mapsto \tilde{f}_{\lambda; D^{y}}^{\mathrm{IF}} \in \mathcal{H}$ is Lipschitz continuous since it only depends on $y$
    through the function $\partial_2\ell\paren{y, \bullet}$
    (see Definition~\ref{def.BIF.based.approximation}).
Let us also notice that with a symmetric loss, the previous \eqref{asm.loss.smooth} implies that \eqref{asm.loss.smooth.2}
holds true with $\beta_{\ell; 1} = \beta_{\ell; 2}$.

\bigskip

The next theorem quantifies the size of the thickness $\thick{(2)}{\lambda; \alpha}$ of the prediction-region $\ufcprr{,(2)}{\lambda; \alpha}$, which is defined (see Definition~\ref{def.thickness}) by
\begin{align*}
    \thick{(2)}{\lambda; \alpha} := \leb{
        \ufcprr{,(2)}{\lambda; \alpha}
        \setminus \fcprr
    }.
\end{align*}
\begin{theorem}
\label{thm.very.smooth}
Assume %
    \eqref{asm.finite.kern}, \eqref{asm.loss.convex}, 
    \eqref{asm.loss.smooth}, \eqref{asm.loss.c3}, \eqref{asm.loss_bounded_d3}, and
    \eqref{asm.score.lipschitz} hold true (see Theorem~\ref{thm.score.bound.very.smooth}) as well as  \eqref{asm.loss.lipschitz}, \eqref{asm.bounded.kernel}, and \eqref{asm.loss.smooth.2}.
For any control-level $\alpha \in (0, 1)$, let the
IF-based prediction-region $\ufcprr{,(2)}{\lambda; \alpha}$ be given by Eq.~\eqref{eq.approximate.full.region.very.smooth}, and
consider the non-conformity function 
$s\paren{\bullet, \bullet} : \mathcal{Y} \times \mathcal{Y} \to \mathbb{R}$, $(y, u) \mapsto s(y, u) = \abss{y - u}$.
Then, 
\begin{itemize}
	\item if the integer $n>0$ is such that $ \lambda (n+1) > \frac{\kappa^2 \beta_{\ell; 1}}{2}$, we have 
	\begin{align}
		\label{eq.bound.thickness.very.smooth}
		\thick{(2)}{\lambda; \alpha}
		\leq
		\min \paren{
			O \paren{
				\frac{
					6 \kappa^6 \rho^2 \xi_{\ell}
				}{
					\lambda^3 (n+1)^2
				}
			},
			O \paren{\frac{12 \kappa^2 \rho}{\lambda (n+1)}}
		}.
	\end{align}
	
	\item otherwise, $0< \lambda (n+1) \leq \frac{\kappa^2 \beta_{\ell; 1}}{2}$ implies that
	\begin{align*}
		\thick{(2)}{\lambda; \alpha} \leq O \paren{ \frac{\kappa^2 \rho}{\lambda (n+1)} }.
	\end{align*}
\end{itemize}
\end{theorem}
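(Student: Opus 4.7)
The plan is to follow the same workflow that led to Theorem~\ref{thm.non.smooth} and Corollary~\ref{cor.smooth}, adapting it to the sharper IF-based bounds of Theorem~\ref{thm.score.bound.very.smooth}. First, by Lemma~\ref{lm.sandwiching}, it suffices to bound $\mathcal{L}\bigl(\ufcprr{,(2)}{\lambda;\alpha}\setminus\lfcprr{,(2)}{\lambda;\alpha}\bigr)$. The points $y\in\mathcal{Y}$ lying in this symmetric difference are exactly those for which there exists some index $i$ such that the upper indicator is triggered but the lower one is not, that is,
\begin{equation*}
\tilde{S}^{\mathrm{IF}}_{\lambda;D^{y}}(X_i,Y_i)-\tilde{S}^{\mathrm{IF}}_{\lambda;D^{y}}(X_{n+1},y)
\in \bigl[-\tau^{(2)}_{\lambda;i}(y)-\tau^{(2)}_{\lambda;n+1}(y),\;\tau^{(2)}_{\lambda;i}(y)+\tau^{(2)}_{\lambda;n+1}(y)\bigr].
\end{equation*}
The width of this buffer band (as a subset of $\mathcal{Y}=\mathbb{R}$) is controlled by how fast the map $y\mapsto \tilde{S}^{\mathrm{IF}}_{\lambda;D^{y}}(X_i,Y_i)-\tilde{S}^{\mathrm{IF}}_{\lambda;D^{y}}(X_{n+1},y)$ grows.

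Second, I would establish a lower bound on this slope. For the absolute-value non-conformity function $s(y,u)=|y-u|$, and because $y\mapsto \tilde{f}^{\mathrm{IF}}_{\lambda;D^{y}}$ is Lipschitz with respect to $y$ (the only $y$-dependent piece in Definition~\ref{def.BIF.based.approximation} is $\partial_2\ell(y,\hat{a}_\lambda(\uu)^{\top}K_{\bullet,n+1})$, which is $\beta_{\ell;1}$-Lipschitz in $y$ by \eqref{asm.loss.smooth.2}; together with \eqref{asm.bounded.kernel} this yields a Lipschitz constant of order $\kappa^{2}\beta_{\ell;1}/(\lambda(n+1))$), the function $y\mapsto \tilde{S}^{\mathrm{IF}}_{\lambda;D^{y}}(X_{n+1},y)=|y-\tilde{f}^{\mathrm{IF}}_{\lambda;D^{y}}(X_{n+1})|$ has derivative of absolute value at least $1-\kappa^{2}\beta_{\ell;1}/(\lambda(n+1))$ almost everywhere, while $y\mapsto \tilde{S}^{\mathrm{IF}}_{\lambda;D^{y}}(X_i,Y_i)$ has Lipschitz constant at most $\gamma\kappa^{2}\beta_{\ell;1}/(\lambda(n+1))$ by \eqref{asm.score.lipschitz}. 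Taking the union over the $n$ indices and using the slope lower bound, the Lebesgue measure of the buffer set is at most
\begin{equation*}
\mathcal{L}\bigl(\ufcprr{,(2)}{\lambda;\alpha}\setminus\lfcprr{,(2)}{\lambda;\alpha}\bigr)
\;\leq\; \frac{2\sum_{i=1}^{n+1}\bigl(\tau^{(2)}_{\lambda;i}(y)+\tau^{(2)}_{\lambda;n+1}(y)\bigr)}{1-C\,\kappa^{2}\beta_{\ell;1}/(\lambda(n+1))},
\end{equation*}
for some absolute constant $C>0$, provided the denominator is positive. This dichotomy on the sign of the denominator is precisely what produces the two regimes stated in the theorem: when $\lambda(n+1)>\kappa^{2}\beta_{\ell;1}/2$, the denominator is bounded away from zero by an absolute constant, otherwise a coarser argument using only the uniform bound $\tau^{(2)}_{\lambda;i}(y)\leq \kappa^{2}\cdot 2\gamma\rho^{(1)}_{\lambda}(y)/(\lambda(n+1))$ yields the fallback rate $O(\kappa^{2}\rho/(\lambda(n+1)))$.

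Third, I would plug in the explicit form of $\tau^{(2)}_{\lambda;i}(y)$ from Theorem~\ref{thm.score.bound.very.smooth}. Under \eqref{asm.loss.lipschitz} and \eqref{asm.bounded.kernel}, the quantity $\rho^{(1)}_{\lambda}(y)$ from Equation~\eqref{eq.predictor.approx.bound.smooth} is uniformly bounded by $\rho$, and similarly $\tilde{\rho}^{(1)}_{\lambda}(y)\leq (1+\kappa^{2}\beta_{\ell;2}/(\lambda(n+1)))\rho\lesssim \rho$ in the first regime. Substituting these into $\rho^{(2)}_{\lambda}(y)$ from Equation~\eqref{eq.predictor.approx.bound.very.smooth} yields $\rho^{(2)}_{\lambda}(y)\lesssim \kappa\,\xi_{\ell}\kappa^{3}\rho^{2}+\lambda\kappa^{2}\beta_{\ell;2}\rho$, and the diagonal entries bound gives $\sqrt{K_{i,i}}\sqrt{K_{n+1,n+1}}\leq \kappa^{2}$. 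Combining everything yields the minimum of $O(\kappa^{6}\rho^{2}\xi_{\ell}/(\lambda^{3}(n+1)^{2}))$ and $O(\kappa^{2}\rho/(\lambda(n+1)))$ announced in Equation~\eqref{eq.bound.thickness.very.smooth}, with the factor $6$ arising from summing the three contributions in $\rho^{(2)}_{\lambda}$ and the doubling from the two endpoints of the buffer band.

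The main obstacle I anticipate is the Lipschitz-slope lower bound for $y\mapsto\tilde{S}^{\mathrm{IF}}_{\lambda;D^y}(X_{n+1},y)$: unlike in Theorem~\ref{thm.non.smooth}, where the approximate predictor did not depend on $y$ and the slope was trivially equal to $1$, here the IF-correction introduces a $y$-dependence which must be shown to be strictly dominated by the intrinsic slope of $|\cdot|$. This is precisely where the threshold condition $\lambda(n+1)>\kappa^{2}\beta_{\ell;1}/2$ enters, and verifying the constants along the way (especially the interplay between $\beta_{\ell;1}$, $\beta_{\ell;2}$, and $\xi_{\ell}$) is the most delicate bookkeeping step. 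Once that dichotomy is cleanly set up, the rest reduces to plugging the bounds of Theorem~\ref{thm.score.bound.very.smooth} into the sandwiching inequality.
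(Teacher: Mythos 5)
Your overall architecture (sandwiching, a slope/Lipschitz dichotomy driven by $\beta_{\ell;1}$, the threshold $\lambda(n+1)>\kappa^2\beta_{\ell;1}/2$, and the plug-in bounds $\rho^{(1)}_{\lambda}(y)\leq\rho$, $\tilde\rho^{(1)}_{\lambda}(y)\lesssim\rho$, $\rho^{(2)}_{\lambda}(y)\lesssim\kappa^4\rho^2\xi_{\ell}+\lambda\kappa^2\beta_{\ell;2}\rho$) matches the paper, but the decisive measure-theoretic step is wrong. After observing that the symmetric difference is contained in the union over $i$ of the per-index ``buffer bands'', you bound its Lebesgue measure by a union bound, i.e.\ by $2\sum_{i=1}^{n+1}\bigl(\tau^{(2)}_{\lambda;i}(y)+\tau^{(2)}_{\lambda;n+1}(y)\bigr)$ divided by the slope margin. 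Since each $\tau^{(2)}_{\lambda;i}(y)$ is itself of order $\min\bigl(\kappa^6\rho^2\xi_{\ell}/(\lambda^3(n+1)^2),\,\kappa^2\rho/(\lambda(n+1))\bigr)$, summing over the $n+1$ indices inflates the bound by a factor $n+1$ and yields at best $\min\bigl(O(\kappa^6\rho^2\xi_{\ell}/(\lambda^3(n+1))),\,O(\kappa^2\rho/\lambda)\bigr)$, which is not the announced rate (and does not even vanish for fixed $\lambda$); the final display of Equation~\eqref{eq.bound.thickness.very.smooth} therefore does not follow from your inequality. The inclusion in the union of buffer bands is correct but far too loose: the regions are level sets of a p-value, i.e.\ of a count, so only the position of the relevant order statistics matters, not all $n+1$ crossing points. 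The paper avoids the factor $n$ by showing that, under exactly your derivative bound $|\partial_y\tilde f^{\mathrm{IF}}_{\lambda;D^y}(X_i)|\leq\kappa^2\beta_{\ell;1}/(2\lambda(n+1))$, the maps $y\mapsto y-\tilde f^{\mathrm{IF}}_{\lambda;D^y}(X_{n+1})\pm\bigl|Y_i-\tilde f^{\mathrm{IF}}_{\lambda;D^y}(X_i)\bigr|$ are strictly increasing bijections (Lemma~\ref{lm.variation.stable}), so that the upper region is contained in an interval whose endpoints are the $j_{n,\alpha}$-th and $i_{n,\alpha}$-th order statistics of their roots shifted by $\pm 2\Tau^{(2)}/(1-\beta)$, while the lower region contains the corresponding inner interval; this is the content of Lemma~\ref{lm.bound.conf.region.gap.2} and gives $\thick{(2)}{\lambda;\alpha}\leq 12\Tau^{(2)}/(1-\beta)$ with no dependence on $n$ beyond $\Tau^{(2)}$ itself. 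This order-statistic sandwich is the missing idea in your proposal.

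The second regime is also not actually proved. When $0<\lambda(n+1)\leq\kappa^2\beta_{\ell;1}/2$ your slope margin can vanish or turn negative, and then a uniform bound on $\tau^{(2)}_{\lambda;i}(y)$ alone gives no control of the measure of the buffer sets: the map $y\mapsto\bigl|y-\tilde f^{\mathrm{IF}}_{\lambda;D^y}(X_{n+1})\bigr|$ may be nearly flat over long intervals, so a score band of width $O(\tau)$ can correspond to a $y$-set of arbitrarily large Lebesgue measure. The paper handles this regime by a different reduction: it bounds $\bigl|\tilde f^{\mathrm{IF}}_{\lambda;D^y}(X_i)-\hat f_{\lambda;D^z}(X_i)\bigr|\leq\kappa^2\rho/(\lambda(n+1))$, enlarges the tolerances by this amount so that the scores defining the enlarged region are built from the $y$-independent predictor $\hat f_{\lambda;D^z}$, and then applies the $y$-independent thickness bound (Lemma~\ref{lm.bound.conf.region.gap}, the $8\Tau$ bound, where the slope in $y$ is exactly one). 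Your ``coarser argument using only the uniform bound on $\tau^{(2)}$'' needs to be replaced by this comparison-to-a-fixed-predictor step (or an equivalent device); as written, the fallback rate $O\paren{\kappa^2\rho/(\lambda(n+1))}$ is unjustified.
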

The proof is deferred to Appendix~\ref{proof.thm.very.smooth}.
For this choice of non-conformity score, the accuracy of the new prediction-region improves at a rate that is proportional to the upper-bounds on the non-conformity scores. If the regularization parameter $\lambda \propto 1/(n+1)^{r}$ for some $r \in (0, 1)$, then the new upper-bounds improve at a rate of $O\paren{n^{-1 + r + \min \paren{-1 + 2r, 0}}}$ compared to $O\paren{n^{-1 + r}}$ for the upper-bounds from the previous sections.
\medskip

\begin{corollary}
    \label{cor.very.smooth}
	With the same assumptions as those of Theorem~\ref{thm.very.smooth}, assume that there exists a constant $c>0$ such that the conditional density $p(\bullet|D, X_{n+1}) : \mathcal{Y} \to \mathbb{R}_+$, is bounded from above by $c$, and that the non-conformity scores $S_{D^{Y_{n+1}}} \paren{X_1, Y_1}, \ldots, S_{D^{Y_{n+1}}} \paren{X_{n+1}, Y_{n+1}}$ are almost-surely distinct. 
	Then,
	\begin{itemize}
		\item if the integer $n>0$ is such that $ \lambda (n+1) > \frac{\kappa^2 \beta_{\ell; 1}}{2}$, we have 
		\begin{align*}
			& \mathbb{P}
			\croch{Y_{n+1} \in \ufcprr{, (2)}{\lambda; \alpha}}  \\
			& \leq  (1 - \alpha ) + \frac{1}{n+1}
			+ \min \paren{
				O \paren{
					\frac{
						c \kappa^6 \rho^2 \xi_{\ell}
					}{
						\lambda^3 (n+1)^2
					}
				},
				O \paren{\frac{c\kappa^2 \rho}{\lambda (n+1)}}
			}.
		\end{align*}
		
		\item otherwise, $0< \lambda (n+1) \leq \frac{\kappa^2 \beta_{\ell; 1}}{2}$ implies that
		\begin{align*}
			\mathbb{P}
			\croch{Y_{n+1} \in \ufcprr{, (2)}{\lambda; \alpha}}
			\leq 1 - \alpha + \frac{1}{n+1}
			+ O \paren{
				\frac{c\kappa^2 \rho}{\lambda (n+1)}
			}.
		\end{align*}
	\end{itemize}
\end{corollary}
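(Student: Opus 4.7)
The plan is to mimic the structure of the proof of Corollary~\ref{cor.non.smooth}, feeding in the sharper thickness bound provided by Theorem~\ref{thm.very.smooth} in place of the one from Theorem~\ref{thm.non.smooth}. The starting observation is that, by Lemma~\ref{lm.coverage.approximate.region}, we have the almost-sure inclusion $\fcprr \subseteq \ufcprr{,(2)}{\lambda; \alpha}$, so that we may split
\begin{align*}
    \mathbb{P}\croch{Y_{n+1} \in \ufcprr{,(2)}{\lambda; \alpha}}
    =
    \mathbb{P}\croch{Y_{n+1} \in \fcprr}
    +
    \mathbb{P}\croch{Y_{n+1} \in \ufcprr{,(2)}{\lambda; \alpha} \setminus \fcprr}.
\end{align*}

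The first term is handled by the second part of Theorem~\ref{thm.coverage}: since the non-conformity scores $S_{D^{Y_{n+1}}}\paren{X_i, Y_i}$ are assumed almost surely distinct, one has $\mathbb{P}\croch{Y_{n+1} \in \fcprr} \leq 1 - \alpha + \frac{1}{n+1}$. For the second term, I would condition on $(D, X_{n+1})$ and use the upper bound $c$ on the conditional density of $Y_{n+1}$ to convert the conditional probability of lying in a measurable set into $c$ times the Lebesgue measure of that set. This yields
\begin{align*}
    \mathbb{P}\croch{Y_{n+1} \in \ufcprr{,(2)}{\lambda; \alpha} \setminus \fcprr}
    \leq c \, \mathbb{E}\croch{\leb{\ufcprr{,(2)}{\lambda; \alpha} \setminus \fcprr}}
    = c \, \mathbb{E}\croch{\thick{(2)}{\lambda; \alpha}}.
\end{align*}

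At this point it remains only to substitute the almost-sure upper bound on $\thick{(2)}{\lambda; \alpha}$ supplied by Theorem~\ref{thm.very.smooth}, treating the two regimes $\lambda(n+1) > \kappa^2 \beta_{\ell; 1}/2$ and $0 < \lambda(n+1) \leq \kappa^2 \beta_{\ell; 1}/2$ separately. Since the bounds in Theorem~\ref{thm.very.smooth} are deterministic (they depend only on $\kappa$, $\rho$, $\xi_\ell$, $\lambda$, $n$), pulling the expectation through is immediate and yields exactly the two displayed inequalities of the corollary after multiplication by~$c$.

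I do not anticipate any real difficulty: the conceptual work is already done in Theorem~\ref{thm.very.smooth} and Lemma~\ref{lm.coverage.approximate.region}, and the argument is a straightforward repetition of the coverage upper-bound derivation in Corollary~\ref{cor.non.smooth}. The only point to be careful about is that the density bound must be used \emph{conditionally} on $(D, X_{n+1})$, so that the randomness of the approximate region (which is $(D, X_{n+1})$-measurable) is frozen before integrating against $Y_{n+1}$; after that, one re-integrates and the deterministic nature of the thickness bound lets the expectation act trivially.
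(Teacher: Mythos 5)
Your proposal is correct and follows essentially the same route as the paper: the decomposition of the coverage probability, the use of the distinct-scores upper bound from Theorem~\ref{thm.coverage}, and the conditional-density argument converting the probability of the excess region into $c$ times the expected thickness are exactly what the paper packages as Lemma~\ref{lm.confidence.gap}, which it then combines with the deterministic thickness bound of Theorem~\ref{thm.very.smooth} in the two regimes of $\lambda(n+1)$. The only difference is that you re-derive that lemma inline instead of citing it, which changes nothing in substance.
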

The proof is postponed to Appendix~\ref{proof.cor.very.smooth}.
   The new upper-bound on the probability of coverage
    is not worse (up to a multiplicative constant)
    than the previous one described by Corollary~\ref{cor.non.smooth}.
    More precisely for large values of $\lambda\paren{n+1}$,
    there is a speed up in the decay rate reflecting
    the effect of a higher smoothness.
    Moreover, for $\lambda \propto (n+1)^{-r}$
    with $r \in \paren{0, 1/3}$, the coverage-probability upper-bounds
    $1-\alpha$ by an amount of order $O\paren{n^{-1}}$ for the new approximation compared to $O\paren{n^{-1+r}}$ for the former one.

\bigskip

\noindent{\emph{Illustration.}}
%
%
%
The next figure follows the same idea as
previous Figures~\ref{fig.comp.bound.non.smooth.0}
and~\ref{fig.comp.bound.smooth.0}.
It displays the gap between $\Delta^{(2)} := \leb{\ufcprr{,(2)}{\lambda; \alpha}
	\setminus \lfcprr{,(2)}{\lambda; \alpha}}$ (solid blue-line) which upper-bounds the thickness,
and its upper-bound detailed in Eq.~\eqref{eq.bound.thickness.very.smooth} (dashed red-line). 
\begin{figure}[H]
    \centering
    \includegraphics[width=0.60\textwidth]{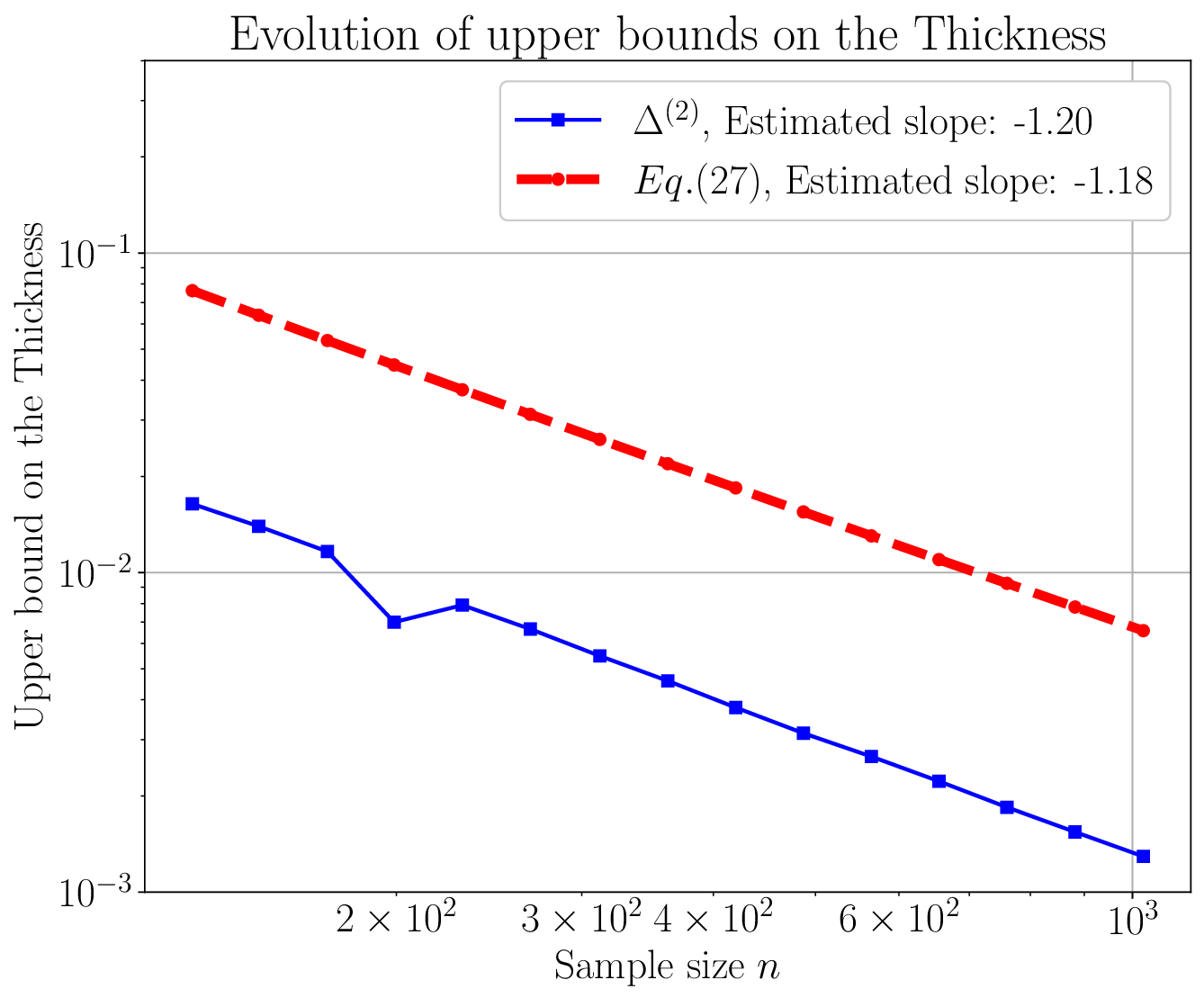}
    \caption{
        Evolution of
        the upper-bound in Eq.~\eqref{eq.bound.thickness.very.smooth} (dashed red-line)
        and the quantity $\Delta^{(2)}$ (solid blue-line)
        as a function of the sample size $n$
        in $\log\log$ scale (to appreciate the rate).
        The data is sampled from $\mathrm{sklearn}$ synthetic data set
        make\_friedman1(sample\_size=n).
        The kernel $k_\mathcal{H} \paren{\bullet, \bullet}$
        is set to be the Laplacian kernel (gamma=None).
        The loss-function $\ell\paren{\bullet, \bullet}$
        is set to be the Logcosh Loss ($a=1.0$).
        The regularization parameter is set to decay as $\lambda \propto (n+1)^{-0.33}$.
        The fixed output value is set at $z=0$.        
        The non-conformity function is set to be $s(\bullet, \bullet) : (y, u) \mapsto \abss{y - u}$.
    }
    \label{fig.comp.bound.very.smooth.0}
\end{figure}
The current dashed red-line is
lower and has a lower estimated slope of $-1.20$
compared to the previous dashed red-line in Figure~\ref{fig.comp.bound.smooth.0},
which has an estimated slope of $-0.67$.
The same observation can be made when comparing the current
solid blue-line with the previous ones in Figure~\ref{fig.comp.bound.non.smooth.0}
and Figure~\ref{fig.comp.bound.very.smooth.0}.
This is consistent with the claim that
refining the prediction approximation
using influence functions leads to a tighter sandwiching of
the full-conformal prediction-region.
Moreover, the current approximation improves faster than the previous ones
when $\lambda \propto (n+1)^{-r}$ with $r <1/2$.

The current dashed red-line and
the solid blue one have similar slopes.
This suggests that the upper-bound in
Eq.~\eqref{eq.bound.thickness.very.smooth} truly captures the same dependence on $r$ as the one of $\Delta^{(2)}$.
Let us also mention that for both quantities, the estimated slopes are a somewhat lower than $-2 + 3r = -1.01$ (with $r=0.33$).
For the dashed red-line, this slight difference is due to a multiplicative factor $\paren{1 - \frac{\kappa^2 \beta_{\ell; 1}}{\lambda(n+1)}}^{-1}
= 1 + o(1)$ that can be neglected as $n$ becomes large, but still skews the rate when $n$ remains small.

\section{Practical considerations}

\subsection{Empirical experiments on a synthetic data set}
The present experiments aim at exploring the size of several predictive regions obtained by different conformal prediction strategies.
The synthetic data set is generated using the `make\_friedman1(n\_samples=200)'
function \citep{friedmanMultivariateAdaptiveRegression1991} of the sklearn package, which has been already applied in the context of conformal prediction by \citet[in Figure 3, bottom right]{ndiaye2022stable}.

Four conformal prediction strategies are compared:
\begin{itemize}
	\item \textbf{SplitCP} denotes the split-conformal prediction described along Section~\ref{sec.approx.prediction.region} with half of the data set for proper training and the other half for calibration,
	\item \textbf{UStableCP} refers to the strategy exposed in Section~\ref{sec.non.smooth}, which relies on uniform stability-bounds (see Theorem~\ref{thm.score.bound.non.smooth}),
	\item \textbf{LocStableCP} denotes our first new strategy introduced in Section~\ref{sec.smooth}, which results from local stability-bounds (see Theorem~\ref{thm.score.bound.smooth}) with $z=0$,
	\item \textbf{InfluenceFunctionCP} stands for our second new method based on Influence Function. It is described in Section~\ref{sec.very.smooth} (see Theorem~\ref{thm.score.bound.very.smooth}) with $z=0$.
\end{itemize}
These four strategies are applied to the predictor detailed in Section~\ref{sec.predictor} that is instantiated with the \emph{Logcosh} loss-function, the parameter value $a = 1.0$,
and the Laplacian kernel with parameter $\mathrm{gamma=None}$.

It is important to keep in mind that the full-conformal prediction-region cannot be computed in the present setting.
Therefore, a reference region is introduced that is called the \textbf{OracleCP} region, denoted by $\widehat{C}^{\mathrm{oracle}}_{\alpha}\paren{X_{n+1}}$, \citep[following][]{ndiaye2022stable, ndiaye2023root}, and given by
\begin{align}
	\label{eq.oracle.prediction.region}
	\widehat{C}^{\mathrm{oracle}}_{\alpha}\paren{X_{n+1}}
	:= \brac{
		y \in \mathcal{Y}:
		\frac{
			1 + \sum_{i=1}^{n}
			\mathbbm{1}\brac{
				S_{D^{Y_{n+1}}}\paren{X_i, Y_i}
				\geq S_{D^{Y_{n+1}}}\paren{X_{n+1}, y}
			}
		}{
			n+1
		} > \alpha
	}.
\end{align}
Let us emphasize that this region is said an``oracle'' region since it exploits the unknown true output $Y_{n+1}$ corresponding to the input $X_{n+1}$. Unlike the full-conformal prediction-region, this one can be computed on synthetic data since it only requires training one predictor.
However in practice, it is not available since it is built upon the unknown $Y_{n+1}$.

\begin{figure}[H]
	\centering
	\includegraphics[width=0.9\textwidth]{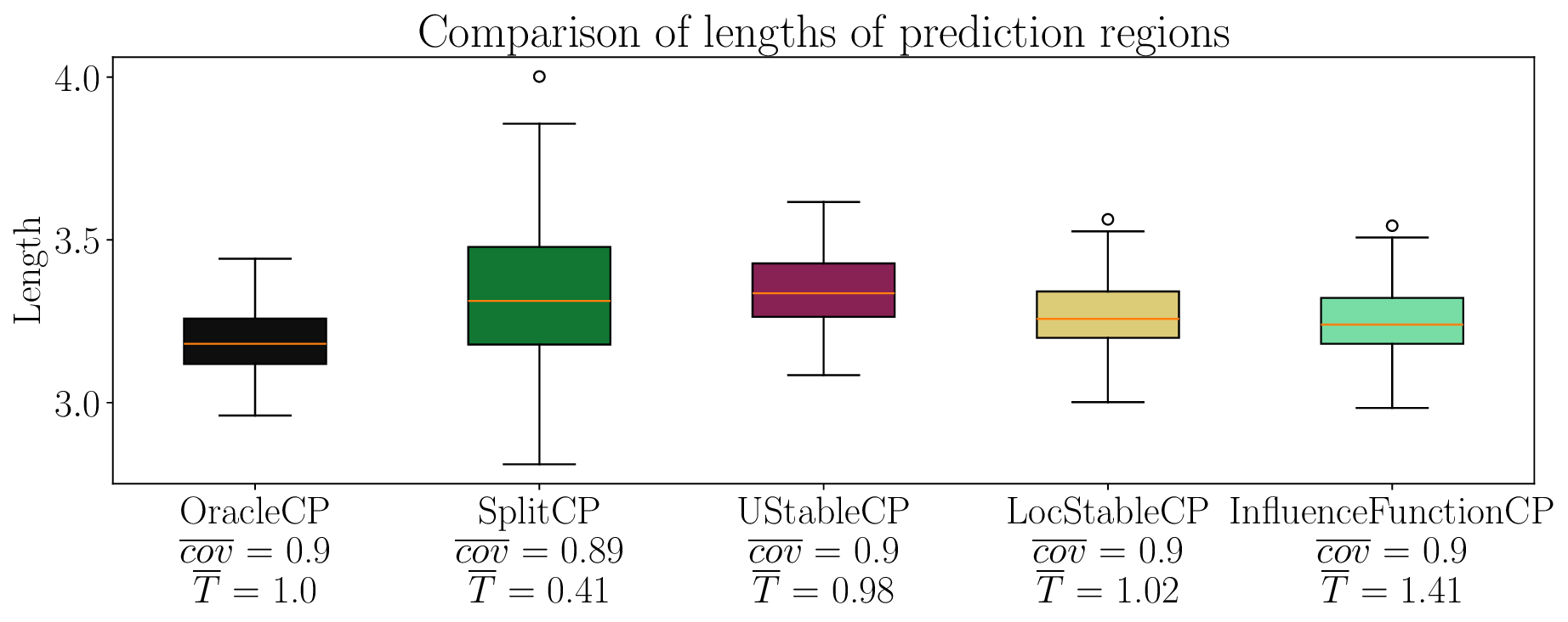}
	\caption{
		Comparison of the length of different prediction-regions
		over $N=100$ repetitions.
		$\overline{cov}$ is the proportion (over the 100 repetitions) of
		test output points contained by the prediction-regions.
		$\overline{T}$ is the average (over the 100 repetitions)
		computation time relative to the computation time of \textbf{OracleCP}.
		The confidence control-level is set at $\alpha = 0.1$.
	}
	\label{fig.comp}
\end{figure}
Figure~\ref{fig.comp} displays boxplots corresponding to the five competitors introduced just above. 
Each boxplot describes the empirical distribution of the length (Lebesgue measure) of each conformal prediction-region over $N=100$ replications.
Along th $x$-axis, $\overline{cov}$ refers to the empirical coverage-probability, while $\overline T$ denotes the average computation time required for accessing the conformal prediction-region. Let us mention that \textbf{OracleCP} serves as a reference for the computation time. For instance, $\overline{T}=0.5$ means that the method requires half as much time as \textbf{OracleCP} to output the prediction-region, while $\overline{T}=2$ means the corresponding method requires twice as much time as \textbf{OracleCP}.  

\medskip

All the competing strategies report an average coverage of about $0.9$, which is consistent with the pre-specified control-level $\alpha = 0.1$.

With the smallest computation time,
\textbf{SplitCP} produces prediction-regions
with a high average-length (among the two highest ones among the competitors). It also exhibits the highest variability in the lengths. This results the loss of information induced by data splitting.

By contrast the computation time of \textbf{UStableCP} is twice the one of \textbf{SplitCP}, while both strategies have almost the same average-length.
However the empirical variance of \textbf{UStableCP} is noticeably lower, which can be explained by the use of the full train set unlike \textbf{SplitCP}.

The new strategy \textbf{LocStableCP} based on local stability-bounds produces prediction-regions with a lower average-length compared
to the one of \textbf{UStableCP}, while the empirical variance seems very similar.
This means that on average, the conformal prediction-regions output by  \textbf{LocStableCP} are narrower (more informative) than the ones of  \textbf{UStableCP} with almost the same computation time.
Compared to \textbf{UStableCP}, this results from the tighter upper-bounds derived by exploiting the local behaviour.

A further improvement is still available for \textbf{InfluenceFunctionCP} at the price of a larger computation time ($\overline{T}=1.41$ compared to $1$).
As explained earlier (see Remark~\ref{remark.RLA}), the larger computation time of \textbf{InfluenceFunctionCP} results from inverting a Hessian matrix which can be time consuming. However using low-rank approximations from the Random Linear Algebra field can drastically reduce this time consumption when facing high-dimension problems.
In practice \textbf{InfluenceFunctionCP} exhibits the smallest average-length (that is, the most informative prediction-regions) as well as a very low empirical variance (similar to the one of all previous stability-based strategies).
Compared with \textbf{LocStableCP}, this results from exploiting the higher smoothness of the loss-function by introducing the influence function approximation. 

\medskip

In summary with a reasonable computation time,
\textbf{LocStableCP} produces prediction-regions with small length and therefore reaches a good trade-off between precision and time consumption. 
However if more computation resources are available, then \textbf{InfluenceFunctionCP} should be preferred to any other competitor among the ones considered here since it outputs the overall smallest prediction-regions with a very low variability.

\subsection{Choice of regularization parameter}

Everything that was done so far holds true for a fixed $\lambda$.
However, an important practical problem consists in making a data-driven choice of $\lambda$ (denoted by $\hat \lambda$) and plugging this  $\hat \lambda$ in the previous approximations. While doing so, it is however crucial to keep a lower bound $1 - \alpha$ on the coverage-probability. 
The present section aims at describing a possible strategy for addressing this question.

\medskip

From ideas of \citet{lei2014distribution} about full-conformal prediction
region with kernel density estimators, a data-driven choice of $\lambda$
can be made by minimizing the expected Lebesgue measure of the prediction-region. This region, denoted by $\tilde{C}_{\alpha}^{\star}\paren{X_{n+1}}$, is built to keep a lower bound $1 - \alpha$ on its coverage-probability by means of the next three steps.
\begin{enumerate}
	\item The training data set $\mathcal{D}$ is split into two disjoint parts
	$\mathcal{D}_1 \sqcup \mathcal{D}_2 = \mathcal{D}$,
	with $J_1$ and $J_2$ being the corresponding lists of indices such that
	$J_1 \sqcup J_2 = \brac{1, \ldots, n}$.

	\item The regularization parameter $\widehat{\lambda}\paren{D_1}$
	is chosen from the data set $D_1$ among the candidate values in
	$\Lambda := \brac{\lambda_1, \ldots, \lambda_V}$, in the following way
	\begin{align*}
		\widehat{\lambda}\paren{D_1}
		\in
		\argmin_{\lambda \in \Lambda}
		\frac{1}{\mathrm{Card}\paren{J_1}}
		\sum_{j \in J_{1}}
		\leb{
			\brac{
				y \in \mathcal{Y} :
				\widetilde{\pi}_{\lambda; D_1 \setminus \brac{\paren{X_j, Y_j}}}^{\up} \paren{X_j, y} > \alpha
			}
		},
	\end{align*}
	and if the minimizer is not unique then one chooses the largest value
	to break the ties. In other words, the chosen regularization parameter
	is the one minimizing the (empirical) expected Lebesgue measure
	of the upper-approximate full-conformal prediction-regions.

		\item The resulting prediction-region is built from
		$D_2$ with the chosen regularization parameter $\widehat{\lambda}\paren{D_1}$
		\begin{align*}
			\tilde{C}_{\alpha}^{\star}
			\paren{X_{n+1}}
			:= \brac{
				y \in \mathcal{Y} :
				\widetilde{\pi}_{\widehat{\lambda}\paren{D_1}; D_2}^{\up} \paren{X_{n+1}, y} > \alpha
			}.
		\end{align*}

\end{enumerate}
 \citet{yang2025selection} proved that the above strategy (incorporating the data-drive choice of $\lambda$) enjoys the desired coverage level $1-\alpha$ (see Theorem~3 in  \citet{yang2025selection}).
An alternative method for designing a stable random selection algorithm
without any splitting of the initial data set $D$ has been designed \citet{hegazy2025valid}.
However, it requires choosing additional parameter values for the stability and solving a linear optimization problem. Unfortunately their approach is not guaranteed to output a better choice of $\lambda$ since their Theorem~1 in \citet{hegazy2025valid} only proves a lower bound on the coverage-probability at a level lower than $1-\alpha$.

\newpage
\appendix
\section{General case}

This section contains the proofs of the results in Section~\ref{sec.conformal.prediction}.


\subsection{Proof of Lemma~\ref{lm.sandwiching}}
\begin{proof}
\label{proof.sandwiching}
Let us first prove the sandwiching of the full-conformal p-value function
by its lower and upper approximation.
Let $y \in \mathcal{Y}$ and $i \in \brac{1, \ldots, n+1}$.
It follows from Eq.~\eqref{eq.approx.non.conformity.scores} that, on the one hand, almost-surely
\begin{align*}
    S_{D^{y}} \paren{X_i, Y_i} - S_{D^{y}} \paren{X_{N+1}, y}
    &
    =
    \widetilde{S}_{D^{y}} \paren{X_{i}, Y_{i}}
    - \paren{\widetilde{S}_{D^{y}} \paren{X_{i}, Y_{i}} - S_{D^{y}} \paren{X_i, Y_i}}
    \\
    &
    \quad
    - \widetilde{S}_{D^{y}} \paren{X_{n+1}, y}
    + \paren{\widetilde{S}_{D^{y}} \paren{X_{n+1}, y} - S_{D^{y}} \paren{X_{N+1}, y}}
    \\
    &
    \leq
    \widetilde{S}_{D^{y}} \paren{X_{i}, Y_{i}} 
    + \abss{\widetilde{S}_{D^{y}} \paren{X_{i}, Y_{i}} - S_{D^{y}} \paren{X_i, Y_i}}
    \\
    &
    \quad
    - \widetilde{S}_{D^{y}} \paren{X_{n+1}, y}
    + \abss{\widetilde{S}_{D^{y}} \paren{X_{n+1}, y} - S_{D^{y}} \paren{X_{N+1}, y}}
    \\
    &
    \leq
    \widetilde{S}_{D^{y}} \paren{X_{i}, Y_{i}} + \widehat{\tau}_i(y)
    - \widetilde{S}_{D^{y}} \paren{X_{n+1}, y} + \widehat{\tau}_{n+1}(y),
\end{align*}
and on the other hand, almost-surely
\begin{align*}
    S_{D^{y}} \paren{X_i, Y_i} - S_{D^{y}} \paren{X_{N+1}, y}
    &
    \geq
    \widetilde{S}_{D^{y}} \paren{X_{i}, Y_{i}}
    - \abss{\widetilde{S}_{D^{y}} \paren{X_{i}, Y_{i}} - S_{D^{y}} \paren{X_i, Y_i}}
    \\
    &
    \quad
    - \widetilde{S}_{D^{y}} \paren{X_{n+1}, y}
    - \abss{\widetilde{S}_{D^{y}} \paren{X_{n+1}, y} - S_{D^{y}} \paren{X_{N+1}, y}}
    \\
    &
    \geq
    \widetilde{S}_{D^{y}} \paren{X_{i}, Y_{i}} - \widehat{\tau}_i(y)
    - \widetilde{S}_{D^{y}} \paren{X_{n+1}, y} - \widehat{\tau}_{n+1}(y).
\end{align*}
Hence, these two implications follow, almost-surely
\begin{align*}
\begin{aligned}
    &&\widetilde{S}_{D^{y}} \paren{X_{i}, Y_{i}} - \widehat{\tau}_i(y)
    &\geq \widetilde{S}_{D^{y}} \paren{X_{n+1}, y} + \widehat{\tau}_{n+1}(y)
    \\
    \Longrightarrow
    &&S_{D^{y}} \paren{X_i, Y_i} &\geq S_{D^{y}} \paren{X_{N+1}, y}
    \\
    \Longrightarrow
    &&\widetilde{S}_{D^{y}} \paren{X_{i}, Y_{i}} + \widehat{\tau}_i(y)
    &\geq \widetilde{S}_{D^{y}} \paren{X_{n+1}, y} - \widehat{\tau}_{n+1}(y).
\end{aligned}
\end{align*}
By taking the indicator, almost-surely
\begin{align*}
    &\mathbbm{1}\left\{\widetilde{S}_{D^{y}} \paren{X_{i}, Y_{i}} - \widehat{\tau}_i(y)
    \geq \widetilde{S}_{D^{y}} \paren{X_{n+1}, y} + \widehat{\tau}_{n+1}(y)\right\}
    \\
    \leq
    &
    \mathbbm{1}\left\{S_{D^{y}} \paren{X_i, Y_i} \geq S_{D^{y}} \paren{X_{N+1}, y}\right\},
    \\
    \leq
    &
    \mathbbm{1}\left\{\widetilde{S}_{D^{y}} \paren{X_{i}, Y_{i}} + \widehat{\tau}_i(y)
    \geq \widetilde{S}_{D^{y}} \paren{X_{n+1}, y} - \widehat{\tau}_{n+1}(y)\right\}.
\end{align*}
Finally, the sandwiching of the full-conformal p-value function follows
from  summing over $i \in \brac{1, \ldots, n+1}$, 
adding one, and dividing by $n+1$, almost-surely
\begin{align*}
    0 \leq \lfcpv{}{y} \leq \fcpv{y} \leq \ufcpv{}{y} \leq 1.
\end{align*}

Let us now prove the sandwiching of the full-conformal prediction-region
by its lower and upper approximation.
Let $\alpha \in (0,1)$ be a control-level 
and $y \in \mathcal{Y}$. Using the above mention sandwiching
for the conformal p-value functions, one has the following implications, almost-surely
\begin{align*}
    \widetilde{\pi}^{\mathrm{low}}_D(X_{n+1}, y) > \alpha
    \Rightarrow
    \hat{\pi}_D(X_{n+1}, y) > \alpha
    \Rightarrow
    \widetilde{\pi}_D(X_{n+1}, y) > \alpha,
\end{align*}
and by definition of the prediction-regions, almost-surely
\begin{align*}
    \lfcpr{}
    \subseteq
    \fcpr
    \subseteq
    \ufcpr{}.
\end{align*}
Therefore the thickness of the prediction-region is bounded from above, almost-surely
\begin{align*}
    \mathrm{THK}_{\alpha}(X_{n+1}) 
    &=
    \mathcal{V}\left(
        \ufcpr{} \Delta \fcpr
    \right)
    =\mathcal{V}\left(
        \ufcpr{} \setminus \fcpr
    \right)
    \\
    &
    \leq
    \mathcal{V}\left(
        \ufcpr{} \setminus \lfcpr{}
    \right)
    .
\end{align*}
\end{proof}

\subsection{Proof of Lemma~\ref{lm.coverage.approximate.region}}
\begin{proof}
\label{proof.coverage.approximate.region}    
From Lemma~\ref{lm.sandwiching}, the full-conformal prediction-region 
$\fcpr$ is contained in the approximate prediction-region
$\ufcpr{}$. Therefore, since the full-conformal prediction-region is a
confidence prediction-region by Theorem~\ref{thm.coverage}, then so is 
the approximate prediction-region. In fact, its coverage-probability
is bounded from below as
\begin{align*}
    \mathbb{P}
    \croch{
        Y_{n+1} \in \ufcpr{}
    }
    \geq
    \mathbb{P}
    \croch{
        Y_{n+1} \in \fcpr
    }
    \geq
    1 - \alpha.
\end{align*}
\end{proof}

\subsection{Proof of Lemma~\ref{lm.confidence.gap}}
\begin{lemma}[Confidence gap]
\label{lm.confidence.gap}
Assume all assumptions in Theorem~\ref{thm.coverage} hold true.
Assume further that there exists a constant $c>0$ such that 
the conditional density $p(\bullet|D, X_{n+1}) : \mathcal{Y} \to \mathbb{R}_+$, 
is bounded from above by $c$.
Then, The coverage-probability of the approximate full-conformal prediction-region $\ufcpr{}$ is bounded from above by
\begin{align*}
    \mathbb{P}\croch{
        Y_{n+1}
        \in
        \ufcpr{}
    } \leq 1 - \alpha + \frac{1}{n+1} + c \mathbb{E}_{D, X_{n+1}}\croch{\thicc{}}.
\end{align*}
\end{lemma}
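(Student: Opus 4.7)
The plan is to decompose the event $\{Y_{n+1} \in \ufcpr{}\}$ by exploiting the set inclusion established in Lemma~\ref{lm.sandwiching}. Since $\fcpr \subseteq \ufcpr{}$ almost surely, we can write the disjoint union
\begin{align*}
    \{Y_{n+1} \in \ufcpr{}\}
    = \{Y_{n+1} \in \fcpr\}
    \sqcup \{Y_{n+1} \in \ufcpr{} \setminus \fcpr\},
\end{align*}
which yields
\begin{align*}
    \mathbb{P}\croch{Y_{n+1} \in \ufcpr{}}
    = \mathbb{P}\croch{Y_{n+1} \in \fcpr}
    + \mathbb{P}\croch{Y_{n+1} \in \ufcpr{} \setminus \fcpr}.
\end{align*}

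For the first term on the right-hand side, I would directly invoke the second (upper bound) assertion of Theorem~\ref{thm.coverage}, which under the assumption that the non-conformity scores are almost surely distinct gives $\mathbb{P}\croch{Y_{n+1} \in \fcpr} \leq 1 - \alpha + 1/(n+1)$. This assumption is part of the set of assumptions of Theorem~\ref{thm.coverage} that are in force here.

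The main work is to control the second term via the density upper bound. Conditioning on $(D, X_{n+1})$ and using the fact that $\ufcpr{} \setminus \fcpr$ is measurable with respect to $(D, X_{n+1})$ (since both sets are determined by these variables alone), I would write
\begin{align*}
    \mathbb{P}\croch{Y_{n+1} \in \ufcpr{} \setminus \fcpr}
    = \mathbb{E}_{D, X_{n+1}}\croch{
        \int_{\ufcpr{} \setminus \fcpr}
        p(y | D, X_{n+1})\, dy
    }.
\end{align*}
Bounding $p(\cdot | D, X_{n+1}) \leq c$ pointwise then yields
\begin{align*}
    \int_{\ufcpr{} \setminus \fcpr}
    p(y | D, X_{n+1})\, dy
    \leq c \cdot \leb{\ufcpr{} \setminus \fcpr}
    = c \cdot \thicc{},
\end{align*}
where the equality is the very definition of the thickness in Definition~\ref{def.thickness}. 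Taking expectation over $(D, X_{n+1})$ and combining with the bound on $\mathbb{P}\croch{Y_{n+1} \in \fcpr}$ gives the announced inequality.

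The only potentially delicate step is the measurability of the random set $\ufcpr{} \setminus \fcpr$ (ensuring the conditional-expectation/Fubini manipulation is legitimate), but this follows routinely from the measurability of $y \mapsto \fcpv{y}$ and $y \mapsto \ufcpv{}{y}$ in the data. No smoothness assumptions on the loss or score function are needed in this proof; it is purely a consequence of the sandwiching property and the density bound.
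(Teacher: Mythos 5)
Your proof is correct and follows essentially the same route as the paper: the same decomposition of $\{Y_{n+1} \in \ufcpr{}\}$ into $\{Y_{n+1}\in\fcpr\}$ and $\{Y_{n+1}\in\ufcpr{}\setminus\fcpr\}$, the upper bound of Theorem~\ref{thm.coverage} for the first term, and conditioning on $(D,X_{n+1})$ with the density bound $c$ and Definition~\ref{def.thickness} for the second. Your added remark on measurability of the random set is a harmless refinement the paper leaves implicit.
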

\begin{proof}
\label{proof.confidence.gap}
Let us first notice that
\begin{align*}
    \mathbb{P} \croch{
        Y_{n+1} \in \ufcpr{}
    }
    = \mathbb{P} \croch{
        Y_{n+1} \in \fcpr
    } + \mathbb{P} \croch{
        Y_{n+1} \in \ufcpr{} \setminus \fcpr
    }.
\end{align*}
Furthermore, $p(\bullet | X_{n+1}, D) \leq c$, it follows that
\begin{align*}
    &
    \mathbb{P} \croch{
        Y_{n+1} \in \ufcpr{} \setminus \fcpr
    }
    \\
    = &\mathbb{E}_{D, \paren{X_{n+1}, Y_{n+1}}} \croch{
        \mathbbm{1}
        \brac{Y_{n+1} \in \ufcpr{} \setminus \fcpr}
    }
    \\
    \leq
    &
    \mathbb{E}_{D, X_{n+1}} \croch{
        \mathbb{E}_{Y_{n+1}}
        \croch{
            \mathbbm{1}
            \brac{Y_{n+1} \in \ufcpr{} \setminus \fcpr}
            \mid D, X_{n+1}
        }
    }
    \\
    \leq
    &
    \mathbb{E}_{D, X_{n+1}} \croch{
        \int_{y \in \mathcal{Y}}
        \mathbbm{1}
        \brac{y \in \ufcpr{} \setminus \fcpr}
        p(y \mid D, X_{n+1}) dy
    }
    \\
    \leq
    &
    c
    \mathbb{E}_{D, X_{n+1}} \croch{
        \thicc{}
    }.
\end{align*}
Combined with  the second inequality in Theorem~\ref{thm.coverage}
\begin{align*}
    \mathbb{P} \croch{
        Y_{n+1} \in \ufcpr{}
    }
    \leq 1 - \alpha + \frac{1}{n+1}
    + c
    \mathbb{E}_{D, X_{n+1}} \croch{
        \thicc{}
    }.
\end{align*}
\end{proof}

\section{Concerning the case of non-smooth loss-functions}

This section contains the proof of the results in Section~\ref{sec.non.smooth}.


%
%
%
{%
\subsection{Preliminary properties}
Let us first state a minimal assumption.
\begin{assumption} For every $y \in \mathcal{Y}$,
    \begin{equation}
        \label{asm.loss.lsc}
        \tag{LscL}
            \mbox{$u \in \mathcal{Y} \mapsto  \ell\paren{y, u}$ is lower-semi continuous.}    
    \end{equation}
\end{assumption}
This assumption is standard in the optimization literature
in order to proove existence of solutions (see \citet{alexanderianOptimizationInfinitedimensionalHilbert2019}).
It is omitted from the main part of the paper since
it is implied by stronger assumptions such as
Lipschitz continuity and differentiability in the main results.

\begin{lemma}
    \label{lm.wer.strongly.convex.1}
    Assume \eqref{asm.loss.convex} holds true.
    For every $v \in \mathbb{R}^{n+2}_+$,
    the weighted regularized empirical-risk $\wer{\lambda}{v}{\bullet}$ given by Eq.~\eqref{eq.Emp.Risk.weighted.rgularized} is $2 \lambda$-strongly convex.
\end{lemma}
\begin{proof}
    Let $v \in \mathbb{R}^{n+2}_+$,
    for every $i \in \brac{1, \ldots, n+2}$, $v_i \geq 0$.
    It follows from \eqref{asm.loss.convex}
    that for every $i \in \brac{1, \ldots, n+2}$,
    and every $u \in \mathcal{Y}$,
    the function
    \begin{align*}
        f \in \mathcal{H} \mapsto v_i \ell(u, f(X_i)) = v_i  \ell\paren{u, \doth{f}{K_{X_i}}}
    \end{align*}
    is convex since it is a composition of a linear form and a convex function.
    Since $f \in \mathcal{H} \mapsto \lambda \normh{f}^2$
    is $2\lambda$-strongly convex, then $\wer{\lambda}{v}{\bullet}$ is $2\lambda$-strongly convex.
\end{proof}

\begin{lemma}
    \label{lm.coercive}
    Let $\mathcal{G}$ denote a Hilbert space with its corresponding norm $\norm{\bullet}_{\mathcal{G}}$,
    and $\lambda \in \paren{0, +\infty}$.
    For every function $\mathbf{R}\paren{\bullet} : \mathcal{G} \to \mathbb{R}$,
    if it is proper and $2\lambda$-strongly convex and lower semi-continuous,
    then $\mathbf{R}\paren{\bullet}$ is coercive.
\end{lemma}
\begin{proof}
This proof is an adaptation of the one given by \citet{ZhangCoerciveStrongConvex}.
Let us first define the function $\widetilde{\mathbf{R}}\paren{\bullet} : \mathcal{G} \to \mathbb{R}$
as, for every $f \in \mathcal{G}$, $\widetilde{\mathbf{R}}\paren{f} := \mathbf{R}\paren{f} - \mathbf{R}\paren{0}$.
Then, $\widetilde{\mathbf{R}}\paren{\bullet}$ is also $2\lambda$-strongly convex and lower semi-continuous
and $\widetilde{\mathbf{R}}\paren{0} = 0$.
Moreover, for every $f, g \in \mathcal{G}$ and every $t \in \croch{0, 1}$,
\begin{align*}
    \widetilde{\mathbf{R}}\paren{
        \paren{1 - t}f + tg
    } \leq \paren{1 - t}\widetilde{\mathbf{R}}\paren{f}
    + t\widetilde{\mathbf{R}}\paren{g} - t\paren{1 - t}\lambda \norm{f - g}_{\mathcal{G}}^2.
\end{align*}
In particular for $g = 0$ and for $t = \frac{1}{2}$, this implies that
\begin{align*}
    \widetilde{\mathbf{R}}\paren{f}
    \geq 2\widetilde{\mathbf{R}}\paren{\frac{f}{2}}
    + \frac{\lambda}{2}\norm{f}_{\mathcal{G}}^2.
\end{align*}
Let assume that $\norm{f}_{\mathcal{G}} \geq 2$,
then by convexity of $\widetilde{\mathbf{R}}\paren{\bullet}$ and since $\frac{2}{\norm{f}_{\mathcal{G}}} \in \croch{0, 1}$,
\begin{align*}
    \norm{f} \widetilde{\mathbf{R}}\paren{\frac{f}{\norm{f}_{\mathcal{G}}}}
    = \norm{f} \widetilde{\mathbf{R}}\paren{\frac{f}{2} \frac{2}{\norm{f}_{\mathcal{G}}}}
    \leq \norm{f} \frac{2}{\norm{f}_{\mathcal{G}}} \widetilde{\mathbf{R}}\paren{\frac{f}{2}}
    = 2\widetilde{\mathbf{R}}\paren{\frac{f}{2}}.
\end{align*}
Conjoined with the above inequality, this implies that
\begin{align*}
    \widetilde{\mathbf{R}}\paren{f}
    \geq \norm{f} \widetilde{\mathbf{R}}\paren{\frac{f}{\norm{f}_{\mathcal{G}}}}
    + \frac{\lambda}{2}\norm{f}_{\mathcal{G}}^2
    \geq \norm{f} \inf_{\norm{g}_{\mathcal{H}} \leq 1}\widetilde{\mathbf{R}}\paren{g}
    + \frac{\lambda}{2}\norm{f}_{\mathcal{G}}^2.
\end{align*}
Since $\widetilde{\mathbf{R}}\paren{\bullet}$ is lower semi-continuous and proper,
then $\inf_{\norm{g}_{\mathcal{H}} \leq 1}\widetilde{\mathbf{R}}\paren{g} > - \infty$,
which, conjoined the above inequality, implies that $\widetilde{\mathbf{R}}\paren{\bullet}$
is coercive.
Therefore, $\mathbf{R}\paren{\bullet}$ is coercive.
\end{proof}

\medskip
\begin{definition}{(Weighted regularized empirical-risk of a candidate vector)}
\label{def.weighted.predictor_vector}
Let $z,y \in \mathcal{Y}$ be fixed.
Then for every $\lambda\geq 0$, let us define the function $\vwer{\lambda}{\bullet}{\bullet} : \mathbb{R}^{n+2} \times \mathbb{R}^{n+1}$ given by
\begin{align}
\label{eq.vec.wer}
    \paren{v, a} \mapsto \vwer{\lambda}{v}{a} 
    &:=
    \wer{\lambda}{v}{\sum_{i=1}^{n+1} a_i K_{X_i}}.
    \\
    & 
    =
    \sum_{i=1}^{n} \brac{\frac{v_i}{n+1}\ell \paren{Y_i, a^T K_{\bullet, i}}}
    +\frac{v_{n+1}}{n+1} \ell \paren{z, a^T K_{\bullet, n+1}}
    +\lambda a^T K a \notag
    \\
    &\quad
    +\frac{v_{n+2}}{n+1} \ell \paren{y, a^T K_{\bullet, n+1}} ,\notag
\end{align}
where $\wer{\lambda}{v}{\bullet}$ is defined by Eq.~\eqref{eq.Emp.Risk.weighted.rgularized}
and $K$ is the Gram matrix.
\end{definition}
\begin{notation}
\label{not.span}
Let the sub-space $\mathcal{A} \subseteq \mathcal{H}$ be the linear span of
the functions $K_{X_{1}}, \ldots, K_{X_{n+1}}$, defined as $
    \mathcal{A}~:=~\brac{
        \sum_{i = 1}^{n+1} a_i  K_{X_{i}} : a \in \mathbb{R}^{n+1}
}$. 
Let $\mathcal{R}(K) \subset \mathbb{R}^n$ denote the range of the Gram matrix $K$.
\end{notation}

\medskip
\begin{lemma}
	\label{lm.predictor.func.well.defined.1}
	Assume \eqref{asm.loss.convex} and \eqref{asm.loss.lsc} hold true.
    For every $v \in \mathbb{R}^{n+2}_+$, the predictor $\wpred{\lambda}{v}{\bullet}$ exists and is unique
    and is an element of $\mathcal{A}$.
    Moreover, there exists a unique vector
	$\hat{a}_{\lambda} \paren{v} \in \mathcal{R}(K)$ such that
	$\wpred{\lambda}{v}{\bullet} = \sum_{i=1}^{n+1} \paren{\hat{a}_{\lambda} \paren{v}}_iK_{X_i}$.
\end{lemma}
\begin{proof}
    Let $v  \in \mathbb{R}^{n+2}_+$.

    \notparagraph{Existence and uniqueness}
	By Lemma~\ref{lm.wer.strongly.convex.1},
	under \eqref{asm.loss.convex},
	$\wer{\lambda}{v}{\bullet}$ is $2\lambda$-strongly convex.

    Let $\paren{x, y} \in D$.
    By the reproducing property of the hypothesis space $\mathcal{H}$,
    for every $x \in \mathcal{X}$,
    the evaluation linear form
    $L_x : \mathcal{H} \to \mathcal{Y},$
    $f \mapsto L_x\paren{f} = f\paren{x}$
    is a bounded, thus continuous.
    It follows that under \eqref{asm.loss.lsc},
    the following function
    \begin{align*}
        f \in \mathcal{H} \mapsto \ell\paren{y, f\paren{x}}
        = \croch{\ell\paren{y, \bullet}}\paren{f\paren{x}}
        = \croch{\ell\paren{y, \bullet} \circ L_x} \paren{f} \in \mathbb{R}
    \end{align*}
    is a lower semi-continuous form,
    since it is a composition of
    a lower semi-continuous function and
    a continuous form.
    Therefore,
    the weighted regularized empirical-risk function
    is a lower semi-continuous since the empirical-risk term
    is a linear combination of lower semi-continuous
    forms, and the regularization term is the norm squared
    which is itself continuous.

    The above two properties, conjoined with \eqref{asm.loss.convex}
    and Lemma~\ref{lm.coercive} imply that $\wer{\lambda}{v}{\bullet}$ is coercive.

	Therefore, by Corollary 5.6 of \citet{alexanderianOptimizationInfinitedimensionalHilbert2019},
    the weighted regularized empirical-risk function admits
    a unique minimizer in $\mathcal{H}$.
    Let us note said minimizer $\wpred{\lambda}{v}{\bullet}$.

{

\bigskip    
\notparagraph{Representor}
Since $\mathcal{H}$ is an Hilbert space
	and $\mathcal{A}$ is the span of a finite number vector of $\mathcal{H}$,
	$\mathcal{H} = \mathcal{A} \oplus  \mathcal{A}^{\perp}$.
	It follows that there exist a unique functions
	$f_{\mathcal{A}} \in \mathcal{A}$ and
	$f_{\mathcal{A}^{\perp}} \in \mathcal{A}^{\perp}$
	such that $\wpred{\lambda}{v}{\bullet}
	= f_{\mathcal{A}} + f_{\mathcal{A}^{\perp}}$.
	Since $f_{\mathcal{A}^{\perp}} \perp K_{X_i}$,
	for every $i \in \brac{1, \ldots, n+1}$,
	\begin{align*}
		\wpred{\lambda}{v}{X_i} 
		=
		\doth{\wpred{\lambda}{v}{\bullet}}{K_{X_i}}
		=
		\doth{f_{\mathcal{A}} + f_{\mathcal{A}^{\perp}}}{K_{X_i}}
		= 
		\doth{f_{\mathcal{A}}}{K_{X_i}} 
		= f_{\mathcal{A}}\paren{X_i}. 
	\end{align*}
	It follows that
	\begin{align*}
		\wer{\lambda}{v}{\wpred{\lambda}{v}{\bullet}}
		&= \wer{0}{v}{\wpred{\lambda}{v}{\bullet}}
		+ \|\wpred{\lambda}{v}{\bullet}\|_{\mathcal{H}}
		\\
		&=
		\wer{0}{v}{f_{\mathcal{A}}}
		+ \|f_{\mathcal{A}} 
		+ f_{\mathcal{A}^{\perp}}\|_{\mathcal{H}}
		= \wer{0}{v}{f_{\mathcal{A}}}
		+ \|f_{\mathcal{A}} \|_{\mathcal{H}}
		+ \|f_{\mathcal{A}^{\perp}}\|_{\mathcal{H}}
		\\
		&
		\geq
		\wer{0}{v}{f_{\mathcal{A}}}
		+ \|f_{\mathcal{A}} \|_{\mathcal{H}}
		= \wer{\lambda}{v}{f_{\mathcal{A}}}.
	\end{align*}
	Since $\wpred{\lambda}{v}{\bullet}$ is the unique minimizer
	of $\wer{\lambda}{v}{\bullet}$ then
	$\wpred{\lambda}{v}{\bullet} = f_{\mathcal{A}}$.
	It follows that minimizing $\wer{\lambda}{v}{\bullet}$ over
	$\mathcal{H}$ is equivalent (yields the same result) as minimizing
	it over $\mathcal{A}$. Moreover, since $\wpred{\lambda}{v}{\bullet}$ is unique,
	for every minimizers $\tilde a_{\lambda} \paren{v}$
	of $\vwer{\lambda}{v}{\bullet}$ over $\mathbb{R}^{n+2}$,
	$\wpred{\lambda}{v}{\bullet} = \sum_{i=1}^{n+1} \paren{\tilde a_{\lambda} \paren{v}}_i K_{X_i}$.
	Let us note $\hat{a}_{\lambda} \paren{v}$ the projection $\tilde a_{\lambda} \paren{v}$ over the range of the matrix $K$,
	\begin{align*}
		\vwer{\lambda}{v}{\tilde{a}\paren{v}}
		&
		=
		\wer{\lambda}{v}{\wpred{\lambda}{v}{\bullet}}
		= \wer{0}{v}{\wpred{\lambda}{v}{\bullet}}
		+ \normh{\wpred{\lambda}{v}{\bullet}}
		\\
		&
		= \frac{1}{n+1}
		\sum_{i=1}^{n}
		v_i \ell\paren{Y_i, \tilde a_{\lambda} \paren{v}^T K_{\bullet, i}}
		\\
		&
		\quad
		+ \frac{v_{n+1}}{n+1} \ell\paren{z, \tilde a_{\lambda} \paren{v}^T K_{\bullet, n+1}}
		+ \frac{v_{n+2}}{n+1} \ell\paren{y, \tilde a_{\lambda} \paren{v}^T K_{\bullet, n+1}}
		+ \tilde a_{\lambda} \paren{v}^T K \tilde a_{\lambda} \paren{v}
		\\
		&
		= \frac{1}{n+1}
		\sum_{i=1}^{n}
		v_i \ell\paren{Y_i, \hat{a}_{\lambda} \paren{v}^T K_{\bullet, i}}
		\\
		&
		\quad
		+ \frac{v_{n+1}}{n+1} \ell\paren{z, \hat{a}_{\lambda} \paren{v}^T K_{\bullet, n+1}}
		+ \frac{v_{n+2}}{n+1} \ell\paren{y, \hat{a}_{\lambda} \paren{v}^T K_{\bullet, n+1}}
		+ \hat{a}_{\lambda} \paren{v}^T K \hat{a}_{\lambda} \paren{v}
		\\
		& = \vwer{\lambda}{v}{\hat{a}_{\lambda} \paren{v}}.
	\end{align*}
	This means the range of $K$ contains minimizers of  $\vwer{\lambda}{v}{\bullet}$.
    Since all the coordinates of the weight vector are non-negative,
    then by Eq.~\eqref{eq.vec.wer} and \eqref{asm.loss.convex},
    $\vwer{\lambda}{v}{\bullet}$
	is strongly convex over the range of $K$.
    Therefore, coupled with \eqref{asm.loss.lsc},
    there exists a unique vector
	$\hat{a}_{\lambda} \paren{v} \in \mathcal{R}(K)$ such that
	$\wpred{\lambda}{v}{\bullet} = \sum_{i=1}^{n+1} \paren{\hat{a}_{\lambda} \paren{v}}_iK_{X_i}$.
}
\end{proof}
}

\subsection{Proof of Theorem~\ref{thm.score.bound.non.smooth}}
\begin{proof}
\label{proof.score.bound.non.smooth}
The following proof is an adaptation of Lemmas~20 and~22 from \citet{bousquet2002stability}.

\bigskip
\noindent{\emph{Approximation of the predictor.}}
Let $z , y\in \mathcal{Y}$ be fixed.
Let us first provide an upper-bound for the error of approximation of the predictor
$\pred{\lambda;}{y}(\bullet)$ by
$\tilde{f}_{\lambda; D^y}(\bullet) := \pred{\lambda;}{z}(\bullet)$.
This error is defined as
the $\mathcal{H}$-norm of the difference
$\Delta \hat{f}_{\lambda} := \tilde{f}_{\lambda; D^y}(\bullet) - \pred{\lambda;}{y}(\bullet)$
between these two predictors.

To do so, define the weight vectors $\uu, \uw \in \mathbb{R}^{n+2}$ such that
$\uu = (1, \ldots, 1, 1, 0) \in \mathbb{R}^{n+2}$
and $\uw = (1, \ldots, 1, 0, 1) \in \mathbb{R}^{n+2}$. Define the predictor
$\wpred{\lambda}{\uu}{\bullet} \in \mathcal{H}$ as the minimizer of 
the $\lambda$-regularized $\uu$-weighted empirical-risk 
$\wer{\lambda}{\uu}{\bullet}$ formulated in Definition~\ref{def.weighted.predictor} and
$\wpred{\lambda}{\uw}{\bullet} \in \mathcal{H}$ as that of $\wer{\lambda}{\uw}{\bullet}$.
By Lemma~\ref{lm.predictor.func.well.defined.1},
$\wpred{\lambda}{\uu}{\bullet}$ and $\wpred{\lambda}{\uw}{\bullet}$
exist and are unique.

Notice that $\wpred{\lambda}{\uw}{\bullet} = \pred{\lambda;}{y}(\bullet)$
and $\wpred{\lambda}{\uu}{\bullet} = \pred{\lambda;}{z}(\bullet) =: \tilde{f}_{\lambda; D^y}(\bullet)$.
Therefore, $\Delta \hat{f}_{\lambda}$ can rewritten as
$\Delta \hat{f}_{\lambda} = \wpred{\lambda}{\uu}{\bullet} - \wpred{\lambda}{\uw}{\bullet}$.

Under \eqref{asm.loss.convex},
for every $i \in \brac{1, \ldots, n+1}$
and every $u \in \mathcal{Y}$,
the functional
$f \in \mathcal{H} \mapsto \ell(u, \langle f, K_{X_i}\rangle_{\mathcal{H}}) \in \mathbb{R}$
is convex, since a composition of a linear form with a convex function.
Since $\wer{0}{\uu}{\bullet}$
and $\wer{0}{\uw}{\bullet}$ are linear combinations,
with non-negative weights, of these convex functionals,
they are themselves convex.

Let $t \in (0, 1)$.
Since  $\wer{0}{\uu}{\bullet}$ is convex, it follows that
\begin{align*}
    \wer{0}{\uu}{
        \wpred{\lambda}{\uw}{\bullet} + t \Delta \hat{f}_{\lambda}
    }
    -
    \wer{0}{\uu}{
        \wpred{\lambda}{\uw}{\bullet}
    }
    &\leq
    t \left[
        \wer{0}{\uu}{\wpred{\lambda}{\uu}{\bullet}} 
        - \wer{0}{\uu}{\wpred{\lambda}{\uw}{\bullet}}
    \right],
    \\
    \wer{0}{\uu}{\wpred{\lambda}{\uu}{\bullet} - t \Delta \hat{f}_{\lambda}} 
    -
    \wer{0}{\uu}{\wpred{\lambda}{\uu}{\bullet}}
    &\leq
    t \left[
        \wer{0}{\uu}{\wpred{\lambda}{\uw}{\bullet}}
        - \wer{0}{\uu}{\wpred{\lambda}{\uu}{\bullet}}
    \right].
\end{align*}
Taking their sum
\begin{align*}
    \wer{0}{\uu}{
        \wpred{\lambda}{\uw}{\bullet} + t \Delta \hat{f}_{\lambda}
    }
    -
    \wer{0}{\uu}{
        \wpred{\lambda}{\uw}{\bullet}
    }
    +
    \wer{0}{\uu}{
        \wpred{\lambda}{\uu}{\bullet} - t \Delta \hat{f}_{\lambda}
    }
    -
    \wer{0}{\uu}{\wpred{\lambda}{\uu}{\bullet}}
    \leq 0.
\end{align*}
Since  $\wpred{\lambda}{\uu}{\bullet}$ and $\wpred{\lambda}{\uw}{\bullet}$ are
the minimizers of
$\wer{\lambda}{\uu}{\bullet}$ and $\wer{\lambda}{\uw}{\bullet}$ respectively,
it follows that
\begin{align*}
    \wer{\lambda}{\uu}{\wpred{\lambda}{\uu}{\bullet}}
    - \wer{\lambda}{\uu}{\wpred{\lambda}{\uu}{\bullet} - t \Delta \hat{f}_{\lambda}}
    \leq
    & 0,
    \\
    \wer{\lambda}{\uw}{\wpred{\lambda}{\uw}{\bullet}}
    - \wer{\lambda}{\uw}{\wpred{\lambda}{\uw}{\bullet} + t \Delta \hat{f}_{\lambda}}
    \leq
    & 0.
\end{align*}
By summing up the three previous inequalities, it follows that
\begin{align*}
    &\frac{1}{n+1}
    \ell\left(y, \hat{f}_{\lambda}(\uw; X_{n+1})\right)
    -
    \frac{1}{n+1}
    \ell\left(y, \hat{f}_{\lambda}(\uw; X_{n+1}) + t \Delta \hat{f}_{\lambda}(X_{n+1})\right)
    \\
    -&
    \frac{1}{n+1}
    \ell\left(z, \hat{f}_{\lambda}(\uw; X_{n+1})\right)
    +
    \frac{1}{n+1}
    \ell\left(z, \hat{f}_{\lambda}(\uw; X_{n+1}) + t \Delta \hat{f}_{\lambda}(X_{n+1})\right)
    \\
    +& \lambda 
    \left(
        \left\|\wpred{\lambda}{\uu}{\bullet}\right\|^2_{\mathcal{H}}
        - \left\|\wpred{\lambda}{\uu}{\bullet} - t \Delta \hat{f}_{\lambda}\right\|^2_{\mathcal{H}}
        + \left\|\wpred{\lambda}{\uw}{\bullet}\right\|^2_{\mathcal{H}}
        - \left\|\wpred{\lambda}{\uw}{\bullet} + t \Delta \hat{f}_{\lambda}\right\|^2_{\mathcal{H}}
    \right)
    \leq 0.
\end{align*}
Under \eqref{asm.loss.lipschitz}, 
all the terms involving the loss-function $\ell\paren{\bullet, \bullet}$
are bounded from above. By putting all such terms on the right of the inequality and using say bound,
it follows that
\begin{align*}
    \left\|\wpred{\lambda}{\uu}{\bullet}\right\|^2_{\mathcal{H}}
    - \left\|\wpred{\lambda}{\uu}{\bullet} - t \Delta \hat{f}_{\lambda}\right\|^2_{\mathcal{H}}
    + \left\|\wpred{\lambda}{\uw}{\bullet}\right\|^2_{\mathcal{H}}
    - \left\|\wpred{\lambda}{\uw}{\bullet} + t \Delta \hat{f}_{\lambda}\right\|^2_{\mathcal{H}}
    \leq t |\Delta \hat{f}_{\lambda}(X_{n+1})| \frac{2 \rho}{\lambda (n+1)}.
\end{align*}
Expanding the terms in the left hand side and dividing both sides by $t$,
it follows that
\begin{align*}
    2(1 - t)\left\|\Delta \hat{f}_{\lambda}\right\|^2_{\mathcal{H}}
    \leq |\Delta \hat{f}_{\lambda}(X_{n+1})| \frac{2 \rho}{\lambda (n+1)}.
\end{align*}
Taking the sup of $(1 - t)$ for $t \in (0,1)$ on the left hand side
and applying Cauchy Schwarz on the right-hand side
\begin{align*}
    2 \left\|\Delta \hat{f}_{\lambda} \right\|^2_{\mathcal{H}}
    \leq
    &
    \abss{\Delta \hat{f}_{\lambda}(X_{n+1})}
    \frac{2 \rho}{\lambda (n+1)}
    \leq
    \normh{\Delta \hat{f}_{\lambda}}
    \normh{K_{X_{n+1}}}
    \frac{2 \rho}{\lambda (n+1)}.
    \\
    \leq
    &
    \normh{\Delta \hat{f}_{\lambda}}
    \sqrt{ K_{n+1, n+1 }} \frac{2 \rho}{\lambda (n+1)}.
\end{align*}
In the case where $\wpred{\lambda}{\uw}{\bullet} = \wpred{\lambda}{\uu}{\bullet}$,
then $\left\|\Delta \hat{f}_{\lambda} \right\|^2_{\mathcal{H}} = 0$.
Otherwise the error is non zero and dividing the inequality above by
$\left\|\Delta \hat{f}_{\lambda} \right\|^2_{\mathcal{H}}$,
the predictor approximation quality is bounded from above
\begin{align*}
    \left\|\Delta \hat{f}_{\lambda} \right\|_{\mathcal{H}}
    \leq
    \sqrt{ K_{n+1, n+1}} \frac{\rho}{\lambda (n+1)}
    \quad\mbox{a.s.}
\end{align*}

\bigskip{}
\noindent{\emph{Approximation of \emph{non-conformity scores}.}}
For every $i \in \brac{1, \ldots, n}$,
from the previous bound and under \eqref{asm.score.lipschitz},
the non-conformity score approximation quality is bounded from above
\begin{align*}
    \left|
        S_{\lambda;D^{y}} \paren{X_i, Y_i}
        - \widetilde{S}_{\lambda;D^{y}} \paren{X_{i}, Y_{i}}
    \right|
    &
    =
    \left| 
        s\paren{Y_i, \pred{\lambda;}{y}(X_i)}
        - s\paren{Y_i, \pred{\lambda;}{z}(X_i)}
    \right|
    \\
    &
    \leq
    \gamma
    \left|
        \pred{\lambda;}{y}(X_i)
        - \pred{\lambda;}{z}(X_i)
    \right|
    \leq
    \gamma
    \abss{
        \doth{\pred{\lambda;}{y} - \pred{\lambda;}{z}}{K_{X_i}}
    }
    \\
    &
    \leq
    \gamma
    \normh{K_{X_i}}
    \normh{\pred{\lambda;}{y} - \pred{\lambda;}{z}}
    \leq
    \gamma
    \sqrt{ K_{i, i} }
    \normh{\pred{\lambda;}{y} - \pred{\lambda;}{z}}
    \\
    &
    \leq
    \gamma
    \sqrt{ K_{i, i }}
    \normh{\Delta \hat{f}_{\lambda}}
    \leq
    \sqrt{ K_{i, i }}
    \sqrt{ K_{n+1, n+1}}
    \frac{\gamma \rho}{\lambda (n+1)} 
    \quad\mbox{a.s.}
\end{align*}
Likewise, for $i = n+1$    
\begin{align*}
    \left|
        S_{\lambda;D^{y}} \paren{X_i, y}
        - \widetilde{S}_{\lambda;D^{y}} \paren{X_{i}, y}
    \right|
    \leq
    \sqrt{ K_{i, i}}
    \sqrt{ K_{n+1, n+1}}
    \frac{\ \rho}{\lambda (n+1)}
    \quad\mbox{a.s.}
\end{align*}
\end{proof}

\subsection{Concerning the empirical upper-bound on the thickness}
\begin{lemma}
\label{lm.some.area}
For every $0 \leq a < \infty$ and $0 \leq \tau < \infty$
\begin{align*}
    \leb{
        y \in \mathcal{Y}:
        \abss{\abss{y} - a} < \tau
    } \leq 4 \tau
\end{align*}
\end{lemma}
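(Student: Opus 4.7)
The plan is to unfold the absolute-value inequality defining the set, reduce to an interval (or union of intervals) on the real line, and compute its Lebesgue measure explicitly, distinguishing two cases depending on the relative sizes of $a$ and $\tau$.

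First, I would rewrite the set as
\begin{align*}
\brac{y \in \mathcal{Y} : \abss{\abss{y} - a} < \tau}
= \brac{y \in \mathcal{Y} : a - \tau < \abss{y} < a + \tau}.
\end{align*}
Since $a \geq 0$ and $\tau \geq 0$, the upper bound $a + \tau$ is non-negative, but the lower bound $a - \tau$ may be either non-negative or negative. This motivates a short case analysis.

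In the case $a \geq \tau$, the condition $a - \tau < \abss{y} < a + \tau$ forces $\abss{y} > 0$ together with $\abss{y} \in (a - \tau, a + \tau)$, so the set is the disjoint union of the two intervals $(a - \tau, a + \tau)$ and $(-(a + \tau), -(a - \tau))$, each of length $2\tau$. The total Lebesgue measure is therefore exactly $4\tau$. In the complementary case $a < \tau$, the lower bound $a - \tau$ is negative, so the condition reduces to $\abss{y} < a + \tau$, and the set is the single interval $(-(a + \tau), a + \tau)$ of Lebesgue measure $2(a + \tau) < 4\tau$.

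There is no real obstacle here: the only mild subtlety is recognizing that the two cases must be separated because, when $a < \tau$, the negative lower bound $a - \tau$ is vacuous and one does not get a disjoint union of two intervals. Taking the supremum over both cases yields the stated bound $4\tau$.
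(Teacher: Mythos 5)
Your proof is correct, and it takes a slightly different route from the paper's. You solve the inequality $\abss{\abss{y}-a}<\tau$ explicitly, reducing the set to $\brac{y : a-\tau < \abss{y} < a+\tau}$, and then split into the two cases $a\geq\tau$ (two disjoint intervals of length $2\tau$ each, measure exactly $4\tau$) and $a<\tau$ (one interval of measure $2(a+\tau)<4\tau$). The paper instead writes the measure as an integral of an indicator over $\mathbb{R}$, splits the domain into the four pieces $(-\infty,-a]$, $[-a,0]$, $[0,a]$, $[a,\infty)$, unfolds the absolute values on each piece, and bounds each of the four resulting integrals by $\tau$ via interval inclusions, with no case distinction on the sign of $a-\tau$. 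Your argument buys an exact identification of the set and of its measure (namely $\min\paren{4\tau,\,2(a+\tau)}$, up to null sets), which is slightly sharper than the stated bound; the paper's argument avoids the case analysis at the cost of only producing the upper bound. Both are fully elementary and equally valid for the purpose the lemma serves (the bound $4\tau$ feeding into the thickness estimates), so either proof would do.
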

\begin{proof}
Let $0 \leq a < \infty$ and $0 \leq \tau < \infty$
\begin{align*}
    \leb{
        y \in \mathcal{Y}:
        \abss{\abss{y} - a} < \tau
    }
    &= \int_{-\infty}^{+\infty}
    \mathbbm{1}
    \brac{
        \abss{\abss{y} - a} < \tau
    } dy
    \\
    &
    = \int_{0}^{+\infty}
    \mathbbm{1}
    \brac{
        \abss{y - a} < \tau
    } dy
    +
    \int_{-\infty}^{0}
    \mathbbm{1}
    \brac{
        \abss{-y - a} < \tau
    } dy
    \\
    &
    = \int_{0}^{a}
    \mathbbm{1}
    \brac{
        a - y < \tau
    } dy
    +
    \int_{a}^{+\infty}
    \mathbbm{1}
    \brac{
        y - a < \tau
    } dy
    \\
    &
    \quad
    +
    \int_{-\infty}^{-a}
    \mathbbm{1}
    \brac{
        -y - a < \tau
    } dy
    +
    \int_{-a}^{0}
    \mathbbm{1}
    \brac{
        y + a < \tau
    } dy
    \\
    &
    = \int_{0}^{a}
    \mathbbm{1}
    \brac{
        a - \tau < y
    } dy
    +
    \int_{a}^{+\infty}
    \mathbbm{1}
    \brac{
        y  < \tau + a
    } dy
    \\
    &
    \quad
    +
    \int_{-\infty}^{-a}
    \mathbbm{1}
    \brac{
        -\tau - a < y
    } dy
    +
    \int_{-a}^{0}
    \mathbbm{1}
    \brac{
        y  < \tau - a
    } dy
    \\
    &
    \leq \int_{a - \tau}^{a}
    dy
    +
    \int_{a}^{\tau + a}
    dy
    +
    \int_{-\tau - a}^{-a}
    dy
    +
    \int_{-a}^{\tau - a}
    dy
    \leq 4 \tau.
\end{align*}
\end{proof}
\begin{lemma}[First bound on thickness]
\label{lm.bound.conf.region.gap}
    Assume that there exists an almost-surely finite random variable
    $\Tau$ such that, for every $y \in \mathcal{Y}$ and every $i \in \brac{1, \ldots, n+1}$,
    $\widehat{\tau}_i(y) \leq \Tau$ almost-surely.    
Let us further assume that there exist $y$-independent random variables
    $\tilde{Y}_1, \ldots, \tilde{Y}_{n+1}$ such that,
    for every $y \in \mathcal{Y}$ and
    every $i \in \brac{1, \ldots, n+1}$, $\apred{}{y}{}\paren{X_i} = \tilde{Y_i}$ almost-surely.

    Then it follows from Lemma~\ref{lm.sandwiching} that the thickness is bounded from above as
    \begin{align}
    \label{eq.confidence.region.gap.bound.1}
        \thicc{} \leq
            \mathcal{V}\left(
                \ufcpr{} 
                \setminus
                \lfcpr{}
            \right) \leq 8 \Tau
            \quad\mbox{a.s.}
    \end{align}
\end{lemma}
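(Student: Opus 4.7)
The first inequality $\thicc{} \leq \leb{\ufcpr{} \setminus \lfcpr{}}$ is already provided by Lemma~\ref{lm.sandwiching}, so the task reduces to bounding the Lebesgue measure of $\ufcpr{} \setminus \lfcpr{}$ by $8\Tau$. My plan proceeds in three stages, the first two being purely structural and the third invoking the area bound from Lemma~\ref{lm.some.area}.

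First, I would replace every $y$-dependent tolerance $\tau_i(y)$ by the uniform bound $\Tau$ inside the indicators defining $\ufcpv{}{y}$ and $\lfcpv{}{y}$. Since $\tau_i(y)+\tau_{n+1}(y)\leq 2\Tau$ almost surely, each upper indicator is majorized by $\mathbbm{1}\{\tilde S_{D^y}(X_i,Y_i)\geq \tilde S_{D^y}(X_{n+1},y)-2\Tau\}$ and each lower indicator is minorized by $\mathbbm{1}\{\tilde S_{D^y}(X_i,Y_i)\geq \tilde S_{D^y}(X_{n+1},y)+2\Tau\}$. Calling $\pi_U(y)$ and $\pi_L(y)$ the associated averages, this yields $\ufcpv{}{y}\leq \pi_U(y)$ and $\lfcpv{}{y}\geq \pi_L(y)$, whence the inclusion $\ufcpr{}\setminus \lfcpr{}\subseteq \{\pi_U>\alpha\}\setminus\{\pi_L>\alpha\}$.

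Next I would exploit the $y$-independence of the predictor to collapse these two level sets onto a common order statistic. Since $\apred{}{y}{}(X_i)=\tilde Y_i$ does not depend on $y$, and using the non-conformity function $s(y,u)=|y-u|$ already imposed in Theorem~\ref{thm.non.smooth}, the approximate scores simplify to $\tilde S_{D^y}(X_i,Y_i)=a_i:=|Y_i-\tilde Y_i|$ (constant in $y$) and $\tilde S_{D^y}(X_{n+1},y)=g(y):=|y-\tilde Y_{n+1}|$. Letting $a_{(1)}\leq\cdots\leq a_{(n)}$ denote the order statistics of $(a_1,\ldots,a_n)$ and $k^*$ be the unique integer threshold for which $\pi_U(y)>\alpha$ is equivalent to $\sum_i\mathbbm{1}\{a_i\geq g(y)-2\Tau\}\geq k^*$ --- the same threshold $k^*$ governs $\pi_L$ since it depends only on $(n,\alpha)$ --- the two level sets rewrite as $\{g(y)\leq a_{(n-k^*+1)}+2\Tau\}$ and $\{g(y)\leq a_{(n-k^*+1)}-2\Tau\}$ respectively. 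Their difference is therefore contained in $\{y:\,|g(y)-a_{(n-k^*+1)}|\leq 2\Tau\}$.

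Finally, the translation $y'=y-\tilde Y_{n+1}$ recasts this last set as $\{y':\,||y'|-a_{(n-k^*+1)}|\leq 2\Tau\}$, whose Lebesgue measure is bounded by $4\cdot 2\Tau=8\Tau$ by Lemma~\ref{lm.some.area}. The main subtlety I anticipate lies in the second stage: verifying carefully that the \emph{same} integer threshold $k^*$ governs both $\{\pi_U>\alpha\}$ and $\{\pi_L>\alpha\}$, so that a single order statistic $a_{(n-k^*+1)}$ parameterizes both level sets. This is precisely what avoids a naive union over $i\in\{1,\ldots,n\}$ (which would only yield the much weaker $8n\Tau$) and produces the announced sharp constant. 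Boundary situations such as $a_{(n-k^*+1)}<2\Tau$ or $k^*>n$ are harmlessly absorbed by Lemma~\ref{lm.some.area}, which is stated for arbitrary $a\geq 0$.
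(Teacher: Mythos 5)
Your proposal is correct and follows essentially the same route as the paper's own proof: replace each $\tau_i(y)$ by the uniform $\Tau$, use the $y$-independence of the predictor together with $s(y,u)=\abss{y-u}$ to reduce both level sets to a comparison of $\abss{y-\tilde Y_{n+1}}$ with a single order statistic of the $\abss{Y_i-\tilde Y_i}$, and conclude by translation invariance and Lemma~\ref{lm.some.area} with $4\cdot 2\Tau = 8\Tau$. Your counting threshold gives the index $n-k^*+1=\lceil (n+1)(1-\alpha)\rceil$, which is exactly the paper's $i_{n,\alpha}$, so the difference is purely presentational.
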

\begin{proof}
\label{proof.bound.conf.region.gap}
For every $i \in \brac{1, \ldots, n}$  and $y \in \mathcal{Y}$, almost-surely
\begin{align*}
\begin{aligned}
    &&\widetilde{S}_{D^{y}} \paren{X_i, Y_i}
    + \widehat{\tau}_{i}(y)
    &\geq
    \widetilde{S}_{D^{y}} \paren{X_{n+1}, y}
    - \widehat{\tau}_{n+1}(y)
    \\
    \implies
    &&
    \widetilde{S}_{D^{y}} \paren{X_i, Y_i}
    + \Tau
    &\geq
    \widetilde{S}_{D^{y}} \paren{X_{n+1}, y}
    - \Tau
    \\
    \implies
    &&
    \abss{
        Y_i - \apred{}{y}{}\paren{X_i}
    }
    + \Tau
    &\geq
    \abss{
        y - \apred{}{y}{}\paren{X_{n+1}}
    }
    - \Tau
    \\
    \implies
    &&
    \abss{
        Y_i - \tilde{Y}_i
    }
    + \Tau
    &\geq
    \abss{
        y - \tilde Y_{n+1}
    }
    - \Tau
    \\
    \implies
    &&
    \abss{
        Y_i - \tilde{Y}_i
    }
    &\geq
    \abss{
        y - \tilde Y_{n+1}
    }- 2 \Tau
\end{aligned}
\end{align*}
By taking the indicator, summing over $i$, adding one and dividing by $n+1$,
the upper approximation p-value function is bounded from above, for every $y \in \mathcal{Y}$,
almost-surely
\begin{align*}
    \ufcpv{}{y} \leq \frac{1
        + \sum_{i=1}^{n}
        \mathbbm{1}
        \brac{
            \abss{
            Y_i - \tilde{Y}_i
            }
            \geq
            \abss{
                y - \tilde Y_{n+1}
            }
            - 2\Tau
        }
    }{n+1}.
\end{align*}
It follows that the upper-approximate prediction-region is contained in the following way,
almost-surely
\begin{align*}
    \ufcpr{}
    &:=
    \brac{
        y \in \mathcal{Y} : 
        \ufcpv{}{y}
        > \alpha
    }
    \\
    &\subseteq
    \brac{
        y \in \mathcal{Y} : 
        \frac{1
            + \sum_{i=1}^{n}
            \mathbbm{1}
            \brac{
                \abss{
                Y_i - \tilde{Y}_i
                }
                \geq
                \abss{
                    y - \tilde Y_{n+1}
                }
                - 2\Tau
            }
        }{n+1} > \alpha
    }
    \\
    &\subseteq
    \brac{
        y \in \mathcal{Y} : 
        \frac{1}{n}
        \sum_{i=1}^{n}
            \mathbbm{1}
            \brac{
                \abss{
                Y_i - \tilde{Y}_i
                }
                <
                \abss{
                    y - \tilde Y_{n+1}
                }
                - 2\Tau
            }
        < \frac{n+1}{n} \paren{1 - \alpha}
    }.
\end{align*}
Let us note $\abss{Y_{(1)} - \tilde{Y}_{(1)}} \leq \ldots \leq \abss{Y_{(n)} - \tilde{Y}_{(n)}}$
be the values among $\abss{Y_{1} - \tilde{Y}_{1}}, \ldots, \abss{Y_{n} - \tilde{Y}_{n}}$
sorted in increasing order, and the index $i_{n, \alpha} = \ceil{
        (n+1)\paren{1-\alpha}
}$. It follows that, almost-surely
\begin{align*}
    \ufcpr{}
    &
    \subseteq
    \brac{
        y \in \mathcal{Y} :
        \abss{
            y - \tilde Y_{n+1}
        }
        - 2\Tau
        \leq \abss{Y_{(i_{n, \alpha})} - \tilde{Y}_{(i_{n, \alpha})}}
    },
    \\
    &
    \subseteq
    \brac{
        y \in \mathcal{Y} :
        \abss{
            y - \tilde Y_{n+1}
        }
        \leq \abss{Y_{(i_{n, \alpha})} - \tilde{Y}_{(i_{n, \alpha})}}
        + 2\Tau
    }.
\end{align*}
Following the same logic, 
the lower-approximate prediction-region contains the following region, almost-surely
\begin{align*}
    \lfcpr{}
    \supseteq
    \brac{
        y \in \mathcal{Y} :
        \abss{
            y - \tilde Y_{n+1}
        }
        \leq \abss{Y_{(i_{n, \alpha})} - \tilde{Y}_{(i_{n, \alpha})}}
        - 2\Tau
    }.
\end{align*}
Using the inequality in Eq.~\eqref{eq.confidence.region.gap.bound},
and two upper and lower (in the sense of inclusion) regions above,
the thickness is almost-surely bounded from above as
\begin{align*}
    \thicc{}
    &\leq
    \leb{\ufcpr{} \setminus \lfcpr{}}
    \\
    &
    \leq
    \leb{
        \brac{
        y \in \mathbb{R} :
        \abss{
            \abss{
                y - \tilde Y_{n+1}
            } - \abss{Y_{(i_{n, \alpha})} - \tilde{Y}_{(i_{n, \alpha})}}
        }
        < 2\Tau
    }
    }
    \\
    &
    \leq
    \leb{
        \brac{
        y \in \mathbb{R} :
        \abss{
            \abss{
                y
            } - \abss{Y_{(i_{n, \alpha})} - \tilde{Y}_{(i_{n, \alpha})}}
        }
        < 2\Tau
    }
    }
    \\
    &
    \leq
    8 \Tau.
\end{align*}
where the third inequality follows from
the translation invariance of the Lebesgue measure, and
the last inequality follows from Lemma~\ref{lm.some.area}.
\end{proof}
\subsection{Proof of Theorem~\ref{thm.non.smooth}}
\begin{proof}
\label{proof.non.smooth}
\noindent{\emph{Coverage.}}
The coverage guarantee holds true by application of the approximation scheme
described in Definition~\ref{def.approx.prediction.region}
to the non-conformity scores approximation defined in Eq.~\eqref{eq.approximate.score.0}
and the upper-bounds given in Theorem~\ref{thm.score.bound.non.smooth}.

\bigskip{}
\noindent{\emph{upper-bound for the thickness.}}
The upper-bound on thickness follows from Eq.~\eqref{eq.confidence.region.gap.bound.1},
by taking for every $i \in \brac{1, \ldots, n+1}$,
$\tilde{Y}_i = \pred{\lambda;}{z}\paren{X_i} \paren{= \apred{\lambda;}{y}{}\paren{X_i}}$.
In this case,
for every $i \in \brac{1, \ldots, n}$,
almost-surely
\begin{align*}
    \widehat{\tau}^{(0)}_{\lambda; i}
    = 
    \sqrt{ K_{i, i}}
    \sqrt{ K_{n+1, n+1}}
    \frac{\rho}{\lambda (n+1)}
    \leq
    \frac{\rho}{\lambda (n+1)}
    \max_{i \in \brac{1, \ldots, n+1}}
    \kappa_{\mathcal{H}} \paren{X_i, X_i}
    =: \widehat{\tau}^{(0)}.
\end{align*}
Therefore, the thickness is bounded from above
\begin{align*}
    \thick{(0)}{\lambda; \alpha}
    \leq
    8 \widehat{\tau}^{(0)}
    \leq
    8 \frac{\rho}{\lambda (n+1)}
    \max_{i \in \brac{1, \ldots, n+1}}
    \kappa_{\mathcal{H}} \paren{X_i, X_i}
    \quad \mbox{a.s.}
\end{align*}

\end{proof}

\subsection{Proof of Corollary~\ref{cor.non.smooth}}
\label{proof.cor.non.smooth}
\begin{proof}
    Under the assumptions mentionned in Theorem~\ref{thm.non.smooth},
    the prediction-region $\tilde{C}_{\lambda; \alpha}^{\mathrm{up, (0)}}\paren{X_{n+1}}$
    is an approximate full-conformal prediction-region.

    Since it is assumed that
    the exists a constant $c \in \paren{0, +\infty}$
    such that the conditional probability density function
    of the output $p\paren{\bullet | D, X_{n+1}} : \mathcal{Y} \mapsto \mathbb{R}_+$
    is bounded from above by the constant $c$, and
    that the non-conformity scores are almost-surely distinct, 
    it follows from Lemma~\ref{lm.confidence.gap} that
    \begin{align*}
        \mathbb{P}\croch{
            Y_{n+1}\in \tilde{C}_{\alpha}^{\mathrm{up, (0)}}\paren{X_{n+1}}
        }
        &
        \leq
        1 - \alpha + c
        \mathbb{E}_{D, X_{n+1}}\croch{
            \mathrm{THK}_{\lambda; \alpha}^{\mathrm{(0)}}
        }
        \\
        &
        \leq
        1 - \alpha
        + \frac{8 c\rho}{\lambda (n+1)}
        \mathbb{E}_{D, X_{n+1}}\croch{
            \max_{i \in \brac{1, \ldots, n+1}}
            \kappa_{\mathcal{H}} \paren{X_i, X_i}
        },
    \end{align*}
    where the second inequality
    follows from the upper-bound
    on the thickness provided in Eq.~\eqref{eq.bound.thickness.non.smooth}
    which holds under the assumptions mentionned in Theorem~\ref{thm.non.smooth}.
\end{proof}

\section{Concerning the case of smooth loss-functions}

This section provides the proofs for the results in Section~\ref{sec.smooth}.



\subsection{Peliminary properties}
\begin{lemma}
    \label{lm.first.diff.rer}
    Assume \eqref{asm.loss.c1} holds true.
    For every $\lambda \in \paren{0, \infty}$, and
    for every $v \in \mathbb{R}^{n+2}$, and
    for every $f \in \mathcal{H}$,
    the Fr{\'e}chet differential
    $\croch{
        \mathcal{D}\hat{\mathbf{R}}_{\lambda}\paren{v ;\bullet}
    }\paren{f} : \mathcal{H} \to \mathbb{R}$ of
    the weighted regularized empirical-risk functional
    $\hat{\mathbf{R}}_{\lambda}\paren{v; \bullet} : \mathcal{H} \mapsto \mathbb{R}$
    is well-defined and, for every $h \in \mathcal{H}$,
    \begin{align*}
        \croch{
            \mathcal{D}\hat{\mathbf{R}}_{\lambda}\paren{v ;\bullet}
        }\paren{f}\paren{h}
        = \dotprod{
            \partial_2\hat{\mathbf{R}}_{\lambda}\paren{v; f}
        }{
            h
        }{\mathcal{H}}
    \end{align*} 
    where the function
    $
    \partial_2\hat{\mathbf{R}}_{\lambda}\paren{v; f} \in \mathcal{H}
    $ is defined as
    \begin{align}
        \label{eq.risk.d1}
        \partial_2\hat{\mathbf{R}}_{\lambda}\paren{v; f}
        &
        :=
        \frac{1}{n+1} \sum_{i=1}^{n}v_i \partial_{2}\ell\paren{Y_i, f\paren{X_i}} K_{X_i}
        \\\notag
        &
        \quad
        +
        \frac{v_{n+1}}{n+1}
        \partial_{2}\ell\paren{z, f\paren{X_{n+1}}} K_{X_{n+1}}
        \\\notag
        &
        \quad
        +
        \frac{v_{n+1}}{n+1}
        \partial_{2}\ell\paren{y, f\paren{X_{n+1}}} K_{X_{n+1}}
        +
        2\lambda f.
    \end{align}
\end{lemma}
\begin{proof}
    Let $\lambda \in\paren{0, +\infty}$, and
    $v \in \mathbb{R}^{n+2}$, and
    $f \in \mathcal{H}$.

    Let $\paren{x, u} \in \mathcal{X} \times \mathcal{Y}$.
    The Fr{\'e}chet differential $\croch{\mathcal{D} L_{x}}\paren{f} : \mathcal{H} \mapsto \mathbb{R}$
    of the functional $L_{x} : \mathcal{H} \mapsto \mathbb{R}$, $f \mapsto f\paren{x}$
    at $f$,
    is $L_{x}$ itself.
    In fact, for every $h \in \mathcal{H}$
    \begin{align*}
        L_{x}\paren{f + h}
        = L_{x}\paren{f} + L_{x}\paren{h},
    \end{align*}
    where the second term on the right hand side is,
    by the hypothesis space $\mathcal{H}$ being an RKHS,
    a bounded linear operator $L_{x}$ applied on $h$.
    One can then compute the Fr{\'e}chet differential of
    the function $M_{x, u} : \mathcal{H} \to \mathcal{H}$,
    $f \mapsto \ell\paren{u, f\paren{x}}$ at $f$
    by applying the chain rule as follows,
    for every $h \in \mathcal{H}$
    \begin{align*}
        \croch{
            \mathcal{D} M_{x, u}
        }(f)(h)
        &
        = 
        \croch{
            \mathcal{D}
            \croch{
                \ell\paren{u, \bullet}
                \circ L_x
            }
        }(f)(h)
        \\
        &
        = 
        \croch{
            \mathcal{D}
            \ell\paren{u, \bullet}
        }\paren{
            L_x\paren{f}
        }\paren{
            \croch{\mathcal{D}L_{x}}\paren{f}\paren{h}
        }
        \\
        &
        = 
        \croch{
            \mathcal{D}
            \ell\paren{u, \bullet}
        }\paren{
            f(x)
        }\paren{
            L_x\paren{h}
        }
        \\
        &
        =
        \partial_2 \ell\paren{u, f\paren{x}}
        L_x\paren{h},
    \end{align*}
    where the second equality follows from the chain rule, and
    the third inequality follows the above definition of the
    Fr{\'e}chet differential of $L_x$ at $f$, and
    the last equality follows from the coincidence
    between the Fr{\'e}chet differential and regular derivative
    of function in $\mathcal{F}\paren{\mathbb{R}, \mathbb{R}}$, which exists by \eqref{asm.loss.c1}.

    The Fr{\'e}chet differential
    $\croch{\mathcal{D}\norm{\bullet}_{\mathcal{H}}^2}(f) : \mathcal{H} \mapsto \mathbb{R}$
    of the norm squared at $f$ is
    $h \mapsto 2 \doth{f}{h}$ which can be represented by $2f \in \mathcal{H}$.
    In fact, for every $h \in \mathcal{H}$
    \begin{align*}
        \normh{f + h}^2
        = \normh{f}^2
        + 2 \doth{f}{h}
        + \normh{h}^2,
    \end{align*}
    where $h \mapsto 2 \doth{f}{h}$ is a linear operator
    which is bounded by Cauchy Schwarz and $f$ having a finite norm on $\mathcal{H}$.

    By linearity of the Fr{\'e}chet differential,
    the Fr{\'e}chet differential
    $\croch{
        \mathcal{D}\hat{\mathbf{R}}_{\lambda}\paren{v ;\bullet}
    }\paren{f} : \mathcal{H} \to \mathbb{R}$ of
    the regularized empirical-risk functional
    $\hat{\mathbf{R}}_{\lambda}\paren{v; \bullet} : \mathcal{H} \mapsto \mathbb{R}$
    is then given by, for every $h \in \mathcal{H}$
    \begin{align*}
        \croch{
            \mathcal{D}\hat{\mathbf{R}}_{\lambda}\paren{v ;\bullet}
        }\paren{f}\paren{h}
        &
        =
        \frac{1}{n+1} \sum_{i=1}^{n}v_i \partial_{2}\ell\paren{Y_i, f\paren{X_i}}
        L_{X_i} \paren{h}
        \\
        &
        \quad
        +
        \frac{v_{n+1}}{n+1}
        \partial_{2}\ell\paren{z, f\paren{X_{n+1}}}
        L_{X_{n+1}} \paren{h}
        \\
        &
        \quad
        +
        \frac{v_{n+2}}{n+1}
        \partial_{2}\ell\paren{y, f\paren{X_{n+1}}} L_{X_{n+1}} \paren{h}
        +
        2\lambda \doth{f}{h},
        \\
        &
        =
        \frac{1}{n+1} \sum_{i=1}^{n}v_i \partial_{2}\ell\paren{Y_i, f\paren{X_i}}
        \dotprod{K_{X_i}}{h}{\mathcal{H}}
        \\
        &
        \quad
        +
        \frac{v_{n+1}}{n+1}
        \partial_{2}\ell\paren{z, f\paren{X_{n+1}}}
        \dotprod{K_{X_{n+1}}}{h}{\mathcal{H}}
        \\
        &
        \quad
        +
        \frac{v_{n+2}}{n+1}
        \partial_{2}\ell\paren{y, f\paren{X_{n+1}}} \dotprod{K_{X_{n+1}}}{h}{\mathcal{H}}
        +
        2\lambda \doth{f}{h},
        \\
        &
        = \dotprod{
            \partial_2\hat{\mathbf{R}}_{\lambda}\paren{v; f}
        }{
            h
        }{\mathcal{H}},
    \end{align*}
    where $\partial_2\hat{\mathbf{R}}_{\lambda}\paren{v; f} \in \mathcal{H}$ is given by Eq.~\eqref{eq.risk.d1}.
\end{proof}

\medskip
{\begin{lemma}
\label{lm.local.predictor.stability}
Assume \eqref{asm.loss.convex} and \eqref{asm.loss.c1} hold true.
Then for every weight vector $v \in \croch{\uu, \uw}$,
\begin{align*}
    \norm{\hat{f}_{\lambda}\paren{v; \bullet} - \pred{\lambda;}{z}}_{\mathcal{H}}
    \leq \sqrt{K_{n+1, n+1}} \frac{\rho^{(1)}_{\lambda}(y)}{
    \lambda (n+1) },
\end{align*}
where $\rho^{(1)}_{\lambda}(y)$ is given by Eq.~\eqref{eq.predictor.approx.bound.smooth}.
\end{lemma}
\begin{proof}
Let $v \in \croch{\uu, \uw}$. By design, $v \in \mathbb{R}_{+}^{n+2}$.
Since $\eqref{asm.loss.c1}$ implies \eqref{asm.loss.lsc},
then conjoined with \eqref{asm.loss.convex} and Lemma~\ref{lm.predictor.func.well.defined.1},
$\hat{f}_{\lambda}\paren{v; \bullet} \in \mathcal{H}$ and $\hat{f}_{\lambda}\paren{\uu; \bullet} \in \mathcal{H}$
exist and are unique minimizers of $\hat{\mathbf{R}}_{\lambda}\paren{v; \bullet}$,
and $\hat{\mathbf{R}}_{\lambda}\paren{\uu; \bullet}$ respectively.
Moreover, by Lemma~\ref{lm.wer.strongly.convex.1}, $\hat{\mathbf{R}}_{\lambda}\paren{v; \bullet}$,
and $\hat{\mathbf{R}}_{\lambda}\paren{\uu; \bullet}$ are $2\lambda$-strongly convex functions.
Under \eqref{asm.loss.c1} and by optimality for $\hat{f}_{\lambda}\paren{v; \bullet}$,
\begin{align*}
    &
    2 \lambda \norm{
       \hat{f}_{\lambda}\paren{v; \bullet}
        - \hat{f}_{\lambda}\paren{\uu; \bullet}
    }_{\mathcal{H}}^2
    \\
    &
    \leq
    \paren{
        \croch{
            \mathcal{D}\hat{\mathbf{R}}_{\lambda}\paren{\bullet, v}
        }\paren{
            \hat{f}_{\lambda}\paren{v; \bullet}
        }
        - \croch{
            \mathcal{D}\hat{\mathbf{R}}_{\lambda}\paren{\bullet, v}
        }\paren{
            \hat{f}_{\lambda}\paren{\uu; \bullet}
        } 
    }
    \paren{
        \hat{f}_{\lambda}\paren{v; \bullet}
        - \hat{f}_{\lambda}\paren{\uu; \bullet}
    }
    \\
    &
    \leq
    \paren{
        \croch{
            \mathcal{D}\hat{\mathbf{R}}_{\lambda}\paren{\bullet, \uu}
        }\paren{
            \hat{f}_{\lambda}\paren{\uu; \bullet}
        }
        - \croch{
            \mathcal{D}\hat{\mathbf{R}}_{\lambda}\paren{\bullet, v}
        }\paren{
            \hat{f}_{\lambda}\paren{\uu; \bullet}
        } 
    }
    \paren{
        \hat{f}_{\lambda}\paren{v; \bullet}
        - \hat{f}_{\lambda}\paren{\uu; \bullet}
    }
    \\
    &
    \leq
    \dotprod{
        \partial_2\hat{\mathbf{R}}_{\lambda}\paren{\uu; \hat{f}_{\lambda}\paren{\uu; \bullet}}
        - \partial_2\hat{\mathbf{R}}_{\lambda}\paren{v; \hat{f}_{\lambda}\paren{\uu; \bullet}}
    }{
        \hat{f}_{\lambda}\paren{v; \bullet}
        - \hat{f}_{\lambda}\paren{\uu; \bullet}
    }{\mathcal{H}}
    \\
    &
    \leq
    \norm{
        \partial_2\hat{\mathbf{R}}_{\lambda}\paren{\uu - v; \hat{f}_{\lambda}\paren{\uu; \bullet}}
    }_{\mathcal{H}}
    \norm{
        \hat{f}_{\lambda}\paren{v; \bullet}
        - \hat{f}_{\lambda}\paren{\uu; \bullet}
    }_{\mathcal{H}},
\end{align*}
where the first inequality follows from
the strong convexity of $\hat{\mathbf{R}}_{\lambda}\paren{v; \bullet}$, and
the second inequality follows from the optimality of $\hat{f}_{\lambda}\paren{\uu; \bullet}$, and
the third inequality follows from Lemma~\ref{lm.first.diff.rer},
and the last inequality follows form the Cauchy-Schwarz inequality
and by linearity of $\partial_2\hat{\mathbf{R}}_{\lambda}\paren{\bullet; f}$ for every $f \in \mathcal{H}$.

Since $v \in \croch{\uu, \uw}$ then there exists $t \in \croch{0, 1}$,
such that, $v = \uu + t\paren{\uw - \uu}$. Then,
$\uu - v = t \paren{\uu - \uw}$
\begin{align*}
    \partial_2\hat{\mathbf{R}}_{\lambda}\paren{\uu - v; \hat{f}_{\lambda}\paren{\uu; \bullet}}
    &
    = \partial_2\hat{\mathbf{R}}_{\lambda}\paren{t \paren{\uu - \uw}; \hat{f}_{\lambda}\paren{\uu; \bullet}}
    \\
    &
    = t  \partial_2\hat{\mathbf{R}}_{\lambda}\paren{\paren{\uu - \uw}; \hat{f}_{\lambda}\paren{\uu; \bullet}}
    \\
    &
    = \frac{t}{n+1} \paren{
        \partial_2 \ell\paren{z, \hat{f}_{\lambda}\paren{\uu; X_{n+1}}}
        - \partial_2 \ell\paren{y, \hat{f}_{\lambda}\paren{\uu; X_{n+1}}}
    } K_{X_{n+1}},
\end{align*}
since $\uu - \uw = \paren{0, \ldots, 0, 1, -1}$.
Conjoined with the above inequality and by dividing both sides by $2 \lambda \norm{
\hat{f}_{\lambda}\paren{v; \bullet}
- \hat{f}_{\lambda}\paren{\uu; \bullet}
}_{\mathcal{H}}$,
\begin{align*}
    \norm{
        \hat{f}_{\lambda}\paren{v; \bullet}
        - \hat{f}_{\lambda}\paren{\uu; \bullet}
    }_{\mathcal{H}}
    \leq \norm{K_{X_{n+1}}}_{\mathcal{H}} \frac{t\abss{
        \partial_2 \ell\paren{z, \hat{f}_{\lambda}\paren{\uu; X_{n+1}}}
        - \partial_2 \ell\paren{y, \hat{f}_{\lambda}\paren{\uu; X_{n+1}}}
    }}{2\lambda\paren{n+1}}.
\end{align*}
One concludes by recalling that $\norm{K_{X_{n+1}}}_{\mathcal{H}} =  \sqrt{K_{n+1, n+1}}$ and
that $\wpred{\lambda}{\uu}{\bullet} = \pred{\lambda;}{z}$.
\end{proof}
}

\subsection{Proof of Proposition~\ref{prop.predictor.stability.smooth} and Theorem~\ref{thm.score.bound.smooth}}
\label{proof.score.bound.smooth}

\begin{proof}(Proof of Proposition~\ref{prop.predictor.stability.smooth})
    {Direct application of Lemma~\ref{lm.local.predictor.stability}
    for $v = \uw$ and the fact that $\wpred{\lambda}{\uw}{\bullet} = \pred{\lambda;}{y}$. 
    }
\end{proof}

\medskip

\begin{proof}(Proof of Theorem~\ref{thm.score.bound.smooth})
For every $i \in \brac{1, \ldots, n}$, combining \eqref{asm.score.lipschitz}
and Proposition~\ref{prop.predictor.stability.smooth}, the non-conformity score approximation quality is immediately bounded from above by
\begin{align*}
	\left|
	S_{\lambda; D^{y}} \paren{X_i, Y_i}
	- \widetilde{S}_{\lambda; D^{y}} \paren{X_{i}, Y_{i}}
	\right|
	&
	=
	\left| 
	s\paren{Y_i, \pred{\lambda;}{y}(X_i)}
	- s\paren{Y_i, \pred{\lambda;}{z}(X_i)}
	\right|
	\\
	&
	\leq
	\gamma
	\left|
	\pred{\lambda;}{y}(X_i)
	- \pred{\lambda;}{z}(X_i) 
	\right| \\
	&	\leq
	\gamma
	\abss{
		\doth{\pred{\lambda;}{y} - \pred{\lambda;}{z}}{K_{X_i}}
	}
	\\
	&
	\leq
	\gamma \normh{K_{X_i}}
	\normh{\pred{\lambda;}{y} - \pred{\lambda;}{z}} \\
	&
	\leq
	\gamma \sqrt{K_{i, i}}
	\normh{\pred{\lambda;}{y} - \pred{\lambda;}{z}}
	\\
	&
	\leq \gamma
	\sqrt{K_{i, i}}
	\sqrt{ K_{n+1, n+1}}
	\frac{\ \rho^{(1)}_{\lambda}(y)}{\lambda (n+1)}.
\end{align*}
Following  a similar reasoning for $i = n+1$,
\begin{align*}
	\left|
	S_{\lambda;D^{y}} \paren{X_i, y} - \widetilde{S}_{\lambda;D^{y}} \paren{X_{i}, y}
	\right|
	\leq  \gamma
	\sqrt{ K_{i, i}}
	\sqrt{ K_{n+1, n+1}}
	\frac{\ \rho^{(1)}_{\lambda}(y)}{\lambda (n+1)},
\end{align*}
which provides the desired conclusion.
\end{proof}

\subsection{Proof of Corollary~\ref{cor.smooth}}
\begin{proof}
\label{proof.smooth}
\noindent{\emph{Coverage.}}
Following a similar reasoning as in Theorem~\ref{thm.non.smooth},
the new approximate region is a confidence-region.

\bigskip{}
\noindent{\emph{Comparison with $\ufcprr{,(0)}{\lambda; \alpha}$.}}
Under \eqref{asm.loss.lipschitz},
for every $y \in \mathcal{Y}$, almost-surely
\begin{align*}
    \rho^{(1)}_{\lambda}(y)
    =
    \frac{1}{2}
    \left|
        - \partial_2 \ell(z, \pred{\lambda;}{z}(X_{n+1})) 
        + \partial_2 \ell(y, \pred{\lambda;}{z}(X_{n+1}))
    \right|
    \leq
    \frac{1}{2} 2 \rho = \rho.
\end{align*}
It follows that for every $i \in \brac{1, \ldots, n+1}$, and every $y \in \mathcal{Y}$,
$\widehat{\tau}_{\lambda; i}^{(1)}(y) \leq \widehat{\tau}_{\lambda; i}^{(0)}$ and almost-surely
\begin{align*}
    \mathbbm{1}
    \brac{
        \widetilde{S}_{\lambda; D^{y}}\paren{X_i} + \widehat{\tau}_{\lambda;i}^{(1)}(y)
        \leq \widetilde{S}_{\lambda; D^{y}}\paren{X_{n+1}} - \widehat{\tau}_{\lambda;n+1}^{(1)}(y)
    }
    &\leq
    \mathbbm{1}
    \brac{
        \widetilde{S}_{\lambda; D^{y}}\paren{X_i} + \widehat{\tau}_{\lambda;i}^{(0)}
        \leq \widetilde{S}_{\lambda; D^{y}}\paren{X_{n+1}} - \widehat{\tau}_{\lambda;n+1}^{(0)}
    },\\
    \mathbbm{1}
    \brac{
        \widetilde{S}_{\lambda; D^{y}}\paren{X_i} - \widehat{\tau}_{\lambda;i}^{(1)}(y)
        \leq \widetilde{S}_{\lambda; D^{y}}\paren{X_{n+1}} + \widehat{\tau}_{\lambda;n+1}^{(1)}(y)
    }
    &\geq
    \mathbbm{1}
    \brac{
        \widetilde{S}_{\lambda; D^{y}}\paren{X_i} - \widehat{\tau}_{\lambda;i}^{(0)}
        \leq \widetilde{S}_{\lambda; D^{y}}\paren{X_{n+1}} + \widehat{\tau}_{\lambda;n+1}^{(0)}
    }.
\end{align*}
After summing over $i$, adding $1$ and dividing by $n+1$, for every $y \in \mathcal{Y}$,
almost-surely
\begin{align*}
    \lfcpvv{,(0)}{\lambda; D}{y}
    \leq \lfcpvv{,(1)}{\lambda; D}{y}
    \leq \fcpvv{\lambda; D}{y}
    \leq \ufcpvv{,(1)}{\lambda; D}{y}
    \leq \ufcpvv{,(0)}{\lambda; D}{y}.
\end{align*}
This implies that the new approximate p-value function is closer
to the full-conformal one compared to the previous. Furthermore, in terms of regions,
almost-surely
\begin{align*}
    \lfcprr{,(0)}{\lambda; \alpha}
    \subseteq
    \lfcprr{,(1)}{\lambda; \alpha}
    \subseteq
    \fcprr
    \subseteq
    \ufcprr{,(1)}{\lambda; \alpha}
    \subseteq
    \ufcprr{,(0)}{\lambda; \alpha}.
\end{align*}
This implies that the new prediction-region is closer
to the full-conformal one compared to the previous,
\begin{align*}
    \thick{(1)}{\lambda; \alpha}
    \leq
    \thick{(0)}{\lambda; \alpha} \quad \mbox{a.s.}
\end{align*}
\end{proof}

\section{Concerning the case of very smooth loss-functions}
This section lists the proofs of the results in Section~\ref{sec.very.smooth}.

The following definition describes the expression of
some open set over which the predictor function of weights $v \to \wpred{\lambda}{v}{\bullet}$
needs to be at least twice continuously differentiable
for the new predictor approximation via influence functions and
the new upper-bounds to hold.

\begin{definition}[$\epsilon$-neighborhood of $\croch{\uu, \uw}$]
	\label{def.epsilon.neighborhood}
	For $\epsilon>0$
	\begin{align*}
		\Omega_{\epsilon} := 
		\left\{
		v \in \mathbb{R}^{n+2}
		: \exists a \in \croch{\uu, \uw},
		s.t. \left\|v - a\right\|_{2} < \epsilon
		\right\}.
	\end{align*}
\end{definition}

However, this $\epsilon$-neighborhood contains
weight vectors with some negative coordinates.
To ensure $\wpred{\lambda}{v}{\bullet}$ is well-defined
for every $v\in \Omega_{\epsilon}$,
$\epsilon$ must be small enough to keep
$\wer{\lambda}{v}{\bullet}$ strongly convex.
This is done by balancing two conflicting terms: the non-convex terms
induced by the negative coordinates and,
the strongly convex regularization term.

Along with the constant $\beta_{\ell; 2}$, the values $\kappa_{\mathcal{H}}\paren{X_{1}, X_{1}}$,
$\ldots$, $\kappa_{\mathcal{H}}\paren{X_{n+1}, X_{n+1}}$ control the amplitude of the potentially non-convex terms,
which is why they are required to be finite.
It follows that there is a non-empty range of values for $\epsilon$,
such that for every $v\in \Omega_{\epsilon}$, $\wpred{\lambda}{v}{\bullet}$ is well-defined.

\subsection{Preliminary properties}
\label{sec.preliminary.prop.2}

\begin{lemma}
	\label{lm.second.diff.rer}
	Assume \eqref{asm.loss.c2} holds true.
	For every $\lambda \in \paren{0, \infty}$, and
	for every $v \in \mathbb{R}^{n+2}$, and
	for every $f \in \mathcal{H}$,
	the second-order Fr{\'e}chet differential
	$\croch{
		\mathcal{D}^2\hat{\mathbf{R}}_{\lambda}\paren{v ;\bullet}
	}\paren{f} : \mathcal{H} \times \mathcal{H} \to \mathbb{R}$ of
	the regularized empirical-risk functional
	$\hat{\mathbf{R}}_{\lambda}\paren{v; \bullet} : \mathcal{H} \mapsto \mathbb{R}$
	is well-defined and is given by, for every $\paren{h_1, h_2} \in \mathcal{H} \times \mathcal{H}$,
    \begin{align*}
        \croch{
			\mathcal{D}^2\hat{\mathbf{R}}_{\lambda}\paren{v ;\bullet}
		}\paren{f}\paren{h_1, h_2}
        = \dotprod{\partial_2^2\hat{\mathbf{R}}_{\lambda}\paren{v; f} h_1}{h_2}{\mathcal{H}},
    \end{align*}
    where the linear operator
    $\partial_2^2\hat{\mathbf{R}}_{\lambda}\paren{v; f} : \mathcal{H} \mapsto  \mathcal{H}$
    is given by,
	\begin{align}
        \label{eq.risk.d2}
		\partial_2^2\hat{\mathbf{R}}_{\lambda}\paren{v; f}
		&
		:=
		\frac{1}{n+1} \sum_{i=1}^{n}
		v_i \partial_{2}^2\ell\paren{Y_i, f\paren{X_i}}
		K_{X_i} \otimes K_{X_i}
		\\ \notag
		&
		\quad
		+
		\frac{v_{n+1}}{n+1}
		\partial_{2}^2\ell\paren{z, f\paren{X_{n+1}}}
		K_{X_{n+1}} \otimes K_{X_{n+1}}
		\\ \notag
		&
		\quad
		+
		\frac{v_{n+2}}{n+1}
		\partial_{2}^2\ell\paren{y, f\paren{X_{n+1}}}
		K_{X_{n+1}} \otimes K_{X_{n+1}}
		+
		2\lambda\mathrm{Id}.
	\end{align}
\end{lemma}
\begin{proof}
	Let $\lambda \in\paren{0, +\infty}$, and
	$v \in \mathbb{R}^{n+2}$, and
	$f \in \mathcal{H}$.
	Let $\paren{x, u} \in \mathcal{X} \times \mathcal{Y}$.
	Let us recall from the proof of Lemma~\ref{lm.first.diff.rer}
	that, for every $g \in \mathcal{H}$, the second-order Fr{\'e}chet differential
	$\croch{\mathcal{D}^2 M_{x, u}}\paren{f} : \mathcal{H} \times \mathcal{H} \to \mathbb{R}$
	of the functional $M_{x, u}: \mathcal{H} \to \mathbb{R}$,  $f \mapsto \ell\paren{u, f\paren{x}}$
	at $g$ is given by, for every $h_1 \in \mathcal{H}$
	\begin{align*}
		\croch{
			\mathcal{D}
			M_{x, u}
		}\paren{g} \paren{h_1}
		&
		= \partial_{2}\ell\paren{u, L_{x}\paren{g}}
		L_{x}\paren{h_1}
		\\
		&
		=
		\paren{
			L_{x}\paren{h_1}
			\partial_{2}\ell\paren{u, \bullet}
			\circ L_x
		}\paren{g},
	\end{align*}
	where $\partial_{2}\ell\paren{u, \bullet}$ exists thanks to \eqref{asm.loss.c2}.
	
	The second-order Fr{\'e}chet differential
	$\croch{\mathcal{D}^2 M_{x, u}}\paren{f} : \mathcal{H} \times \mathcal{H} \to \mathbb{R}$
	of the functional $M_{x, u}: \mathcal{H} \to \mathbb{R}$,  $f \mapsto \ell\paren{u, f\paren{x}}$
	at $f$ is given by,
	for every $\paren{h_1, h_2} \in \mathcal{H} \times \mathcal{H}$
	\begin{align*}
		\croch{\mathcal{D}^2 M_{x, u}}\paren{f}\paren{h_1, h_2}
		&
		= \croch{
			\mathcal{D}
			\paren{
				\croch{
					\mathcal{D}
					M_{x, u}
				}\paren{\bullet} \paren{h_1}
			}
		} \paren{f}\paren{h_2}
		\\
		&
		=
		\croch{
			\mathcal{D}
			\paren{
				L_x\paren{h_1}
				\partial_2 \ell\paren{
					u, \bullet
				}
				\circ
				L_x
			}
		} \paren{f}\paren{h_2}
		\\
		&
		=
		L_x\paren{h_1}
		\croch{
			\mathcal{D}
			\paren{
				\partial_2 \ell\paren{
					u, \bullet
				}
				\circ
				L_x
			}
		} \paren{f}\paren{h_2}
		\\
		&
		=
		L_x\paren{h_1}
		\croch{
			\mathcal{D}
			\partial_2 \ell\paren{
				u, \bullet
			}
		}
		\paren{
			L_x\paren{f}
		}
		\paren{
			\mathcal{D}
			L_x \paren{f}\paren{h_2}
		}
		\\
		&
		=
		L_x\paren{h_1}
		\partial_2^2\ell\paren{
			u, L_x\paren{f}
		}
		L_x\paren{h_2}
		\\
		&
		=
		\partial_2^2\ell\paren{
			u, f\paren{x}
		}
		L_x\paren{h_1}
		L_x\paren{h_2},
	\end{align*}
	where the first equality is by definition, and
	the second equality follows from the above reminder, and
	the third equality follows from the linearity of the differential, and
	the fourth equality follows from the chain rule, and
	the fith equality follows from the coincidence of the Fr{\'e}chet differential
	and the regular derivative for functions in $\mathbb{R}^{\mathbb{R}}$
	and the existence of $\partial_2^2\ell\paren{
		u, \bullet
	}$ under \eqref{asm.loss.c2}, and
	the last equality is just reordering the terms.

	The second-order Fr{\'e}chet differential
	$\croch{\mathcal{D}^2 \normh{\bullet}^2}\paren{f} : \mathcal{H} \times \mathcal{H} \to \mathbb{R}$
	of the norm squared at $f$ is given by,
	for every $\paren{h_1, h_2} \in \mathcal{H} \times \mathcal{H}$
	\begin{align*}
		\croch{\mathcal{D}^2 \normh{\bullet}^2}\paren{f}\paren{h_1, h_2}
		&
		= \croch{
			\mathcal{D}
			\paren{
				\croch{
					\mathcal{D}
					\normh{\bullet}^2
				}\paren{\bullet} \paren{h_1}
			}
		} \paren{f}\paren{h_2}
		\\
		&
		=2 \doth{h_2}{h_1}.
	\end{align*}
	In fact,
	\begin{align*}
		\croch{
			\mathcal{D}
			\normh{\bullet}^2
		}\paren{f + h_2} \paren{h_1}
		= 2 \doth{f+h_2}{h_1}
		&
		= 2 \doth{f}{h_1}
		+ 2 \doth{h_2}{h_1}
		\\
		&
		=
		\croch{
			\mathcal{D}
			\normh{\bullet}^2
		}\paren{f} \paren{h_1}
		+ 2 \doth{h_2}{h_1},
	\end{align*}
	where the first equality follows from a result
	the proof of Lemma~\ref{lm.first.diff.rer}, and
	the right most term on the second line is
	a bounded bilinear form on $\mathcal{H}$.
	
	By linearity of the Fr{\'e}chet differential,
	the second-order Fr{\'e}chet differential
	$\croch{
		\mathcal{D}^2\hat{\mathbf{R}}_{\lambda}\paren{v ;\bullet}
	}\paren{f} : \mathcal{H} \times \mathcal{H} \to \mathbb{R}$ of
	the regularized empirical-risk functional
	$\hat{\mathbf{R}}_{\lambda}\paren{v; \bullet} : \mathcal{H} \mapsto \mathbb{R}$
	is then given by, for every $\paren{h_1, h_2} \in \mathcal{H} \times \mathcal{H}$
	\begin{align*}
		&
        \croch{
			\mathcal{D}^2\hat{\mathbf{R}}_{\lambda}\paren{v ;\bullet}
		}\paren{f}\paren{h_1, h_2}
		\\
        &
		=
		\frac{1}{n+1} \sum_{i=1}^{n}
		v_i \partial_{2}^2\ell\paren{Y_i, f\paren{X_i}}
		L_{X_i} \paren{h_1} L_{X_i} \paren{h_2}
		\\
		&
		\quad
		+
		\frac{v_{n+1}}{n+1}
		\partial_{2}^2\ell\paren{z, f\paren{X_{n+1}}}
		L_{X_{n+1}} \paren{h_1} L_{X_{n+1}} \paren{h_2}
		\\
		&
		\quad
		+
		\frac{v_{n+2}}{n+1}
		\partial_{2}^2\ell\paren{y, f\paren{X_{n+1}}}
		L_{X_{n+1}} \paren{h_1} L_{X_{n+1}} \paren{h_2}
		+
		2\lambda \doth{h_1}{h_2}
        \\
        &
        =
		\frac{1}{n+1} \sum_{i=1}^{n}
		v_i \partial_{2}^2\ell\paren{Y_i, f\paren{X_i}}
		\dotprod{K_{X_{i}}}{h_1}{\mathcal{H}}
        \dotprod{K_{X_{i}}}{h_2}{\mathcal{H}}
		\\
		&
		\quad
		+
		\frac{v_{n+1}}{n+1}
		\partial_{2}^2\ell\paren{z, f\paren{X_{n+1}}}
		\dotprod{K_{X_{n+1}}}{h_1}{\mathcal{H}}
        \dotprod{K_{X_{n+1}}}{h_2}{\mathcal{H}}
		\\
		&
		\quad
		+
		\frac{v_{n+2}}{n+1}
		\partial_{2}^2\ell\paren{y, f\paren{X_{n+1}}}
		\dotprod{K_{X_{n+1}}}{h_1}{\mathcal{H}}
        \dotprod{K_{X_{n+1}}}{h_2}{\mathcal{H}}
		+
		2\lambda \doth{h_1}{h_2},
        \\
        &
        = \dotprod{\partial_2^2\hat{\mathbf{R}}_{\lambda}\paren{v; f} h_1}{h_2}{\mathcal{H}},
	\end{align*}
    where the linear operator
    $\partial_2^2\hat{\mathbf{R}}_{\lambda}\paren{v; f} : \mathcal{H} \mapsto  \mathcal{H}$
    is given by Eq.~\eqref{eq.risk.d2}.
\end{proof}

\medskip

\begin{lemma}
    \label{lm.wer.strongly.convex.2}
        Assume \eqref{asm.loss.convex},
        \eqref{asm.loss.c2},
        \eqref{asm.loss.smooth}, and
        \eqref{asm.finite.kern} hold true. 
        Then, for every $\epsilon>0$ with
    \begin{align}
    \label{eq.epsilon.small}
        \tag{SmallEps}
        \epsilon 
        < 
        \min
        \left(
            \frac{ \lambda (n+1)}{ \beta_{\ell; 2} K_{n+1, n+1} }
            , 1/2
        \right)
    \end{align}
    almost-surely,
    for every $v \in \Omega_{\epsilon}$ (from Definition~\ref{def.epsilon.neighborhood}), the weighted regularized empirical-risk $\wer{\lambda}{v}{\bullet}$ (see Eq.~\ref{eq.Emp.Risk.weighted.rgularized}) is
   $ \lambda$-strongly convex. 
\end{lemma}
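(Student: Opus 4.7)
The plan is to characterize the second-order Fréchet differential of $\hat{\mathbf{R}}_\lambda(v;\cdot)$ at any $f\in\mathcal{H}$ via Lemma~\ref{lm.second.diff.rer}, and then show that the bilinear form $\DDtwo \wer{\lambda}{v}{f}(h,h)$ is bounded below by $\lambda\|h\|_\mathcal{H}^2$ for every $v\in\Omega_\epsilon$ and every $h\in\mathcal{H}$. Since Lemma~\ref{lm.wer.strongly.convex.1} only handles the case of \emph{non-negative} weights, the main issue is to deal with the fact that vectors $v\in\Omega_\epsilon$ may have negative coordinates. This is precisely what the smallness condition \eqref{eq.epsilon.small} is designed to control.

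First, I would make a geometric observation on $\Omega_\epsilon$. Any $v\in\Omega_\epsilon$ is at Euclidean distance less than $\epsilon$ from some point $a=(1,\ldots,1,a_{n+1},a_{n+2})$ with $a_{n+1}+a_{n+2}=1$ and $a_{n+1},a_{n+2}\geq 0$. With $\epsilon<1/2$, the first $n$ coordinates satisfy $v_i>1/2>0$. Moreover, the Cauchy--Schwarz inequality $|v_{n+1}-a_{n+1}|+|v_{n+2}-a_{n+2}|<\sqrt{2}\,\epsilon<1=a_{n+1}+a_{n+2}$ rules out the case where both $v_{n+1}$ and $v_{n+2}$ are simultaneously negative; hence at most one of the last two coordinates is negative, and its magnitude is at most $\epsilon$.

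Next, I would apply Lemma~\ref{lm.second.diff.rer} to write, for every $h\in\mathcal{H}$,
\begin{align*}
    \DDtwo \wer{\lambda}{v}{f}(h,h)
    &= \tfrac{1}{n+1}\sum_{i=1}^{n} v_i\,\partial_2^2\ell(Y_i,f(X_i))\,h(X_i)^2 \\
    &\quad + \tfrac{v_{n+1}}{n+1}\,\partial_2^2\ell(z,f(X_{n+1}))\,h(X_{n+1})^2 \\
    &\quad + \tfrac{v_{n+2}}{n+1}\,\partial_2^2\ell(y,f(X_{n+1}))\,h(X_{n+1})^2 + 2\lambda\|h\|_\mathcal{H}^2.
\end{align*}
Under \eqref{asm.loss.convex} and \eqref{asm.loss.c2}, $\partial_2^2\ell\geq 0$; under \eqref{asm.loss.smooth} it is also bounded above by $\beta_{\ell;2}$. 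The first $n$ terms, having positive coefficients $v_i>1/2$, are non-negative and can be discarded. For the possibly negative last coordinate, the reproducing property gives $h(X_{n+1})^2\leq K_{n+1,n+1}\|h\|_\mathcal{H}^2$ (finite almost surely by \eqref{asm.finite.kern}), so the corresponding term is bounded below by $-\,\epsilon\,\beta_{\ell;2}\,K_{n+1,n+1}\,\|h\|_\mathcal{H}^2/(n+1)$.

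Finally, combining the bounds yields
\begin{align*}
    \DDtwo \wer{\lambda}{v}{f}(h,h)
    \;\geq\; \left(2\lambda-\frac{\epsilon\,\beta_{\ell;2}\,K_{n+1,n+1}}{n+1}\right)\|h\|_\mathcal{H}^2,
\end{align*}
and the assumption \eqref{eq.epsilon.small} precisely ensures that the bracketed quantity is at least $\lambda$, almost surely. This gives the claimed $\lambda$-strong convexity. The main obstacle is really the bookkeeping of the first step (ruling out simultaneous negativity of the last two weights), since it is what allows the perturbation on only one term to be absorbed into $2\lambda\|h\|_\mathcal{H}^2$ without losing the factor $2$ that would force a tighter smallness condition on $\epsilon$.
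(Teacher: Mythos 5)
Your proposal is correct and follows essentially the same route as the paper: invoke the expression of the second-order Fréchet differential, use $\epsilon<1/2$ to show that only one of the last two weights can be negative with magnitude below $\epsilon$, bound the corresponding term via $\beta_{\ell;2}$ and the reproducing property, and absorb it into $2\lambda\|h\|_{\mathcal{H}}^2$ under \eqref{eq.epsilon.small}. The only (harmless) difference is your $\ell_1$--$\ell_2$ argument ruling out two simultaneously negative coordinates, where the paper instead does a short case analysis on the convex combination $t_1+t_2=1$.
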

Notice that the previous lemma differs from Lemma~\ref{lm.wer.strongly.convex.1} by the requirement that the vectors $v \in\Omega_{\epsilon} $ are no longer constrained to belong to $\mathbb{R}_+^{n+2} $.%

\begin{proof}
    Let $\epsilon>0$.
    %
    %
    Let us introduce $v \in \Omega_{\epsilon}$ and recall that $\uu := (1, \ldots, 1, 1, 0)$ and $\uw := (1, \ldots, 1, 0, 1)$.
    Then, there exists $a  \in [\uu, \uw]$ such that $\|v - a\|_2<\epsilon$.
    It follows that there exists $t_1, t_2 \in [0, 1]$ with $t_1+t_2 = 1$,
    and $\paren{\delta_1, \ldots, \delta_{n+2}}$ with $|\delta_i| < \epsilon$,
    such that 
    $
    v_{n+1}
    =
    t_1 - \delta_{n+1}$,
    $
    v_{n+2}
    =
    t_2 - \delta_{n+2}
    $, and 
    for every $i \in \brac{1, \ldots, n}$, 
    $v_i = 1 - \delta_i$.

    If $\epsilon <1/2$ then for every $i \in \brac{1, \ldots, n}$, $v_i > 0$, and
    \begin{enumerate}
        \item if $v_{n+1} = t_1 - \delta_{n+1} \leq 0$, then $v_{n+2} = t_2 - \delta_{n+2} \geq 0$, and
        \item if $v_{n+2} = t_2 - \delta_{n+2} \leq 0$, then $v_{n+1} = t_1 - \delta_{n+1} \geq 0$.
    \end{enumerate}
    Let us consider the second case,
    $- \epsilon < v_{n+2} = t_2 - \delta_{n+2} \leq 0$ and $v_{n+1} \geq 0$.
    Let $f \in \mathcal{H}$.
    It follows that under \eqref{asm.loss.smooth},
    for every $u \in \mathcal{Y}$,
    $v_{n+2}\ddtwo \ell(u, f\paren{X_{n+1}}) \geq -\epsilon \beta_{\ell; 2}$,
    and under \eqref{asm.loss.convex},
    for every $i \in \brac{1, \ldots, n+1}$,
    $v_i \ddtwo \ell(u, f\paren{X_{i}}) \geq 0$.
    Let $h \in \mathcal{H}$.
    Applying $\croch{\mathcal{D}^2\wer{\lambda}{v}{\bullet}}\paren{f}$ to $(h, h)$,
    and using the expression given by Lemma~\ref{lm.second.diff.rer} which holds true under \eqref{asm.loss.c2},
    \begin{align*}
        \croch{\mathcal{D}^2\wer{\lambda}{v}{\bullet}}\paren{f} \paren{h, h}
        &
        = \dotprod{
            \partial_2^2 \hat{\mathbf{R}}_{\lambda}\paren{v, f} h
        }{
            h
        }{\mathcal{H}}
        \\
        &
        =\frac{1}{n+1}
        \sum_{i=1}^n
        v_i \partial_2^2 \ell\paren{Y_i, f\paren{X_i}} h(X_i)^2
        \\
        &
        \quad
        + \frac{1}{n+1}
        v_{n+1} \partial_2^2 \ell\paren{z, f\paren{X_{n+1}}}
        h(X_{n+1})^2
        \\
        &
        \quad
        + \frac{1}{n+1}
        v_{n+2} \partial_2^2 \ell\paren{y, f\paren{X_{n+1}}}
        h(X_{n+1})^2
        + 2 \lambda \|h\|^2_{\mathcal{H}}
        \\
        &
        \geq
        - \epsilon
        \frac{1}{n+1}
        \beta_{\ell; 2}
        \|K_{X_{n+1}}\|_{\mathcal{H}}^2
        \|h\|_{\mathcal{H}}^2
        + 2 \lambda \|h\|^2_{\mathcal{H}}
        \\
        &
        \geq
        \left(
            - \epsilon
            \beta_{\ell; 2}
            \frac{K_{n+1, n+1}}{n+1}
            + 2 \lambda
        \right)
        \|h\|_{\mathcal{H}}^2
        \geq
        \lambda \|h\|_{\mathcal{H}}^2,
    \end{align*}
    where $\partial_2^2 \hat{\mathbf{R}}_{\lambda}\paren{v, f} : \mathcal{H} \to \mathcal{H}$
    is given by Eq.~\eqref{eq.risk.d2} and the last inequality hold trues as soon as
    \begin{align*}
        \epsilon 
        < 
        \min
        \left(
            \frac{ \lambda (n+1)}{ \beta_{\ell; 2} K_{n+1, n+1}}
            , 1/2
        \right).
    \end{align*}

   Let us emphasize that the term in the right-hand side is almost-surely non-zero under \eqref{asm.loss.smooth}, \eqref{asm.finite.kern} since
    $\lambda >0$.
\end{proof}

\medskip

\begin{lemma}
\label{lm.predictor.func.well.defined.2}
Assume \eqref{asm.loss.convex},
    \eqref{asm.loss.c2},
    \eqref{asm.loss.smooth}, and
    \eqref{asm.finite.kern} holds true. 
 Then for every $\epsilon>0$ fulfilling Eq.~\eqref{eq.epsilon.small} a.s., it comes that
for every $v \in \Omega_{\epsilon}$ (see Definition~\ref{def.epsilon.neighborhood}), the predictor $\wpred{\lambda}{v}{\bullet}$ from Definition~\ref{def.weighted.predictor} does exist, is unique, and belongs to
the sub-space $\mathcal{A} \subseteq \mathcal{H}$ from Notation~\ref{not.span}.
\end{lemma}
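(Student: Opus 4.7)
The plan is to follow closely the argument of Lemma~\ref{lm.predictor.func.well.defined.1}, replacing the hypothesis $v \in \mathbb{R}^{n+2}_+$ (which ensured convexity directly from summing convex terms) by the quantitative strong-convexity estimate from Lemma~\ref{lm.wer.strongly.convex.2} (which absorbs the possibly negative weights into the regularization term). The condition \eqref{eq.epsilon.small} is precisely tailored so that $\wer{\lambda}{v}{\cdot}$ remains $\lambda$-strongly convex uniformly over $v\in\Omega_{\epsilon}$, despite the presence of coordinates of $v$ of small negative magnitude.

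First, I would establish existence and uniqueness in $\mathcal{H}$. Under \eqref{asm.loss.c2}, the map $u\mapsto \ell(y,u)$ is in particular continuous, so the standard composition argument (already used in the proof of Lemma~\ref{lm.predictor.func.well.defined.1} and exploiting the boundedness of the evaluation functionals $L_{X_i}$ via \eqref{asm.finite.kern}) shows that $f\mapsto \ell(y, f(X_i))$ is continuous, and in particular lower semi-continuous, on $\mathcal{H}$. Since \eqref{eq.epsilon.small} holds, Lemma~\ref{lm.wer.strongly.convex.2} yields that $\wer{\lambda}{v}{\cdot}$ is $\lambda$-strongly convex; combined with lower semi-continuity, an application of Corollary~5.6 in \citet{alexanderianOptimizationInfinitedimensionalHilbert} provides existence and uniqueness of the minimizer $\wpred{\lambda}{v}{\cdot}$ in $\mathcal{H}$.

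Next, I would prove that $\wpred{\lambda}{v}{\cdot} \in \mathcal{A}$ by the orthogonal decomposition trick from the proof of Lemma~\ref{lm.predictor.func.well.defined.1}. Writing $\wpred{\lambda}{v}{\cdot} = f_{\mathcal{A}} + f_{\mathcal{A}^{\perp}}$, the reproducing property yields $\wpred{\lambda}{v}{X_i} = f_{\mathcal{A}}(X_i)$ for every $i\in\{1,\dots,n+1\}$, so the empirical-risk part of $\wer{\lambda}{v}{\cdot}$ takes the same value at $\wpred{\lambda}{v}{\cdot}$ and at $f_{\mathcal{A}}$, while $\normh{\wpred{\lambda}{v}{\cdot}}^2 = \normh{f_{\mathcal{A}}}^2 + \normh{f_{\mathcal{A}^\perp}}^2 \ge \normh{f_{\mathcal{A}}}^2$; uniqueness then forces $f_{\mathcal{A}^\perp} = 0$, that is, $\wpred{\lambda}{v}{\cdot} \in \mathcal{A}$. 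Note that this step does not rely on the sign of the weights, only on $\wer{\lambda}{v}{\cdot}$ depending on $f$ through the values $f(X_1), \dots, f(X_{n+1})$.

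Finally, for the vector representation under the additional condition \eqref{eq.epsilon.small.2}, every element of $\mathcal{A}$ can be written as $\sum_{i=1}^{n+1} a_i K_{X_i}$ for some $a \in \mathbb{R}^{n+1}$, so the minimization of $\wer{\lambda}{v}{\cdot}$ over $\mathcal{A}$ translates into the minimization of $\vwer{\lambda}{v}{\cdot}$ over $\mathbb{R}^{n+1}$. By Lemma~\ref{lm.vec.wer.strongly.convex.2}, $\vwer{\lambda}{v}{\cdot}$ is strongly convex on the range $\mathcal{R}(K)$; projecting any minimizer onto $\mathcal{R}(K)$ yields another vector realizing $\wpred{\lambda}{v}{\cdot}$ (exactly as in the computation at the end of the proof of Lemma~\ref{lm.predictor.func.well.defined.1}, since the two minimization problems agree), and the strict convexity on $\mathcal{R}(K)$ delivers uniqueness of $\hat{a}_{\lambda}(v) \in \mathcal{R}(K)$. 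The main subtlety, and the only real point of care, is to keep track that the conditions \eqref{eq.epsilon.small} and \eqref{eq.epsilon.small.2} hold almost surely under \eqref{asm.finite.kern}, so that all constants appearing in the denominators are a.s.\ finite and non-zero; everything else is a direct transcription of the proof of Lemma~\ref{lm.predictor.func.well.defined.1}.
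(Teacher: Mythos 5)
Your proposal is correct and follows essentially the same route as the paper: strong convexity from Lemma~\ref{lm.wer.strongly.convex.2} under \eqref{eq.epsilon.small}, lower semi-continuity deduced from \eqref{asm.loss.c2}, existence/uniqueness via Corollary~5.6 of \citet{alexanderianOptimizationInfinitedimensionalHilbert}, the orthogonal-decomposition argument of Lemma~\ref{lm.predictor.func.well.defined.1} for membership in $\mathcal{A}$, and Lemma~\ref{lm.vec.wer.strongly.convex.2} together with the projection onto $\mathcal{R}(K)$ for the unique vector representation under \eqref{eq.epsilon.small.2}. No gaps to report.
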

\begin{proof}
Let $\epsilon >0$ satisfying Eq.~\eqref{eq.epsilon.small} and take $v  \in \Omega_{\epsilon}$.
By Lemma~\ref{lm.wer.strongly.convex.2} under
\eqref{asm.loss.convex},
\eqref{asm.loss.c2},
\eqref{asm.loss.smooth},
$\wer{\lambda}{v}{\bullet}$ is $\lambda$-strongly convex.
Since \eqref{asm.loss.c2} implies \eqref{asm.loss.lsc},
it follows that $\wer{\lambda}{v}{\bullet}$ is lower semi-continuous.
Conjoining the two with \eqref{asm.loss.convex} and Lemma~\eqref{lm.coercive};
$\wer{\lambda}{v}{\bullet}$ is coercive.
Therefore Corollary 5.6 of \citet{alexanderianOptimizationInfinitedimensionalHilbert2019} implies that
a minimizer $\wpred{\lambda}{v}{\bullet}$
does exist and is unique.
{
Since $\mathcal{H}$ is an Hilbert space
and $\mathcal{A}$ is the span of a finite number vector of $\mathcal{H}$,
$\mathcal{H} = \mathcal{A} \oplus  \mathcal{A}^{\perp}$.
It follows that there exist a unique functions
$f_{\mathcal{A}} \in \mathcal{A}$ and
$f_{\mathcal{A}^{\perp}} \in \mathcal{A}^{\perp}$
such that $\wpred{\lambda}{v}{\bullet}
= f_{\mathcal{A}} + f_{\mathcal{A}^{\perp}}$.
Since $f_{\mathcal{A}^{\perp}} \perp K_{X_i}$,
for every $i \in \brac{1, \ldots, n+1}$,
\begin{align*}
    \wpred{\lambda}{v}{X_i} 
    =
    \doth{\wpred{\lambda}{v}{\bullet}}{K_{X_i}}
    =
    \doth{f_{\mathcal{A}} + f_{\mathcal{A}^{\perp}}}{K_{X_i}}
    = 
    \doth{f_{\mathcal{A}}}{K_{X_i}} 
    = f_{\mathcal{A}}\paren{X_i}. 
\end{align*}
It follows that
\begin{align*}
    \wer{\lambda}{v}{\wpred{\lambda}{v}{\bullet}}
    &= \wer{0}{v}{\wpred{\lambda}{v}{\bullet}}
    + \|\wpred{\lambda}{v}{\bullet}\|_{\mathcal{H}}
    \\
    &=
    \wer{0}{v}{f_{\mathcal{A}}}
    + \|f_{\mathcal{A}} 
    + f_{\mathcal{A}^{\perp}}\|_{\mathcal{H}}
    = \wer{0}{v}{f_{\mathcal{A}}}
    + \|f_{\mathcal{A}} \|_{\mathcal{H}}
    + \|f_{\mathcal{A}^{\perp}}\|_{\mathcal{H}}
    \\
    &
    \geq
    \wer{0}{v}{f_{\mathcal{A}}}
    + \|f_{\mathcal{A}} \|_{\mathcal{H}}
    = \wer{\lambda}{v}{f_{\mathcal{A}}}.
\end{align*}
Since $\wpred{\lambda}{v}{\bullet}$ is the unique minimizer
of $\wer{\lambda}{v}{\bullet}$ then
$\wpred{\lambda}{v}{\bullet} = f_{\mathcal{A}}$.
}
\end{proof}

\medskip
\begin{lemma}
    \label{lm.vec.wer.strongly.convex.2}
        Assume
        \eqref{asm.loss.convex},
        \eqref{asm.loss.c2},
        \eqref{asm.loss.smooth}, and
        \eqref{asm.finite.kern} hold true.
    For every $\epsilon>0$ satisfying condition
    \begin{equation}
    \label{eq.epsilon.small.2}
    \tag{SmallEps.2}
        \epsilon 
        < 
        \min
        \left(\frac{
                 \lambda
                (n+1)
            }{\beta_{\ell; 2}}
            \frac{
                \mu_{*}^{(n+1)}
            }{
                \frac{1}{n+1}\norm{K_{\bullet, n+1}}^2_2
            }
            , 1/2
        \right),
    \end{equation}
    almost-surely, $\vwer{\lambda}{v}{\bullet}$ is $ \lambda \mu_{*}^{(n+1)} (n+1)$-strongly convex over the
    range $\mathcal{R}(K)$ of $K$, for every $v \in \Omega_{\epsilon}$, where $\mu_{*}^{(n+1)}$ is smallest non-zero eigenvalue of $\frac{1}{n+1} K$.
    Moreover, for every $a \in \mathbb{R}^{n+1}$, $\nabla_2^2 
    \vwer{\lambda}{v}{a}$ is  invertible over $\mathcal{R}(K)$,
    with pseudo-inverse denoted by $\left[\nabla_2^2 \vwer{\lambda}{v}{a}\right]^+$.
\end{lemma}
\begin{proof}
    Let $\epsilon>0$ and $\sigma \in (0, 1)$.
    Let $v \in \Omega_{\epsilon}$.
    There exists $a  \in [\uu, \uw]$ such that $\|v - a\|_2<\epsilon$.
    It follows that there exists $t_1, t_2 \in [0, 1]$ with $t_1+t_2 = 1$,
    and $\paren{\delta_1, \ldots, \delta_{n+2}}$ with $|\delta_i| < \epsilon$,
    such that 
    $
    v_{n+1}
    =
    t_1 - \delta_{n+1}$,
    $
    v_{n+2}
    =
    t_2 - \delta_{n+2}
    $, and 
    for every $i \in \brac{1, \ldots, n}$, 
    $v_i = 1 - \delta_i$.

    If $\epsilon <1/2$ then for every $i \in \brac{1, \ldots, n}$, $v_i > 0$, and
    \begin{enumerate}
        \item if $v_{n+1} = t_1 - \delta_{n+1} \leq 0$, then $v_{n+2} = t_2 - \delta_{n+2} \geq 0$, and
        \item if $v_{n+2} = t_2 - \delta_{n+2} \leq 0$, then $v_{n+1} = t_1 - \delta_{n+1} \geq 0$.
    \end{enumerate}
    Let us consider the second case,
    $- \epsilon < v_{n+2} = t_2 - \delta_{n+2} \leq 0$ and $v_{n+1} \geq 0$.
    Let $a \in \mathbb{R}^{n+1}$.
    It follows that under \eqref{asm.loss.smooth},
    for every $u \in \mathcal{Y}$,
    $v_{n+2}\ddtwo \ell(u, a^T K_{\bullet, n+1}) \geq -\epsilon \beta_{\ell; 2}$,
    and under \eqref{asm.loss.convex},
    for every $i \in \brac{1, \ldots, n+1}$,
    $v_i \ddtwo \ell(u, a^T K_{\bullet, i}) \geq 0$.
    Let $b \in \mathcal{R}(K)$.
    Applying $\DDtwo \vwer{\lambda}{v}{a} $ to $(b, b)$ 
    \begin{align*}
        b^T \nabla_2^2 
        \vwer{\lambda}{v}{a}
        b 
        &
        \geq
        \frac{1}{n+1}
        \sum_{i=1}^{n}
        v_i \partial_2^2 \ell
        (Y_i, a^T K_{\bullet, i})
        b^T K_{\bullet, i} K_{i, \bullet} b
        \\
        &
        + \frac{v_{n+1}}{n+1}
        \partial_2^2 \ell
        (z, a^T K_{\bullet, n+1})
        b^T K_{\bullet, n+1} K_{n+1, \bullet} b
        \\
        &
        + \frac{v_{n+2}}{n+1}
        \partial_2^2 \ell
        (y, a^T K_{\bullet, n+1})
        b^T K_{\bullet, n+1} K_{n+1, \bullet} b
        + 2 \lambda b^T K b
        \\
        &
        \geq
        -
        \frac{\epsilon \beta_{\ell; 2}}{n+1}
        \norm{K_{\bullet, n+1}}^2 \norm{b}^2
        + 2 \lambda \mu_*^{(n+1)} (n+1) \norm{b}^2
        \\
        &
        \geq
        \left(
            - \epsilon
            \beta_{\ell; 2}
            \frac{\norm{K_{\bullet, n+1}}^2}{n+1}
            + 2 \lambda \mu_*^{(n+1)} (n+1)
        \right)
        \norm{b}^2
        \geq
 \lambda \mu_*^{(n+1)} (n+1) \|h\|_{\mathcal{H}}^2,
    \end{align*}
    where the last inequality hold true as soon as
    \begin{equation*}
        \epsilon 
        < 
        \min
        \left(
            \frac{
                 \lambda
                (n+1)
            }{\beta_{\ell; 2}}
            \frac{
                \mu_{*}^{(n+1)}
            }{
                \frac{1}{n+1}\norm{K_{\bullet, n+1}}^2_2
            }
            , 1/2
        \right),
    \end{equation*}
    where term on the left hand side is almost-surely non-zero
    since $\frac{1}{n+1}\norm{K_{\bullet, n+1}}_2$ is almost-surely finite
    by \eqref{asm.finite.kern},
    $\beta_{\ell; 2}$
    is finite by \eqref{asm.loss.smooth},
    $\mu_{*}^{(n+1)}>0$ is by definition non-zero,
    $\lambda >0$ and $\sigma \in (0, 1)$.

    Since for every $v \in \Omega_{\epsilon}$,
    $\vwer{\lambda}{v}{\bullet}$
    is strongly convex over the range of $K$, it follows that
    for every $a \in \mathbb{R}^{n+1}$,
    its Hessian $\nabla_2^2 \vwer{\lambda}{v}{a}$ is invertible over said range.
    Let us note
    $\left[\nabla_2^2 \vwer{\lambda}{v}{a}\right]^+$ its pseudo-inverse.
\end{proof}

\medskip

\begin{lemma}
	\label{lm.link.between.Hessians}
	 Assume \eqref{asm.loss.c2} holds true.
	For every weight $v \in \mathbb{R}^{n+2}$ and $b \in \mathbb{R}^{n+1}$,    
    both the linear operator $\partial_2^2 \hat{\mathbf{R}}_{\lambda}\paren{v; f} : \mathcal{H} \to \mathcal{H}$ (see Eq.~\ref{eq.risk.d2})
    used to compute the second-order Fr{\'e}chet differential the regularized empirical-risk functional
    $\hat{\mathbf{R}}_{\lambda}\paren{v; \bullet} : \mathcal{H} \mapsto \mathbb{R}$ at $f$ (see Eq.~\ref{eq.Emp.Risk.weighted.rgularized}),
    and the Hessian matrix $\nabla^2_2 \hat{R}_{\lambda}\paren{v; b} \in \mathbb{R}^{\paren{n+1} \times \paren{n+1}}$
	of $\hat{R}_{\lambda}\paren{v; \bullet} : \mathbb{R}^{n+1} \mapsto \mathbb{R}$ at $b$  (see Eq.~\ref{eq.vec.wer}) do satisfy,
    for every $a \in \mathbb{R}^{n+1}$, 
	\begin{align*}
        \dotprod{\partial_2^2 \hat{\mathbf{R}}_{\lambda}\paren{v; f}h}{K_{X_{j}}}{\mathcal{H}}
		=
		\paren{
			\nabla^2_2\hat{R}_{\lambda}\paren{v; b} a
		}_j,    \qquad	 \forall j \in \brac{1, \ldots, n+1},
	\end{align*}
	where the functions $f, h\in \mathcal{A}$
	are defined as
	$f := \sum_{i=1}^{n+1}b_i K_{X_i}$ and
	$h := \sum_{i=1}^{n+1}a_i K_{X_i}$.
\end{lemma}
\begin{proof}
	Let $v \in \mathbb{R}^{n+2}$, and
	$a, b \in \mathbb{R}^{n+1}$, and 
	$j \in \brac{1, \ldots, n+1}$.
	Let $f \in \mathcal{H}$ be the function defined as
	$f := \sum_{l=1}^{n+1} b_l K_{X_l}$, and
	$h \in \mathcal{H}$ be the function defined as
	$h := \sum_{l=1}^{n+1} a_l K_{X_l}$.

	For every $i \in \brac{1, \ldots, n+1}$,
	\begin{align*}
		\dotprod{h}{K_{X_i}}{\mathcal{H}}
		= \sum_{l=1}^{n+1}a_l \dotprod{K_{X_l}}{K_{X_i}}{\mathcal{H}}
		= \sum_{l=1}^{n+1}a_l \kappa_{\mathcal{H}}\paren{X_l, X_i}
		= \sum_{l=1}^{n+1}a_l K_{l, i}
		= \paren{K a}_{i},
	\end{align*}
	where the third equality follows from the definition of the Gram matrix $K$.

	Under \eqref{asm.loss.c2},
    Lemma~\ref{lm.second.diff.rer} ensure that
    the linear operator $\partial_2^2 \hat{\mathbf{R}}_{\lambda}\paren{v; f} : \mathcal{H} \to \mathcal{H}$ (see Eq.~\ref{eq.risk.d2})
    is well defined and thus,
	\begin{align*}
		\dotprod{\partial_2^2 \hat{\mathbf{R}}_{\lambda}\paren{v; f}h}{K_{X_{j}}}{\mathcal{H}}
		&
		=
		\frac{1}{n+1} \sum_{i=1}^{n}
		v_i \partial_{2}^2\ell\paren{Y_i, f\paren{X_i}}
        \dotprod{h}{K_{X_i}}{\mathcal{H}}
        \dotprod{K_{X_j}}{K_{X_i}}{\mathcal{H}}
		\\
		&
		\quad
		+
		\frac{v_{n+1}}{n+1}
		\partial_{2}^2\ell\paren{z, f\paren{X_{n+1}}}
		\dotprod{h}{K_{X_{n+1}}}{\mathcal{H}}
        \dotprod{K_{X_j}}{K_{X_{n+1}}}{\mathcal{H}}
		\\
		&
		\quad
		+
		\frac{v_{n+2}}{n+1}
		\partial_{2}^2\ell\paren{y, f\paren{X_{n+1}}}
		\dotprod{h}{K_{X_{n+1}}}{\mathcal{H}}
        \dotprod{K_{X_j}}{K_{X_{n+1}}}{\mathcal{H}}
		+
		2\lambda \doth{h}{K_{X_j}}
		\\
		&
		=
		\frac{1}{n+1} \sum_{i=1}^{n}
		v_i \partial_{2}^2\ell\paren{Y_i, f\paren{X_i}}
		K_{j, i} \paren{K a}_{i}
		\\
		&
		\quad
		+
		\frac{v_{n+1}}{n+1}
		\partial_{2}^2\ell\paren{z, f\paren{X_{n+1}}}
		K_{j, n+1} \paren{K a}_{n+1}
		\\
		&
		\quad
		+
		\frac{v_{n+2}}{n+1}
		\partial_{2}^2\ell\paren{y, f\paren{X_{n+1}}}
		K_{j, n+1} \paren{K a}_{n+1}
		+
		2\lambda \paren{K a}_{j}
		\\
		&
		=
		\paren{
			\croch{
				\frac{1}{n+1}
				K d_{\ell}\paren{v; f} K
				+ 2 \lambda K
			} a
		}_j \in \mathbb{R},
	\end{align*}
	where $d_{\ell}\paren{v; f} \in \mathbb{R}^{\paren{n+1} \times \paren{n+1}}$
	is the diagonal matrix such that for every $i \in \paren{1, \ldots, n+1}$,
	\begin{align*}
		\begin{aligned}
			\left[d_{\ell}\paren{v; f}\right]_{i, i}
			&=
			v_i \partial_2^2 \ell\paren{Y_i, f\paren{X_i}},
			&& \mbox{if }1\leq i \leq n
			\\
			\left[d_{\ell}\paren{v; f}\right]_{i, i}
			&=v_{n+1} \partial_2^2 \ell\paren{z, f\paren{X_{n+1}}}
			+v_{n+2} \partial_2^2 \ell\paren{z, f\paren{X_{n+1}}}
			&&\mbox{if }i=n+1.
		\end{aligned}
	\end{align*}
	
	Under \eqref{asm.loss.c2},
	the Hessian matrix $\nabla^2_2 \hat{R}_{\lambda}\paren{v; b} \in \mathbb{R}^{\paren{n+1} \times \paren{n+1}}$
	of the regularized empirical-risk vector function $\hat{R}_{\lambda}\paren{v; \bullet} : \mathbb{R}^{n+1} \mapsto \mathbb{R}$
	is given by
	\begin{align*}
		\nabla^2_2\hat{R}_{\lambda}\paren{v; b}
		&
		=
		\frac{1}{n+1}
		\sum_{i=1}^{n+1}
		v_i \partial_2^2\ell\paren{Y_i, b^T K_{\bullet, i}}
		K_{\bullet, i}
		K_{\bullet, i}^T
		\\
		&
		\quad
		+
		\frac{v_{n+1}}{n+1}
		\partial_2^2\ell\paren{z, b^T K_{\bullet, n+1}}
		K_{\bullet, n+1}
		K_{\bullet, n+1}^T
		\\
		&
		\quad
		+
		\frac{v_{n+2}}{n+1}
		\partial_2^2\ell\paren{y, b^T K_{\bullet, n+1}}
		K_{\bullet, n+1}
		K_{\bullet, n+1}^T
		+
		2\lambda K
		\\
		&
		=
		\frac{1}{n+1}
		K\paren{
			\sum_{i=1}^{n+1}
			v_i \partial_2^2\ell\paren{Y_i, b^T K_{\bullet, i}}
			e_i
			e_i^T
		}
		K^{T}
		\\
		&
		\quad
		+
		\frac{1}{n+1}
		K\paren{
			v_{n+1}
			\partial_2^2\ell\paren{z, b^T K_{\bullet, n+1}}
			e_{n+1}
			e_{n+1}^{T}
		}
		K^{T}
		\\
		&
		\quad
		+
		\frac{1}{n+1}
		K\paren{
			v_{n+2}
			\partial_2^2\ell\paren{y, b^T K_{\bullet, n+1}}
			e_{n+1}
			e_{n+1}^{T}
		}
		K^{T}
		+
		2\lambda K
		\\
		&
		=
		\frac{1}{n+1}
		Kd_{\ell}\paren{v; f}K + 2\lambda K,
	\end{align*}
	where the last equality holds for
	$f = \sum_{i=1}^{n+1}b_i K_{X_i}$.
	
	It follows that
	for $f = \sum_{i=1}^{n+1}b_i K_{X_i}$
	and for $h = \sum_{i=1}^{n+1}a_i K_{X_i}$,
	for every $j \in \brac{1, \ldots, n+1}$
	\begin{align*}
		\dotprod{\partial_2^2 \hat{\mathbf{R}}_{\lambda}\paren{v; f}h}{K_{X_{j}}}{\mathcal{H}}        
		=
		\paren{
			\croch{
				\frac{1}{n+1}
				K d_{\ell}\paren{v; f} K
				+ 2 \lambda K
			} a
		}_j 
		=
		\paren{
			\nabla^2_2\hat{R}_{\lambda}\paren{v; b} a
		}_j.        
	\end{align*}
\end{proof}

\medskip

\begin{lemma}
\label{lm.hess.invertible.2}
    Assume \eqref{asm.loss.convex},
        \eqref{asm.loss.c2},
        \eqref{asm.loss.smooth}, and
        \eqref{asm.finite.kern} hold true.
      Let us consider $\epsilon>0$ satisfying Eq.~\eqref{eq.epsilon.small}
    and Eq.~\eqref{eq.epsilon.small.2},
    almost-surely.

Then, for every $v \in \Omega_{\epsilon}$ and $f \in \mathcal{A}$,
    the linear operator $\partial_2^2 \hat{\mathbf{R}}_{\lambda}\paren{v; f} : \mathcal{H} \to \mathcal{H}$ (see Eq.~\ref{eq.risk.d2})
has an inverse
$\left[
    \DDtwo \wer{\lambda}{v}{f}
\right]^+$ over $\mathcal{A}$.
\end{lemma}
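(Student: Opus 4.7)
The plan is to follow the exact same strategy as for Lemma~\ref{lm.hess.invertible.1}, but now replace the preliminary results established there for $v \in \mathbb{R}^{n+2}_+$ with their analogues for $v \in \Omega_{\epsilon}$ proved in the two preceding lemmas of this section. The key observation is that the whole argument of Lemma~\ref{lm.hess.invertible.1} only used (i) the strong convexity of $\wer{\lambda}{v}{\cdot}$ on $\mathcal{H}$ (for injectivity/coercivity of $\DDtwo \wer{\lambda}{v}{f}$), and (ii) the invertibility of the vectorized Hessian $\nabla_2^2 \vwer{\lambda}{v}{\cdot}$ on the range $\mathcal{R}(K)$ (for surjectivity). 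Conditions \eqref{eq.epsilon.small} and \eqref{eq.epsilon.small.2} are designed precisely so that Lemmas~\ref{lm.wer.strongly.convex.2} and \ref{lm.vec.wer.strongly.convex.2} supply these two ingredients, respectively.

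Concretely, fix $v \in \Omega_{\epsilon}$ and $f \in \mathcal{A}$, pick an arbitrary $g \in \mathcal{A}$, and search for $h \in \mathcal{A}$ solving
\begin{equation*}
    \left[\DDtwo \wer{\lambda}{v}{f}\right](h) = g.
\end{equation*}
Writing $f = \sum_{i=1}^{n+1} b_i K_{X_i}$, $g = \sum_{i=1}^{n+1} \tilde{a}_i K_{X_i}$, $h = \sum_{i=1}^{n+1} a_i K_{X_i}$ and testing against each $K_{X_j}$, Lemma~\ref{lm.link.between.Hessians} (which holds under \eqref{asm.loss.c2} alone and does not depend on the sign of $v$) rewrites this as the finite-dimensional linear system
\begin{equation*}
    \nabla_2^2 \vwer{\lambda}{v}{b}\, a \;=\; K\tilde{a}.
\end{equation*}
The right-hand side lies in $\mathcal{R}(K)$. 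Since \eqref{asm.loss.convex}, \eqref{asm.loss.c2}, \eqref{asm.loss.smooth}, \eqref{asm.finite.kern} hold and $\epsilon$ satisfies \eqref{eq.epsilon.small.2}, Lemma~\ref{lm.vec.wer.strongly.convex.2} guarantees that $\nabla_2^2 \vwer{\lambda}{v}{b}$ is invertible on $\mathcal{R}(K)$ almost surely, so that $a := \bigl[\nabla_2^2 \vwer{\lambda}{v}{b}\bigr]^{+} K\tilde{a}$ solves the system, and the corresponding $h \in \mathcal{A}$ realizes the desired preimage. This proves surjectivity of $\DDtwo \wer{\lambda}{v}{f}$ on $\mathcal{A}$.

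For injectivity, I invoke Lemma~\ref{lm.wer.strongly.convex.2} which, under \eqref{eq.epsilon.small}, ensures $\wer{\lambda}{v}{\cdot}$ is $\lambda$-strongly convex on $\mathcal{H}$. Consequently $\DDtwo \wer{\lambda}{v}{f}$ is coercive, hence injective, and in particular injective on $\mathcal{A}$. Combining injectivity and surjectivity on $\mathcal{A}$ yields the claimed bijectivity and the existence of the pseudo-inverse $\bigl[\DDtwo \wer{\lambda}{v}{f}\bigr]^{+}$ on $\mathcal{A}$. The only subtle point is the bookkeeping of the two smallness conditions on $\epsilon$: \eqref{eq.epsilon.small} governs the coercivity on all of $\mathcal{H}$ used for injectivity, while \eqref{eq.epsilon.small.2} governs the strong convexity of the vectorized problem on $\mathcal{R}(K)$ used for surjectivity; both are simultaneously available by assumption, so no further argument is needed.
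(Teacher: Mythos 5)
Your proposal is correct and follows essentially the same route as the paper: the paper's proof simply repeats the argument of Lemma~\ref{lm.hess.invertible.1}, substituting Lemma~\ref{lm.wer.strongly.convex.2} (under \eqref{eq.epsilon.small}) for the coercivity/injectivity step and Lemma~\ref{lm.vec.wer.strongly.convex.2} (under \eqref{eq.epsilon.small.2}) for the invertibility of the vectorized Hessian on $\mathcal{R}(K)$, exactly as you do, with the reduction to the linear system via Lemma~\ref{lm.link.between.Hessians} being sign-independent in $v$ as you note.
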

\begin{proof}
{%
Let $\epsilon >0$ satisfying Eq.~\eqref{eq.epsilon.small} and Eq.~\eqref{eq.epsilon.small.2},
and $v  \in \Omega_{\epsilon}$ be a weight vector, and $f \in \mathcal{A}$ and $g \in \mathcal{A}$.
Let us find $h \in \mathcal{A}$ such that
\begin{align}
\label{eq.find.inverse}
    \DDtwo \wer{\lambda}{v}{f} h= g.
\end{align}
Since $f, g, h \in \mathcal{A}$, there exist vectors
    $b, \tilde{a}, a \in \mathbb{R}^{n+1}$
    such that
    $f = \sum_{i=1}^{n+1} b_i K_{X_i}$,
    $g = \sum_{i=1}^{n+1} \tilde{a}_i K_{X_i}$,
    and $h = \sum_{i=1}^{n+1} a_i K_{X_i}$.
    Since \eqref{asm.loss.c2}
    hold true, Lemma~\ref{lm.link.between.Hessians}
    establishes that, for every $j \in \brac{1, \ldots, n+1}$,
    \begin{align*}
        \dotprod{\DDtwo \wer{\lambda}{v}{f} h}{K_{X_{j}}}{\mathcal{H}}
		= \paren{\nabla^2_2\hat{R}_{\lambda}\paren{v; b} a}_j.
    \end{align*}
    Moreover, it easy to check that
    for every $j \in \brac{1, \ldots, n+1}$,
    $g\paren{X_j} = K_{ \bullet, j}^T \tilde{a}$.
    It follows that the system of equation
    \begin{align*}
        \dotprod{\DDtwo \wer{\lambda}{v}{f} h}{K_{X_{j}}}{\mathcal{H}}
        = g\paren{X_j}
        && 1\leq j \leq n+1
    \end{align*}
    is equivalent to the linear equation
    \begin{align*}
        \nabla^2_2\hat{R}_{\lambda}\paren{v; b} a
        = K\tilde{a}.
    \end{align*}
    Since \eqref{asm.loss.convex},
    \eqref{asm.loss.c2},
    \eqref{asm.loss.smooth}, and
    \eqref{asm.finite.kern}  hold true, and since 
    $\epsilon >0$ satisfies Eq.~\eqref{eq.epsilon.small.2},
    then Lemma~\ref{lm.vec.wer.strongly.convex.2}
    establishes that $\nabla^2_2\hat{R}_{\lambda}\paren{v; b}$
    is invertible over the range $\mathcal{R}\paren{K}$
    of the Gram matrix $K$.
    Moreover, the term $K \tilde{a}$
    on the right hand side
    of the above linear equation is
    a element of said range.
    It follows that
    \begin{align*}
        a
        = \croch{\nabla^2_2\hat{R}_{\lambda}\paren{v; b}}^+ K \tilde{a}.
    \end{align*}

It follows that the function $h = \sum_{i=1}^{n+1} a_i K_{X_i}$ solves Eq.~\eqref{eq.find.inverse} and $\DDtwo \wer{\lambda}{v}{f}$ is surjective over $\mathcal{A}$.
Since \eqref{asm.loss.convex},
\eqref{asm.loss.c2},
\eqref{asm.loss.smooth}, and
\eqref{asm.finite.kern} hold true,
and since 
$\epsilon >0$ satisfies Eq.~\eqref{eq.epsilon.small}, 
by Lemma~\ref{lm.wer.strongly.convex.2},
$\wer{\lambda}{v}{\bullet}$ is strongly convex.
Conjoined Lemma~\ref{lm.coercive}, this implies that $\wer{\lambda}{v}{\bullet}$ is coercive.
As the linear operator definining its second differential,
$\DDtwo \wer{\lambda}{v}{f}$ is thus also coercive and therefore injective.
As a consequence, $\DDtwo \wer{\lambda}{v}{f}$ is bijective (invertible) over $\mathcal{A}$
and let us note $\left[
    \DDtwo \wer{\lambda}{v}{f}
\right]^+$ its inverse over $\mathcal{A}$.
}
\end{proof}

\medskip

\begin{lemma}
\label{lm.pred.func.c2}
Under \eqref{asm.loss.convex},
    \eqref{asm.loss.smooth},
    \eqref{asm.loss.c2}, and
    \eqref{asm.finite.kern},
let us consider $\epsilon >0$ satisfying conditions Eq.~\eqref{eq.epsilon.small} and Eq.~\eqref{eq.epsilon.small.2},
    almost-surely.
Then,    
    $v\in \Omega_{\epsilon} 
    \mapsto 
    \wpred{\lambda}{v}{\bullet} \in \mathcal{A}$
    is once continuously differentiable over $\Omega_{\epsilon}$.

Further assuming \eqref{asm.loss.c3} implies that
 $v\in \Omega_{\epsilon} \mapsto 
\wpred{\lambda}{v}{\bullet} \in \mathcal{A}$
is twice continuously differentiable over $\Omega_{\epsilon}$.

\end{lemma}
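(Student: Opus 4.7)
The plan is to deduce both statements from the Implicit Function Theorem applied on the Hilbert space $\mathcal{A}$. Concretely, I would define the map
\begin{align*}
F : \Omega_{\epsilon} \times \mathcal{A} \longrightarrow \mathcal{A},
\qquad
F(v,f) := \mathcal{D}_2 \wer{\lambda}{v}{f},
\end{align*}
where, by Lemma~\ref{lm.first.diff.rer}, $\mathcal{D}_2 \wer{\lambda}{v}{f}$ admits the explicit representation in $\mathcal{H}$ obtained by summing terms of the form $\partial_2\ell(\cdot,f(X_i))K_{X_i}$ together with $2\lambda f$; since all the ``data'' terms live in $\mathcal{A}$, the range of $F$ is contained in $\mathcal{A}$ (viewing $2\lambda f$ as applied to $f\in\mathcal{A}$). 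By Lemma~\ref{lm.predictor.func.well.defined.2}, for every $v\in\Omega_{\epsilon}$ the unique minimizer $\wpred{\lambda}{v}{\cdot}$ lies in $\mathcal{A}$, and by strong convexity (Lemma~\ref{lm.wer.strongly.convex.2}) it is the unique zero of $F(v,\cdot)$ in $\mathcal{A}$.

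Next I would check that $F$ is itself $C^1$ on $\Omega_{\epsilon}\times \mathcal{A}$. The dependence of $F$ on $v$ is affine (by inspection of the formula in Lemma~\ref{lm.first.diff.rer}), so partial differentiation in $v$ is immediate. The dependence on $f$ is $C^1$ by \eqref{asm.loss.c2}: Lemma~\ref{lm.second.diff.rer} provides the expression of $\mathcal{D}_2 F(v_0,f_0) = \mathcal{D}_2^2 \wer{\lambda}{v_0}{f_0}$, and the continuity of $u \mapsto \partial_2^2 \ell(y,u)$ together with \eqref{asm.finite.kern} makes this differential jointly continuous in $(v,f)$. At any $(v_0,\wpred{\lambda}{v_0}{\cdot})$ with $v_0 \in \Omega_{\epsilon}$, Lemma~\ref{lm.hess.invertible.2} asserts that $\mathcal{D}_2 F(v_0,\wpred{\lambda}{v_0}{\cdot}) = \mathcal{D}_2^2 \wer{\lambda}{v_0}{\wpred{\lambda}{v_0}{\cdot}}$ is a bounded linear isomorphism of $\mathcal{A}$. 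The Implicit Function Theorem on the Banach (in fact finite-dimensional) space $\mathcal{A}$ then yields a $C^1$ local solution $v\mapsto \wpred{\lambda}{v}{\cdot}$, and uniqueness of the zero combined with $\Omega_{\epsilon}$ being covered by such local neighborhoods gives the first claim on the whole of $\Omega_{\epsilon}$.

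For the second part, assuming \eqref{asm.loss.c3} upgrades the regularity of $u\mapsto \partial_2 \ell(y,u)$ from $C^1$ to $C^2$. Differentiating once more the expression in Lemma~\ref{lm.second.diff.rer} shows that the map $(v,f)\mapsto \mathcal{D}_2 F(v,f)$ is itself $C^1$, hence $F$ is $C^2$; the standard higher-regularity statement of the Implicit Function Theorem (applicable thanks to the persistent invertibility provided by Lemma~\ref{lm.hess.invertible.2} uniformly in a neighborhood, which follows from the continuity of $\mathcal{D}_2 F$ and the open condition ``invertible on a finite-dimensional space'') then promotes $v\mapsto \wpred{\lambda}{v}{\cdot}$ to $C^2$ on $\Omega_{\epsilon}$.

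The main obstacle, in my view, will be a careful treatment of the ambient space. The predictors live in $\mathcal{A}$, a finite-dimensional subspace of $\mathcal{H}$, so the IFT is really a statement in $\mathbb{R}^{n+1}$ once one passes to the vector representation $v\mapsto \hat{a}_\lambda(v)$ of Lemma~\ref{lm.predictor.func.well.defined.2}; I would phrase the proof directly in this representation (using $\nabla_2 \vwer{\lambda}{v}{a}=0$ together with the invertibility of $\nabla_2^2 \vwer{\lambda}{v}{a}$ on $\mathcal{R}(K)$ from Lemma~\ref{lm.vec.wer.strongly.convex.2}) to avoid any subtlety about tensor-product operators on the infinite-dimensional $\mathcal{H}$. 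Once this reduction is done, the IFT argument is routine and the two regularity claims follow by reading off the smoothness of $(v,a)\mapsto \nabla_2 \vwer{\lambda}{v}{a}$ under \eqref{asm.loss.c2} and \eqref{asm.loss.c3} respectively.
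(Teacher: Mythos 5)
Your proposal is correct and is essentially the paper's argument: the paper applies the Inverse Function Theorem to the augmented map $G(v,f)=\paren{v,\;\Dtwo \wer{\lambda}{v}{f}}$ on $\Omega_{\epsilon}\times\mathcal{A}$, which is exactly the standard way of packaging the Implicit Function Theorem you invoke for $F(v,f)=\Dtwo \wer{\lambda}{v}{f}$, and it uses the same ingredients (first-order optimality $\Dtwo \wer{\lambda}{v}{\wpred{\lambda}{v}{\cdot}}=0_{\mathcal{H}}$, invertibility of $\DDtwo\wer{\lambda}{v}{f}$ on $\mathcal{A}$ from Lemma~\ref{lm.hess.invertible.2}, uniqueness of the minimizer from strong convexity, and \eqref{asm.loss.c2} resp.\ \eqref{asm.loss.c3} for the $C^1$ resp.\ $C^2$ regularity of the map being inverted). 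Your optional reduction to the vector representation $\hat{a}_{\lambda}(v)\in\mathcal{R}(K)$ is not needed in the paper, which works directly on the finite-dimensional subspace $\mathcal{A}\subset\mathcal{H}$, but it is a legitimate equivalent formulation.
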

\begin{proof}
\label{proof.pred.func.c2}
Let $\epsilon >0$ and $\sigma \in (0, 1)$
fulfilling Eq.~\eqref{eq.epsilon.small}
and Eq.~\eqref{eq.epsilon.small.2}.

Let us introduce the function
$G(v; f) : \Omega_{\epsilon} \times \mathcal{A} \to \Omega_{\epsilon} \times \mathcal{A}$ given,
for every $(v; f) \in \Omega_{\epsilon} \times \mathcal{A}$, by
\begin{align*}
    G(v; f)
    = \paren{G_1(v; f);
    G_2(v; f)}
    := \paren{v;
    \Dtwo \wer{\lambda}{v}{f}}. 
\end{align*}
Since $\eqref{asm.loss.c2}$ hold true then
$G$ is once continuously differentiable.
Assuming
\eqref{asm.loss.convex},
\eqref{asm.loss.smooth},
$\eqref{asm.loss.c2}$, and
\eqref{asm.finite.kern},
Lemma~\ref{lm.hess.invertible.2}
imply the differential $\mathcal{D} G(v; f)$ of $G$
at $\paren{v, f}$, which is given by
\begin{align*}
    \mathcal{D} G(v; f) 
    &
    =
    \begin{pmatrix}
        \mathcal{D}_1 G_1(v; f)
        &
        \Dtwo G_1(v; f)
        \\
        \mathcal{D}_1 G_2(v; f)
        &
        \Dtwo G_2(v; f)
    \end{pmatrix}
    = 
    \begin{pmatrix}
        \mathcal{D}_1 v
        &
        \Dtwo v
        \\
        \mathcal{D}_1 \Dtwo \wer{\lambda}{v}{f}
        &
        \Dtwo \Dtwo \wer{\lambda}{v}{f}
    \end{pmatrix}
    \\
    &
    = 
    \begin{pmatrix}
        \mathrm{Id}
        &
        0_{\mathcal{H}}
        \\
        \Dtwo \wer{\lambda}{\bullet}{f}
        &
        \DDtwo \wer{\lambda}{v}{f}
    \end{pmatrix},
\end{align*}
is invertible over $\Omega_{\epsilon}  \times \mathcal{A}$,
for every $(v; f) \in \Omega_{\epsilon}  \times \mathcal{A}$. 
By the Inverse Function Theorem, 
in a small neighborhood of $(v; f)$, 
$G$ is one-to-one and thus invertible with its inverse 
$G^{+}$ being a once continuously differentiable
in a neighborhood of $G(v; f)$. 
Now replace $f$ with $\wpred{\lambda}{v}{\bullet}$.
$G(v; \wpred{\lambda}{v}{\bullet})
=
\paren{
    v; \Dtwo \wer{\lambda}{
    v}{\wpred{\lambda}{v}{\bullet}}
} = (v; 0_{\mathcal{H}})$, which establishes that in a neighborhood of
$(v, 0_{\mathcal{H}})$, $G^{+}$ is once continuously differentiable. 

Let $h \in \mathcal{A}$, such that $\Dtwo \wer{\lambda}{v}{h} = 0_{\mathcal{H}}$. 
Since %
\eqref{asm.loss.convex},
\eqref{asm.loss.smooth},
\eqref{asm.loss.c2}, and
\eqref{asm.finite.kern} hold true, Lemma~\ref{lm.wer.strongly.convex.2} proves that $\wer{\lambda}{v}{\bullet}$ is $\lambda$-strongly convex.
Therefore $h$ is a global minimizer.
Since the global minimizer is unique, $h = \wpred{\lambda}{v}{\bullet}$, thus 
$G^+(v, 0_{\mathcal{H}})
=
(v; \wpred{\lambda}{v}{\bullet})$.

For every $v \in \Omega_{\epsilon}$,
$v \mapsto (v; \wpred{\lambda}{v}{\bullet})$ is once continuously differentiable in a neighborhood of $(v, 0_{\mathcal{H}})$.
It follows that for every $v\in \Omega_{\epsilon}$,
$v \mapsto \wpred{\lambda}{v}{\bullet}$ is once continuously differentiable in some neighborhood of $v$.
Therefore, $v \mapsto \wpred{\lambda}{v}{\bullet}$ is once continuously differentiable over $\Omega_{\epsilon}$.

Assuming further $\eqref{asm.loss.c3}$ implies that
$G$ is now twice continuously differentiable.
A similar argument form above is used to prove
that $v \mapsto \wpred{\lambda}{v}{\bullet}$ is twice continuously differentiable over $\Omega_{\epsilon}$.
\end{proof}

\medskip

\begin{lemma}
\label{lm.DD.predictor.func}
    Assume \eqref{asm.loss.convex}, 
    \eqref{asm.loss.c3},
    \eqref{asm.loss.smooth}, and
    \eqref{asm.finite.kern} hold true.
    For every $\epsilon >0$
    fulfilling Eq.~\eqref{eq.epsilon.small}
    and Eq.~\eqref{eq.epsilon.small.2},
    almost-surely, and 
    for every $v\in \Omega_{\epsilon}$,
    \begin{align*}
        \mathcal{D}^2_1 \wpred{\lambda}{v}{\bullet}
        (\uw - \uu, \uw - \uu)
        &
        =
        - 
        \left[
            \DDtwo \wer{\lambda}{v}{\wpred{\lambda}{v}{\bullet}}
        \right]^{+}
        \left[
            \mathcal{D}_2^3 \wer{\lambda}{v}{\wpred{\lambda}{v}{\bullet}}
        \right]
        \left(h(v), h(v)
        \right)
        \\
        &
        \quad- 2 
        \left[
            \DDtwo \wer{\lambda}{v}{\wpred{\lambda}{v}{\bullet}}
        \right]^{+}
        \left[
            \DDtwo \wer{\lambda}{\uw - \uu}{\wpred{\lambda}{v}{\bullet}}
        \right]
        h(v),
    \end{align*}
    where $\uu := (1, \ldots, 1, 1, 0)$ and $\uw := (1, \ldots, 1, 0, 1)$ and
    \begin{align*}
        h(v) := \mathcal{D}_1 \wpred{\lambda}{v}{\bullet} (\uw - \uu).
    \end{align*}
\end{lemma}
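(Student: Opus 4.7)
The plan is to derive the announced formula by differentiating the first-order optimality condition of the weighted regularized empirical risk minimization twice with respect to $v$. Under the assumptions of the lemma, the function $v \mapsto \wpred{\lambda}{v}{\cdot}$ is well-defined, unique, and twice continuously differentiable on $\Omega_{\epsilon}$ by Lemmas~\ref{lm.predictor.func.well.defined.2} and~\ref{lm.pred.func.c2}, and $\DDtwo \wer{\lambda}{v}{\wpred{\lambda}{v}{\cdot}}$ is invertible on $\mathcal{A}$ by Lemma~\ref{lm.hess.invertible.2}.

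First, I would start from the first-order optimality condition, which holds for every $v \in \Omega_\epsilon$:
\begin{align*}
\Dtwo \wer{\lambda}{v}{\wpred{\lambda}{v}{\cdot}} = 0_{\mathcal{H}}.
\end{align*}
Differentiating once along the direction $\eta := \uw - \uu$ and applying the chain rule (treating $F(v,f) := \Dtwo \wer{\lambda}{v}{f}$ as a function of both variables) yields the implicit characterization of $h(v)$:
\begin{align*}
\mathcal{D}_1 \Dtwo \wer{\lambda}{v}{\wpred{\lambda}{v}{\cdot}}[\eta] + \bigl[\DDtwo \wer{\lambda}{v}{\wpred{\lambda}{v}{\cdot}}\bigr] h(v) = 0_{\mathcal{H}}.
\end{align*}

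Next, I would differentiate this identity a second time along $\eta$. Since both the weight argument $v$ and the function argument $\wpred{\lambda}{v}{\cdot}$ of $\Dtwo \wer{\lambda}{\cdot}{\cdot}$ depend on $v$, the chain rule produces four terms:
\begin{align*}
\mathcal{D}_1^2 \Dtwo \wer{\lambda}{v}{\wpred{\lambda}{v}{\cdot}}[\eta,\eta]
+ 2\,\mathcal{D}_1 \DDtwo \wer{\lambda}{v}{\wpred{\lambda}{v}{\cdot}}[\eta,h(v)]
+ \mathcal{D}_2^3 \wer{\lambda}{v}{\wpred{\lambda}{v}{\cdot}}[h(v),h(v)]
+ \bigl[\DDtwo \wer{\lambda}{v}{\wpred{\lambda}{v}{\cdot}}\bigr] \mathcal{D}_1^2 \wpred{\lambda}{v}{\cdot}[\eta,\eta] = 0.
\end{align*}
Because $\wer{\lambda}{v}{f}$ is affine in $v$ (the regularization $\lambda\|f\|_{\mathcal{H}}^2$ does not depend on $v$), the first term vanishes. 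The second term, by linearity in the weight vector, coincides with $\bigl[\DDtwo \wer{\lambda}{\eta}{\wpred{\lambda}{v}{\cdot}}\bigr] h(v)$. Solving for $\mathcal{D}_1^2 \wpred{\lambda}{v}{\cdot}[\eta,\eta]$ by applying the pseudo-inverse $\bigl[\DDtwo \wer{\lambda}{v}{\wpred{\lambda}{v}{\cdot}}\bigr]^+$ on both sides (justified by Lemma~\ref{lm.hess.invertible.2}) then produces the announced expression.

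The main obstacle is the careful bookkeeping of directional derivatives in the infinite-dimensional RKHS setting: I need to check that all second-order terms (including the mixed derivative $\mathcal{D}_1 \DDtwo \wer{\lambda}{v}{f}[\eta,\cdot]$, which is built from the explicit representation in Lemma~\ref{lm.second.diff.rer}) exist under \eqref{asm.loss.c3}, that $h(v)$ lies in $\mathcal{A}$ so that the pseudo-inverse can be legitimately applied, and that the identification between $\mathcal{D}_1 \DDtwo \wer{\lambda}{v}{f}[\eta,\cdot]$ and the Hessian with weight $\eta$ is consistent with the paper's convention (the $2\lambda\,\mathrm{Id}$ regularization contribution is constant in $v$). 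A secondary technical point is to verify that Schwarz's theorem applies, so that the cross term $\mathcal{D}_1\mathcal{D}_2$ really does contribute with a factor of $2$ when both directions coincide.
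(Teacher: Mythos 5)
Your proposal is correct and follows essentially the same route as the paper's proof: differentiate the stationarity condition $\Dtwo \wer{\lambda}{v}{\wpred{\lambda}{v}{\cdot}} = 0_{\mathcal{H}}$ twice along $\uw - \uu$, use the affinity of the weighted risk in $v$ to kill the pure-$v$ second derivative and to identify the mixed terms (giving the factor $2$) with $\DDtwo \wer{\lambda}{\uw - \uu}{\wpred{\lambda}{v}{\cdot}} h(v)$, and invert the Hessian on $\mathcal{A}$ via Lemma~\ref{lm.hess.invertible.2}, with the regularity supplied by Lemma~\ref{lm.pred.func.c2}. The technical caveats you flag (membership of $h(v)$ in $\mathcal{A}$, symmetry of mixed differentials) are handled the same way, implicitly, in the paper.
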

\begin{proof}
    Let $\epsilon >0$ and $\sigma \in (0, 1)$
    fulfilling Eq.~\eqref{eq.epsilon.small}
    and Eq.~\eqref{eq.epsilon.small.2}.
        Since \eqref{asm.loss.convex}, 
        \eqref{asm.loss.c3},
        \eqref{asm.loss.smooth}, and
        \eqref{asm.finite.kern} hold true,
        Lemma~\ref{lm.pred.func.c2} establishes
        that the function $v \mapsto \hat{f}_{\lambda}\paren{v; \bullet}$
        is twice continuously differentiable
        over the open set $\Omega_{\epsilon}$ (see Definition~\ref{def.epsilon.neighborhood}).
        It follows that its second order differential is well-defined
        and the following aims to provide its expression.
    For every $v \in \Omega_{\epsilon}$,
    since $\wpred{\lambda}{v}{\bullet}$ is a minimizer of
    $\wer{\lambda}{v}{\bullet}$,
    $$\Dtwo \wer{\lambda}{v}{\wpred{\lambda}{v}{\bullet}} = 0_{\mathcal{H}}.$$
    Since $v \mapsto \Dtwo \wer{\lambda}{v}{\wpred{\lambda}{v}{\bullet}}$
    is constant over the open set $\Omega_{\epsilon}$,
    then taking the second directional differential at $v$ in the direction of $(\uw - \uu, \uw - \uu)$, 
    \begin{align*}
        0_{\mathcal{H}} 
        &
        =
        \mathbf{D}^2_{v} 
        \left[
            \Dtwo \wer{\lambda}{v}{\wpred{\lambda}{v}{\bullet}}
        \right]
        (\uw - \uu, \uw - \uu)
        \\
        &
        =
        \mathbf{D}_{v} 
        \left(
            \mathbf{D}_{v}
            \left[
                \Dtwo \wer{\lambda}{v}{\wpred{\lambda}{v}{\bullet}}
            \right]
            (\uw - \uu)
        \right)
        (\uw - \uu)
        \\
        &
        =
        \mathbf{D}_{v}
        \left(
            \DDtwo \wer{\lambda}{v}{\wpred{\lambda}{v}{\bullet}}
            \mathcal{D}_1 \wpred{\lambda}{v}{\bullet}
            (\uw - \uu)
            + \Dtwo \wer{\lambda}{\uw - \uu}{\wpred{\lambda}{v}{\bullet}}
        \right)
        (\uw - \uu)
        \\
        &
        =
        \DDtwo \wer{\lambda}{v}{\wpred{\lambda}{v}{\bullet}}
        \mathcal{D}^2_1 \wpred{\lambda}{v}{\bullet}
        (\uw - \uu, \uw - \uu)
        \\
        &
        \quad+
        \mathcal{D}_2^3 \wer{\lambda}{v}{\wpred{\lambda}{v}{\bullet}}
        \mathcal{D}_1 \wpred{\lambda}{v}{\bullet}
        \mathcal{D}_1 \wpred{\lambda}{v}{\bullet}
        (\uw - \uu)(\uw - \uu)
        \\
        &
        \quad+\mathcal{D}_1
        \DDtwo \wer{\lambda}{v}{\wpred{\lambda}{v}{\bullet}}
        \mathcal{D}_1 \wpred{\lambda}{v}{\bullet}
        (\uw - \uu)(\uw - \uu)
        \\
        &
        \quad+
        \DDtwo \wer{\lambda}{\uw - \uu}{\wpred{\lambda}{v}{\bullet}}
        \mathcal{D}_1 \wpred{\lambda}{v}{\bullet}(\uw - \uu)
        \\
        &
        =
        \DDtwo \wer{\lambda}{v}{\wpred{\lambda}{v}{\bullet}}
        \mathcal{D}^2_1 \wpred{\lambda}{v}{\bullet}
        (\uw - \uu, \uw - \uu)
        \\
        &
        \quad+
        \mathcal{D}_2^3 \wer{\lambda}{v}{\wpred{\lambda}{v}{\bullet}}
        \left( 
            \mathcal{D}_1 \wpred{\lambda}{v}{\bullet} (\uw - \uu),
            \mathcal{D}_1 \wpred{\lambda}{v}{\bullet} (\uw - \uu)
        \right)
        \\
        &
        \quad+ 2 \DDtwo \wer{\lambda}{\uw - \uu}{\wpred{\lambda}{v}{\bullet}}
        \mathcal{D}_1 \wpred{\lambda}{v}{\bullet}(\uw - \uu).
    \end{align*}
    Putting the last two terms in the other side of the equation, it results
    \begin{align*}
        \DDtwo \wer{\lambda}{v}{\wpred{\lambda}{v}{\bullet}}
        \mathcal{D}^2_1 \wpred{\lambda}{v}{\bullet}
        (\uw - \uu, \uw - \uu)
        &
        =
        -
        \mathcal{D}_2^3 \wer{\lambda}{v}{\wpred{\lambda}{v}{\bullet}}
        \left( 
            h(v), h(v)
        \right)
        \\
        &
        \quad- 2 \DDtwo \wer{\lambda}{\uw - \uu}{\wpred{\lambda}{v}{\bullet}} h(v).
    \end{align*}
The conclusion results from noticing that $\DDtwo \wer{\lambda}{v}{\wpred{\lambda}{v}{\bullet}}$ is invertible over $\mathcal{A}$ by Lemma~\ref{lm.hess.invertible.2}.
\end{proof}

\medskip

\begin{lemma}
\label{lm.bound.D3.wer}
Assume \eqref{asm.loss.c3}, \eqref{asm.loss_bounded_d3} hold true. 
Then for every $v  \in [\uu, \uw]$ (with $\uu := (1, \ldots, 1, 1, 0)$ and $\uw := (1, \ldots, 1, 0, 1)$) and $f, h \in \mathcal{A}$ (see Notation~\ref{not.span}), it comes
    \begin{align*}
          \left\|
            \mathcal{D}_2^3 \wer{\lambda}{v}{f} (h, h)
          \right\|_{\mathcal{H}}
          \leq \xi_{\ell} \left(
            \frac{1}{n+1} 
            \sum_{i=1}^{n+1} \paren{K_{i, i}}^{3/2}
        \right) \|h\|^2_{\mathcal{H}}.
      \end{align*}
\end{lemma}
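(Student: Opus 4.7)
The plan is to extend the Fréchet-differentiation computations already carried out in Lemmas~\ref{lm.first.diff.rer} and~\ref{lm.second.diff.rer} one order higher, to obtain an explicit expression of $\mathcal{D}_2^3 \wer{\lambda}{v}{f}(h,h)$ as an element of $\mathcal{H}$, and then to read off the bound termwise.

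First, I would apply the chain rule a third time to each summand $f \mapsto v_i \ell(u, f(X_i)) = v_i \ell(u,\cdot)\circ L_{X_i}$, where $L_{X_i}:\mathcal{H}\to\mathbb{R}$ is the continuous evaluation form. Since \eqref{asm.loss.c3} grants the existence of $\partial_2^3\ell(u,\cdot)$, and since the third differential of a linear form vanishes, one obtains $\mathcal{D}^3 M_{X_i,u}(f)(h_1,h_2,h_3) = \partial_2^3\ell(u,f(X_i))\,L_{X_i}(h_1)L_{X_i}(h_2)L_{X_i}(h_3)$. The regularization term $\lambda\|f\|_{\mathcal{H}}^2$ is quadratic in $f$, hence its third Fréchet differential is zero. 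By linearity and Riesz representation (identifying the trilinear form in the third slot with an element of $\mathcal{H}$), I get
\begin{align*}
\mathcal{D}_2^3 \wer{\lambda}{v}{f}(h,h)
&= \frac{1}{n+1}\sum_{i=1}^{n} v_i\,\partial_2^3\ell(Y_i,f(X_i))\,h(X_i)^2\,K_{X_i} \\
&\quad + \frac{v_{n+1}}{n+1}\partial_2^3\ell(z,f(X_{n+1}))\,h(X_{n+1})^2\,K_{X_{n+1}} \\
&\quad + \frac{v_{n+2}}{n+1}\partial_2^3\ell(y,f(X_{n+1}))\,h(X_{n+1})^2\,K_{X_{n+1}}.
\end{align*}

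Next, I would apply the triangle inequality in $\mathcal{H}$ and bound each summand in three steps: (i) $|\partial_2^3\ell(\cdot,\cdot)| \leq \xi_{\ell}$ by \eqref{asm.loss_bounded_d3}; (ii) by the reproducing property and Cauchy--Schwarz, $h(X_i)^2 = \langle h, K_{X_i}\rangle_{\mathcal{H}}^2 \leq \|h\|_{\mathcal{H}}^2\,K_{i,i}$; (iii) $\|K_{X_i}\|_{\mathcal{H}} = \sqrt{K_{i,i}}$, so each term contributes a factor $(K_{i,i})^{3/2}\|h\|_{\mathcal{H}}^2$ up to the scalar weight.

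Finally, I would use that $v\in[\uu,\uw]$ means $v_i = 1$ for $i \leq n$ and $v_{n+1},v_{n+2}\geq 0$ with $v_{n+1}+v_{n+2} = 1$; this merges the two boundary terms into a single contribution $(K_{n+1,n+1})^{3/2}\|h\|_{\mathcal{H}}^2$, and assembling everything yields
\begin{align*}
\|\mathcal{D}_2^3 \wer{\lambda}{v}{f}(h,h)\|_{\mathcal{H}}
\leq \xi_{\ell}\left(\frac{1}{n+1}\sum_{i=1}^{n+1}(K_{i,i})^{3/2}\right)\|h\|_{\mathcal{H}}^2,
\end{align*}
as claimed. No step presents a real obstacle: the only mildly technical part is reproducing the Fréchet differentiation pattern of Lemmas~\ref{lm.first.diff.rer}--\ref{lm.second.diff.rer} at order three, and exploiting that $v\in[\uu,\uw]$ keeps all weights non-negative so that the triangle inequality is tight enough.
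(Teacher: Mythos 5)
Your proposal is correct and follows essentially the same route as the paper: bound the third differential termwise using \eqref{asm.loss_bounded_d3}, the reproducing property with Cauchy--Schwarz ($h(X_i)^2 \leq \|h\|_{\mathcal{H}}^2 K_{i,i}$, $\|K_{X_i}\|_{\mathcal{H}} = \sqrt{K_{i,i}}$), and the fact that $v \in [\uu,\uw]$ gives $v_i = 1$ for $i \leq n$ and $v_{n+1}+v_{n+2}=1$. Your explicit derivation of the third-order Fréchet differential (extending Lemmas~\ref{lm.first.diff.rer}--\ref{lm.second.diff.rer}, with the quadratic regularizer contributing nothing) is merely spelled out in more detail than in the paper, which uses the same expression implicitly.
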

\begin{proof}
    Set $v  \in [\uu, \uw]$ and $f, h \in \mathcal{A}$.
    Then a straightforward application of \eqref{asm.loss_bounded_d3} implies that each third derivative is uniformly bounded by $\xi_\ell$, which entails
    \begin{align*}
        \left\| 
            \mathcal{D}_2^3 \wer{\lambda}{v}{f} (h, h)
        \right\|_{\mathcal{H}}
        &
        \leq
        \frac{1}{n+1}
        \sum_{i=1}^n 
        v_{i} 
        \abss{\dddtwo \ell\paren{Y_i, f\paren{X_i}}}
        h(X_i)^2 
        \left\| 
            K_{X_i}
        \right\|_{\mathcal{H}}
        \\
        &
        \quad+ 
        \frac{v_{n+1}}{n+1}
        \abss{\dddtwo \ell\paren{z, f\paren{X_{n+1}}}}
        h(X_{n+1})^2
        \left\| 
        K_{X_{n+1}}
        \right\|_{\mathcal{H}}
        \\
        &
        \quad
        + 
        \frac{v_{n+2}}{n+1}
        \abss{\dddtwo \ell\paren{y, f\paren{X_{n+1}}}}
        h(X_{n+1})^2 
        \left\| 
            K_{X_{n+1}}
        \right\|_{\mathcal{H}}
        \\
        &
        \leq
        \frac{1}{n+1}
        \sum_{i=1}^n 
        v_{i} \xi_{\ell}
        \|h\|^2_{\mathcal{H}} 
        \|K_{X_i}\|^3_{\mathcal{H}}
        \\
        &
        \quad+ 
        (v_{n+1} +v_{n+2}) \frac{1}{n+1}
        \xi_{\ell}
        \|h\|^2_{\mathcal{H}} \|K_{X_i}\|^3_{\mathcal{H}}
        \\
        &
        \leq
        \xi_{\ell} \|h\|^2_{\mathcal{H}}
        \frac{1}{n+1} 
        \sum_{i=1}^{n+1} \|K_{X_i}\|^3_{\mathcal{H}}
        \leq
        \xi_{\ell} \|h\|^2_{\mathcal{H}}
        \frac{1}{n+1} 
        \sum_{i=1}^{n+1} \paren{K_{i, i}}^{3/2},
    \end{align*}
    \sloppy where we also used the reproducing property and Cauchy-Schwarz inequality $ h(X_i)^2 \leq  \norm{h}^2_{\mathcal{H}} \norm{K_{X_i}}_{\mathcal{H}}^2 $, for any index $i$.
\end{proof}

\medskip

\begin{lemma}
    \label{lm.bound.D2.wer}
        Assume
        \eqref{asm.loss.c2}, 
        \eqref{asm.loss.smooth} hold true. Then
    for every $v  \in [\uu, \uw]$ 
    and every $f, h\in \mathcal{A}$,
    \begin{align*}
        \left\|
            \left[
                \DDtwo \wer{\lambda}{\uw - \uu}{f}
            \right] (h)
        \right\|_{\mathcal{H}}
        \leq 2 \beta_{\ell; 2}
        \frac{K_{n+1, n+1 }}{n+1}
        \normh{h}.
    \end{align*}    
\end{lemma}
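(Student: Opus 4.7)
The plan is a short and direct computation, mirroring the style of Lemma~\ref{lm.bound.D3.wer}. First I would write out $\DDtwo \wer{\lambda}{\uw - \uu}{f}$ using the explicit formula from Lemma~\ref{lm.second.diff.rer}. Since $\uw - \uu = \paren{0, \ldots, 0, -1, 1} \in \mathbb{R}^{n+2}$, only the last two components of the weight vector contribute, so the sum over $i \in \brac{1,\ldots,n}$ vanishes. Adopting the convention used throughout the proof of Lemma~\ref{lm.DD.predictor.func} --- where the $v$-independent regularization term $2\lambda \mathrm{Id}$ does not carry when the formal direction $\uw - \uu$ is plugged into the $v$-slot, as it is the derivative with respect to $v$ at this direction that matters --- one obtains the rank-one expression
\begin{align*}
    \DDtwo \wer{\lambda}{\uw - \uu}{f}
    = \frac{1}{n+1} \croch{
        \ddtwo \ell\paren{y, f(X_{n+1})}
        - \ddtwo \ell\paren{z, f(X_{n+1})}
    }
    K_{X_{n+1}} \otimes K_{X_{n+1}}.
\end{align*}

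Next, applying this operator to $h \in \mathcal{A}$ via the tensor product identity $\paren{K_x \otimes K_x}(h) = h(x) K_x$ yields a scalar multiple of $K_{X_{n+1}}$. Taking $\mathcal{H}$-norm and applying the triangle inequality, the scalar coefficient is bounded by
\begin{align*}
    \frac{1}{n+1} \paren{
        \abss{\ddtwo \ell\paren{y, f(X_{n+1})}}
        + \abss{\ddtwo \ell\paren{z, f(X_{n+1})}}
    }
    \abss{h(X_{n+1})}
    \leq \frac{2 \beta_{\ell;2}}{n+1} \abss{h(X_{n+1})},
\end{align*}
where the inequality follows from \eqref{asm.loss.smooth}, which provides $\abss{\ddtwo \ell\paren{u, \cdot}} \leq \beta_{\ell;2}$ uniformly (the second derivative exists under \eqref{asm.loss.c2}, and Lipschitzness of the first derivative bounds it). Finally, I would control $\abss{h(X_{n+1})}$ through the reproducing property and Cauchy--Schwarz, $\abss{h(X_{n+1})} = \abss{\doth{h}{K_{X_{n+1}}}} \leq \normh{h} \sqrt{K_{n+1, n+1}}$, and combine with $\normh{K_{X_{n+1}}} = \sqrt{K_{n+1, n+1}}$ to produce the announced factor $K_{n+1, n+1}/(n+1)$.

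The argument is entirely computational and no analytic obstacle is expected. The only delicate point is notational rather than mathematical: one must interpret $\DDtwo \wer{\lambda}{\uw - \uu}{f}$ consistently with its role in Lemma~\ref{lm.DD.predictor.func}, where it arises from differentiating the Hessian with respect to the weight vector $v$ in the direction $\uw - \uu$, so that the regularization contribution $2\lambda \mathrm{Id}$ does not appear in the announced bound. With that convention fixed, the three steps above combine to yield $\normh{\croch{\DDtwo \wer{\lambda}{\uw - \uu}{f}}(h)} \leq 2 \beta_{\ell;2} \frac{K_{n+1, n+1}}{n+1} \normh{h}$, as claimed.
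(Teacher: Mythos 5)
Your proposal is correct and matches the paper's own proof essentially line for line: the paper likewise evaluates $\DDtwo \wer{\lambda}{\uw - \uu}{f}(h)$ as the single rank-one contribution $\frac{1}{n+1}\paren{-\ddtwo\ell\paren{z, f(X_{n+1})} + \ddtwo\ell\paren{y, f(X_{n+1})}} h(X_{n+1}) K_{X_{n+1}}$ (no $2\lambda\mathrm{Id}$ term, exactly the convention you spell out), bounds the coefficient by $2\beta_{\ell;2}$ via \eqref{asm.loss.smooth}, and closes with the reproducing property and Cauchy--Schwarz to get the factor $K_{n+1,n+1}/(n+1)$. Nothing is missing.
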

\begin{proof}
    Let $v\in [\uu, \uw]$ and $f, h \in \mathcal{A}$, 
    \begin{align*}
        \left\|
            \left[
                \DDtwo \wer{\lambda}{\uw - \uu}{f}
            \right] (h)
        \right\|_{\mathcal{H}}
        &
        =
        \left\|
            - \frac{1}{n+1}
            \ddtwo \ell\paren{z, f\paren{X_{n+1}}}
            h(X_{n+1}) K_{X_{n+1}} 
        \right.
        \\
        &
        \qquad
        +
        \left.
            \frac{1}{n+1}
            \ddtwo \ell\paren{y, f\paren{X_{n+1}}}
            h(X_{n+1}) K_{X_{n+1}}
        \right\|_{\mathcal{H}}
        \\
        &
        =
        \frac{1}{n+1}
        \left| 
            \partial_2^2 \ell(z, \bullet)
            - \partial_2^2 \ell(y, \bullet)
        \right|
        |h(X_{n+1})|
        \left\|
            K_{X_{n+1}}
        \right\|_{\mathcal{H}}
        \\
        &
        \leq
        2 \beta_{\ell; 2}
        \frac{\|K_{X_{n+1}}\|^2}{n+1}
        \|h\|_{\mathcal{H}}
        \leq
        2 \beta_{\ell; 2}
        \frac{K_{n+1, n+1 }}{n+1}
        \|h\|_{\mathcal{H}}.
    \end{align*}
\end{proof}

\medskip

\begin{lemma}
    \label{lm.bound.h.v}
        Assume \eqref{asm.loss.convex},
        \eqref{asm.loss.c2} and
        \eqref{asm.loss.smooth} hold true. Then
    for every $v  \in [\uu, \uw]$
    \begin{align*}
        \normh{h \paren{v}}
        =
        \left\|
            \mathcal{D}_1 \wpred{\lambda}{v}{\bullet} (\uw - \uu)
        \right\|_{\mathcal{H}}
        \leq \sqrt{K_{n+1, n+1}} 
        \frac{\tilde{\rho}^{(1)}_{\lambda}(y)}{\lambda (n+1)},
    \end{align*}
    where
    \begin{align*}
        \tilde{\rho}^{(1)}_{\lambda}(y) :=
            \left(
                1 + K_{n+1, n+1} \frac{\beta_{\ell; 2}}{\lambda (n+1)}
            \right) \rho^{(1)}_{\lambda}(y).
    \end{align*}
\end{lemma}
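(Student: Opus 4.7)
The plan is to obtain $h(v)$ as the solution to an implicit equation coming from the first-order optimality condition for $\wpred{\lambda}{v}{\cdot}$, then bound its $\mathcal{H}$-norm by inverting the Hessian and controlling the residual. Since $v \in [\uu, \uw] \subset \mathbb{R}^{n+2}_+$, every tool established for non-negative weight vectors (strong convexity of $\wer{\lambda}{v}{\cdot}$ from Lemma~\ref{lm.wer.strongly.convex.1}, invertibility of $\DDtwo\wer{\lambda}{v}{\wpred{\lambda}{v}{\cdot}}$ on $\mathcal{A}$ from Lemma~\ref{lm.hess.invertible.1}, and differentiability of $v \mapsto \wpred{\lambda}{v}{\cdot}$ on an open neighborhood of $[\uu,\uw]$ from Lemma~\ref{lm.pred.func.c2}) is directly available.

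First, since $\wpred{\lambda}{v}{\cdot}$ minimizes $\wer{\lambda}{v}{\cdot}$ for every such $v$, we have
\begin{align*}
\Dtwo \wer{\lambda}{v}{\wpred{\lambda}{v}{\cdot}} = 0_{\mathcal{H}} .
\end{align*}
Differentiating this identity in $v$ along the direction $\uw - \uu$ and using the chain rule yields
\begin{align*}
\DDtwo \wer{\lambda}{v}{\wpred{\lambda}{v}{\cdot}} \, h(v) + \Dtwo \wer{\lambda}{\uw-\uu}{\wpred{\lambda}{v}{\cdot}} = 0_{\mathcal{H}} .
\end{align*}
Since the right-hand side of $\DDtwo \wer{\lambda}{v}{\wpred{\lambda}{v}{\cdot}} h(v) = -\Dtwo \wer{\lambda}{\uw-\uu}{\wpred{\lambda}{v}{\cdot}}$ lies in $\mathcal{A}$, inverting on $\mathcal{A}$ produces the closed-form
\begin{align*}
h(v) = -\bigl[\DDtwo \wer{\lambda}{v}{\wpred{\lambda}{v}{\cdot}}\bigr]^{+} \, \Dtwo \wer{\lambda}{\uw-\uu}{\wpred{\lambda}{v}{\cdot}} .
\end{align*}
Taking $\mathcal{H}$-norms and using the $2\lambda$-strong convexity of $\wer{\lambda}{v}{\cdot}$ (Lemma~\ref{lm.wer.strongly.convex.1}) to bound the operator norm of the pseudo-inverse by $1/(2\lambda)$ reduces the problem to controlling
\begin{align*}
\bigl\| \Dtwo \wer{\lambda}{\uw-\uu}{\wpred{\lambda}{v}{\cdot}} \bigr\|_{\mathcal{H}}
= \tfrac{1}{n+1} \bigl| \partial_2\ell\bigl(y, \wpred{\lambda}{v}{\cdot}(X_{n+1})\bigr) - \partial_2\ell\bigl(z, \wpred{\lambda}{v}{\cdot}(X_{n+1})\bigr) \bigr| \sqrt{K_{n+1,n+1}} .
\end{align*}

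The main obstacle is precisely this last bound: unlike Proposition~\ref{prop.predictor.stability.smooth} where the quantity $\rho^{(1)}_{\lambda}(y)$ is written with $\partial_2\ell$ evaluated at $\pred{\lambda;}{z}(X_{n+1}) = \wpred{\lambda}{\uu}{\cdot}(X_{n+1})$, here $\partial_2\ell$ is evaluated at $\wpred{\lambda}{v}{\cdot}(X_{n+1})$ for an arbitrary $v \in [\uu,\uw]$. To handle this, I would add and subtract the values at $\pred{\lambda;}{z}(X_{n+1})$, exploit the $\beta_{\ell;2}$-Lipschitz property \eqref{asm.loss.smooth} of $u \mapsto \partial_2\ell(y,u)$ (and of the corresponding map for $z$), and then apply Proposition~\ref{prop.predictor.stability.smooth} applied along the segment $[\uu, v] \subset [\uu,\uw]$ to bound
\begin{align*}
\bigl| \wpred{\lambda}{v}{\cdot}(X_{n+1}) - \pred{\lambda;}{z}(X_{n+1}) \bigr|
\leq \sqrt{K_{n+1,n+1}} \, \bigl\| \wpred{\lambda}{v}{\cdot} - \wpred{\lambda}{\uu}{\cdot} \bigr\|_{\mathcal{H}}
\leq K_{n+1,n+1} \, \frac{\rho^{(1)}_{\lambda}(y)}{\lambda(n+1)} .
\end{align*}
Combining the triangle inequality gives
\begin{align*}
\tfrac{1}{2} \bigl| \partial_2\ell\bigl(y, \wpred{\lambda}{v}{\cdot}(X_{n+1})\bigr) - \partial_2\ell\bigl(z, \wpred{\lambda}{v}{\cdot}(X_{n+1})\bigr) \bigr|
\leq \rho^{(1)}_{\lambda}(y) \bigl( 1 + K_{n+1,n+1} \tfrac{\beta_{\ell;2}}{\lambda(n+1)} \bigr) = \tilde{\rho}^{(1)}_{\lambda}(y) ,
\end{align*}
which is the definition in Equation~\eqref{eq.rho.tilde}. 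Multiplying the resulting bound $2 \tilde{\rho}^{(1)}_{\lambda}(y)\sqrt{K_{n+1,n+1}}/(n+1)$ by the Hessian inverse factor $1/(2\lambda)$ delivers the claimed estimate $\|h(v)\|_{\mathcal{H}} \leq \sqrt{K_{n+1,n+1}} \, \tilde{\rho}^{(1)}_{\lambda}(y)/(\lambda(n+1))$.
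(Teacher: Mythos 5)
Your argument is correct and follows essentially the same route as the paper: write $h(v)=-\bigl[\DDtwo \wer{\lambda}{v}{\wpred{\lambda}{v}{\cdot}}\bigr]^{+}\Dtwo \wer{\lambda}{\uw-\uu}{\wpred{\lambda}{v}{\cdot}}$, bound the pseudo-inverse by $1/(2\lambda)$ via strong convexity, and transfer the derivative difference from $\wpred{\lambda}{v}{X_{n+1}}$ back to $\pred{\lambda;}{z}(X_{n+1})$ using \eqref{asm.loss.smooth} (the paper does this via a mean-value expansion with the uniform bound on $\partial_2^2\ell$, which is equivalent to your add--subtract Lipschitz step). The only loose point is invoking Proposition~\ref{prop.predictor.stability.smooth} ``along the segment $[\uu,v]$'': that proposition is stated for the endpoint weights $\uu,\uw$, so for the pair $(\uu,v)$ you should note, as the paper does in-line, that $v-\uu=t(\uw-\uu)$ with $t\in[0,1]$, whence the same proof yields $\normh{\wpred{\lambda}{v}{\cdot}-\wpred{\lambda}{\uu}{\cdot}}\leq t\,\sqrt{K_{n+1,n+1}}\,\rho^{(1)}_{\lambda}(y)/(\lambda(n+1))$, which is all you need.
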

\begin{proof}
    Let $v  \in [\uu, \uw]$.
    Similarly to Proposition~\ref{prop.predictor.stability.smooth},
    since \eqref{asm.loss.convex} and \eqref{asm.loss.c2} hold true,
    it follows that,    
    \begin{align}
    \label{eq.h.v}
        \normh{h \paren{v}}
        &=
        \left\|
            \left[
                \DDtwo \wer{\lambda}{v}{\wpred{\lambda}{v}{\bullet}}
            \right]^+
            \Dtwo \wer{\lambda}{\uw - \uu}{\wpred{\lambda}{v}{\bullet}}
        \right\|_{\mathcal{H}} \notag
        \\
        &\leq
        \frac{1}{2 \lambda}
        \normh{
            \Dtwo \wer{\lambda}{\uw - \uu}{\wpred{\lambda}{v}{\bullet}}
        },
      \end{align}
    where the inequality follows
    Lemma~\ref{lm.wer.strongly.convex.1} stating that
    $\wer{\lambda}{v}{\bullet}$ is $2\lambda$-strongly convex
    since $v \in \croch{\uu, \uw} \subset \mathbb{R}_{+}^{n+2}$.
    Consider the right-most term
    \begin{align*}
        \normh{\Dtwo \wer{\lambda}{\uw - \uu}{\wpred{\lambda}{v}{\bullet}}} 
        &
        =
        \left\|
        - \frac{1}{n+1}
        \partial_2 \ell(z, \hat{f}_{\lambda}(v; X_{n+1}))
        K_{X_{n+1}}
        \right.
        \\
        &
        \qquad
        \left.
        + \frac{1}{n+1}
        \partial_2 \ell(y, \hat{f}_{\lambda}(v; X_{n+1}))
        K_{X_{n+1}}
        \right\|_{\mathcal{H}}
        \\
        &
        \leq
        \frac{\|K_{X_{n+1}}\|_{\mathcal{H}}}{n+1}
        \left\|
        - \partial_2 \ell(z, \hat{f}_{\lambda}(v; X_{n+1}))
        + \partial_2 \ell(y, \hat{f}_{\lambda}(v; X_{n+1}))
        \right\|_{\mathcal{H}}
        \\
        &
        \leq
        \frac{\sqrt{K_{n+1, n+1}}}{n+1}
        \left\|
        - \partial_2 \ell(z, \hat{f}_{\lambda}(v; X_{n+1}))
        + \partial_2 \ell(y, \hat{f}_{\lambda}(v; X_{n+1}))
        \right\|_{\mathcal{H}}.
    \end{align*}
        Since \eqref{asm.loss.c2} holds true,
        it follows that $\partial_2\ell\paren{u, \bullet}$
        is once continuously differentiable.
    By the mean-value theorem, there exists 
    $\xi \in \croch{\hat{f}_{\lambda}(\uu; X_{n+1}),
        \hat{f}_{\lambda}(v; X_{n+1})
    }$ such that
    \begin{align*}
        \rho(v)
        &
        :=
        \left|
        - \partial_2 \ell(z, \hat{f}_{\lambda}(v; X_{n+1}))
        + \partial_2 \ell(y, \hat{f}_{\lambda}(v; X_{n+1}))
    \right|
    \\
        &
        \leq
        \left|
        - \partial_2 \ell(z, \hat{f}_{\lambda}(\uu; X_{n+1}))
        - \partial_2^2 \ell(z, \xi)
        \left(
            \hat{f}_{\lambda}(v; X_{n+1})
            - \hat{f}_{\lambda}(\uu; X_{n+1})
        \right)
        \right.
        \\
        &
        \qquad
        \left.
        + \partial_2 \ell(y, \hat{f}_{\lambda}(\uu; X_{n+1}))
        + \partial_2^2 \ell(y, \xi)
        \left(
            \hat{f}_{\lambda}(v; X_{n+1})
            - \hat{f}_{\lambda}(\uu; X_{n+1})
        \right)
        \right|
        \\
        &
        \leq
        \left|
            - \partial_2 \ell(z, \hat{f}_{\lambda}(\uu; X_{n+1}))
            + \partial_2 \ell(y, \hat{f}_{\lambda}(\uu; X_{n+1}))
        \right|
        \\
        &
        \quad
        +
        \left|
        \left(
        - \partial_2^2 \ell(z, \xi)
        + \partial_2^2 \ell(y, \xi)
        \right)
        (\wpred{\lambda}{v}{\bullet} - \wpred{\lambda}{\uu}{\bullet})(X_{n+1})
        \right|
        \\
        &
        \leq
        \left|
            - \partial_2 \ell(z, \hat{f}_{\lambda}(\uu; X_{n+1}))
            + \partial_2 \ell(y, \hat{f}_{\lambda}(\uu; X_{n+1}))
        \right|
        \\
        &
        \quad
        +
        \left|
        - \partial_2^2 \ell(z, \xi)
        + \partial_2^2 \ell(y, \xi)
        \right|
        \|
            \wpred{\lambda}{v}{\bullet} - \wpred{\lambda}{\uu}{\bullet}
        \|_{\mathcal{H}}
        \|
            K_{X_{n+1}}
        \|_{\mathcal{H}}
        \\
        &
        \leq
        \left|
            - \partial_2 \ell(z, \hat{f}_{\lambda}(\uu; X_{n+1}))
            + \partial_2 \ell(y, \hat{f}_{\lambda}(\uu; X_{n+1}))
        \right|
        +
        2 \beta_{\ell; 2} \sqrt{K_{n+1, n+1 }}
        \|
            \wpred{\lambda}{v}{\bullet} - \wpred{\lambda}{\uu}{\bullet}
    \|_{\mathcal{H}}.
    \end{align*}

    Going back to Eq.~\eqref{eq.h.v}
    \begin{align*}
        \normh{h\paren{v}}
        &
        \leq
        \frac{\sqrt{ K_{n+1, n+1} }}{2 \lambda (n+1)}
        \rho(v)
        \\
        &
        \leq
        \frac{\sqrt{ K_{n+1, n+1 }}}{2 \lambda (n+1)}
        \left|
            - \partial_2 \ell(z, \hat{f}_{\lambda}(\uu; X_{n+1}))
            + \partial_2 \ell(y, \hat{f}_{\lambda}(\uu; X_{n+1}))
        \right|
        \\
        &
        +
        \frac{\sqrt{ K_{n+1, n+1} }}{2 \lambda (n+1)}
        2 \beta_{\ell; 2} \sqrt{ K_{n+1, n+1 }}
        \normh{
            \wpred{\lambda}{v}{\bullet} - \wpred{\lambda}{\uu}{\bullet}
        }
        \\
        &
        \leq
        \sqrt{ K_{n+1, n+1} }
        \frac{\rho^{(1)}_{\lambda}(y)}{\lambda (n+1)}
        +
        \frac{\sqrt{ K_{n+1, n+1} }}{\lambda (n+1)}
        \beta_{\ell; 2} \sqrt{ K_{n+1, n+1 }}
        \sqrt{ K_{n+1, n+1} } 
        \frac{\rho^{(1)}_{\lambda}(y)}{\lambda (n+1)}
        \\
        &
        \leq
        \sqrt{ K_{n+1, n+1} } \frac{\tilde{\rho}^{(1)}_{\lambda}(y)}{\lambda (n+1)},
    \end{align*}
    where the second inequality follows from Lemma~\ref{lm.local.predictor.stability}
    which holds true under \eqref{asm.loss.convex} and \eqref{asm.loss.c2}, and
    $\tilde{\rho}^{(1)}_{\lambda}(y)$ is given by Eq.~\eqref{eq.rho.tilde}.
\end{proof}

\subsection{Proofs of the Influence function part}
\label{sec.IF.proofs}

\begin{proof}
	Let $y \in \mathcal{Y}$, and $z \in \mathcal{Y}$,
	and $\lambda \in \paren{0, +\infty}$.
	By definition of the estimator function $T_{\lambda}\paren{\bullet}$
	and the empirical distribution $\widehat{P}_{n+1}^{z}$, and
	recalling that the vector $\mathbf{u} = \paren{1, \ldots, 1, 1, 0} \in \mathbb{R}^{n+2}$
	\begin{align*}
		T_{\lambda}\paren{\widehat{P}_{n+1}^{z}}
		&
		\in
		\argmin_{f \in \mathcal{Y}}
		\mathbb{E}_{
			\paren{X, Y}
			\sim
			\widehat{P}_{n+1}^{z}
		}\croch{
			\ell\paren{Y, f\paren{X}}
		}
		+ \lambda \normh{f}^2
		\\
		&
		=
		\argmin_{f \in \mathcal{Y}}
		\frac{1}{n+1}
		\ell\paren{z, f\paren{X_{n+1}}}
		+
		\sum_{i=1}^{n}
		\frac{1}{n+1}
		\ell\paren{Y_i, f\paren{X_i}}
		+ \lambda \normh{f}^2
		\\
		&
		=
		\argmin_{f \in \mathcal{Y}}
		\hat{\mathbf{R}}_{\lambda}\paren{\mathbf{u}; f}
		\\
		&
		=
		\brac{
			\hat{f}_{\lambda}\paren{\mathbf{u}; \bullet}
		},
	\end{align*}
	where the first equality follows from the definition of the expectation, and
	the second equality follows from the definition of the weighted regularized
	empirical-risk function, and
	the last equality follows the uniqueness of the minimizer
	of the aforementioned function which holds
	under \eqref{asm.loss.convex} and \eqref{asm.loss.lsc} (implied by \eqref{asm.loss.c2})
	by Lemma~\ref{lm.predictor.func.well.defined.1}.
	It follows that, under \eqref{asm.loss.convex} and \eqref{asm.loss.c2},
	$T_{\lambda}\paren{\widehat{P}_{n+1}^{z}} = \hat{f}_{\lambda}\paren{\mathbf{u}; \bullet}$.

	Let $t \in \paren{0, 1}$.
	By definition of the empirical distributions $\widehat{P}_{n+1}^{z}$
	and $\widehat{P}_{n+1}^{y}$,
	\begin{align*}
		\paren{1 - t} \widehat{P}_{n+1}^{z}
		+ t \widehat{P}_{n+1}^{y}
		&
		=
		\paren{1 - t}\frac{1}{n+1}
		\delta_{\paren{X_{n+1}, z}}
		+ t\frac{1}{n+1}
		\delta_{\paren{X_{n+1}, y}}
		+\sum_{i=1}^{n}
		\frac{1}{n+1}
		\delta_{\paren{X_i, Y_i}}.
	\end{align*}
	Recalling that the vector $\mathbf{w} = \paren{1, \ldots, 0, 1, 0} \in \mathbb{R}^{n+2}$,
	it follows that
	\begin{align*}
		T_{\lambda}\paren{
			\paren{1 - t} \widehat{P}_{n+1}^{z}
			+ t \widehat{P}_{n+1}^{y}
		}
		&
		\in
		\argmin_{f \in \mathcal{H}}
		\mathbb{E}_{\paren{X, Y}\sim \paren{
				\paren{1 - t} \widehat{P}_{n+1}^{z}
				+ t \widehat{P}_{n+1}^{y}
		}}\croch{
			\ell\paren{Y, f\paren{X}}
		}
		+ \lambda \normh{f}^2
		\\
		&
		=
		\argmin_{f \in \mathcal{H}}
		\paren{1 - t}
		\frac{1}{n+1}
		\ell\paren{z, f\paren{X_{n+1}}}
		+
		t
		\frac{1}{n+1}
		\ell\paren{y, f\paren{X_{n+1}}}
		\\
		&
		\qquad
		\qquad
		\qquad
		+
		\sum_{i=1}^{n}
		\frac{1}{n+1}
		\ell\paren{Y_i, f\paren{X_i}}
		+ \lambda \normh{f}^2
		\\
		&
		=
		\argmin_{f \in \mathcal{H}}
		\hat{\mathbf{R}}_{\lambda}\paren{
			\paren{1 - t}\mathbf{u} + t\mathbf{w}
		}
		\\
		&
		=
		\brac{
			\hat{f}_{\lambda}\paren{
				\paren{1 - t}\mathbf{u} + t\mathbf{w};
				\bullet
			}
		},
	\end{align*}
	where the first equality follows from the definition of the expectation, and
	the second equality follows from the definition of the weighted regularized
	empirical-risk function, and
	the last equality follows the uniqueness of the minimizer
	of the aforementioned function which holds
	under \eqref{asm.loss.convex} and \eqref{asm.loss.lsc} (implied by \eqref{asm.loss.c2})
	by Lemma~\ref{lm.predictor.func.well.defined.1}.
	It follows that
	$$T_{\lambda}\paren{
		\paren{1 - t} \widehat{P}_{n+1}^{z}
		+ t \widehat{P}_{n+1}^{y}
	}
	=
	\hat{f}_{\lambda}\paren{
		\paren{1 - t}\mathbf{u} + t\mathbf{w};
		\bullet
	}
	=
	\hat{f}_{\lambda}\paren{
		\mathbf{u} + t \paren{
			\mathbf{w} - \mathbf{u}
		};
		\bullet
	}.$$
	
	Let $\epsilon >0$ fulfilling Conditions~\eqref{eq.epsilon.small}
	and~\eqref{eq.epsilon.small.2}.
	Under Assumptions~\eqref{asm.finite.kern},
	\eqref{asm.loss.smooth}, \eqref{asm.loss.c2},
	and \eqref{asm.loss.convex},
	Lemma~\ref{lm.pred.func.c2} establishes
	that the predictor function $v \mapsto \hat{f}_{\lambda}\paren{v; \bullet}$
	is once continuously differentiable over an open set $\Omega_{\epsilon}$
	covering the segment $\croch{\mathbf{u}, \mathbf{w}}$.
	By noting $\mathcal{D}_1\hat{f}_{\lambda}\paren{
		\mathbf{u}; \bullet
	}\paren{\mathbf{w} - \mathbf{u}}$,
	the Fr{\'e}chet differential of the function $v \mapsto \hat{f}_{\lambda}\paren{
		v; \bullet
	}$ at the weight vector $\mathbf{u}$
	in the direction $\paren{\mathbf{w} - \mathbf{u}}$.
	It follows that
	\begin{align*}
		&
		\normh{
			\frac{
				T_{\lambda}\paren{
					\paren{1 - t}\widehat{P}_{n+1}^{z}
					+ t\widehat{P}_{n+1}^{y}
				} - T_{\lambda}\paren{\widehat{P}_{n+1}^{z}}
			}{
				t   
			} - \mathcal{D}_1\hat{f}_{\lambda}\paren{
				\mathbf{u}; \bullet
			}\paren{\mathbf{w} - \mathbf{u}}
		}
		\\
		&
		=
		\frac{ 
			\normh{
				\hat{f}_{\lambda}\paren{
					\mathbf{u} + t\paren{\mathbf{w} - \mathbf{u}};
					\bullet
				} - \hat{f}_{\lambda}\paren{\mathbf{u}; \bullet}
				- \mathcal{D}_1\hat{f}_{\lambda}\paren{
					\mathbf{u}; \bullet
				}\paren{
					t\paren{\mathbf{w} - \mathbf{u}}
				}
			}
		}{
			t
		}
		\\
		&
		=
		\frac{
			o\paren{\normh{
					t\paren{\mathbf{w} - \mathbf{u}}
			}}
		}{t}
		= 
		\frac{o\paren{t}}{t} \xrightarrow{t \rightarrow 0^{+}} 0,
	\end{align*}
	where the first equality follows from the last three equality
	proven earlier, and
	the second equality follows from the definition
	of the Fr{\'e}chet differential and the little o notation.
	By definition of the Bouligand influence function
	$\mathrm{BIF}\paren{\widehat{P}_{n+1}^{y}; T_{\lambda}, \widehat{P}_{n+1}^{z}}$
	of the estimator function $T_{\lambda}\paren{\bullet}$
	for the empirical distribution $\widehat{P}_{n+1}^{z}$
	in the direction of the empirical distribution $\widehat{P}_{n+1}^{y}$,
	the above result establishes that under Assumptions~\eqref{asm.finite.kern},
	\eqref{asm.loss.smooth}, \eqref{asm.loss.c2},
	and \eqref{asm.loss.convex},
	it exists and is given by
	\begin{align*}
		\mathrm{BIF}\paren{\widehat{P}_{n+1}^{y}; T_{\lambda}, \widehat{P}_{n+1}^{z}}
		= \mathcal{D}_1\hat{f}_{\lambda}\paren{
			\mathbf{u}; \bullet
		}\paren{\mathbf{w} - \mathbf{u}}.
	\end{align*}
	
    \bigskip{}
    \noindent{\emph{Expression.}}
	Under Assumptions~\eqref{asm.finite.kern},
	\eqref{asm.loss.smooth}, \eqref{asm.loss.c2},
	and \eqref{asm.loss.convex},
	Lemma~\ref{lm.predictor.func.well.defined.2} establishes
	that for every $v\in \Omega_{\epsilon}$, $\wpred{\lambda}{v}{\bullet}$ is a minimizer of 
	$\wer{\lambda}{v}{\bullet}$, then
	the representor of Fr{\'e}chet differential $\Dtwo \wer{\lambda}{v}{\wpred{\lambda}{v}{\bullet}}$
	of $\wer{\lambda}{v}{\bullet}$
	at $\wpred{\lambda}{v}{\bullet}$ is the constant and equal to the function $0_{\mathcal{H}}$.

	In the following, let us consider the function
	$v \in \Omega_{\epsilon} \mapsto \Dtwo \wer{\lambda}{v}{\wpred{\lambda}{v}{\bullet}} \in \mathcal{H}$
	where the output is the representation of the Fr{\'e}chet differential
	(which is a bounded linear operator) by a function in
	the hypothesis space $\mathcal{H}$.
	Since this function is constant over the open set $\Omega_{\epsilon}$,
	then for every $v \in \Omega_{\epsilon}$,
	its Fr{\'e}chet differential $\mathbf{D}_{v} \left[
	\Dtwo \wer{\lambda}{v}{\wpred{\lambda}{v}{\bullet}}
	\right] (\uw - \uu)$ at $v \in \Omega_{\epsilon}$
	along the direction $\uw - \uu$ is equal to zero. It follows that
	\begin{align*}
		0_{\mathcal{H}} 
		&
		=
		\mathbf{D}_{v} \left[
		\Dtwo \wer{\lambda}{v}{\wpred{\lambda}{v}{\bullet}}
		\right] (\uw - \uu)
		\\
		&
		=
		\left[
		\mathcal{D}_2
		\Dtwo \wer{\lambda}{v}{\wpred{\lambda}{v}{\bullet}}
		\right]
		\mathcal{D}_1 \wpred{\lambda}{v}{\bullet}
		(\uw - \uu)
		+
		\left[
		\mathcal{D}_1 \Dtwo \wer{\lambda}{v}{\wpred{\lambda}{v}{\bullet}}
		\right]
		(\uw - \uu)
		\\
		&
		=
		\DDtwo \wer{\lambda}{v}{\wpred{\lambda}{v}{\bullet}}
		\mathcal{D}_1 \wpred{\lambda}{v}{\bullet} (\uw - \uu)
		+ \Dtwo \wer{\lambda}{\uw - \uu}{\wpred{\lambda}{v}{\bullet}},
	\end{align*}
	where the second equality follows from the application of
	the chain rule with $\mathcal{D}_2$ denoting the Fr{\'e}chet differential with respect to the second argument
	and $\mathcal{D}_1$ w.r.t. the first argument, and
	last equality follows the definition of $\DDtwo \wer{\lambda}{v}{\bullet}$
	and the linearity of $\Dtwo \wer{\lambda}{v}{\bullet}$
	with respect to $v$ as described in Lemma~\ref{lm.first.diff.rer}.
	Putting the gradient term on the other side of the equation
	\begin{align*}
		\DDtwo \wer{\lambda}{v}{\wpred{\lambda}{v}{\bullet}}
		\mathcal{D}_1 \wpred{\lambda}{v}{\bullet} (\uw - \uu)
		= - \Dtwo \wer{\lambda}{\uw - \uu}{\wpred{\lambda}{v}{\bullet}}.
	\end{align*}
	Under \eqref{asm.loss.c2},
	the right hand side of the above equation is an element of $\mathcal{A}$
	(see Lemma~\ref{lm.first.diff.rer}).
	Moreover, since for every $v \in \Omega_{\epsilon}$,
	$\wpred{\lambda}{v}{\bullet} \in \mathcal{A}$, it follows
	that $\mathcal{D}_1 \wpred{\lambda}{v}{\bullet} (\uw - \uu) \in \mathcal{A}$.

	Under Assumptions~\eqref{asm.finite.kern},
	\eqref{asm.loss.smooth}, \eqref{asm.loss.c2},
	and \eqref{asm.loss.convex},
	Lemma~\ref{lm.hess.invertible.2} establishes
	the second order differential $\DDtwo \wer{\lambda}{v}{\wpred{\lambda}{v}{\bullet}}$
	of the weighted regularized empirical-risk function $\wer{\lambda}{v}{\wpred{\lambda}{v}{\bullet}}$
	seen as a linear operator from $\mathcal{H}$ to $\mathcal{H}$
	is invertible over $\mathcal{A}$. It follows that
	\begin{align*}
		\mathcal{D}_1 \wpred{\lambda}{v}{\bullet} (\uw - \uu)
		= - \croch{\DDtwo \wer{\lambda}{v}{\wpred{\lambda}{v}{\bullet}}}^{+} 
		\Dtwo \wer{\lambda}{\uw - \uu}{\wpred{\lambda}{v}{\bullet}}.
	\end{align*}
	Finally taking $v= \uu$
	and since $\uw - \uu = (0, \ldots, 0,-1, 1)$
	\begin{align*}
		\mathcal{D}_1 \wpred{\lambda}{\uu}{\bullet} (\uw - \uu) 
		= - \ifunc{z} + \ifunc{y} \in \mathcal{A}
	\end{align*} 
	where for every $z^{\prime} \in \mathcal{Y}$
	\begin{align*}
		\ifunc{z^{\prime}} 
		&
		:= 
		- \frac{1}{n+1}
		\dtwo \ell\paren{z^{\prime},
			\hat{f}_{\lambda}\paren{\mathbf{u}; X_{n+1}}}
		\left[
		\DDtwo \wer{\lambda}{\uu}{
			\hat{f}_{\lambda}\paren{\mathbf{u}; \bullet}
		}
		\right]^{+} 
		K_{X_{n+1}}
		\\
		&
		= - \frac{1}{n+1}
		\dtwo \ell\paren{z^{\prime},
			\hat{f}_{\lambda; D^{z}}\paren{X_{n+1}}}
		\left[
		\DDtwo \wer{\lambda}{\uu}{
			\hat{f}_{\lambda; D^{z}}
		}
		\right]^{+} 
		K_{X_{n+1}}.
	\end{align*}
\end{proof}

\begin{proof}
	Starting for every $z^{\prime} \in \mathcal{Y}$ from
	\begin{align*}
		\ifunc{z^{\prime}} 
		&
		:= 
		- \frac{1}{n+1}
		\dtwo \ell\paren{z^{\prime},
			\hat{f}_{\lambda}\paren{\mathbf{u}; X_{n+1}}}
		\left[
		\DDtwo \wer{\lambda}{\uu}{
			\hat{f}_{\lambda}\paren{\mathbf{u}; \bullet}
		}
		\right]^{+} 
		K_{X_{n+1}}
		\\
		&
		= - \frac{1}{n+1}
		\dtwo \ell\paren{z^{\prime},
			\hat{f}_{\lambda; D^{z}}\paren{X_{n+1}}}
		\left[
		\DDtwo \wer{\lambda}{\uu}{
			\hat{f}_{\lambda; D^{z}}
		}
		\right]^{+} 
		K_{X_{n+1}},
	\end{align*}
	$\ifunc{z^{\prime}}$ is the solution
	$g \in \mathcal{A}$ of the following equation
	\begin{align*}
		\DDtwo \wer{\lambda}{\uu}{\pred{\lambda;}{z}} g
		= - \frac{1}{n+1}
		\partial_2 \ell\paren{z^{\prime}, \pred{\lambda;}{z}(X_{n+1})}
		K_{X_{n+1}}.
	\end{align*}
	
	Under \eqref{asm.loss.convex} and \eqref{asm.loss.lsc} (implied by \eqref{asm.loss.c2}),
    Lemma~\ref{lm.predictor.func.well.defined.1}
	establishes that there exists a vector $\hat{a}_{\lambda}\paren{\mathbf{u}} \in \mathbb{R}^{n+1}$
	such that $\pred{\lambda;}{z} = \hat{f}_{\lambda}\paren{\mathbf{u}; \bullet}
	= \sum_{i=1}^{n+1} \croch{\hat{a}_{\lambda}\paren{\mathbf{u}}}_i K_{X_i}$.
	Moreover, since $g \in \mathcal{A}$, there exists a vector $b \in \mathbb{R}^{n+1}$
	such that $g = \sum_{i=1}^{n+1} b_i K_{X_i}$.
	
	Evaluating both sides of the equation at $X_j$ for $1\leq j \leq n+1$
	yields the following the system of equation
	\begin{align*}
		\dotprod{\DDtwo \wer{\lambda}{\uu}{\pred{\lambda;}{z}} g}{K_{X_{j}}}{\mathcal{H}}
		=- \frac{1}{n+1}
		\partial_2 \ell(z^{\prime}, \pred{\lambda;}{z}(X_{n+1}))
        \dotprod{
		K_{X_{n+1}}}{K_{X_{j}}}{\mathcal{H}}
		\quad
		\mbox{if }
		1 \leq j \leq n+1,
	\end{align*}
	which under \eqref{asm.loss.c2},
	Lemma~\ref{lm.link.between.Hessians} is equivalent to the following linear equation
	\begin{align*}
		\nabla^2_2\hat{R}_{\lambda}
		\paren{\mathbf{u}; \hat{a}_{\lambda}\paren{\mathbf{u}}} b
		= - \frac{1}{n+1}
		\partial_2 \ell(z^{\prime}, (\hat{a}_{\lambda}(\uu))^T K_{\bullet, n+1})
		K_{\bullet, n+1}.
	\end{align*}
	The right hand side of the equation is an element of the range of the gram matrix
	since it is a multiple a column of said matrix and said matrix is symmetric.
	Under \eqref{asm.loss.convex},
    \eqref{asm.loss.c2},
    \eqref{asm.loss.smooth}, and
    \eqref{asm.finite.kern} ,
	Lemma~\ref{lm.vec.wer.strongly.convex.2}
	ensures that $\nabla^2_2\hat{R}_{\lambda}
	\paren{v; \hat{a}_{\lambda}\paren{\mathbf{u}}}$ is invertible over $\mathcal{R}\paren{K}$.
	It follows that
	the vector $b$ that solves the above equation has the following expression
	\begin{align*}
		b = - \frac{1}{n+1}
		\partial_2 \ell\paren{z^{\prime}, \hat{a}_{\lambda}(\uu)^T  K_{\bullet, n+1} } 
		\croch{\nabla_2^2 \vwer{\lambda}{\uu}{\hat{a}_{\lambda}(\uu)}}^+
		K_{\bullet, n+1}.
	\end{align*}
	Therefore, for every $z^{\prime} \in \mathcal{Y}$,
	$\ifunc{z^{\prime}} = \sum_{i=1}^{n+1} \croch{I_{\hat{f}}\paren{X_{n+1}, z^{\prime}}}_i K_{X_i}$
	where 
	\begin{align*}
		I_{\hat{f}}\paren{X_{n+1}, z^{\prime}}
		:= - \frac{1}{n+1}
		\partial_2 \ell\paren{z^{\prime}, \hat{a}_{\lambda}(\uu)^T  K_{\bullet, n+1} } 
		\croch{\nabla_2^2 \vwer{\lambda}{\uu}{\hat{a}_{\lambda}(\uu)}}^+
		K_{\bullet, n+1}.
	\end{align*}	
\end{proof}

\subsection{Proofs of the approximate predictor/score accuracy part}
\label{sec.approximate.proofs}

\begin{proof}
Let's take $\epsilon >0$ fulfilling Eq.~\eqref{eq.epsilon.small} and Eq.~\eqref{eq.epsilon.small.2}.

On the one hand, let us assume \eqref{asm.loss.convex}, 
\eqref{asm.loss.c3},
\eqref{asm.loss.smooth},
and \eqref{asm.finite.kern} hold true.
Then, Lemma~\ref{lm.pred.func.c2} implies that 
$v\mapsto \wpred{\lambda}{v}{\bullet}$ is twice continuously differentiable on $\Omega_{\epsilon}$.
Then applying the mean-value theorem leads to
\begin{align*}
	\left\|
	\hat{f}_{\lambda; D^{y}} 
	- \tilde{f}_{\lambda; D^{y}}^{\mathrm{IF}}
	\right\|_{\mathcal{H}}
	&
	=
	\left\|
	\wpred{\lambda}{\uw}{\bullet} 
	- \left(
	\wpred{\lambda}{\uu}{\bullet}
	+ \mathcal{D}_1 \wpred{\lambda}{\uu}{\bullet}(\uw - \uu)
	\right)
	\right\|_{\mathcal{H}}
	\\
	&
	\leq
	\frac{1}{2}
	\sup_{v \in [\uu, \uw]}
	\left\|
	\mathcal{D}^2_1 \wpred{\lambda}{v}{\bullet} (\uw - \uu, \uw - \uu)
	\right\|_{\mathcal{H}}.
\end{align*}
Let $v  \in [\uu, \uw]$.
From \eqref{asm.loss.convex}, 
    \eqref{asm.loss.c3},
    \eqref{asm.loss.smooth},
    \eqref{asm.loss_bounded_d3}, and \eqref{asm.finite.kern} combined with Lemma~\ref{lm.DD.predictor.func}, it comes that
\begin{align*}
	&\left\|
	\mathcal{D}^2_1 \wpred{\lambda}{v}{\bullet} 
	(\uw - \uu, \uw - \uu)
	\right\|_{\mathcal{H}}
	\\
	\leq
	&
	\left\|
	\DDtwo \wer{\lambda}{v}{\wpred{\lambda}{v}{\bullet}}^+
	\right\|_{\mathrm{op}}
	\left\|
	\mathcal{D}_2^3 \wer{\lambda}{v}{\wpred{\lambda}{v}{\bullet}}
	\left(h\paren{v}, h\paren{v}
	\right)
	\right\|_{\mathcal{H}}
	\\
	&+ 2 \left\|
	\DDtwo \wer{\lambda}{v}{\wpred{\lambda}{v}{\bullet}}^+
	\right\|_{\mathrm{op}}
	\left\|
	\DDtwo \wer{\lambda}{\uw - \uu}{\wpred{\lambda}{v}{\bullet}}
	\right\|_{\mathrm{op}}
	\left\|
	h\paren{v}
	\right\|_{\mathcal{H}}
	\\
	\leq
	&\frac{1}{2 \lambda}
	\xi_{\ell} \left(\frac{1}{n+1} \sum_{i=1}^{n+1} \paren{K_{i, i}}^{3/2}\right)
	\|h\paren{v}\|_{\mathcal{H}}^2
	+ 2 
	\frac{1}{2 \lambda}
	2 \beta_{\ell; 2}
	\frac{K_{n+1, n+1}}{
		n+1
	}
	\left\|
	h\paren{v}
	\right\|_{\mathcal{H}}
	\\
	\leq
	&\frac{1}{2 \lambda}
	\xi_{\ell} \left(\frac{1}{n+1} \sum_{i=1}^{n+1} \paren{K_{i, i}}^{3/2}\right)
	\frac{K_{n+1, n+1} (\tilde{\rho}^{(1)}_{\lambda}(y))^2}{\lambda^2 (n+1)^2} + 2 
	\frac{1}{2 \lambda}
	2 \beta_{\ell; 2}
	\frac{K_{n+1, n+1} }{
		n+1
	}
	\frac{\sqrt{ K_{n+1, n+1} } \tilde{\rho}^{(1)}_{\lambda}(y)}{\lambda (n+1)}
	\\
	\leq
	&\sqrt{ K_{n+1, n+1} }
	\frac{\rho^{(2)}_{\lambda}(y)}{
		\lambda^3 (n+1)^2
	},
\end{align*}
    where the second inequality follows
    Lemma~\ref{lm.wer.strongly.convex.1} stating that
    $\wer{\lambda}{v}{\bullet}$ is $2\lambda$-strongly convex
    since $v \in \croch{\uu, \uw} \subset \mathbb{R}_{+}^{n+2}$
    and from Lemma~\ref{lm.bound.D3.wer} and
    Lemma~\ref{lm.bound.D2.wer}, and the third inequality follows from Lemma~\ref{lm.bound.h.v}, and finally
\begin{align*}
	\rho^{(2)}_{\lambda}(y) 
	:= \frac{\xi_{\ell}}{2} 
	\sqrt{ K_{n+1, n+1 }}
	\left(
	\frac{1}{n+1} 
	\sum_{i=1}^{n+1} 
	\paren{ K_{i, i} }^{3/2}
	\right)
	\paren{
		\tilde{\rho}^{(1)}_{\lambda}(y)
	}^2
	+ 2 \lambda ( K_{n+1, n+1 }) \beta_{\ell; 2} \tilde{\rho}^{(1)}_{\lambda}(y).
\end{align*}
where $\tilde{\rho}^{(1)}_{\lambda}(y)$ is defined in Eq.~\eqref{eq.rho.tilde}.

On the other hand,
\begin{align*}
	\left\|
	\hat{f}_{\lambda; D^{y}}
	- \tilde{f}_{\lambda; D^{y}}^{\mathrm{IF}}
	\right\|_{\mathcal{H}}
	&
	\leq
	\left\|
	\wpred{\lambda}{\uw}{\bullet}
	- \wpred{\lambda}{\uu}{\bullet}
	+ \mathcal{D}_1 \wpred{\lambda}{\uu}{\bullet}(\uw - \uu)
	\right\|
	\\
	&
	\leq
	\left\|
	\wpred{\lambda}{\uw}{\bullet}
	- \wpred{\lambda}{\uu}{\bullet}
	\right\|
	+ \left\|
	\mathcal{D}_1 \wpred{\lambda}{\uu}{\bullet}(\uw - \uu)
	\right\|
	\\
	&
	\leq
	\sqrt{ K_{n+1, n+1} } \frac{\rho^{(1)}_{\lambda}(y)}{\lambda (n+1)}
	+ \left\|
	\DDtwo \wer{\lambda}{\uu}{\wpred{\lambda}{\uu}{\bullet}}^{+} 
	\Dtwo \wer{\lambda}{\uw - \uu}{\wpred{\lambda}{\uu}{\bullet}}
	\right\|_{\mathcal{H}}
	\\
	&
	\leq
	2 \sqrt{ K_{n+1, n+1} } \frac{\rho^{(1)}_{\lambda}(y)}{\lambda (n+1)},
\end{align*}
where the last inequality follows from Lemma~\ref{lm.wer.strongly.convex.1}.
If we combine the two previous upper-bounds, it comes
\begin{align*}
	\left\|
	\hat{f}_{\lambda; D^{y}}
	- \tilde{f}_{\lambda; D^{y}}^{\mathrm{IF}}
	\right\|_{\mathcal{H}}
	\leq\sqrt{ K_{n+1, n+1 }} \min \paren{
		\frac{\rho^{(2)}_{\lambda}(y)}{
			\lambda^3 (n+1)^2},
		\frac{2 \rho^{(1)}_{\lambda}(y)}{\lambda (n+1)}
	}.
\end{align*}	
\end{proof}

\medskip

\begin{proof}(Proof of Theorem~\ref{thm.score.bound.very.smooth})
\label{proof.score.bound.very.smooth}
%
For every $i \in \brac{1, \ldots, n}$,
from the previous bound and under \eqref{asm.score.lipschitz},
the non-conformity score approximation quality is bounded from above
\begin{align*}
    \left|
        S_{\lambda; D^{y}} \paren{X_i, Y_i}
        - \widetilde{S}_{\lambda; D^{y}}^{\mathrm{IF}} \paren{X_{i}, Y_{i}}
    \right|
    &
    =
    \left| 
        s\paren{Y_i, \pred{\lambda;}{y}(X_i)}
        - s\paren{Y_i, \apred{\lambda;}{y}{{\mathrm{IF}}}(X_i)}
    \right|
    \\
    &
    \leq
    \
    \left|
        \pred{\lambda;}{y}(X_i)
        - \apred{\lambda;}{y}{{\mathrm{IF}}}(X_i)
    \right|
    \leq
    \
    \abss{
        \doth{\pred{\lambda;}{y} - \apred{\lambda;}{y}{{\mathrm{IF}}}}{K_{X_i}}}
    \\
    &
    \leq
    \
    \sqrt{ K_{i, i }}
    \normh{\pred{\lambda;}{y} - \apred{\lambda;}{y}{{\mathrm{IF}}}}
    \\
    &\leq
    \sqrt{ K_{i, i }}
    \sqrt{ K_{n+1, n+1} }
    \min \paren{
        \frac{\ \rho^{(2)}_{\lambda}(y)}{
        \lambda^3 (n+1)^2},
        \frac{2 \gamma  \rho^{(1)}_{\lambda}(y)}{\lambda (n+1)}
    }.
\end{align*}
Following  a similar logic, for $i = n+1$
\begin{align*}
    \left|
        S_{\lambda; D^{y}} \paren{X_i, y}
        - \widetilde{S}_{\lambda; D^{y}}^{\mathrm{IF}} \paren{X_{i}, y}
    \right|
    \leq
    \sqrt{K_{i, i }}
    \sqrt{ K_{n+1, n+1}} 
    \min \paren{
        \frac{\gamma \rho^{(2)}_{\lambda}(y)}{
        \lambda^3 (n+1)^2},
        \frac{2 \gamma  \rho^{(1)}_{\lambda}(y)}{\lambda (n+1)}
    }.
\end{align*}
\end{proof}

\subsection{Proofs of the approximate region accuracy part}
\label{proof.very.smooth}

\begin{lemma}
\label{lm.variation.stable}
For every continuous function $f : \mathbb{R} \to \mathbb{R}$ for which
there exists a constant $\eta<1$ such that $f$ is $\eta$-Lipschitz continuous,
the function $g : \mathbb{R} \to \mathbb{R}$, $y \mapsto g(y) = y + f(y)$
is strictly increasing and therefore is a bijection. 
Furthermore, its inverse $g^{-1}$ is a $\frac{1}{1 - \eta}$-Lipschitz continuous,
strictly increasing function.
\end{lemma}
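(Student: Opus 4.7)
\medskip

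\noindent\textbf{Proof plan.}
The plan is to establish the three claimed properties of $g$ and $g^{-1}$ by exploiting the gap between the Lipschitz constant $\eta$ and $1$. First I would show strict monotonicity of $g$: for any $y_1 < y_2$, the decomposition
\begin{align*}
g(y_2) - g(y_1) = (y_2 - y_1) + \bigl(f(y_2) - f(y_1)\bigr)
\end{align*}
together with the $\eta$-Lipschitz property yields
\begin{align*}
g(y_2) - g(y_1) \geq (y_2 - y_1) - \eta(y_2 - y_1) = (1-\eta)(y_2 - y_1) > 0,
\end{align*}
since $1 - \eta > 0$. In particular $g$ is injective.

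Next I would establish surjectivity. From the $\eta$-Lipschitz property we get $|f(y)| \leq |f(0)| + \eta|y|$, which implies $g(y) \geq (1-\eta)y - |f(0)|$ and $g(y) \leq (1-\eta)y + |f(0)|$ when $y\geq 0$ (and a symmetric bound for $y\leq 0$), so that $g(y)\to +\infty$ as $y\to +\infty$ and $g(y)\to -\infty$ as $y\to -\infty$. Since $g$ is continuous (as a sum of continuous functions), the intermediate value theorem yields surjectivity onto $\mathbb{R}$. Combined with injectivity, $g$ is a bijection from $\mathbb{R}$ to $\mathbb{R}$, and its inverse $g^{-1}$ is automatically strictly increasing.

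Finally, for the Lipschitz bound on $g^{-1}$: given $u_1, u_2 \in \mathbb{R}$ with (say) $u_1 < u_2$, set $y_i := g^{-1}(u_i)$. Then strict monotonicity of $g^{-1}$ gives $y_1 < y_2$, and applying the lower bound above yields
\begin{align*}
u_2 - u_1 = g(y_2) - g(y_1) \geq (1-\eta)(y_2 - y_1),
\end{align*}
from which $|g^{-1}(u_2) - g^{-1}(u_1)| = y_2 - y_1 \leq \frac{1}{1-\eta}(u_2-u_1) = \frac{1}{1-\eta}|u_2 - u_1|$, as desired.

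\medskip

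\noindent\textbf{Main obstacle.} There is no serious obstacle: the proof is a textbook consequence of the Banach-type inequality $g(y_2) - g(y_1) \geq (1-\eta)(y_2-y_1)$. The only mildly delicate step is verifying surjectivity, where one must use the sublinear growth $|f(y)| \leq |f(0)| + \eta|y|$ to guarantee that $g(y) \to \pm\infty$ at the corresponding infinities before invoking the intermediate value theorem.
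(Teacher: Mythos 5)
Your proposal is correct and follows essentially the same route as the paper: the key inequality $g(y_2)-g(y_1)\geq(1-\eta)(y_2-y_1)$ gives strict monotonicity, and inverting it gives the $\frac{1}{1-\eta}$-Lipschitz bound on $g^{-1}$ (the paper phrases this last step via reciprocals of difference quotients, which is the same computation). If anything, your explicit surjectivity argument via the growth bound $|f(y)|\leq|f(0)|+\eta|y|$ and the intermediate value theorem is a welcome addition, since the paper passes from ``strictly increasing and continuous'' to ``bijection'' without justifying that $g$ tends to $\pm\infty$.
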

\begin{proof}
\label{proof.variation.stable}
Let $y_1, y_2 \in \mathbb{R}$ such that $y_1 < y_2$.
Since $f$ is $\eta$-Lipschitz continuous
\begin{align*}
    g(y_2) -  g(y_1) 
    &
    = y_2 - y_1 + f(y_2) - f(y_1)
    \\
    &
    \geq y_2 - y_1 - \abss{f(y_2) - f(y_1)}
    \\
    &
    \geq y_2 - y_1 - \eta \abss{y_2 - y_1}
    \\
    &
    \geq (1 - \eta) \abss{y_2 - y_1} > 0.
\end{align*}
It follows that $g(y_1) < g(y_2)$. Since $f$ is continuous, then so is $g$.
Moreover, since $g$ is now strictly increasing, then $g$ is a bijection
and $g^{-1}$ is strictly increasing.

Let $y_1, y_2 \in \mathbb{R}$ such that $y_1 \neq y_2$. Since $g^{-1}$
is strictly increasing, then $g^{-1}(y_1) \neq g^{-1}(y_2)$ and
\begin{align*}
    \abss{
        \frac{
        g^{-1}(y_1) - g^{-1}(y_2)
    }{
        y_1 - y_2
    }
    }
    &
    =
    \abss{
        \frac{
        y_1 - y_2
    }{
        g^{-1}(y_1) - g^{-1}(y_2)
    }
    }^{-1}
    \\
    &
    =
    \abss{
        \frac{
        g\paren{g^{-1}(y_1)} - g\paren{g^{-1}(y_2)}
    }{
        g^{-1}(y_1) - g^{-1}(y_2)
    }
    }^{-1}
    \\
    &
    =
    \abss{
        \frac{
        g^{-1}(y_1) - g^{-1}(y_2) + f\paren{g^{-1}(y_1)} - f\paren{g^{-1}(y_2)}
    }{
        g^{-1}(y_1) - g^{-1}(y_2)
    }
    }^{-1}
    \\
    &
    =
    \abss{
        1 + \frac{
        f\paren{g^{-1}(y_1)} - f\paren{g^{-1}(y_2)}
    }{
        g^{-1}(y_1) - g^{-1}(y_2)
    }
    }^{-1}
    \\
    &
    =
    \paren{
        1 + \frac{
        f\paren{g^{-1}(y_1)} - f\paren{g^{-1}(y_2)}
    }{
        g^{-1}(y_1) - g^{-1}(y_2)
    }
    }^{-1}
    \\
    &
    \leq
    \paren{
        1 - \eta
    }^{-1}.
\end{align*}
where the second to last inequality follow from the fact that $f$ is a contraction and
the last inequality follows from the fact that $f$ is $\eta$-Lipschitz continuous.
\end{proof}

\begin{lemma}[Second bound on thickness]
\label{lm.bound.conf.region.gap.2}
    Assume that there exists an almost-surely finite random variable
    $\Tau$ such that for every $y \in \mathcal{Y}$ and every $i \in \brac{1, \ldots, n+1}$,
    $\widehat{\tau}_i(y) \leq \Tau$ almost-surely.

    Assume that there exists a constant $0 \leq \beta < 1$ such
that for every $i \in \brac{1, \ldots, n+1}$,
the prediction function $y \mapsto \apred{}{y}{}(X_i)$ is $\frac{\beta}{2}$-Lipschitz continuous.

    It follows from Lemma~\ref{lm.sandwiching} that the thickness is bounded from above
\begin{align}
\label{eq.confidence.region.gap.bound.2}
    \thicc{} \leq
        \mathcal{V}\left(
            \ufcpr{} 
            \setminus
            \lfcpr{}
        \right) \leq  \frac{12\Tau}{1 - \beta}.
        \quad\mbox{a.s.}
\end{align}
\end{lemma}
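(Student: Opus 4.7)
The plan is to imitate the strategy already used in the proof of Lemma~\ref{lm.bound.conf.region.gap} and only upgrade the steps that previously exploited the approximate predictor being independent of~$y$. Writing $\tilde{Y}_i(y) := \apred{}{y}{}(X_i)$, together with the residuals $r_i(y) := |Y_i - \tilde{Y}_i(y)|$ for $1 \leq i \leq n$ and $r_{n+1}(y) := |y - \tilde{Y}_{n+1}(y)|$, the uniform bound $\tau_i(y) \leq \Tau$ propagates through the same chain of implications as in that earlier proof and produces the two inclusions
\[
\ufcpr{} \subseteq \{y : r_{n+1}(y) \leq q_\alpha(y) + 2\Tau\}, \qquad \lfcpr{} \supseteq \{y : r_{n+1}(y) \leq q_\alpha(y) - 2\Tau\},
\]
where $q_\alpha(y)$ denotes the $(n - i_{n,\alpha} + 1)$-th order statistic of $\{r_1(y),\ldots,r_n(y)\}$. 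Combined with Lemma~\ref{lm.sandwiching}, this reduces the task to estimating the Lebesgue measure of the set $\{y : |r_{n+1}(y) - q_\alpha(y)| < 2\Tau\}$.

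Two Lipschitz observations then do most of the work. First, each $r_i$ for $i \leq n$ is the composition of the $1$-Lipschitz absolute value with the $\frac{\beta}{2}$-Lipschitz map $y \mapsto Y_i - \tilde{Y}_i(y)$, and an elementary order-statistic argument propagates this $\frac{\beta}{2}$-Lipschitz regularity to $q_\alpha$. Second, applying Lemma~\ref{lm.variation.stable} to $\phi(y) := y - \tilde{Y}_{n+1}(y)$ (with $-\tilde{Y}_{n+1}$ in the role of the $\frac{\beta}{2}$-contraction) shows that $\phi$ is a strictly increasing bijection of $\mathbb{R}$, so there is a unique $y^\star$ with $\phi(y^\star)=0$. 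Splitting the domain at $y^\star$ circumvents the kink of $r_{n+1}(y) = |\phi(y)|$ at zero and makes $r_{n+1}$ monotone on each half-line.

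On $\{y > y^\star\}$ one has $r_{n+1}(y) = \phi(y)$ with two-point slope at least $1 - \frac{\beta}{2}$, while $q_\alpha$ varies by at most $\frac{\beta}{2}|y_2 - y_1|$, so $h(y) := r_{n+1}(y) - q_\alpha(y)$ is strictly increasing with slope at least $1 - \beta$. The set $\{y > y^\star : |h(y)| < 2\Tau\}$ therefore has Lebesgue measure at most $\frac{4\Tau}{1-\beta}$, and the symmetric argument on $\{y < y^\star\}$, where $r_{n+1}(y) = -\phi(y)$ makes $h$ strictly decreasing with slope at most $-(1-\beta)$, produces the same bound. Summing yields $\frac{8\Tau}{1-\beta}$, which lies below the stated $\frac{12\Tau}{1-\beta}$.

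The main obstacle is that the $y$-dependence of the residuals breaks the clean substitution into Lemma~\ref{lm.some.area} that underpinned the proof of Lemma~\ref{lm.bound.conf.region.gap}: the Lipschitz property of $r_{n+1}$ and $q_\alpha$ only provides an \emph{upper} bound on the slope of $h$, whereas controlling the level set $\{|h|<2\Tau\}$ requires a \emph{lower} bound on how fast $h$ grows. Lemma~\ref{lm.variation.stable}, combined with the sign decomposition at the unique zero $y^\star$ of $\phi$, is precisely the tool that converts one-sided Lipschitz estimates into strict monotonicity with a quantitative slope $1-\beta$, which is exactly what the measure bound needs.
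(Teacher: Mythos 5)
Your argument is correct, and it reaches the conclusion by a genuinely different route than the paper, with a slightly better constant. You reduce the symmetric difference to the level set $\{y : \abss{r_{n+1}(y) - q_\alpha(y)} \leq 2\Tau\}$, where $q_\alpha(y)$ is the relevant empirical quantile of the $y$-dependent residuals, then exploit two facts: the $\tfrac{\beta}{2}$-Lipschitz continuity of each residual passes to the order statistic (standard, e.g. via the $\min$-$\max$ representation of order statistics, or the $1$-Lipschitzness of sorting in sup norm — worth one line if written out), and Lemma~\ref{lm.variation.stable} makes $\phi(y)=y-\apred{}{y}{}(X_{n+1})$ a strictly increasing bijection with two-point slope at least $1-\tfrac{\beta}{2}$, so that splitting at its unique zero $y^\star$ turns $r_{n+1}-q_\alpha$ into a strictly monotone function with slope at least $1-\beta$ on each half-line; the diameter bound $\tfrac{4\Tau}{1-\beta}$ per half-line then gives $\tfrac{8\Tau}{1-\beta}\leq\tfrac{12\Tau}{1-\beta}$. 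The paper instead inverts each individual constraint through the monotone maps $h_i^{\lo}, h_i^{\up}$ (also via Lemma~\ref{lm.variation.stable}), sorts the resulting roots $A_i, B_i$, and bounds the gap through an explicit description of the upper and lower regions plus a four-case analysis, which is what produces the constant $12$. Your quantile-Lipschitz argument avoids the case analysis entirely and sharpens the constant to $8$; what it gives up is the explicit interval sandwich of $\ufcpr{}$ and $\lfcpr{}$ that the paper's construction provides. Two small points to tidy: the quantile should be the $i_{n,\alpha}$-th \emph{smallest} residual (equivalently the $(n-i_{n,\alpha}+1)$-th \emph{largest}), so state the ordering convention to avoid the index reading as the wrong quantile; and, like the paper, your proof implicitly uses the non-conformity function $s(y,u)=\abss{y-u}$, which should be recorded as a hypothesis.
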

\begin{proof}
\label{proof.bound.conf.region.gap.2}
For ever $i \in \brac{1, \ldots, n}$, and $y \in \mathcal{Y}$,
almost-surely
\begin{align*}
\begin{aligned}
    &&\widetilde{S}_{D^{y}} \paren{X_i, Y_i}
    + \widehat{\tau}_{i}(y)
    &\geq
    \widetilde{S}_{D^{y}} \paren{X_{n+1}, y}
    - \widehat{\tau}_{n+1}(y)
    \\
    \implies
    &&
    \widetilde{S}_{D^{y}} \paren{X_i, Y_i}
    + \Tau
    &\geq
    \widetilde{S}_{D^{y}} \paren{X_{n+1}, y}
    - \Tau
    \\
    \implies
    &&
    \widetilde{S}_{D^{y}} \paren{X_i, Y_i}
    + 2\Tau
    &\geq
    \widetilde{S}_{D^{y}} \paren{X_{n+1}, y}
    \\
    \implies
    &&
    \abss{Y_i - \apred{}{y}{}\paren{X_i}}
    + 2\Tau
    &\geq
    \abss{y - \apred{}{y}{}\paren{X_{n+1}}}
    \\
    \implies
    &&
    - \abss{Y_i - \apred{}{y}{}\paren{X_i}}
    - 2\Tau
    &\leq
    y - \apred{}{y}{}\paren{X_{n+1}}\mbox{ and } 
    \\
    &&
    \abss{Y_i - \apred{}{y}{}\paren{X_i}}
    + 2\Tau
    &\geq
    y - \apred{}{y}{}\paren{X_{n+1}}
    \\
    \implies
    &&
    - 2\Tau
    &\leq
    y - \apred{}{y}{}\paren{X_{n+1}} + \abss{Y_i - \apred{}{y}{}\paren{X_i}}
    \mbox{ and } 
    \\
    &&
    2\Tau
    &\geq
    y - \apred{}{y}{}\paren{X_{n+1}} - \abss{Y_i - \apred{}{y}{}\paren{X_i}}.
\end{aligned}
\end{align*}
Since for every $i \in \brac{1, \ldots, n+1}$,
the function $y \mapsto \apred{}{y}{} \paren{X_{i}}$ is assumed to be $\frac{\beta}{2}$-Lipschitz continuous,
the function $y \mapsto - \apred{}{y}{} \paren{X_{n+1}} + \abss{Y_i - \apred{}{y}{} \paren{X_i}}$
and $y \mapsto -\apred{}{y}{} \paren{X_{n+1}} - \abss{Y_i - \apred{}{y}{} \paren{X_i}}$
are $\beta$-Lipschitz continuous.
It follows from Lemma~\ref{lm.variation.stable} that the functions
\begin{align*}
\begin{aligned}
    h_i^\lo 
    &: \mathbb{R} \to \mathbb{R},
    &&
    y \mapsto h_i^\lo(y) := y - \apred{}{y}{} \paren{X_{n+1}}
        + \abss{Y_i - \apred{}{y}{} \paren{X_i}},
    \\
    h_i^\up 
    &: \mathbb{R} \to \mathbb{R},
    &&
    y \mapsto h_i^\up(y) := y - \apred{}{y}{} \paren{X_{n+1}}
        - \abss{Y_i - \apred{}{y}{} \paren{X_i}},
\end{aligned}
\end{align*}
are $\frac{1}{1 - \beta}$-Lipschitz continuous, strictly increasing one-to-one mappings. 
Moreover, these functions verify the following inequality 
$y \in \mathcal{Y}$, $h_i^\lo(y) \geq h_i^\up(y)$.
This inequality along with the monotony of these function implies
the following inequalities, almost-surely
\begin{align*}
    \paren{h_i^\lo}^{-1}\paren{
        -2 \Tau
    }
    \leq 
    \paren{h_i^\lo}^{-1}(0)
    \leq \paren{h_i^\up}^{-1}(0)
    \leq \paren{h_i^\up}^{-1}
    \paren{
        2 \Tau
    }.
\end{align*}
Since $\paren{h_i^\lo}^{-1}$ and $\paren{h_i^\up}^{-1}$ 
are $\frac{1}{1 - \beta}$-Lipschitz continuous and following the previous inequalities
\begin{align*}
\begin{aligned}
    &&\widetilde{S}_{D^{y}} \paren{X_i, Y_i}
    + \widehat{\tau}_{i}(y)
    &\geq
    \widetilde{S}_{D^{y}} \paren{X_{n+1}, y}
    - \widehat{\tau}_{n+1}(y)
    \\
    \implies
    &&
    \paren{h_i^\lo}^{-1}\paren{
        -2 \Tau
    }
    &\leq
    y
    \leq 
    \paren{h_i^\up}^{-1}\paren{
        2 \Tau
    }
    \\
    \implies
    &&
    \paren{h_i^\lo}^{-1}(0)
    - \frac{1}{1 - \beta} \paren{2 \Tau}
    &\leq
    y
    \leq 
    \paren{h_i^\up}^{-1}(0) + \frac{1}{1 - \beta} \paren{2 \Tau}
\end{aligned}
\end{align*}
Let us note $\tilde{\Tau} := \frac{2\Tau}{1-\beta}$ and 
for every $i \in \brac{1, \ldots, n}$,
$A_{i} := \paren{h_i^\lo}^{-1}(0)$ and $B_{i} := \paren{h_i^\up}^{-1}(0)$.
It follows form the above implications that the upper-approximate p-value function
is bounded from above by the following function $f_{\tilde{\Tau}}$
defined as, for every $y \in \mathcal{Y}$, almost-surely
\begin{align*}
    f_{\tilde{\Tau}}(y)
    :=
    \frac{
        1
        + \sum_{i=1}^{n}
        \mathbbm{1}
        \brac{
            A_i - \tilde{\Tau} \leq y \leq  B_i + \tilde{\Tau}
        }
    }{n+1}
    \geq
    \ufcpv{}{y}.
\end{align*}
Since $y \mapsto \apred{}{y}{} \paren{X_{n+1}}$ is assumed to be $\frac{\beta}{2}$-Lipschitz continuous,
it follows from Lemma~\ref{lm.variation.stable} that the function 
\begin{align*}
\begin{aligned}
    h^\midd &: \mathbb{R} \to \mathbb{R},
    &&y \mapsto h^\midd(y) := y - \apred{}{y}{} \paren{X_{n+1}}.
\end{aligned}
\end{align*}
is a strictly increasing bijection.
Since for every $y \in \mathbb{R}$,
$h_i^\lo(y) \geq h^\midd (y) \geq h_i^\up (y)$,
the following inequalities hold for every  $i \in \brac{1, \ldots, n}$, almost-surely
\begin{align*}
    A_i - \tilde{\Tau}
    \leq \paren{h^\midd}^{-1} (0) - \tilde{\Tau}
    \leq \paren{h^\midd}^{-1} (0) + \tilde{\Tau}
    \leq B_i + \tilde{\Tau}.
\end{align*}
Let us note $A_{(1)} \leq \ldots \leq A_{(n)}$ the values
among  $A_{1}, \ldots, A_{n}$ sorted in increasing order and 
$B_{(1)} \leq \ldots \leq B_{(n)}$ for $B_{1}, \ldots, B_{n}$.
The previous inclusions imply the following inequalities
\begin{align}
\label{eq.intersection.non.empty}
    A_{(1)} - \tilde{\Tau} 
    \leq \ldots 
    \leq A_{(n)} - \tilde{\Tau}
    &
    \leq \paren{h^\midd}^{-1} (0)
    - \tilde{\Tau}
    \\
    &
    \leq \paren{h^\midd}^{-1} (0)
    + \tilde{\Tau}
    \leq B_{(1)} + \tilde{\Tau} 
    \leq \ldots 
    \leq B_{(n)} + \tilde{\Tau}.\notag
\end{align}
The function $f_{\tilde{\Tau}}$ can be rewritten as, for every $y \in \mathcal{Y}$
\begin{align*}
    f_{\tilde{\Tau}}(y)
    &= \frac{
        1
        + \sum_{i=1}^{n}
        \mathbbm{1} \brac{A_{i} - \tilde{\Tau} \leq y \leq B_{(1)} + \tilde{\Tau}}
        + \sum_{i=1}^{n}
        \mathbbm{1} \brac{B_{(1)} + \tilde{\Tau} < y \leq B_{n} + \tilde{\Tau}}
    }{n+1}
    \\
    &= \frac{
        1
        + \sum_{i=1}^{n}
        \mathbbm{1} \brac{A_{(i)} - \tilde{\Tau} \leq y \leq B_{(1)} + \tilde{\Tau}}
        + \sum_{i=1}^{n}
        \mathbbm{1} \brac{B_{(1)} + \tilde{\Tau} < y \leq B_{(i)} + \tilde{\Tau}}
    }{n+1}.
\end{align*}
and almost-surely
\begin{itemize}
    \item over $\paren{- \infty, A_{(1)} - \tilde{\Tau}}$ and $\paren{B_{(n)} + \tilde{\Tau}, +\infty}$,
    this function is constant and equal to $0$,
    \item over $\paren{- \infty, A_{(n)} - \tilde{\Tau}}$, this function is increasing,
    \item over $\croch{A_{(n)} - \tilde{\Tau}, B_{(1)} + \tilde{\Tau}}$, this function is constant and equal to $1$, and
    \item over $\paren{B_{(1)} + \tilde{\Tau}, + \infty}$, this function is decreasing.
\end{itemize}
Therefore, the upper-approximate prediction-region is contained in the following interval, almost-surely
\begin{align*}
    \ufcpr{}
    \subseteq
    \brac{
        y \in \mathcal{Y} :
        f_{\tilde{\Tau}}(y) > \alpha
    }
    = \croch{
        A_{(j_{n, \alpha})} - \tilde{\Tau},
        B_{(i_{n, \alpha})} + \tilde{\Tau}
    },
\end{align*}
where $j_{n, \alpha} = \floor{(n+1)\alpha - 1} + 1$ and $i_{n, \alpha} = \ceil{(n+1)(1-\alpha)}$.

Following a similar reasoning, the lower-approximate p-value function is bounded 
below by the function following $f_{- \tilde{\Tau}}$ defined as, for every $y \in \mathcal{Y}$, almost-surely
\begin{align*}
    f_{-\tilde{\Tau}}(y)
    := 
    \frac{
        1
        + \sum_{i = 1}^{n}
        \mathbbm{1}
        \brac{
            A_i + \tilde{\Tau} \leq y \leq B_i - \tilde{\Tau}
        }
    }{n+1}
    \leq \lfcpv{}{y}.
\end{align*}

Let us first consider the case where $A_{(n)} - \tilde{\Tau} \leq B_{(1)} + \tilde{\Tau}$.
The following inequalities hold, almost-surely
\begin{align*}
    A_{(1)} + \tilde{\Tau}
    \leq \ldots
    \leq A_{(n)} + \tilde{\Tau}
    \leq B_{(1)} - \tilde{\Tau}
    \leq \ldots
    \leq B_{(n)} - \tilde{\Tau}.
\end{align*}
The function $f_{-\tilde{\Tau}}$ can be rewritten as, for every $y \in \mathcal{Y}$
\begin{align*}
    f_{-\tilde{\Tau}}(y)
    = \frac{
        1
        + \sum_{i=1}^{n}
        \mathbbm{1} \brac{A_{(i)} + \tilde{\Tau} \leq y \leq B_{(1)} - \tilde{\Tau}}
        + \sum_{i=1}^{n}
        \mathbbm{1} \brac{B_{(1)} - \tilde{\Tau} < y \leq B_{(i)} - \tilde{\Tau}}
    }{n+1}.
\end{align*}
and
\begin{itemize}
    \item over $\paren{- \infty, A_{(1)} + \tilde{\Tau}}$ and $\paren{B_{(n)} - \tilde{\Tau}, +\infty}$,
    this function is constant and equal to $0$,
    \item over $\paren{- \infty, A_{(n)} + \tilde{\Tau}}$, this function is increasing,
    \item over $\croch{A_{(n)} + \tilde{\Tau}, B_{(1)} - \tilde{\Tau}}$, this function is constant and equal to $1$, and
    \item over $\paren{B_{(1)} - \tilde{\Tau}, + \infty}$, this function is decreasing.
\end{itemize}
Therefore, the lower-approximate region contains the following region
\begin{align*}
    \lfcpr{} \supseteq \brac{y \in \mathcal{Y} : f_{- \tilde{\Tau}}(y) > \alpha}
    = \croch{A_{\paren{j_{n, \alpha}}} + \tilde{\Tau}, B_{\paren{i_{n, \alpha}}} - \tilde{\Tau}}.
\end{align*}
Most importantly, it follows that the confidence gap is almost-surely bounded from above as
\begin{align*}
    \thicc{}
    &
    \leq
    \leb{
        \ufcpr{} \setminus \lfcpr{}
    }
    \\
    &
    \leq \leb{
        \croch{A_{\paren{j_{n, \alpha}}} - \tilde{\Tau}, B_{\paren{i_{n, \alpha}}} + \tilde{\Tau}}
        \setminus
        \croch{A_{\paren{j_{n, \alpha}}} + \tilde{\Tau}, B_{\paren{i_{n, \alpha}}} - \tilde{\Tau}}
    }
    \\
    &
    \leq  \frac{8\Tau}{1 - \beta}.
\end{align*}

Let us now consider the case where $A_{(n)} - \tilde{\Tau} > B_{(1)} + \tilde{\Tau}$.
Let us note $\mathcal{I}$ and $\mathcal{J}\subseteq \brac{1, \ldots, n}$ the lists of indices defined as
\begin{align*}
    \mathcal{I}
    &:= \brac{i \in \brac{1, \ldots, n} : A_{i} + \tilde{\Tau} \leq B_{(1)} + \tilde{\Tau}},
    \\
    \mathcal{J}
    &:=
    \brac{i \in \brac{1, \ldots, n}: A_{(n)} + \tilde{\Tau} \leq B_{i} - \tilde{\Tau}}.
\end{align*}
For every $i \in \mathcal{I}\cap\mathcal{J}^c$ and every $y \in \mathcal{Y}$, almost-surely
\begin{align*}
    \mathbbm{1}
    \brac{A_{i} + \tilde{\Tau} \leq y \leq B_{i} - \tilde{\Tau}}
    \geq \mathbbm{1}
    \brac{A_{i} + \tilde{\Tau}\leq y \leq B_{(1)} - \tilde{\Tau}},
\end{align*}
for every $i \in \mathcal{I}^{c}\cap\mathcal{J}$ and every $y \in \mathcal{Y}$,, almost-surely
\begin{align*}
    \mathbbm{1}
    \brac{A_{i} + \tilde{\Tau} \leq y \leq B_{i} - \tilde{\Tau}}
    \geq \mathbbm{1}
    \brac{A_{(n)} + \tilde{\Tau}\leq y \leq B_{i} - \tilde{\Tau}},
\end{align*}
and for every $i \in \mathcal{I}\cap\mathcal{J}$ and every $y \in \mathcal{Y}$, almost-surely
\begin{align*}
    \mathbbm{1}
    \brac{A_{i} + \tilde{\Tau} \leq y \leq B_{i} - \tilde{\Tau}}
    \geq \mathbbm{1}
    \brac{A_{i} + \tilde{\Tau}\leq y \leq B_{(1)} - \tilde{\Tau}}
    + \mathbbm{1} \brac{A_{(n)} + \tilde{\Tau}\leq y \leq B_{i} - \tilde{\Tau}}.
\end{align*}
Let us note $k_{\tilde\Tau}$ and $l_{\tilde\Tau} \in \brac{1, \ldots, n}$  the indices such that
\begin{align*}
    k_{\tilde{\Tau}}
    &
    :=
    \argmax
    \brac{
        A_{(i)} + \tilde{\Tau} : 
        i \in \brac{1, \ldots, n}, \quad
        A_{(i)} + \tilde{\Tau} \leq B_{(1)} - \tilde{\Tau}
    },
    \\
    l_{\tilde{\Tau}}
    &
    :=
    \argmin
    \brac{
        B_{(i)} - \tilde{\Tau} : 
        i \in \brac{1, \ldots, n}, \quad
        A_{(n)} + \tilde{\Tau} \leq B_{(i)} - \tilde{\Tau}
    }.
\end{align*}
It follows that, almost-surely
\begin{align*}
    A_{(1)} + \tilde{\Tau}
    \leq \ldots
    \leq A_{\paren{k_{\tilde{\Tau}}}} + \tilde{\Tau}
    \leq B_{(1)} - \tilde{\Tau}
    < A_{(n)} + \tilde{\Tau}
    \leq B_{\paren{l_{\tilde{\Tau}}}} - \tilde{\Tau} 
    \leq \ldots
    \leq B_{(n)} - \tilde{\Tau},
\end{align*}
and the function $f_{-\tilde{\Tau}}$ is in turn bounded from below almost-surely,
for every $y \in \mathcal{Y}$ as
\begin{align*}
    f_{-\tilde{\Tau}}(y)
    &= \frac{1 + \sum_{i=1}^{n}
    \mathbbm{1} \brac{A_i + \tilde{\Tau} \leq y \leq B_i - \tilde{\Tau}}}{n+1}
    \\
    &
    \geq
    \frac{1 + \sum_{i \in \mathcal{I} \cup \mathcal{J}}
    \mathbbm{1} \brac{A_i + \tilde{\Tau} \leq y \leq B_i - \tilde{\Tau}}}{n+1}
    \\
    &
    \geq
    \frac{1}{n+1}
    \left(
        1
        + \sum_{i \in \mathcal{I} \cap \mathcal{J}^c}
        \mathbbm{1} \brac{A_{i} + \tilde{\Tau}\leq y \leq B_{(1)} - \tilde{\Tau}}
        + \sum_{i \in \mathcal{I}^c \cap \mathcal{J}}
        \mathbbm{1} \brac{A_{(n)} + \tilde{\Tau} \leq y \leq B_i - \tilde{\Tau}}
    \right.
    \\
    &
    \qquad\qquad\qquad+
    \left.
        \sum_{i \in \mathcal{I} \cap \mathcal{J}}
        \paren{
            \mathbbm{1} \brac{A_i + \tilde{\Tau} \leq y \leq B_{(1)} - \tilde{\Tau}}
            + \mathbbm{1} \brac{A_{(1)} + \tilde{\Tau} \leq y \leq B_i - \tilde{\Tau}}
        }
    \right)
    \\
    &
    \geq
    \frac{
        1 
        + \sum_{i \in \mathcal{I}}
        \mathbbm{1} \brac{A_i + \tilde{\Tau} \leq y \leq B_{(1)} - \tilde{\Tau}}
        + \sum_{i \in \mathcal{J}}
        \mathbbm{1} \brac{A_{(n)} + \tilde{\Tau} \leq y \leq B_i - \tilde{\Tau}}
    }{n+1}
    \\
    &
    \geq
    \frac{
        1 + \sum_{i=1}^{k_{\tilde{\Tau}}}
        \mathbbm{1} \brac{
            A_{(i)} + \tilde{\Tau}
            \leq y \leq B_{(1)} - \tilde{\Tau}
        }
        +
        \sum_{i=l_{\tilde{\Tau}}}^{n}
        \mathbbm{1} \brac{
            A_{(n)} + \tilde{\Tau}
            \leq y \leq B_{(i)} - \tilde{\Tau}
        }
    }{
        n+1
    }
    =: \tilde f_{-\tilde{\Tau}}(y).
\end{align*}
and
\begin{itemize}
    \item over $\paren{- \infty, A_{(1)} + \tilde{\Tau}}$, $\paren{B_{(1)} - \tilde{\Tau}, A_{(n)} + \tilde{\Tau}}$
    and $\paren{B_{(n)} - \tilde{\Tau}, +\infty}$,
    this function is constant and equal to $0$,
    \item over $\paren{- \infty, B_{(1)} - \tilde{\Tau}}$, this function is increasing,
    \item over $\paren{A_{(n)} + \tilde{\Tau}, + \infty}$, this function is decreasing.
\end{itemize}
Therefore the lower-approximate region contains the following region, almost-surely
\begin{align*}
    \lfcpr{} \supseteq \brac{y \in \mathcal{Y} : \tilde f_{- \tilde{\Tau}}(y) > \alpha},
\end{align*}
which is equal to
\begin{enumerate}
    \item $\croch{
        A_{\paren{j_{n, \alpha}}} + \tilde{\Tau},
        B_{(1)} - \tilde{\Tau}
    }
    \cup
    \croch{
        A_{(n)} + \tilde{\Tau},
        B_{\paren{i_{n, \alpha}}} - \tilde{\Tau}
    }$ if $j_{n, \alpha} \leq k_{\tilde \Tau}$ and $l_{\tilde \Tau} \leq i_{n, \alpha} $,
    \item $
    \croch{
        A_{(n)} + \tilde{\Tau},
        B_{\paren{i_{n, \alpha}}} - \tilde{\Tau}
    }$ if $k_{\tilde \Tau} < j_{n, \alpha}$ and $l_{\tilde \Tau} \leq i_{n, \alpha} $,
    \item $\croch{
        A_{\paren{j_{n, \alpha}}} + \tilde{\Tau},
        B_{(1)} - \tilde{\Tau}
    }$ if $j_{n, \alpha} \leq l_{\tilde \Tau}$ and $i_{n, \alpha} < l_{\tilde \Tau}  $,
    \item $\emptyset$ if $k_{\tilde \Tau} < j_{n, \alpha}$ and $i_{n, \alpha} < l_{\tilde \Tau} $.
\end{enumerate}
It follows that the confidence gap is bounded from above, almost-surely
\begin{align*}
    \thicc{}
    &
    \leq
    \leb{
        \ufcpr{} \setminus \lfcpr{}
    }
    \\
    &
    \leq
    \leb{
        \croch{A_{\paren{j_{n, \alpha}}} - \tilde{\Tau}, B_{\paren{i_{n, \alpha}}} + \tilde{\Tau}}
        \setminus
        \brac{y \in \mathcal{Y} : \tilde f_{- \tilde{\Tau}}(y) > \alpha}
    }
\end{align*}
which is almost-surely equal to
\begin{enumerate}
    \item $
    A_{(n)} - B_{(1)} + 6 \tilde{\Tau}$
    if $j_{n, \alpha} \leq l_{\tilde \Tau}$ and $l_{\tilde \Tau} \leq i_{n, \alpha} $,
    \item $
    A_{(n)} - A_{\paren{j_{n, \alpha}}} + 4 \tilde{\Tau}$
    if $k_{\tilde \Tau} < j_{n, \alpha}$ and $l_{\tilde \Tau} \leq i_{n, \alpha} $,
    \item $
    B_{\paren{i_{n, \alpha}}} - B_{(1)} + 4 \tilde{\Tau}$
    if $j_{n, \alpha} \leq l_{\tilde \Tau}$ and $i_{n, \alpha} < l_{\tilde \Tau}  $,
    \item $
        B_{\paren{i_{n, \alpha}}} - A_{\paren{j_{n, \alpha}}} + 2 \tilde{\Tau}
    $ if $k_{\tilde \Tau} < j_{n, \alpha}$ and $i_{n, \alpha} < l_{\tilde \Tau}  $.
\end{enumerate}
From Equation \ref{eq.intersection.non.empty},
$A_{(n)} - \tilde{\Tau}
\leq \paren{h^\midd}^{-1} (0) - \tilde{\Tau}
\leq \paren{h^\midd}^{-1} (0) + \tilde{\Tau}
\leq B_{(1)} + \tilde{\Tau}$.
It follows that $B_{(1)} + \tilde{\Tau} - A_{(n)} + \tilde{\Tau} \geq 2 \tilde{\Tau}$,
and  $A_{(n)} - B_{(1)} \leq 0$. Therefore, in the case 1
\begin{align*}
    \thicc{} \leq A_{(n)} - B_{(1)} + 6 \tilde{\Tau} \leq 6\tilde{\Tau}.
\end{align*}
If $k_{\tilde \Tau} < j_{n, \alpha}$, then
$A_{\paren{j_{n, \alpha}}} + \tilde{\Tau} > B_{(1)} - \tilde{\Tau}$,
and $- A_{\paren{j_{n, \alpha}}} < -B_{(1)} + 2\tilde{\Tau}$. Therefore, in case 2
\begin{align*}
    \thicc{} \leq A_{(n)} - A_{\paren{j_{n, \alpha}}} + 4 \tilde{\Tau}
    \leq A_{(n)} -B_{(1)} + 6 \tilde{\Tau} \leq 6 \tilde{\Tau}.
\end{align*}
If $j_{n, \alpha} < l_{\tilde \Tau}$, then
$A_{(n)} + \tilde{\Tau} > B_{\paren{i_{n, \alpha}}} - \tilde{\Tau}$,
and $B_{\paren{i_{n, \alpha}}} < A_{(n)} + 2 \tilde{\Tau}$. Therefore, in case 3
\begin{align*}
    \thicc{} \leq B_{\paren{i_{n, \alpha}}} - B_{(1)} + 4 \tilde{\Tau}
    \leq A_{(n)} -B_{(1)} + 6 \tilde{\Tau} \leq 6 \tilde{\Tau}.
\end{align*}
If $k_{\tilde \Tau} < j_{n, \alpha}$ and $i_{n, \alpha} < l_{\tilde \Tau}  $,
$- A_{\paren{j_{n, \alpha}}} < -B_{(1)} + 2\tilde{\Tau}$ and
$B_{\paren{i_{n, \alpha}}} < A_{(n)} + 2 \tilde{\Tau}$. Therefore, in case 4
\begin{align*}
    \thicc{} \leq B_{\paren{i_{n, \alpha}}} - A_{\paren{j_{n, \alpha}}} + 2 \tilde{\Tau}
    \leq A_{(n)} -B_{(1)} + 6\tilde{\Tau} \leq 6 \tilde{\Tau}.
\end{align*}
In sum, the thickness gap is bounded from above
\begin{align*}
    \thicc{} \leq \frac{12 \Tau}{1 - \beta}.
    \quad \mbox{a.s.}
\end{align*}
\end{proof}

\begin{proof}(Proof of Theorem~\ref{thm.very.smooth})
\label{proof.thm.very.smooth}
\noindent{\emph{Coverage.}}
Following a similar reasoning as in Theorem~\ref{thm.non.smooth},
the new approximate region is a confidence-region.

\bigskip{}
\noindent{\emph{Crude upper-bound on the thickness.}}
Let us note
\begin{align*}
    \hat{\kappa}^2 := \max_{1\leq i \leq n+1} \kappa_{\mathcal{H}} \paren{X_i, X_i}.
\end{align*}
For every $y \in \mathcal{Y}$
\begin{align*}
    \tilde{\rho}^{(1)}_{\lambda}(y)
    = \paren{
        1 + K_{n+1, n+1} \frac{\beta_{\ell; 2}}{\lambda(n+1)}
    } \rho_{\lambda}^{(1)}\paren{y}
    \leq \paren{
        1 + \frac{\hat{\kappa}^2 \beta_{\ell; 2}}{\lambda(n+1)}
    } \rho,
\end{align*}
where the last inequality follows from \eqref{asm.loss.lipschitz}.
Under \eqref{asm.bounded.kernel},
it follows that
\begin{align*}
    \rho^{(2)}_{\lambda}(y) 
    &
    = \frac{1}{2} 
    \sqrt{K_{n+1, n+1}}
    \left(
        \frac{1}{n+1} 
        \sum_{i=1}^{n+1} 
        \paren{K_{i, i}}^{3/2}
    \right)
    \paren{
        \tilde{\rho}^{(1)}_{\lambda}(y)
    }^2
    \xi_{\ell}
    + 2 \lambda (K_{n+1, n+1}) \tilde{\rho}^{(1)}_{\lambda}(y) \beta_{\ell; 2}
    \\
    &
    \leq
    \frac{1}{2} 
    \hat{\kappa}^4 \left(
        1 + \frac{\hat{\kappa}^2 \beta_{\ell; 2}}{\lambda (n+1)}
    \right)^2 \rho^2 \xi_{\ell}
    + 2 \lambda \hat{\kappa}^2 \beta_{\ell; 2} 
    \left(
        1 + \frac{\hat{\kappa}^2 \beta_{\ell; 2}}{\lambda (n+1)}
    \right) \rho.
    \\
    &
    \leq
    \frac{1}{2} 
    \hat{\kappa}^4 \left(
        1 + 2\frac{\hat{\kappa}^2 \beta_{\ell; 2}}{\lambda (n+1)}
        +\frac{\hat{\kappa}^4 \beta_{\ell; 2}^2}{\lambda^2 (n+1)^2}
    \right) \rho^2 \xi_{\ell}
    + 2 \lambda \hat{\kappa}^2 \rho \beta_{\ell; 2}
    + 2\frac{\hat{\kappa}^4 \beta_{\ell; 2}^2}{(n+1)}
    \\
    &
    \leq
    \frac{1}{2} 
    \kappa^4 \left(
        1 + 2\frac{\kappa^2 \beta_{\ell; 2}}{\lambda (n+1)}
        +\frac{\kappa^4 \beta_{\ell; 2}^2}{\lambda^2 (n+1)^2}
    \right) \rho^2 \xi_{\ell}
    + 2 \lambda \kappa^2 \rho \beta_{\ell; 2}
    + 2\frac{\kappa^4 \beta_{\ell; 2}^2}{(n+1)}
    \\
    &
    \leq
    \frac{\kappa^4 \rho^2 \xi_{\ell}}{2} 
    + \frac{\kappa^6 \rho^2 \beta_{\ell; 2} \xi_{\ell}}{\lambda (n+1)}
    + \frac{\kappa^8 \rho^2 \beta_{\ell; 2}^2 \xi_{\ell}}{2 \lambda^2 (n+1)^2}
    + 2 \lambda \kappa^2 \rho \beta_{\ell; 2}
    + 2\frac{\kappa^4 \beta_{\ell; 2}^2}{(n+1)}
    \\
    &
    \leq
    \frac{\kappa^4 \rho^2 \xi_{\ell}}{2} 
    + O \paren{\frac{1}{\lambda(n+1)}}
    + o \paren{\frac{1}{\lambda(n+1)}}
    + 2 \lambda \kappa^2 \rho \beta_{\ell; 2}
    + O \paren{\frac{1}{\lambda(n+1}}
    \\
    &
    \leq \frac{\kappa^4 \rho^2 \xi_{\ell}}{2}
    + 2 \lambda \kappa^2 \rho \beta_{\ell; 2}
    + O \paren{\frac{1}{\lambda (n+1)}}.
\end{align*}
Going back to the upper-bound on the score approximation,
for every $i \in \brac{1, \ldots, n+1}$
and every $y \in \mathcal{Y}$
\begin{align}
    \widehat{\tau}_{\lambda; i}^{(2)}
    &= \sqrt{K_{i, i}}
    \sqrt{K_{n+1, n+1}}
    \min \paren{
        \frac{\rho^{(2)}_{\lambda}(y)}{
        \lambda^3 (n+1)^2},
        \frac{2 \rho^{(1)}_{\lambda}(y)}{\lambda (n+1)}
    } \notag
    \\
    &
    \leq
    \min \paren{
        \frac{
            \kappa^6 \rho^2 \xi_{\ell}
        }{
            2 \lambda^3 (n+1)^2
        }
        + 
        \frac{
            2 \kappa^4 \rho \beta_{\ell; 2}
        }{2 \lambda^2 (n+1)^2}
        + O \paren{\frac{1}{\lambda^4 (n+1)^3}},
        \frac{2 \kappa^2 \rho}{\lambda (n+1)}
    } \notag
    \\
    &
    \leq
    \min \paren{
        \frac{
            \kappa^6 \rho^2 \xi_{\ell}
        }{
            2 \lambda^3 (n+1)^2
        }
        + o \paren{\frac{1}{\lambda^3 (n+1)^2}}
        + o \paren{\frac{1}{\lambda^3 (n+1)^2}},
        \frac{2 \kappa^2 \rho}{\lambda (n+1)}
    } \notag
    \\
    &
    \leq
    \min \paren{
        O \paren{
            \frac{
                \kappa^6 \rho^2 \xi_{\ell}
            }{
                \lambda^3 (n+1)^2
            }
        },
        \frac{2 \kappa^2 \rho}{\lambda (n+1)}
    }
    =: \widehat{\tau}^{(2)}.\label{eq.def.T2}
\end{align}
On the other hand, for every $i \in \brac{1, \ldots, n+1}$
\begin{align*}
    \abss{
        \tilde{f}_{\lambda; D^{y}}^{\mathrm{IF}} \paren{X_i}
        - \hat{f}_{\lambda; D^{z}} \paren{X_i}
    }
    &
    =
    \abss{
        \hat{f}_{\lambda; D^{z}} \paren{X_i}
        - \ifunc{z}\paren{X_i} + \ifunc{y}\paren{X_i}
        - \hat{f}_{\lambda; D^{z}} \paren{X_i}
    }
    \\
    &
    =
    \frac{1}{n+1}
    \abss{
        - \partial_2
        \ell
        \paren{
            z, \hat{f}_{\lambda; D^{z}}\paren{X_{n+1}}
        }
        + \partial_2
        \ell
        \paren{
            y, \hat{f}_{\lambda; D^{z}}\paren{X_{n+1}}
        }
    }
    \\
    &
    \quad \quad \quad \times \abss{
        \paren{
            \croch{
                \partial_2^2 \wer{\lambda}{\uu, \hat{f}_{\lambda; D^{z}}}
            }^+
            K_{X_{n+1}}
        }
        \paren{X_i}
    }
    \\
    &
    \leq
    \frac{2 \rho_{\lambda}^{(1)}\paren{y}}{n+1}
    \normh{
        \croch{
            \partial_2^2 \wer{\lambda}{\uu, \hat{f}_{\lambda; D^{z}}}
        }^+
        K_{X_{n+1}}
    }
    \normh{
        K_{X_i}
    }
    \\
    &
    \leq
    \frac{2 \rho}{n+1}
    \normop{
        \croch{
            \partial_2^2 \wer{\lambda}{\uu, \hat{f}_{\lambda; D^{z}}}
        }^+
    }
    \normh{
        K_{X_{n+1}}
    }
    \sqrt{K_{i, i}}
    \\
    &
    \leq
    \frac{2 \rho}{n+1}
    \frac{1}{2\lambda}
    \sqrt{K_{n+1, n+1}}
    \sqrt{K_{i, i }}
    \leq
    \frac{\hat{\kappa}^2 \rho}{\lambda \paren{n+1}}
    \\
    &
    \leq
    \frac{\kappa^2 \rho}{\lambda \paren{n+1}},
\end{align*}
where the third inequality follows from Lemma~\ref{lm.wer.strongly.convex.1}
which ensures that $\hat{\mathbf{R}}_{\lambda}\paren{\mathbf{u},\bullet}$
is $2\lambda$-strongly convex.
Let the prediction-region $\ufcprr{}{\lambda; \alpha}$
be defined as
\begin{align*}
    \ufcprr{}{\lambda; \alpha}
    :=
    \brac{
        y \in \mathcal{Y} :
        \widetilde{\pi}_{\lambda; D}^{\up}
        \paren{X_{n+1}, y} > \alpha
    },
\end{align*}
where for every $y \in \mathcal{Y}$, the conformal p-value function is defined as
\begin{align*}
    \widetilde{\pi}_{\lambda; D}^{\up}
    \paren{X_{n+1}, y} 
    := \frac{1
    + \sum_{i=1}^{n}
    \mathbbm{1} \brac{
        \widetilde{S}_{\lambda; D^y}\paren{X_i, Y_i}
        - \tilde{\tau}_{\lambda; i}^{(2)}\paren{y}
        \geq
        \widetilde{S}_{\lambda; D^y}\paren{X_{n+1}, y}
        + \tilde{\tau}_{\lambda; n+1}^{(2)}\paren{y}
    }
    }{n+1},
\end{align*}
with the non-conformity scores approximation defined in Eq.~\eqref{eq.approximate.score.0},
where for every $i \in \brac{1, \ldots, n+1}$
and every $y \in \mathcal{Y}$
\begin{align*}
    \tilde{\tau}_{\lambda; i}^{(2)}\paren{y}
    :=
    \widehat{\tau}_{\lambda; i}^{(2)}\paren{y}
    + \frac{\kappa^2 \rho}{\lambda \paren{n+1}}.
\end{align*}
It follows that the approximate full-conformal prediction-region
$\ufcprr{,(2)}{\lambda; \alpha}$ is contained in $\ufcprr{}{\lambda; \alpha}$.
Therefore, the thickness $\thick{(2)}{\lambda; \alpha}$ of the former
is bounded from above
by the thickness $\thick{}{\lambda; \alpha}$ of the latter that is, 
\begin{align*}
    \thick{(2)}{\lambda; \alpha}
    \leq \thick{}{\lambda; \alpha}.
\end{align*}
Moreover, for every $i \in \brac{1, \ldots, n+1}$
and every $y \in \mathcal{Y}$
\begin{align*}
    \tilde{\tau}_{\lambda; i}^{(2)}\paren{y}
    =
    \widehat{\tau}_{\lambda; i}^{(2)}\paren{y}
    + \frac{\kappa^2 \rho}{\lambda \paren{n+1}}
    \leq
    \widehat{\tau}^{(2)} + \frac{\kappa^2 \rho}{\lambda \paren{n+1}}.
\end{align*}
    Since the non-conformity score approximations
    used to define the prediction $\ufcprr{}{\lambda; \alpha}$
    do not depend on the output value $y \in \mathcal{Y}$,
it follows from Lemma~\ref{lm.bound.conf.region.gap} that
\begin{align*}
    \thick{}{\lambda; \alpha}
    \leq 8 \paren{
        \frac{\kappa^2 \rho}{\lambda \paren{n+1}}
        +
        \widehat{\tau}^{(2)}
    }
    \leq
    O \paren{
        \frac{\kappa^2 \rho}{\lambda (n+1)}
    }\quad \mbox{a.s.}
\end{align*}
Therefore, the thickness of the approximate full-conformal prediction-region is bounded from above as
\begin{align*}
    \thick{(2)}{\lambda; \alpha} \leq O \paren{
        \frac{\kappa^2 \rho}{\lambda (n+1)}
    }\quad \mbox{a.s.}
\end{align*}
\bigskip
\noindent{\emph{Refined upper-bound on the thickness.}}
Under \eqref{asm.loss.smooth.2},
for every $i \in \brac{1, \ldots, n+1}$ and every $y \in \mathcal{Y}$,
\begin{align*}
    \abss{
        \partial_y \apred{\lambda;}{y}{\mathrm{IF}} \paren{X_i}
    }
    &
    = \abss{
        \partial_y 
        \paren{
            \pred{\lambda;}{z}\paren{X_i}
            - \paren{\ifunc{y}}\paren{X_i}
            + \paren{\ifunc{z}}\paren{X_i}
        }
    }
    \\
    &
    =
    \abss{
        - \partial_y \paren{\ifunc{y}}\paren{X_i}    
    }
    \\
    &
    =
    \abss{
        \frac{1}{n+1}
        \partial_y
        \partial_2
        \ell\paren{y, \pred{\lambda;}{z}\paren{X_{n+1}}}
        \paren{
            \croch{
                \DDtwo \wer{\lambda}{\uu}{\pred{\lambda;}{z}}
            }^+ K_{X_{n+1}}
        } \paren{X_i}
    }
    \\
    &
    =
    \frac{1}{n+1}
    \abss{
        \partial_1
        \partial_2
        \ell\paren{y, \pred{\lambda;}{z}\paren{X_{n+1}}}
    }
    \abss{
        \paren{
            \croch{
                \DDtwo \wer{\lambda}{\uu}{\pred{\lambda;}{z}}
            }^+ K_{X_{n+1}}
        } \paren{X_i}
    }
    \\
    &
    \leq
    \frac{1}{n+1}
    \beta_{\ell; 1}
    \abss{
        \doth{
            \croch{
                \DDtwo \wer{\lambda}{\uu}{\pred{\lambda;}{z}}
            }^+ K_{X_{n+1}}}{K_{X_i}}
    }
    \\
    &
    \leq
    \frac{1}{n+1}
    \beta_{\ell; 1}
    \normh{\croch{
                \DDtwo \wer{\lambda}{\uu}{\pred{\lambda;}{z}}
            }^+ K_{X_{n+1}}
    }
    \normh{
        K_{X_i}
    }
    \\
    &
    \leq
    \frac{1}{n+1}
    \beta_{\ell; 1}
    \normop{
        \croch{
                \DDtwo \wer{\lambda}{\uu}{\pred{\lambda;}{z}}
            }^+
    }
    \normh{K_{X_{n+1}}
    }
    \sqrt{K_{i, i}} 
    \\
    &
    \leq
    \frac{
        \hat{\kappa}^2 \beta_{\ell; 1}
    }{2\lambda (n+1)}    
    \leq
    \frac{\kappa^2 \beta_{\ell; 1}}{2\lambda (n+1)}
    \quad \mbox{a.s.},
\end{align*}
where the last inequality follows from \eqref{asm.bounded.kernel} and
Lemma~\ref{lm.wer.strongly.convex.1}, which ensures that $\hat{\mathbf{R}}_{\lambda}\paren{\mathbf{u},\bullet}$
is $2\lambda$-strongly convex.
It follows that when
\begin{align*}
    \lambda (n+1) > \frac{\kappa^2 \beta_{\ell; 1}}{2},
\end{align*}
for every $i \in \brac{1, \ldots, n+1}$ and every $y \in \mathcal{Y}$,
\begin{align*}
    \abss{
        \partial_y \apred{\lambda;}{y}{\mathrm{IF}} \paren{X_i}
    } < 1.
\end{align*}
By combining Lemma~\ref{lm.bound.conf.region.gap.2} and the definition of $\widehat{\tau}^{(2)}$ from Eq.~\eqref{eq.def.T2}, choosing $\beta = \frac{\beta_{\ell; 1} \kappa^2}{\lambda (n+1)}$ leads to upper-bound the thickness $\thick{(2)}{\lambda; \alpha}$ as
\begin{align*}
    \thick{(2)}{\lambda; \alpha}
    &
    \leq
    \frac{12 }{1 - \frac{\beta_{\ell; 1} \kappa^2}{\lambda (n+1)}} \widehat{\tau}^{(2)}
    \\
    &
    \leq
    \frac{12}{
        1 - \frac{\beta_{\ell; 1} \kappa^2}{\lambda (n+1)}
    }
    \min \paren{
        O \paren{
            \frac{
                \kappa^6 \rho^2 \xi_{\ell}
            }{
                \lambda^3 (n+1)^2
            }
        },
        \frac{2 \kappa^2 \rho}{\lambda (n+1)}
    }
    \\
    &
    \leq
    \paren{
        12 + O\paren{\frac{\beta_{\ell; 1} \kappa^2}{\lambda (n+1)}}
    }
    \min \paren{
        O \paren{
            \frac{
                \kappa^6 \rho^2 \xi_{\ell}
            }{
                \lambda^3 (n+1)^2
            }
        },
        O \paren{\frac{\kappa^2 \rho}{\lambda (n+1)}}
    }
    \\
    &
    \leq
    \min \paren{
            O \paren{
            \frac{
                \kappa^6 \rho^2 \xi_{\ell}
            }{
                \lambda^3 (n+1)^2
            }
        },
            O \paren{\frac{\kappa^2 \rho}{\lambda (n+1)}}
    }.
    \quad \mbox{a.s.}
\end{align*}
\end{proof}

\subsection{Proof of Corollary~\ref{cor.very.smooth}}
\begin{proof}
    \label{proof.cor.very.smooth}
    Under the assumptions mentionned in Theorem~\ref{thm.very.smooth},
    the prediction-region $\tilde{C}_{\lambda; \alpha}^{\mathrm{up, (2)}}\paren{X_{n+1}}$
    is an approximate full-conformal prediction-region.

    Since it is assumed that
    the exists a constant $c \in \paren{0, +\infty}$
    such that the conditional probability density function
    of the output $p\paren{\bullet | D, X_{n+1}} : \mathcal{Y} \mapsto \mathbb{R}_+$
    is bounded from above by said constant $c$, and
    that the non-conformity scores are almost-surely distinct, 
    it follows from Lemma~\ref{lm.confidence.gap} that
    \begin{align*}
        \mathbb{P}\croch{
            Y_{n+1}\in \tilde{C}_{\alpha}^{\mathrm{up, (2)}}\paren{X_{n+1}}
        }
        &
        \leq
        1 - \alpha + c
        \mathbb{E}_{D, X_{n+1}}\croch{
            \mathrm{THK}_{\lambda; \alpha}^{\mathrm{(2)}}
        }
        \\
        &
        \leq
        1 - \alpha
        + c \min \paren{
            O \paren{
            \frac{
                \kappa^6 \rho^2 \xi_{\ell}
            }{
                \lambda^3 (n+1)^2
            }
        },
            O \paren{\frac{\kappa^2 \rho}{\lambda (n+1)}}
        },
    \end{align*}
    where the second inequality
    follows from the upper-bound
    on the thickness provided in Eq.~\eqref{eq.bound.thickness.very.smooth}
    which holds under the assumptions mentionned in Theorem~\ref{thm.very.smooth}.
\end{proof}

\section{Examples of loss-functions and further numerical experiments}
\label{sec.example.loss.function}
The present section reviews the loss-functions provided
at the Section~\ref{sec.predictor}. For each example,
it provides the values for the smoothness constants
discussed before and if not already provided in the body of the text,
the results of some experiments.
To be more specific, Section~\ref{sec.logcosh} deals with the Logcosh loss-function. Section~\ref{sec.pseudo.huber} deals with the pseudo-Huber loss-function, while Section~\ref{sec.smoothed.pinball} deals with the smoothed-pinball loss-function.

\subsection{Logcosh loss \citep{saleh2022statistical}}
\label{sec.logcosh}
Let $a \in \paren{0, +\infty}$.
For every $\paren{y, u} \in \mathcal{Y} \times \mathcal{Y}$,
\begin{align*}
    \ell\paren{y, u} = a\log\paren{
        \cosh\paren{\frac{y - u}{a}}
    }.
\end{align*}

\begin{lemma}
\label{lm.logcosh.d1}
For every $y \in \mathcal{Y}$,
the function $u \in \mathcal{Y} \mapsto \ell\paren{y, u} \in \mathbb{R}$
is $\rho$-Lipschitz continuous with $\rho = 1$.
\end{lemma}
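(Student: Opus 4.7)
The plan is to compute the partial derivative $\partial_2 \ell(y, u)$ explicitly and then apply the mean value theorem to conclude Lipschitz continuity with constant $\rho = 1$.

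First, fix $y \in \mathcal{Y}$ and differentiate $u \mapsto a \log(\cosh((y-u)/a))$ with respect to $u$ via the chain rule. Using $(\log \cosh)' = \tanh$ and the inner derivative $-1/a$, I expect to obtain
\begin{align*}
    \partial_2 \ell(y, u) = -\tanh\!\left(\frac{y-u}{a}\right).
\end{align*}
Since $\tanh$ is smooth and $|\tanh(t)| < 1$ for every $t \in \mathbb{R}$, the derivative satisfies $|\partial_2 \ell(y, u)| \leq 1$ uniformly in $u$ (and in $y$).

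Next, I invoke the mean value theorem: for every $u_1, u_2 \in \mathcal{Y}$, there exists $\xi$ between $u_1$ and $u_2$ such that
\begin{align*}
    |\ell(y, u_1) - \ell(y, u_2)| = |\partial_2 \ell(y, \xi)| \cdot |u_1 - u_2| \leq |u_1 - u_2|,
\end{align*}
which is exactly the $\rho$-Lipschitz property with $\rho = 1$.

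No significant obstacle is expected here; the argument is a one-line derivative computation plus the elementary bound $|\tanh| \leq 1$. The only mild care needed is the sign in the chain rule (the derivative is with respect to the \emph{second} argument $u$, whence the factor $-1/a$), but this does not affect the absolute value bound.
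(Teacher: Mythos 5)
Your proof is correct and follows essentially the same route as the paper: compute $\partial_2 \ell(y,u) = -\tanh\paren{\frac{y-u}{a}}$, use $\abss{\tanh} \leq 1$ to bound the derivative uniformly, and conclude the $1$-Lipschitz property (the paper leaves the mean-value-theorem step implicit, which you spell out). No gaps.
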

\begin{proof}
    For every $\paren{y, u} \in \mathcal{Y} \times \mathcal{Y}$,
    \begin{align*}
        \partial_2\ell\paren{y, u} = -\tanh\paren{\frac{y - u}{a}}.
    \end{align*}
    It follows that
    \begin{align*}
        \abss{\partial_2\ell\paren{y, u}} = 
        \abss{\tanh\paren{\frac{y - u}{a}}} \leq 1, 
    \end{align*}
    therefore the function $u \in \mathcal{Y} \mapsto \ell\paren{y, u} \in \mathbb{R}$
    is $\rho$-Lipschitz continuous with $\rho = 1$.
\end{proof}

\begin{lemma}
\label{lm.logcosh.d2}
For every $y \in \mathcal{Y}$,
the function $u \in \mathcal{Y} \mapsto \ell\paren{y, u} \in \mathbb{R}$ is convex, and
the function $u \in \mathcal{Y} \mapsto \partial_2 \ell\paren{y, u} \in \mathbb{R}$
is $\beta_{\ell; 2}$-Lipschitz continuous with $\beta_{\ell; 2} = \frac{1}{a}$.
\end{lemma}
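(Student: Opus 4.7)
The plan is to establish both claims by differentiating $\partial_2\ell\paren{y,u}$ once more and reading off the sign and the sup-norm of the second derivative. From Lemma~\ref{lm.logcosh.d1} we already have $\partial_2\ell\paren{y,u} = -\tanh\paren{\paren{y-u}/a}$, so the computation of $\partial_2^2\ell\paren{y,u}$ is immediate via the chain rule together with the identity $\tanh^{\prime}\paren{t} = 1 - \tanh^2\paren{t}$. I expect to obtain
\begin{align*}
\partial_2^2\ell\paren{y,u} = \frac{1}{a}\paren{1 - \tanh^2\paren{\frac{y-u}{a}}}.
\end{align*}

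From this closed-form expression, convexity of $u \mapsto \ell\paren{y,u}$ follows at once because $\tanh^2\paren{\cdot} \in \croch{0,1}$ forces $\partial_2^2\ell\paren{y,u} \geq 0$ for every $\paren{y,u}$. The same expression also yields $0 \leq \partial_2^2\ell\paren{y,u} \leq 1/a$, so by the mean-value theorem the map $u \mapsto \partial_2\ell\paren{y,u}$ is $\paren{1/a}$-Lipschitz continuous, which is the announced value $\beta_{\ell;2} = 1/a$.

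There is no genuine obstacle here; the only mild point is to observe that the upper bound $1/a$ is attained in the limit as $y - u \to 0$, so the constant cannot be improved, which justifies that $\beta_{\ell;2} = 1/a$ is the sharp Lipschitz constant rather than merely an upper bound. The argument is essentially a direct calculation, parallel in structure to the proof of Lemma~\ref{lm.logcosh.d1}.
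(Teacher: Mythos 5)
Your proof is correct and follows essentially the same route as the paper: compute $\partial_2^2\ell\paren{y,u}$, note its non-negativity for convexity, and bound it by $1/a$ to get the Lipschitz constant (your expression $\frac{1}{a}\paren{1-\tanh^2}$ is identical to the paper's $\frac{1}{a}\mathrm{sech}^2$). The extra remark on sharpness of the constant is fine but not needed for the statement.
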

\begin{proof}
   For every $\paren{y, u} \in \mathcal{Y} \times \mathcal{Y}$,
\begin{align*}
    \partial_2^2\ell\paren{y, u} = \frac{\mathrm{sech}^2\paren{\frac{y - u}{a}}}{a} \geq 0,
\end{align*}
therefore, the function $u \in \mathcal{Y} \mapsto \ell\paren{y, u} \in \mathbb{R}$ is convex.
Moreover,
\begin{align*}
    \abss{
        \partial_2^2\ell\paren{y, u}
    } = \abss{
        \frac{\mathrm{sech}^2\paren{\frac{y - u}{a}}}{a}
    } \leq \frac{1}{a},
\end{align*}
therefore, the function $u \in \mathcal{Y} \mapsto \partial_2 \ell\paren{y, u} \in \mathbb{R}$
is $\beta_{\ell; 2}$-Lipschitz continuous with $\beta_{\ell; 2} = \frac{1}{a}$. 
\end{proof}

\begin{lemma}
\label{lm.logcosh.d3}
For every $y \in \mathcal{Y}$,
the function $u \in \mathcal{Y} \mapsto \partial_2^2 \ell\paren{y, u} \in \mathbb{R}$
is $\xi_{\ell}$-Lipschitz continuous with
\begin{align*}
    \xi_{\ell} = \frac{2 \operatorname{sech}^{2}\left(\operatorname{arcsinh}\left(\frac{1}{\sqrt{2}}\right)\right) \tanh\left(\operatorname{arcsinh}\left(\frac{1}{\sqrt{2}}\right)\right)}{a^{2}} \leq \frac{1}{a^2}.
\end{align*}
\end{lemma}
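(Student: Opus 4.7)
The plan is to establish the Lipschitz constant of $u\mapsto \partial_2^2\ell(y,u)$ by controlling $\partial_2^3\ell(y,u)$ uniformly in $u$. Writing $t := (y-u)/a$ and recalling from Lemma~\ref{lm.logcosh.d2} that $\partial_2^2\ell(y,u) = \frac{1}{a}\operatorname{sech}^2(t)$, I would first differentiate once more in $u$. Using $\frac{d}{du}t = -1/a$ and the identity $\frac{d}{dt}\operatorname{sech}^2(t) = -2\operatorname{sech}^2(t)\tanh(t)$, a direct chain-rule computation yields
\begin{align*}
\partial_2^3\ell(y,u) \;=\; \frac{2}{a^2}\operatorname{sech}^2(t)\tanh(t).
\end{align*}
The problem thus reduces to maximizing $|g(t)| := |\operatorname{sech}^2(t)\tanh(t)|$ over $t\in\mathbb{R}$.

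Next I would locate the extrema of $g$. Differentiating,
\begin{align*}
g'(t) \;=\; \operatorname{sech}^2(t)\bigl(\operatorname{sech}^2(t) - 2\tanh^2(t)\bigr)
\;=\; \operatorname{sech}^2(t)\bigl(1 - 3\tanh^2(t)\bigr),
\end{align*}
where the last equality uses $\operatorname{sech}^2 + \tanh^2 = 1$. The positive root $t^{\star}$ satisfies $\tanh^2(t^{\star}) = 1/3$; equivalently, from $\cosh^2 - \sinh^2 = 1$ one gets $\sinh(t^{\star}) = 1/\sqrt{2}$, i.e.\ $t^{\star} = \operatorname{arcsinh}(1/\sqrt{2})$. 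Since $g$ is odd and vanishes at $\pm\infty$, the supremum of $|g|$ is attained at $\pm t^{\star}$, giving
\begin{align*}
\sup_{u\in\mathcal{Y}} |\partial_2^3\ell(y,u)| \;=\; \frac{2}{a^2}\operatorname{sech}^2(t^{\star})|\tanh(t^{\star})| \;=\; \xi_{\ell}.
\end{align*}
Then by the mean value theorem (or, more precisely, by integration since $\partial_2^2\ell(y,\cdot)$ is continuously differentiable under \eqref{asm.loss.c3}), $u\mapsto \partial_2^2\ell(y,u)$ is $\xi_\ell$-Lipschitz.

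Finally, to obtain the comparison $\xi_\ell \leq 1/a^2$, I would plug the explicit values $\operatorname{sech}^2(t^{\star}) = 2/3$ and $\tanh(t^{\star}) = 1/\sqrt{3}$ (both obtained from $\sinh(t^{\star}) = 1/\sqrt{2}$ and $\cosh^2(t^{\star}) = 3/2$) to get $\xi_\ell = \frac{4}{3\sqrt{3}\,a^2} = \frac{4\sqrt{3}}{9 a^2}$, which is strictly less than $1/a^2$ since $4\sqrt{3}/9 \approx 0.77 < 1$. No step is a genuine obstacle; the only point to be careful about is identifying the correct critical point and checking it is a maximum of $|g|$ rather than a local extremum of $g$ itself (which is why the sign/oddness discussion matters).
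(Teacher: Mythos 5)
Your proof is correct and follows essentially the same route as the paper's: both bound $\sup_u\abss{\partial_2^3\ell(y,u)}$ by differentiating once more, locating the critical points at $\pm\operatorname{arcsinh}(1/\sqrt{2})$ (after rescaling by $a$), evaluating there, and concluding via the mean value theorem. Your substitution $t=(y-u)/a$ together with the oddness-and-decay argument is a slightly tidier packaging of the paper's explicit monotonicity discussion of $\partial_2^3\ell$ in $u$, and your explicit evaluation $\xi_\ell = \frac{4}{3\sqrt{3}\,a^2} < \frac{1}{a^2}$ matches the paper's stated constant.
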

\begin{proof}
    For every $\paren{y, u} \in \mathcal{Y} \times \mathcal{Y}$,
    \begin{align*}
        \partial_2^3\ell\paren{y, u} = 
        \frac{2 \operatorname{sech}^{2}\left(\frac{y - u}{a}\right) \tanh\left(\frac{y - u}{a}\right)}{a^{2}}.
    \end{align*}
    It follows that, for every $\paren{y, u} \in \mathcal{Y} \times \mathcal{Y}$,
\begin{align*}
    \partial_2^4\ell\paren{y, u} = 
    \frac{4 \sinh^{2}\left(\frac{y - u}{a}\right) - 2}{a^{3} \cosh^{4}\left(\frac{y - u}{a}\right)},
\end{align*}
with
\begin{align*}
    \partial_2^4\ell\paren{y, y + \operatorname{arcsinh}\left(\frac{1}{\sqrt{2}}\right) \, a} &= 0,
    \\
    \partial_2^4\ell\paren{y, y - \operatorname{arcsinh}\left(\frac{1}{\sqrt{2}}\right) \, a} &= 0.
\end{align*}
and $\partial_2^4\ell\paren{y, u} \leq 0$ if and only if
$u \in \croch{
    y - \operatorname{arcsinh}\left(\frac{1}{\sqrt{2}}\right) \, a,
    y + \operatorname{arcsinh}\left(\frac{1}{\sqrt{2}}\right) \, a
}$.
It follows that the function
$u \in \mathcal{Y} \mapsto \partial_2^3\ell\paren{y, u} \in \mathbb{R}$
is increasing over $\left(
    - \infty, y - \operatorname{arcsinh}\left(\frac{1}{\sqrt{2}}\right) \, a
\right]$ and $\left[
    y + \operatorname{arcsinh}\left(\frac{1}{\sqrt{2}}\right) \, a,
    +\infty
\right)$,
and decreasing over $\croch{
    y - \operatorname{arcsinh}\left(\frac{1}{\sqrt{2}}\right) \, a,
    y + \operatorname{arcsinh}\left(\frac{1}{\sqrt{2}}\right) \, a
}$. It follows that for every $u \in \mathcal{Y}$
\begin{align*}
    \partial_2^3\ell\paren{y, y - \operatorname{arcsinh}\left(\frac{1}{\sqrt{2}}\right) \, a}
    \geq \partial_2^3\ell\paren{y, u}
    \geq \partial_2^3\ell\paren{y, y + \operatorname{arcsinh}\left(\frac{1}{\sqrt{2}}\right) \, a}
\end{align*}
with
\begin{align*}
    \partial_2^3\ell\paren{y, y - \operatorname{arcsinh}\left(\frac{1}{\sqrt{2}}\right) \, a}
    &
    = \frac{2 \operatorname{sech}^{2}\left(\frac{\operatorname{arcsinh}\left(\frac{1}{\sqrt{2}}\right) \, a}{a}\right) \tanh\left(\frac{\operatorname{arcsinh}\left(\frac{1}{\sqrt{2}}\right) \, a}{a}\right)}{a^{2}}
    \\
    &
    = \frac{2 \operatorname{sech}^{2}\left(\operatorname{arcsinh}\left(\frac{1}{\sqrt{2}}\right)\right) \tanh\left(\operatorname{arcsinh}\left(\frac{1}{\sqrt{2}}\right)\right)}{a^{2}} \geq 0,
\end{align*}
and
\begin{align*}
    \partial_2^3\ell\paren{y, y + \operatorname{arcsinh}\left(\frac{1}{\sqrt{2}}\right) \, a}
    &
    = - \frac{2 \operatorname{sech}^{2}\left(\operatorname{arcsinh}\left(\frac{1}{\sqrt{2}}\right)\right) \tanh\left(\operatorname{arcsinh}\left(\frac{1}{\sqrt{2}}\right)\right)}{a^{2}} \leq 0.
\end{align*}
Thus,
\begin{align*}
    \abss{
        \partial_2^3\ell\paren{y, u}
    } \leq \frac{2 \operatorname{sech}^{2}\left(\operatorname{arcsinh}\left(\frac{1}{\sqrt{2}}\right)\right) \tanh\left(\operatorname{arcsinh}\left(\frac{1}{\sqrt{2}}\right)\right)}{a^{2}},
\end{align*}
therefore, the function $u \in \mathcal{Y} \mapsto \partial_2^2\ell\paren{y, u} \in \mathbb{R}$
is $\xi_{\ell}$-Lipschitz continuous with
\begin{align*}
    \xi_{\ell} = \frac{2 \operatorname{sech}^{2}\left(\operatorname{arcsinh}\left(\frac{1}{\sqrt{2}}\right)\right) \tanh\left(\operatorname{arcsinh}\left(\frac{1}{\sqrt{2}}\right)\right)}{a^{2}} \leq \frac{1}{a^2}.
\end{align*}
\end{proof}

\subsection{Pseudo-Huber loss \citep{charbonnier1994two}}
\label{sec.pseudo.huber}
Let $a \in \paren{0, +\infty}$.
For every $\paren{y, u} \in \mathcal{Y} \times \mathcal{Y}$,
\begin{align*}
    \ell\paren{y, u} = a^2 \paren{
        \sqrt{1 + \paren{\frac{y - u}{a}}^2} - 1
    }.
\end{align*}

\begin{lemma}
\label{lm.pseudo.huber.d1}
For every $y \in \mathcal{Y}$,
the function $u \in \mathcal{Y} \mapsto \ell\paren{y, u} \in \mathbb{R}$
is $\rho$-Lipschitz continuous with $\rho = a$.
\end{lemma}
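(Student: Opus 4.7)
The plan is to establish Lipschitz continuity by bounding the derivative $\partial_2 \ell(y, \cdot)$ uniformly, which is the standard route since the loss is smooth.

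First, I would compute $\partial_2 \ell(y, u)$ directly by the chain rule. Setting $t := (y-u)/a$, differentiation of $u \mapsto a^2(\sqrt{1 + t^2} - 1)$ yields
\begin{align*}
\partial_2 \ell(y, u) = a^2 \cdot \frac{1}{2\sqrt{1 + t^2}} \cdot 2t \cdot \left(-\frac{1}{a}\right) = -\frac{a\, t}{\sqrt{1 + t^2}} = \frac{u - y}{\sqrt{1 + \paren{\frac{y-u}{a}}^2}}.
\end{align*}

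Next, I would bound this uniformly. Since the elementary inequality $|t|/\sqrt{1+t^2} \leq 1$ holds for every $t \in \mathbb{R}$, it immediately follows that
\begin{align*}
\abss{\partial_2 \ell(y, u)} = a \cdot \frac{|t|}{\sqrt{1+t^2}} \leq a,\qquad \forall (y,u)\in\mathcal{Y}\times\mathcal{Y}.
\end{align*}

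Finally, since $u \mapsto \ell(y, u)$ is continuously differentiable (as a composition of smooth functions) with derivative bounded in absolute value by $a$, an application of the mean-value theorem yields $\abss{\ell(y, u_1) - \ell(y, u_2)} \leq a \, \abss{u_1 - u_2}$ for all $u_1, u_2 \in \mathcal{Y}$, which is exactly $\rho$-Lipschitz continuity with $\rho = a$. There is no real obstacle here; the statement is a routine computation once the derivative is written down and the bound $|t|/\sqrt{1+t^2} \leq 1$ is invoked.
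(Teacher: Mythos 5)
Your proof is correct and follows essentially the same route as the paper: compute $\partial_2 \ell(y,u) = \frac{u-y}{\sqrt{1 + ((y-u)/a)^2}}$, bound it uniformly by $a$, and conclude Lipschitz continuity via the mean-value theorem. The only difference is cosmetic: you get the uniform bound directly from the elementary inequality $|t|/\sqrt{1+t^2} \leq 1$, whereas the paper derives it by checking $\partial_2^2 \ell(y,u) > 0$ (so $\partial_2\ell(y,\cdot)$ is increasing) and evaluating its limits $-a$ and $a$ as $u \to \mp\infty$; your version is shorter and equally rigorous.
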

\begin{proof}
For every $\paren{y, u} \in \mathcal{Y} \times \mathcal{Y}$,
\begin{align*}
    \partial_2\ell\paren{y, u} = -\frac{y - u}{\sqrt{\frac{\left(y - u\right)^{2}}{a^{2}} + 1}}.
\end{align*}
It follows that, for every $\paren{y, u} \in \mathcal{Y} \times \mathcal{Y}$,
\begin{align*}
    \partial_2^2\ell\paren{y, u} = 
    \frac{1}{\left(\frac{\left(y - u\right)^{2}}{a^{2}} + 1\right)^{\frac{3}{2}}}.
\end{align*}
Thus, for every $u \in \mathcal{Y}$,
$\partial_2^2\ell\paren{y, u} > 0$. It follows that
the function $u \in \mathcal{Y} \mapsto \partial_2\ell\paren{y, u}$
is increasing over $\mathcal{Y}$. It follows that
for every $u \in \mathcal{Y}$
\begin{align*}
    \lim_{u \to - \infty}
    \partial_2\ell\paren{y, u}
    \leq \partial_2\ell\paren{y, u}
    \leq
    \lim_{u \to +\infty}\partial_2\ell\paren{y, u},
\end{align*}
with
\begin{align*}
    \lim_{u \to - \infty}
    \partial_2\ell\paren{y, u}
    &
    = \lim_{u \to - \infty}
    -\frac{y - u}{\sqrt{\frac{\left(y - u\right)^{2}}{a^{2}} + 1}}
    = -a,
    \\
    \lim_{u \to - \infty}
    \partial_2\ell\paren{y, u}
    &
    = \lim_{u \to + \infty}
    -\frac{y - u}{\sqrt{\frac{\left(y - u\right)^{2}}{a^{2}} + 1}}
    = a.
\end{align*}
It follows that,
\begin{align*}
    \abss{
        \partial_2\ell\paren{y, u}
    } \leq a,
\end{align*}
therefore, the function $u \in \mathcal{Y} \mapsto \ell\paren{y, u} \in \mathbb{R}$
is $\rho$-Lipschitz continuous with $\rho = a$.
\end{proof}

\begin{lemma}
\label{lm.pseudo.huber.d2}
For every $y \in \mathcal{Y}$,
the function $u \in \mathcal{Y} \mapsto \ell\paren{y, u}$ is convex, and
the function $u \in \mathcal{Y} \mapsto \partial_2\ell\paren{y, u}$
is $\beta_{\ell; 2}$-Lipschitz continuous with $\beta_{\ell; 2} = 1$.
\end{lemma}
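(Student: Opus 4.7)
The plan is to leverage the derivative computations already performed in the proof of Lemma~\ref{lm.pseudo.huber.d1}, which gave explicit formulas for both $\partial_2 \ell(y, u)$ and $\partial_2^2 \ell(y, u)$. The two claims reduce to elementary sign and magnitude checks on $\partial_2^2 \ell(y, u)$.

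First, I would establish convexity. The proof of Lemma~\ref{lm.pseudo.huber.d1} already computed
\begin{align*}
\partial_2^2 \ell(y, u) = \frac{1}{\left(\frac{(y-u)^2}{a^2} + 1\right)^{3/2}},
\end{align*}
which is strictly positive for every $u \in \mathcal{Y}$. This immediately yields convexity (in fact, strict convexity) of $u \mapsto \ell(y, u)$ for every fixed $y \in \mathcal{Y}$.

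Second, for the Lipschitz continuity of $u \mapsto \partial_2 \ell(y, u)$, I would bound $|\partial_2^2 \ell(y, u)|$ uniformly in $u$. Observing that $(y-u)^2/a^2 \geq 0$, the denominator satisfies $\left(\frac{(y-u)^2}{a^2} + 1\right)^{3/2} \geq 1$, so
\begin{align*}
\bigl|\partial_2^2 \ell(y, u)\bigr| = \frac{1}{\left(\frac{(y-u)^2}{a^2} + 1\right)^{3/2}} \leq 1.
\end{align*}
By a standard mean-value argument, this uniform bound on the derivative of $u \mapsto \partial_2 \ell(y, u)$ gives $\beta_{\ell; 2}$-Lipschitz continuity with $\beta_{\ell; 2} = 1$.

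There is no real obstacle here: both claims follow from inspection of the closed-form expression for $\partial_2^2 \ell(y, u)$ that was already derived. The only thing to be careful about is noting that the bound $\beta_{\ell;2} = 1$ does not depend on $a$, unlike the Lipschitz constant $\rho = a$ for $\ell$ itself obtained in Lemma~\ref{lm.pseudo.huber.d1}.
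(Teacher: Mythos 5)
Your proposal is correct and follows essentially the same route as the paper: it uses the closed-form expression $\partial_2^2\ell(y,u) = \left(\frac{(y-u)^2}{a^2}+1\right)^{-3/2}$, deduces convexity from its nonnegativity, and obtains the Lipschitz constant $\beta_{\ell;2}=1$ from the uniform bound $\partial_2^2\ell(y,u)\leq 1$. The only (harmless) difference is that you make the mean-value argument explicit, whereas the paper states the implication directly.
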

\begin{proof}
For every $\paren{y, u} \in \mathcal{Y} \times \mathcal{Y}$,
\begin{align*}
    \partial_2^2\ell\paren{y, u} = 
    \frac{1}{\left(\frac{\left(y - u\right)^{2}}{a^{2}} + 1\right)^{\frac{3}{2}}} \geq 0,
\end{align*}
therefore, the function $u \in \mathcal{Y} \mapsto \ell\paren{y, u}$ is convex.
Moreover,
\begin{align*}
    \abss{
        \partial_2^2\ell\paren{y, u}
    } = \abss{
        \frac{1}{\left(\frac{\left(y - u\right)^{2}}{a^{2}} + 1\right)^{\frac{3}{2}}}
    } \leq 1,
\end{align*}
therefore, the function $u \in \mathcal{Y} \mapsto \partial_2\ell\paren{y, u}$
is $\beta_{\ell; 2}$-Lipschitz continuous with $\beta_{\ell; 2} = 1$.
\end{proof}

\begin{lemma}
\label{lm.pseudo.huber.d3}
For every $y \in \mathcal{Y}$,
the function $u \in \mathcal{Y} \mapsto \partial_2^2 \ell\paren{y, u}\in \mathbb{R}$
is $\xi_{\ell}$-Lipschitz with $\xi_{\ell} = \frac{1}{a} \frac{3}{2} \paren{\frac{4}{5}}^{5/2}$.
\end{lemma}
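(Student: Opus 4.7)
The plan is to establish the claim by computing $\partial_2^3\ell(y,u)$ explicitly, bounding it uniformly over $u\in\mathcal{Y}$, and then invoking the standard fact that a differentiable function with uniformly bounded derivative is Lipschitz with that bound as its Lipschitz constant.

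First, starting from Lemma~\ref{lm.pseudo.huber.d2}, I would differentiate the expression
\begin{align*}
\partial_2^2\ell(y,u)=\frac{1}{\left(1+\frac{(y-u)^2}{a^2}\right)^{3/2}}
\end{align*}
once more in $u$. With the substitution $t=(y-u)/a$, we have $dt/du=-1/a$, and the chain rule yields
\begin{align*}
\partial_2^3\ell(y,u)=\frac{3\,t}{a\,(1+t^2)^{5/2}}.
\end{align*}
This is the quantity I must bound uniformly in $t\in\mathbb{R}$ (equivalently, in $u\in\mathcal{Y}$).

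Next, I would find the maximum of $|g(t)|:=\left|\frac{3t}{a(1+t^2)^{5/2}}\right|$ by standard calculus. Since $g$ is odd, it suffices to consider $t\geq 0$; differentiating and simplifying gives
\begin{align*}
g'(t)=\frac{3}{a}\cdot\frac{1-4t^2}{(1+t^2)^{7/2}},
\end{align*}
whose only positive zero is $t=1/2$. Evaluating there:
\begin{align*}
g(1/2)=\frac{3/2}{a\,(5/4)^{5/2}}=\frac{1}{a}\,\frac{3}{2}\,\left(\frac{4}{5}\right)^{5/2}.
\end{align*}
Checking the sign of $g'$ on either side of $t=1/2$ confirms this is the global maximum of $|g|$, so $\sup_{u\in\mathcal{Y}}|\partial_2^3\ell(y,u)|=\frac{1}{a}\,\frac{3}{2}\,(4/5)^{5/2}=:\xi_\ell$.

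Finally, because $u\mapsto\partial_2^2\ell(y,u)$ is continuously differentiable on $\mathcal{Y}\subseteq\mathbb{R}$ with derivative bounded in absolute value by $\xi_\ell$, the mean-value theorem yields the $\xi_\ell$-Lipschitz property. There is no real obstacle here beyond the elementary optimization step; the only point worth double-checking is the exponent arithmetic in $(1+1/4)^{5/2}=(5/4)^{5/2}$, which cleanly produces the $(4/5)^{5/2}$ factor in $\xi_\ell$.
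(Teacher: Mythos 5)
Your proof is correct and follows essentially the same route as the paper: compute $\partial_2^3\ell$, locate its extrema (at $t=(y-u)/a=\pm 1/2$, i.e.\ $u=y\mp a/2$), evaluate to obtain $\frac{1}{a}\frac{3}{2}\left(\frac{4}{5}\right)^{5/2}$, and conclude the Lipschitz bound from the uniformly bounded derivative. Your substitution $t=(y-u)/a$ and direct optimization of $g(t)$ is just a cleaner bookkeeping of the paper's fourth-derivative sign analysis.
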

\begin{proof}
For every $\paren{y, u} \in \mathcal{Y} \times \mathcal{Y}$,
\begin{align*}
    \partial_2^3\ell\paren{y, u} = 
    \frac{3 \left(y - u\right)}{a^{2} \left(\frac{\left(y - u\right)^{2}}{a^{2}} + 1\right)^{\frac{5}{2}}}.
\end{align*}
It follows that, for every $\paren{y, u} \in \mathcal{Y} \times \mathcal{Y}$,
\begin{align*}
    \partial_2^4\ell\paren{y, u} = 
    \frac{3a^{3} \left(2u - 2y - a\right) \left(2u - 2y + a\right)}{\left(\left(u - y\right)^{2} + a^{2}\right)^{\frac{7}{2}}}.
\end{align*}
with
\begin{align*}
    \partial_2^4\ell\paren{y, \frac{2y + a}{2}} &= 0,
    \\
    \partial_2^4\ell\paren{y, -\frac{a - 2y}{2}} &= 0,
\end{align*}
and $\partial_2^4\ell\paren{y, u} \leq 0$ if and only if
$u \leq \croch{
    -\frac{a - 2y}{2},
    \frac{2y + a}{2}
}$. It follows that the function
$u \in \mathcal{Y} \mapsto \partial_2^3 \ell\paren{y, u} \in \mathbb{R}$
is increasing over $\left(
    - \infty,
    -\frac{a - 2y}{2}
\right]$ and $\left[
    \frac{2y + a}{2},
    + \infty
\right)$, and decreasing over
$\croch{
    -\frac{a - 2y}{2},
    \frac{2y + a}{2}
}$. It follows that for every $u \in \mathcal{Y}$
\begin{align*}
    \partial_2^3\ell\paren{y,
    -\frac{a - 2y}{2}
    }
    \geq
    \partial_2^3\ell\paren{y, u}
    \geq
    \partial_2^3\ell\paren{y,
    \frac{2y + a}{2}
    },
\end{align*}
with
\begin{align*}
    \partial_2^3\ell\paren{y,
    -\frac{a - 2y}{2}
    }
    &
    = \frac{1}{a} \frac{3}{2} \paren{\frac{4}{5}}^{5/2} \geq 0
    \\
    \partial_2^3\ell\paren{y,
    \frac{2y + a}{2}
    }
    &
    = -\frac{1}{a} \frac{3}{2} \paren{\frac{4}{5}}^{5/2} \leq 0.
\end{align*}
Thus
\begin{align*}
    \abss{
        \partial_2^3\ell\paren{y, u}
    } \leq \frac{1}{a} \frac{3}{2} \paren{\frac{4}{5}}^{5/2},
\end{align*}
therefore, the function $u \in \mathcal{Y} \mapsto \partial_2^2 \ell\paren{y, u}\in \mathbb{R}$
is $\xi_{\ell}$-Lipschitz with $\xi_{\ell} = \frac{1}{a} \frac{3}{2} \paren{\frac{4}{5}}^{5/2}$.
\end{proof}

\begin{figure}[H]
    \centering
    \includegraphics[width=0.60\textwidth]{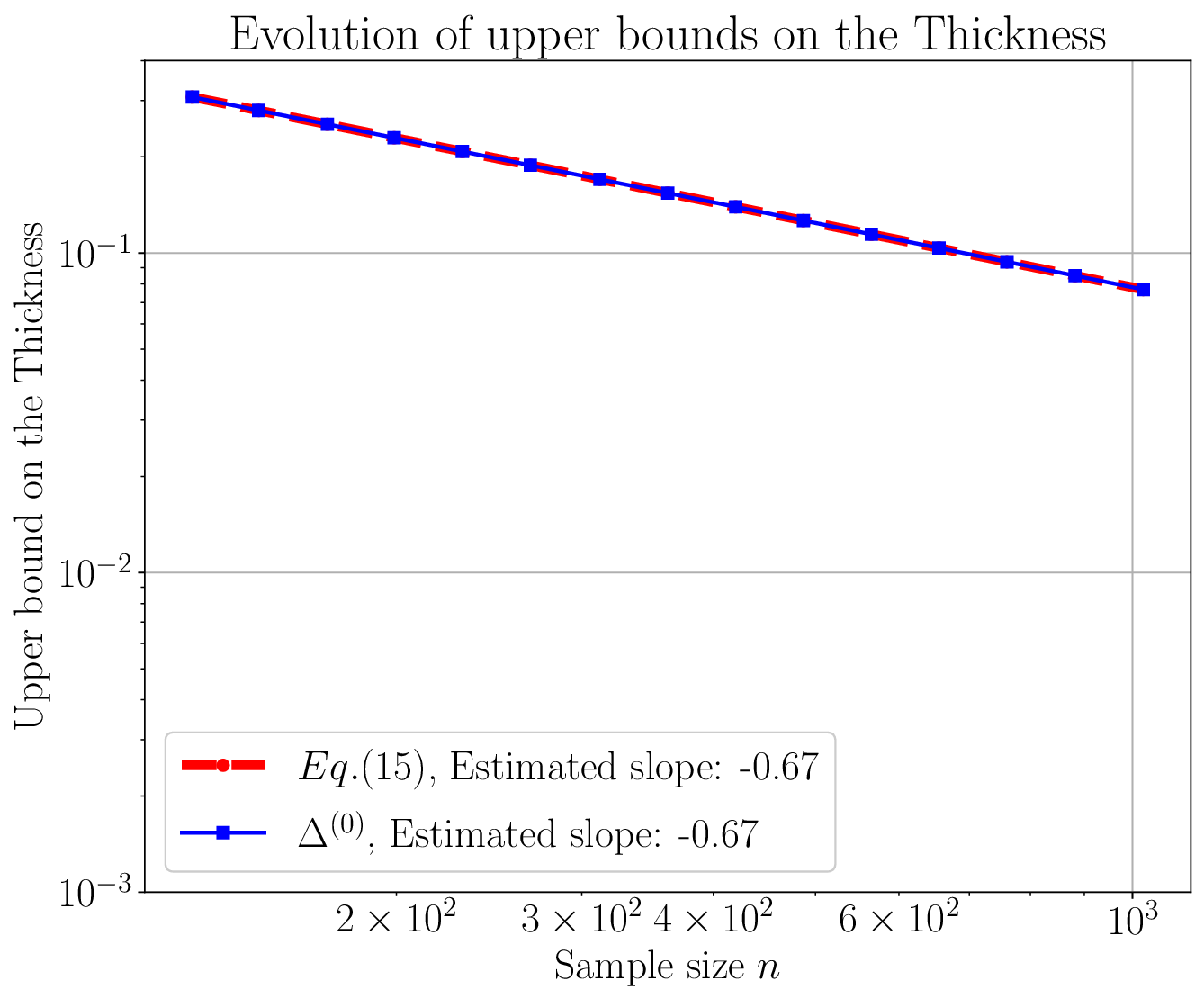}
    \caption{
        Evolution of
        the upper-bound in Eq.~\eqref{eq.bound.thickness.non.smooth} (dashed red-line)
        and the quantity $\Delta^{(0)}$ (solid blue-line)
        as a function of the sample size $n$
        in $\log\log$ scale (to appreciate the rate).
        The data is sampled from $\mathrm{sklearn}$ synthetic data set
        make\_friedman1(sample\_size=n).
        The kernel $k_\mathcal{H} \paren{\bullet, \bullet}$
        is set to be the Laplacian kernel (gamma=None).
        The loss-function $\ell\paren{\bullet, \bullet}$
        is set to be the pseudo-Huber loss ($a=1.0$).
        The regularization parameter is set to decay as $\lambda \propto (n+1)^{-0.33}$.
        The fixed output value is set at $z=0$.        
        The non-conformity function is set to be $s(\bullet, \bullet) : (y, u) \mapsto \abss{y - u}$.
    }
    \label{fig.comp.bound.non.smooth.2}
\end{figure}

\begin{figure}[H]
    \centering
    \includegraphics[width=0.60\textwidth]{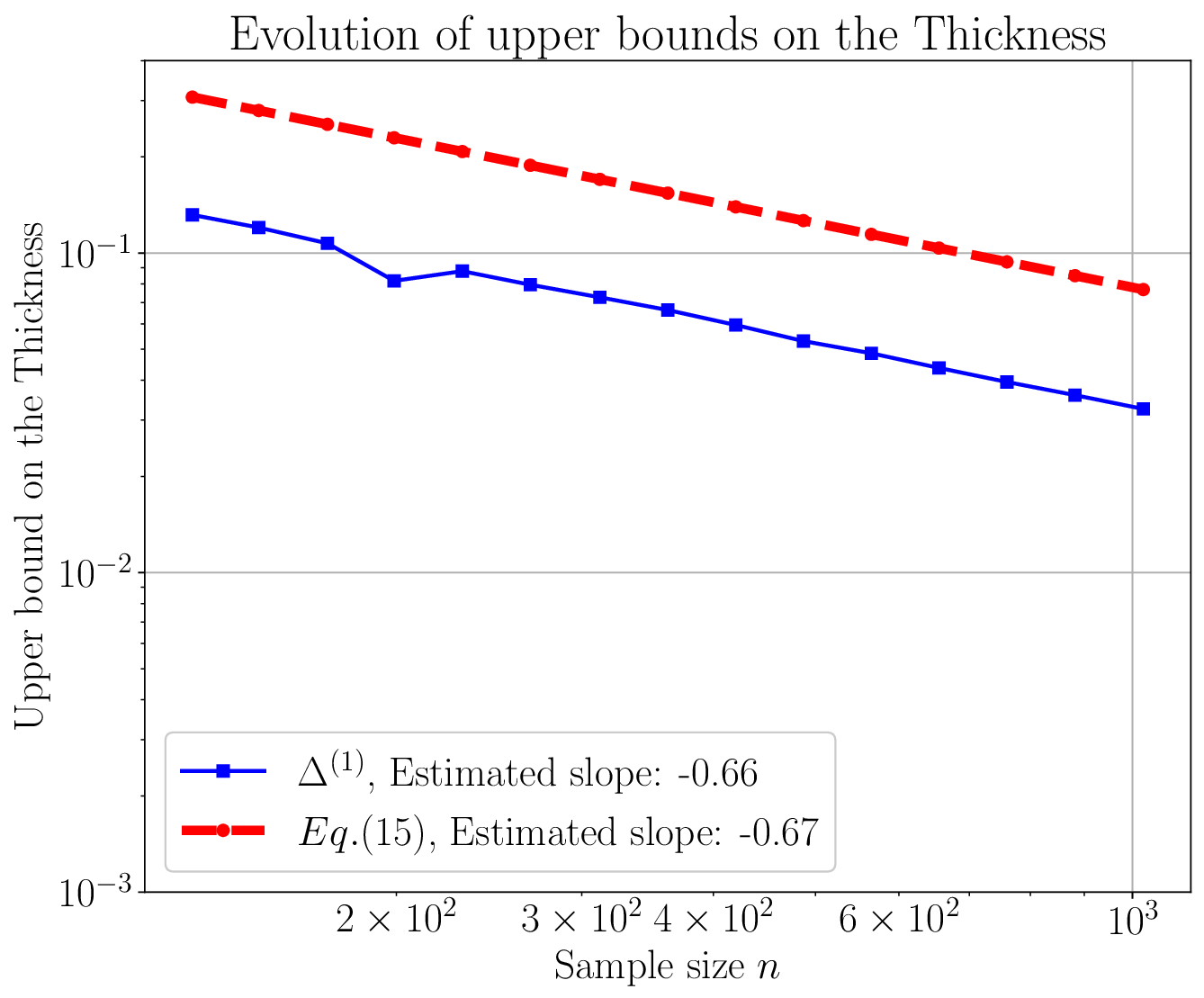}
    \caption{
        Evolution of
        the upper-bound in Eq.~\eqref{eq.bound.thickness.non.smooth} (dashed red-line)
        and the quantity $\Delta^{(1)}$ (solid blue-line)
        as a function of the sample size $n$
        in $\log\log$ scale (to appreciate the rate).
        The data is sampled from $\mathrm{sklearn}$ synthetic data set
        make\_friedman1(sample\_size=n).
        The kernel $k_\mathcal{H} \paren{\bullet, \bullet}$
        is set to be the Laplacian kernel (gamma=None).
        The loss-function $\ell\paren{\bullet, \bullet}$
        is set to be the pseudo-Huber loss ($a=1.0$).
        The regularization parameter is set to decay as $\lambda \propto (n+1)^{-0.33}$.
        The fixed output value is set at $z=0$.        
        The non-conformity function is set to be $s(\bullet, \bullet) : (y, u) \mapsto \abss{y - u}$.
    }
    \label{fig.comp.bound.smooth.2}
\end{figure}

\begin{figure}[H]
    \centering
    \includegraphics[width=0.60\textwidth]{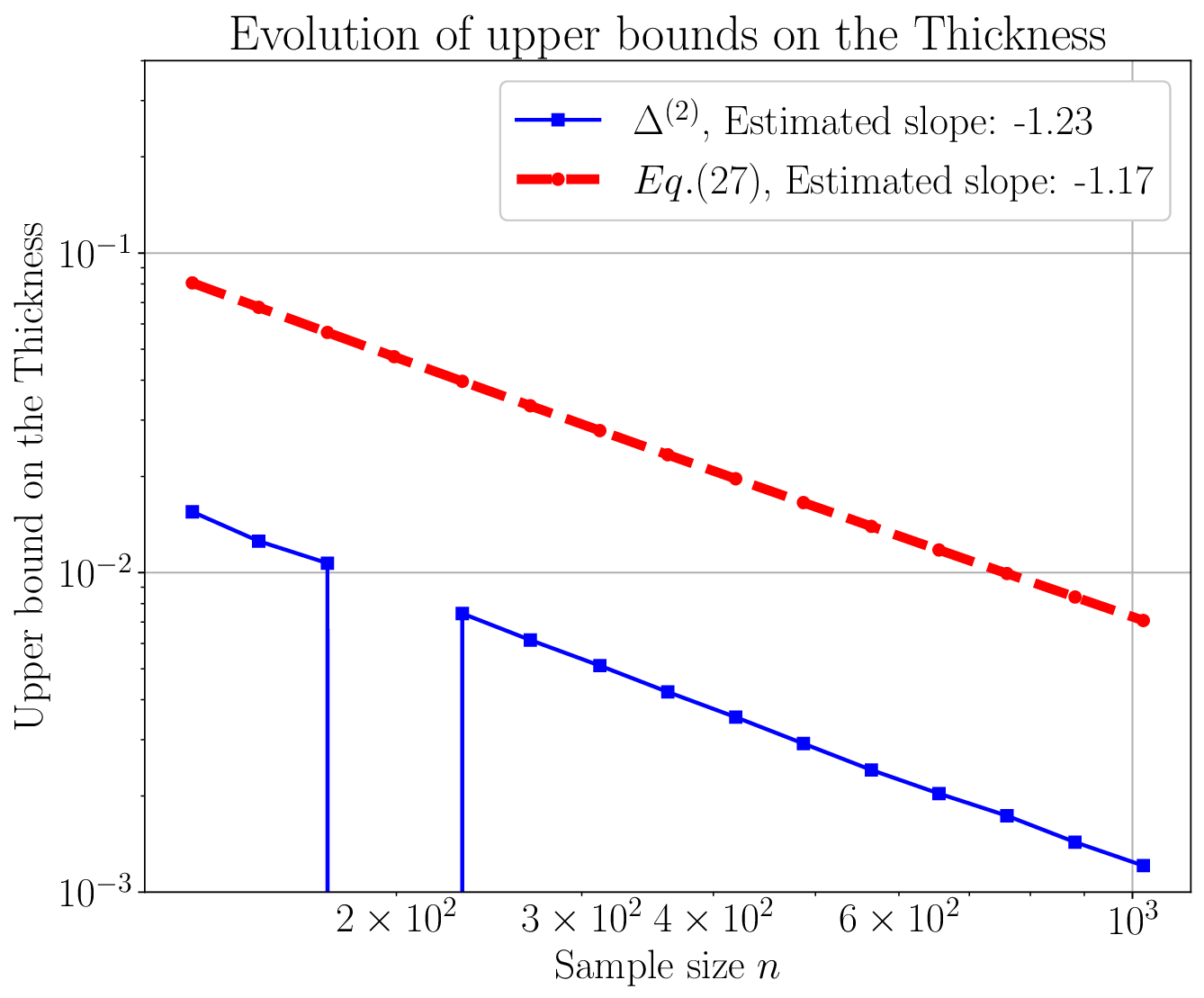}
    \caption{
        Evolution of
        the upper-bound in Eq.~\eqref{eq.bound.thickness.very.smooth} (dashed red-line)
        and the quantity $\Delta^{(2)}$ (solid blue-line)
        as a function of the sample size $n$
        in $\log\log$ scale (to appreciate the rate).
        The data is sampled from $\mathrm{sklearn}$ synthetic data set
        make\_friedman1(sample\_size=n).
        The kernel $k_\mathcal{H} \paren{\bullet, \bullet}$
        is set to be the Laplacian kernel (gamma=None).
        The loss-function $\ell\paren{\bullet, \bullet}$
        is set to be the pseudo-Huber loss ($a=1.0$).
        The regularization parameter is set to decay as $\lambda \propto (n+1)^{-0.33}$.
        The fixed output value is set at $z=0$.        
        The non-conformity function is set to be $s(\bullet, \bullet) : (y, u) \mapsto \abss{y - u}$.
    }
    \label{fig.comp.bound.very.smooth.2}
\end{figure}

\subsection{Smoothed-pinball loss \citep{zheng2011gradient}}
\label{sec.smoothed.pinball}
Let $a \in \paren{0, +\infty}$ and $t \in \paren{0, 1}$.
For every $\paren{y, u} \in \mathcal{Y} \times \mathcal{Y}$,
\begin{align*}
    \ell\paren{y, u} = t \paren{y - u} +
    a \log\paren{
        1 + \mathrm{e}^{-\frac{y - u}{a}}
    }.
\end{align*}

\begin{lemma}
\label{lm.smooth.pinball.d1}
For every $y \in \mathcal{Y}$,
the function $u \in \mathcal{Y} \mapsto \ell\paren{y, u} \in \mathbb{R}$
is $\rho$-Lipschitz continuous with $\rho = \max\paren{t , 1 - t}$.
\end{lemma}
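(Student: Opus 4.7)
The plan is to directly compute the partial derivative $\partial_2 \ell(y,u)$ and bound its absolute value uniformly in $u$, since $\rho$-Lipschitz continuity of a differentiable function is equivalent to a uniform bound $\rho$ on the magnitude of its derivative.

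First, I would differentiate $\ell(y,u) = t(y-u) + a\log\bigl(1 + e^{-(y-u)/a}\bigr)$ with respect to $u$. Using the chain rule with the inner variable $y-u$ (whose $u$-derivative is $-1$), one gets
\begin{align*}
    \partial_2 \ell(y,u)
    &= -t + a \cdot \frac{1}{1 + e^{-(y-u)/a}} \cdot \left(-\frac{1}{a}\right) e^{-(y-u)/a} \cdot (-1) \\
    &= -t + \frac{e^{-(y-u)/a}}{1 + e^{-(y-u)/a}}
    = -t + \frac{1}{1 + e^{(y-u)/a}}.
\end{align*}

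Next, I would observe that the sigmoid term $\sigma(u) := 1/(1 + e^{(y-u)/a})$ is strictly increasing in $u$ on $\mathcal{Y}$, with limits $\lim_{u \to -\infty} \sigma(u) = 0$ and $\lim_{u \to +\infty} \sigma(u) = 1$. Therefore $\partial_2 \ell(y,u)$ is monotonically increasing in $u$ and takes values in the open interval $(-t,\, 1-t)$. This immediately yields the uniform bound
\begin{align*}
    \abss{\partial_2 \ell(y,u)} \leq \max(t,\, 1-t),
\end{align*}
since the supremum of $|\partial_2 \ell(y,u)|$ is attained at the endpoints of this interval.

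Finally, I would conclude by the mean-value theorem (or equivalently by integration of $\partial_2 \ell$): for any $u_1, u_2 \in \mathcal{Y}$,
\begin{align*}
    \abss{\ell(y,u_1) - \ell(y,u_2)} \leq \sup_{u \in \mathcal{Y}} \abss{\partial_2 \ell(y,u)} \cdot \abss{u_1 - u_2} \leq \max(t,\, 1-t) \abss{u_1 - u_2},
\end{align*}
giving $\rho$-Lipschitz continuity with $\rho = \max(t, 1-t)$. There is no real obstacle here: the argument mirrors exactly the structure of Lemma~\ref{lm.logcosh.d1} and Lemma~\ref{lm.pseudo.huber.d1}, and the only minor care needed is recognizing that the sigmoidal form of $\partial_2 \ell$ already lies between $-t$ and $1-t$, so no second-derivative analysis is required.
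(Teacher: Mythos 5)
Your proposal is correct and follows essentially the same route as the paper: compute $\partial_2\ell(y,u) = \frac{e^{-(y-u)/a}}{1+e^{-(y-u)/a}} - t$, note it lies in $(-t,\,1-t)$, bound its absolute value by $\max(t,1-t)$, and conclude Lipschitz continuity (the paper states the bound without the explicit monotonicity/MVT detail you supply). The only tiny wording slip is that the supremum of $\abss{\partial_2\ell(y,u)}$ is approached but not attained at the endpoints of the open interval, which does not affect the bound.
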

\begin{proof}
    For every $\paren{y, u} \in \mathcal{Y} \times \mathcal{Y}$,
\begin{align*}
    \partial_2\ell\paren{y, u} =
    \frac{\mathrm{e}^{-\frac{y - u}{a}}}{\mathrm{e}^{-\frac{y - u}{a}} + 1} - t.
\end{align*}
It follows that
\begin{align*}
    \abss{\partial_2\ell\paren{y, u}}
    \leq \max\paren{t , 1 - t},
\end{align*}
therefore, the function $u \in \mathcal{Y} \mapsto \ell\paren{y, u} \in \mathbb{R}$
is $\rho$-Lipschitz continuous with $\rho = \max\paren{t , 1 - t}$.
\end{proof}

\begin{lemma}
\label{lm.smooth.pinball.d2}
For every $y \in \mathcal{Y}$,
the function $u \in \mathcal{Y} \mapsto \ell\paren{y, u} \in \mathbb{R}$ is convex, and
the function $u \in \mathcal{Y} \mapsto \partial_2\ell\paren{y, u} \in \mathbb{R}$
is $\beta_{\ell; 2}$-Lipschitz continuous with $\beta_{\ell; 2} = \frac{1}{4a}$
\end{lemma}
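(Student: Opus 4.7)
The plan is to differentiate once more the expression of $\partial_2 \ell(y, u)$ already established in Lemma~\ref{lm.smooth.pinball.d1}, and then to analyze the resulting second derivative to obtain both the convexity and the Lipschitz constant in a single computation. Throughout, let us fix $y \in \mathcal{Y}$ and write $s(u) := (y - u)/a$, so that $s'(u) = -1/a$ and $\partial_2 \ell(y, u) = \sigma(-s(u)) - t$, where $\sigma(x) = 1/(1 + e^{-x})$ denotes the sigmoid function.

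First, I would differentiate once more with respect to $u$. Using the standard identity $\sigma'(x) = \sigma(x)\paren{1 - \sigma(x)}$ together with the chain rule, I would obtain
\begin{align*}
	\partial_2^2 \ell(y, u)
	= \frac{1}{a}\, \sigma(-s(u))\, \paren{1 - \sigma(-s(u))}
	= \frac{1}{a}\, \frac{\mathrm{e}^{-(y-u)/a}}{\paren{1 + \mathrm{e}^{-(y-u)/a}}^2}.
\end{align*}
Since every factor on the right-hand side is strictly positive, this immediately yields $\partial_2^2 \ell(y, u) \geq 0$ for all $u \in \mathcal{Y}$, from which the convexity of $u \mapsto \ell(y, u)$ follows.

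Next, to extract the Lipschitz constant of $\partial_2 \ell(y, \cdot)$, I would upper bound $\abss{\partial_2^2 \ell(y, u)}$ uniformly in $u$. The key observation is that for any real $x$, the quantity $\sigma(x)\paren{1 - \sigma(x)}$ is a classical object whose maximum value equals $1/4$, attained at $x = 0$ (equivalently, for $u = y$). This can be checked either by writing $\sigma(x)\paren{1-\sigma(x)} = z(1-z)$ with $z = \sigma(x) \in (0,1)$ and noting that $z \mapsto z(1-z)$ is maximized at $z = 1/2$, or directly by an elementary calculation. This yields
\begin{align*}
	\sup_{u \in \mathcal{Y}} \abss{\partial_2^2 \ell(y, u)} \leq \frac{1}{4a},
\end{align*}
and by the mean-value theorem this gives that $u \mapsto \partial_2 \ell(y, u)$ is $\frac{1}{4a}$-Lipschitz continuous, which is the announced constant $\beta_{\ell; 2}$.

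There is no real obstacle here: the argument is a routine calculus exercise, and the only non-trivial step is the optimization $\max_{z \in (0,1)} z(1-z) = 1/4$, which is a textbook computation. The derivation parallels exactly the ones carried out for the Logcosh and pseudo-Huber losses in Lemmas~\ref{lm.logcosh.d2} and~\ref{lm.pseudo.huber.d2}, and the resulting constant $1/(4a)$ matches the intuition that the smoothed-pinball loss becomes less smooth (larger $\beta_{\ell; 2}$) as the smoothing parameter $a$ decreases toward zero, consistently with the degeneration toward the non-smooth pinball loss.
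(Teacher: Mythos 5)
Your proof is correct and follows essentially the same route as the paper: compute $\partial_2^2\ell(y,u)$, observe it is nonnegative to get convexity, and bound it uniformly by $\frac{1}{4a}$ to get the Lipschitz constant (your sigmoid expression $\frac{1}{a}\frac{\mathrm{e}^{-(y-u)/a}}{(1+\mathrm{e}^{-(y-u)/a})^2}$ equals the paper's $\frac{\mathrm{e}^{(y-u)/a}}{a(\mathrm{e}^{(y-u)/a}+1)^2}$). The only difference is that you spell out the elementary optimization $\max_{z\in(0,1)} z(1-z)=\tfrac{1}{4}$, which the paper leaves implicit.
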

\begin{proof}
For every $\paren{y, u} \in \mathcal{Y} \times \mathcal{Y}$,
\begin{align*}
    \partial_2^2\ell\paren{y, u} =
    \frac{\mathrm{e}^{\frac{y - u}{a}}}{a \left(\mathrm{e}^{\frac{y - u}{a}} + 1\right)^{2}} \geq 0,
\end{align*}
therefore, the function $u \in \mathcal{Y} \mapsto \ell\paren{y, u} \in \mathbb{R}$ is convex.
Moreover,
\begin{align*}
    \abss{\partial_2^2\ell\paren{y, u}}
    \leq \frac{1}{4a},
\end{align*}
therefore, the function $u \in \mathcal{Y} \mapsto \partial_2\ell\paren{y, u} \in \mathbb{R}$
is $\beta_{\ell; 2}$-Lipschitz continuous with $\beta_{\ell; 2} = \frac{1}{4a}$.    
\end{proof}

\begin{lemma}
\label{lm.smooth.pinball.d3}
For every $y \in \mathcal{Y}$,
the function $u \in \mathcal{Y} \mapsto \partial_2^2 \ell\paren{y, u}\in \mathbb{R}$
is $\xi_{\ell}$-Lipschitz with $\xi_{\ell} = \frac{1}{a^2}
\frac{5 + 3\sqrt{3}}{\paren{\sqrt{3}+3}^3}$.
\end{lemma}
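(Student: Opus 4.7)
My plan is to follow exactly the same pattern as for the logcosh and pseudo-Huber cases (Lemmas~\ref{lm.logcosh.d3} and~\ref{lm.pseudo.huber.d3}): compute $\partial_2^3 \ell(y,u)$ explicitly, show that it is uniformly bounded in absolute value by $\xi_\ell$, and then conclude Lipschitz continuity of $\partial_2^2 \ell(y,\cdot)$. Starting from the expression obtained in Lemma~\ref{lm.smooth.pinball.d2}, namely $\partial_2^2 \ell(y,u) = e^{(y-u)/a}/\bigl(a(e^{(y-u)/a}+1)^2\bigr)$, I would differentiate once more with respect to $u$ using the chain rule (the inner derivative contributes a factor $-1/a$), to obtain an explicit formula of the form
\begin{align*}
\partial_2^3 \ell(y,u) \;=\; \frac{1}{a^2}\cdot \frac{e^{(y-u)/a}\bigl(e^{(y-u)/a}-1\bigr)}{\bigl(e^{(y-u)/a}+1\bigr)^3}.
\end{align*}

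Next I would introduce the change of variable $w := e^{(y-u)/a} \in (0,+\infty)$ and reduce the problem to showing that
\begin{align*}
g(w) \;:=\; \frac{w(w-1)}{(w+1)^3}
\end{align*}
is uniformly bounded on $(0,+\infty)$ by $(5+3\sqrt{3})/(\sqrt{3}+3)^3$. Computing $g'(w)$ and simplifying the numerator yields a quadratic $-w^2 + 4w - 1$, whose roots are $w = 2 \pm \sqrt{3}$. A sign study then identifies $w = 2-\sqrt{3}$ as a local minimizer and $w = 2+\sqrt{3}$ as a local maximizer, with $g$ vanishing at $0$, $1$, and at infinity.

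A key observation that simplifies the endgame: the substitution $w \mapsto 1/w$ yields $g(1/w) = -g(w)$, hence $|g|$ takes the same value at both critical points. Evaluating at $w = 2+\sqrt{3}$ (so $w-1 = 1+\sqrt{3}$, $w+1 = 3+\sqrt{3}$) gives
\begin{align*}
g(2+\sqrt{3}) \;=\; \frac{(2+\sqrt{3})(1+\sqrt{3})}{(3+\sqrt{3})^3} \;=\; \frac{5 + 3\sqrt{3}}{(\sqrt{3}+3)^3},
\end{align*}
which is exactly $\xi_\ell a^2$. Combining these steps shows $|\partial_2^3 \ell(y,u)| \leq \xi_\ell$ for every $y,u \in \mathcal{Y}$, and therefore $u \mapsto \partial_2^2 \ell(y,u)$ is $\xi_\ell$-Lipschitz continuous.

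The only step requiring any care (and thus the likeliest place to slip up) is the algebraic simplification of $g(2+\sqrt{3})$: the cubic $(3+\sqrt{3})^3$ and the product $(2+\sqrt{3})(1+\sqrt{3}) = 5+3\sqrt{3}$ must be handled cleanly, and one must verify the symmetry $g(1/w) = -g(w)$ to avoid having to expand $(3-\sqrt{3})^3$ separately and then compare magnitudes. Everything else is a direct mimicry of the arguments already carried out for the previous two losses.
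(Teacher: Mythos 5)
Your proposal is correct and follows essentially the same route as the paper: you bound $\abss{\partial_2^3\ell\paren{y,u}}$ by locating the extrema of the third derivative and evaluating there, and your critical points $w = 2\pm\sqrt{3}$ are exactly the paper's $u = y - \ln\paren{2\pm\sqrt{3}}\,a$ obtained from the vanishing of the fourth derivative. The substitution $w = \mathrm{e}^{(y-u)/a}$ and the anti-symmetry $g(1/w) = -g(w)$ are only a mild streamlining of the same computation.
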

\begin{proof}
For every $\paren{y, u} \in \mathcal{Y} \times \mathcal{Y}$,
\begin{align*}
    \partial_2^3\ell\paren{y, u} =
    \frac{\left(\mathrm{e}^{\frac{y - u}{a}} - 1\right) \mathrm{e}^{\frac{y - u}{a}}}{a^{2} \left(\mathrm{e}^{\frac{y - u}{a}} + 1\right)^{3}}.
\end{align*}
It follows that, for every $\paren{y, u} \in \mathcal{Y} \times \mathcal{Y}$,
\begin{align*}
    \partial_2^4\ell\paren{y, u} =
    \frac{\mathrm{e}^{\frac{y - u}{a}} \left(\mathrm{e}^{\frac{2 \left(y - u\right)}{a}} - 4\mathrm{e}^{\frac{y - u}{a}} + 1\right)}{a^{3} \left(\mathrm{e}^{\frac{y - u}{a}} + 1\right)^{4}}.
\end{align*}
with
\begin{align*}
    \partial_2^4\ell\paren{y, y - \ln\left(\sqrt{3} + 2\right) \, a} &= 0,
    \\
    \partial_2^4\ell\paren{y, y - \ln\left(2 - \sqrt{3}\right) \, a} &= 0,
\end{align*}
and $\partial_2^4\ell\paren{y, u} \leq 0$ if and only if
$u \leq \croch{
    y - \ln\left(\sqrt{3} + 2\right) \, a,
    y - \ln\left(2 - \sqrt{3}\right) \, a
}$. It follows that the function
$u \in \mathcal{Y} \mapsto \partial_2^3 \ell\paren{y, u} \in \mathbb{R}$
is increasing over $\left(
    - \infty,
    y - \ln\left(\sqrt{3} + 2\right) \, a
\right]$ and $\left[
    y - \ln\left(2 - \sqrt{3}\right) \, a,
    + \infty
\right)$, and decreasing over
$\croch{
    y - \ln\left(\sqrt{3} + 2\right) \, a,
    y - \ln\left(2 - \sqrt{3}\right) \, a
}$. It follows that for every $u \in \mathcal{Y}$
\begin{align*}
    \partial_2^3\ell\paren{y,
    y - \ln\left(\sqrt{3} + 2\right) \, a
    }
    \geq
    \partial_2^3\ell\paren{y, u}
    \geq
    \partial_2^3\ell\paren{y,
    y - \ln\left(2 - \sqrt{3}\right) \, a
    },
\end{align*}
with
\begin{align*}
    \partial_2^3\ell\paren{y,
    y - \ln\left(\sqrt{3} + 2\right) \, a
    }
    &
    = \frac{1}{a^2}
    \frac{5 + 3\sqrt{3}}{\paren{\sqrt{3}+3}^3}
    \geq 0
    \\
    \partial_2^3\ell\paren{y,
    y - \ln\left(2 - \sqrt{3}\right) \, a
    }
    &
    = -\frac{1}{a^2}
    \frac{5 + 3\sqrt{3}}{\paren{\sqrt{3}+3}^3} \leq 0.
\end{align*}
Thus
\begin{align*}
    \abss{
        \partial_2^3\ell\paren{y, u}
    } \leq \frac{1}{a^2}
    \frac{5 + 3\sqrt{3}}{\paren{\sqrt{3}+3}^3},
\end{align*}
therefore, the function $u \in \mathcal{Y} \mapsto \partial_2^2 \ell\paren{y, u}\in \mathbb{R}$
is $\xi_{\ell}$-Lipschitz with $\xi_{\ell} = \frac{1}{a^2}
    \frac{5 + 3\sqrt{3}}{\paren{\sqrt{3}+3}^3}$.
\end{proof}

\begin{figure}[H]
    \centering
    \includegraphics[width=0.60\textwidth]{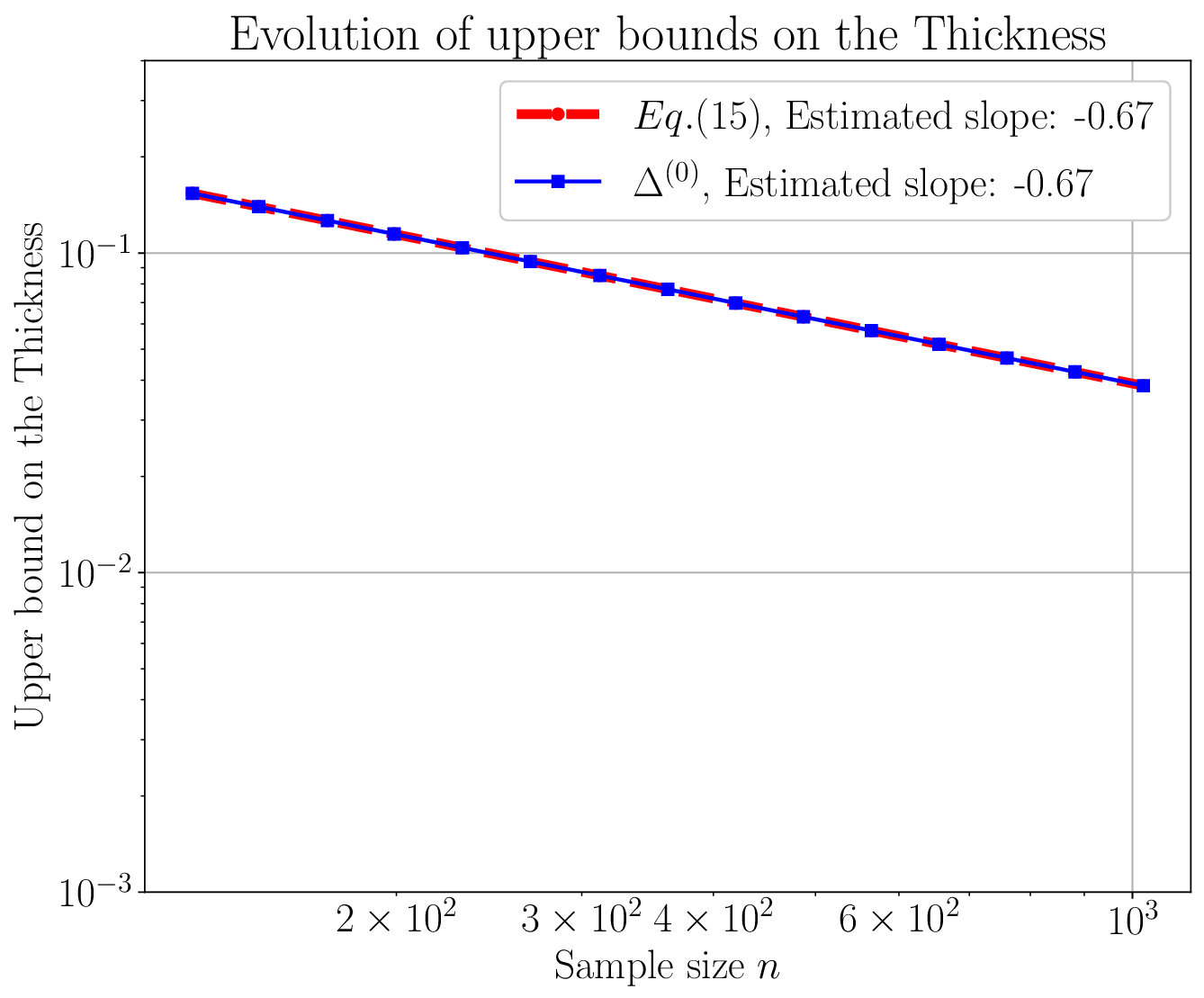}
    \caption{
        Evolution of
        the upper-bound in Eq.~\eqref{eq.bound.thickness.non.smooth} (dashed red-line)
        and the quantity $\Delta^{(0)}$ (solid blue-line)
        as a function of the sample size $n$
        in $\log\log$ scale (to appreciate the rate).
        The data is sampled from $\mathrm{sklearn}$ synthetic data set
        make\_friedman1(sample\_size=n).
        The kernel $k_\mathcal{H} \paren{\bullet, \bullet}$
        is set to be the Laplacian kernel (gamma=None).
        The loss-function $\ell\paren{\bullet, \bullet}$
        is set to be the smoothed pinball loss ($t = 5, a = 0.2$).
        The regularization parameter is set to decay as $\lambda \propto (n+1)^{-0.33}$.
        The fixed output value is set at $z=0$.        
        The non-conformity function is set to be $s(\bullet, \bullet) : (y, u) \mapsto \abss{y - u}$.
    }
    \label{fig.comp.bound.non.smooth.1}
\end{figure}

\begin{figure}[H]
    \centering
    \includegraphics[width=0.60\textwidth]{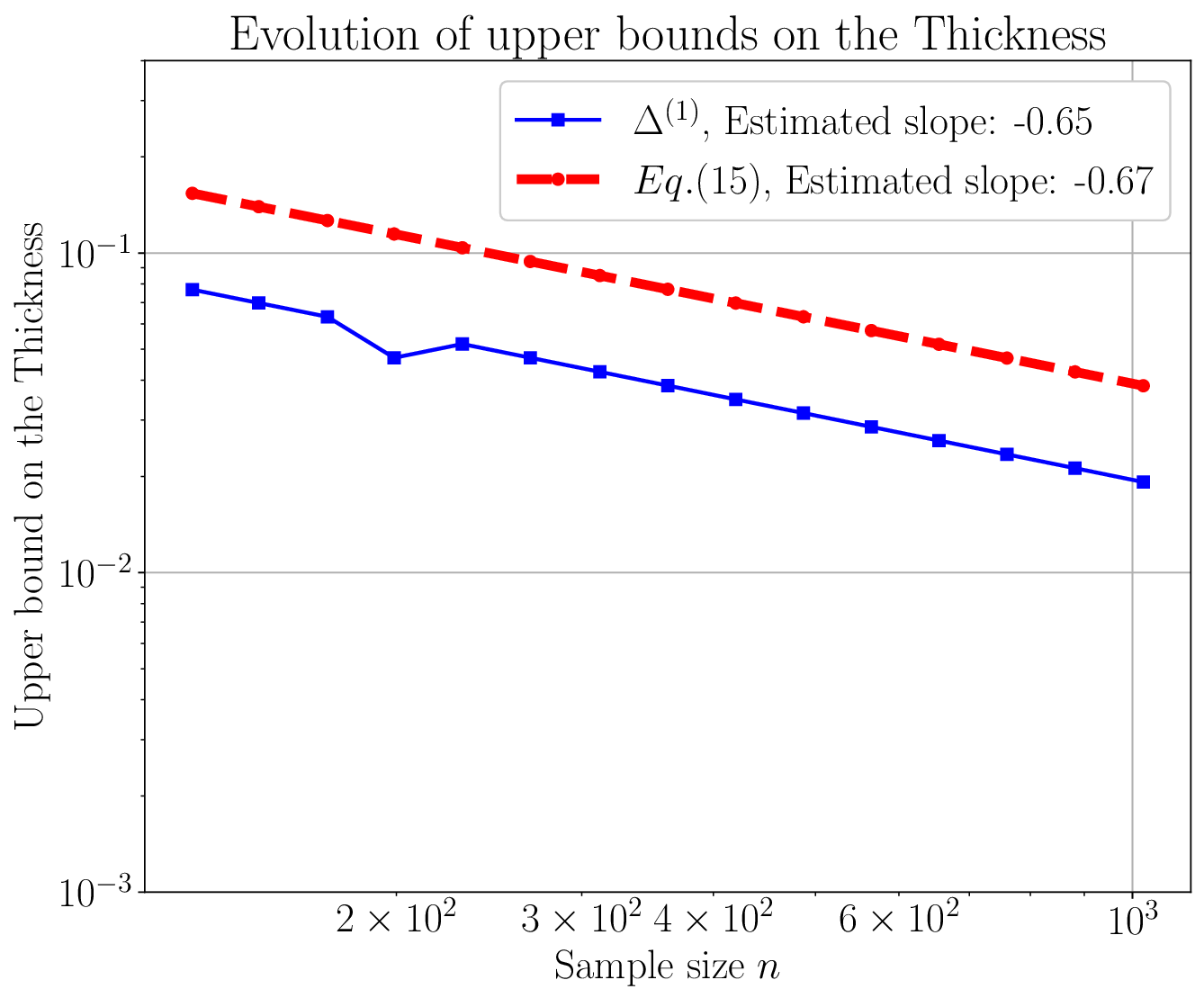}
    \caption{
        Evolution of
        the upper-bound in Eq.~\eqref{eq.bound.thickness.non.smooth} (dashed red-line)
        and the quantity $\Delta^{(1)}$ (solid blue-line)
        as a function of the sample size $n$
        in $\log\log$ scale (to appreciate the rate).
        The data is sampled from $\mathrm{sklearn}$ synthetic data set
        make\_friedman1(sample\_size=n).
        The kernel $k_\mathcal{H} \paren{\bullet, \bullet}$
        is set to be the Laplacian kernel (gamma=None).
        The loss-function $\ell\paren{\bullet, \bullet}$
        is set to be the smoothed pinball loss ($t = 5, a = 0.2$).
        The regularization parameter is set to decay as $\lambda \propto (n+1)^{-0.33}$.
        The fixed output value is set at $z=0$.        
        The non-conformity function is set to be $s(\bullet, \bullet) : (y, u) \mapsto \abss{y - u}$.
    }
    \label{fig.comp.bound.smooth.1}
\end{figure}

\begin{figure}[H]
    \centering
    \includegraphics[width=0.60\textwidth]{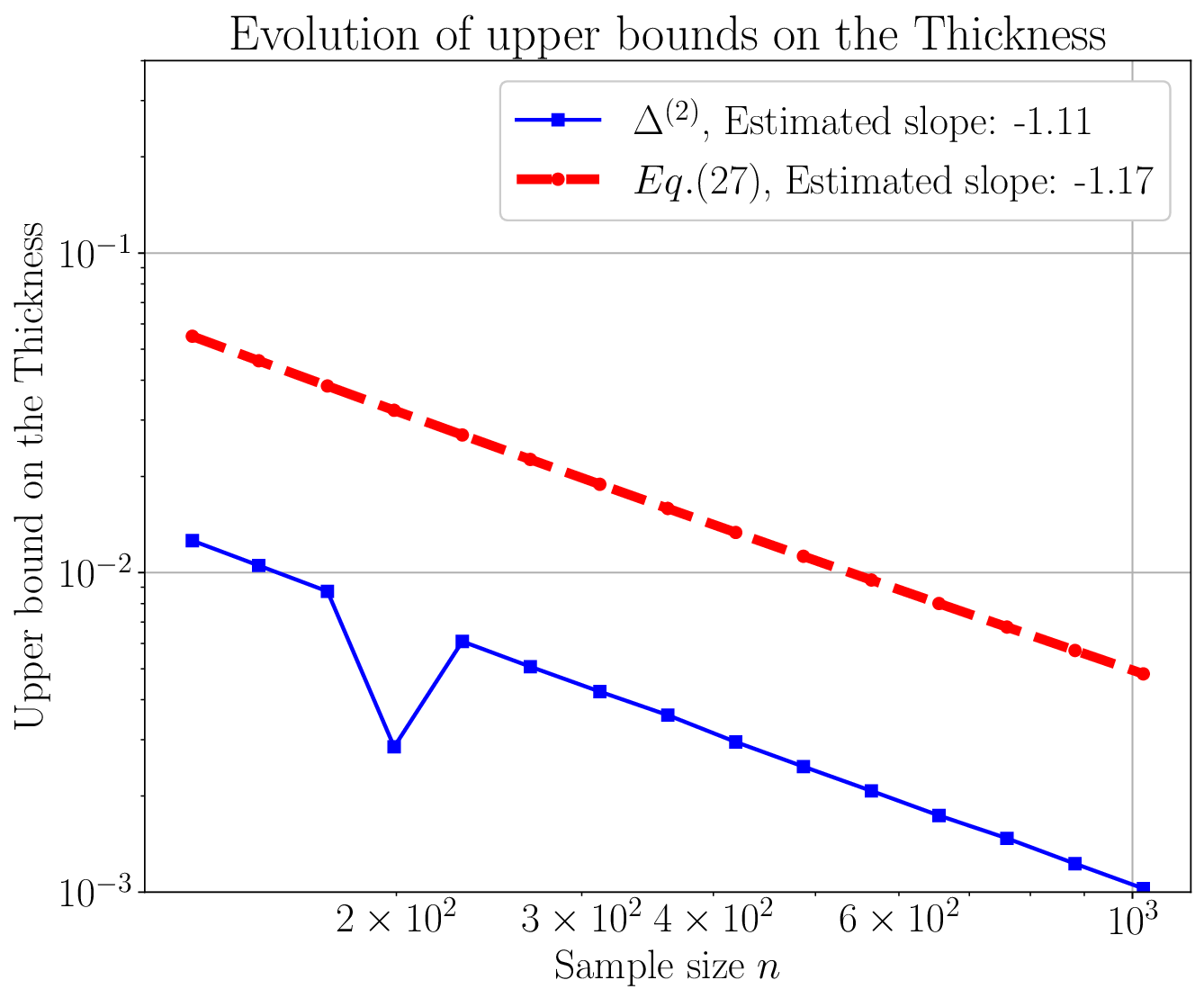}
    \caption{
        Evolution of
        the upper-bound in Eq.~\eqref{eq.bound.thickness.very.smooth} (dashed red-line)
        and the quantity $\Delta^{(2)}$ (solid blue-line)
        as a function of the sample size $n$
        in $\log\log$ scale (to appreciate the rate).
        The data is sampled from $\mathrm{sklearn}$ synthetic data set
        make\_friedman1(sample\_size=n).
        The kernel $k_\mathcal{H} \paren{\bullet, \bullet}$
        is set to be the Laplacian kernel (gamma=None).
        The loss-function $\ell\paren{\bullet, \bullet}$
        is set to be the smoothed pinball loss ($t = 5, a = 0.2$).
        The regularization parameter is set to decay as $\lambda \propto (n+1)^{-0.33}$.
        The fixed output value is set at $z=0$.        
        The non-conformity function is set to be $s(\bullet, \bullet) : (y, u) \mapsto \abss{y - u}$.
    }
    \label{fig.comp.bound.very.smooth.1}
\end{figure}

\vskip 0.2in
\bibliography{sample}

\end{document}